%% file: tex_arXiv/arXiv_main.tex
\documentclass{article} 
\usepackage{iclr2027_conference,times}
\usepackage{amsmath}
\DeclareMathOperator{\msign}{msign}

\input{math_commands.tex}

\usepackage{afterpage}
\usepackage{hyperref}
\usepackage{url}
\usepackage{algorithm}
\usepackage{algpseudocode}
\usepackage{subcaption}
\usepackage{booktabs}
\usepackage{cleveref}
\crefname{assumption}{Assumption}{Assumptions}
\Crefname{assumption}{Assumption}{Assumptions}
\usepackage{amsmath,amssymb,amsthm}
\usepackage{graphicx}
\usepackage[table]{xcolor}
\usepackage{dsfont}
\definecolor{okgreen}{RGB}{34,139,34}
\definecolor{badred}{RGB}{200,40,40}
\definecolor{oursblue}{RGB}{228,238,252}
\usepackage{enumitem}
\usepackage{pifont}
\usepackage[most]{tcolorbox}
\newtcolorbox{takeaway}{
  colback=blue!4, colframe=blue!45!black,
  boxrule=0.6pt, arc=2pt,
  left=6pt, right=6pt, top=4pt, bottom=4pt,
  fontupper=\itshape
}

\crefname{appendix}{Appendix}{Appendices}
\Crefname{appendix}{Appendix}{Appendices}
\crefname{appxlemma}{Lemma}{Lemmas}
\Crefname{appxlemma}{Lemma}{Lemmas}
\crefname{appxcorollary}{Corollary}{Corollaries}
\Crefname{appxcorollary}{Corollary}{Corollaries}

\newcommand{\yes}{\textcolor{okgreen}{\ding{52}}}
\newcommand{\no}{\textcolor{badred}{\ding{56}}}

\newtheorem{theorem}{Theorem}
\newtheorem{proposition}{Proposition}
\newtheorem{lemma}{Lemma}

\newtheorem{definition}{Definition}

\newtheorem{assumption}{Assumption}
\newtheorem{appxtheorem}{Theorem}[section]
\newtheorem{appxlemma}[appxtheorem]{Lemma}
\newtheorem{appxproposition}[appxtheorem]{Proposition}
\newtheorem{appxcorollary}[appxtheorem]{Corollary}
\newtheorem{appxdefinition}[appxtheorem]{Definition}

\theoremstyle{remark}
\newtheorem*{remark*}{Remark}

\newcommand{\NS}{\ensuremath{\mathrm{NS}}}
\newcommand{\RN}{\ensuremath{\mathrm{RN}}}
\newcommand{\CN}{\ensuremath{\mathrm{CN}}}

\newcommand{\AN}{\ensuremath{\mathrm{AN}}}

\newcommand{\Sgn}{\ensuremath{\mathrm{Sgn}}}
\DeclareMathOperator{\diag}{diag}
\newcommand{\op}{\mathrm{op}}
\newcommand{\tr}{\operatorname{tr}}
\newcommand{\Acc}{\operatorname{Acc}}
\newcommand{\col}{\operatorname{col}}

\newcommand{\caE}{\mathcal{E}}
\newcommand{\K}{\mathcal{K}}

\crefname{appxtheorem}{Theorem}{Theorems}
\Crefname{appxtheorem}{Theorem}{Theorems}

\crefname{appxlemma}{Lemma}{Lemmas}
\Crefname{appxlemma}{Lemma}{Lemmas}

\crefname{appxproposition}{Proposition}{Propositions}
\Crefname{appxproposition}{Proposition}{Propositions}

\crefname{appxcorollary}{Corollary}{Corollaries}
\Crefname{appxcorollary}{Corollary}{Corollaries}

\crefname{appxdefinition}{Definition}{Definitions}
\Crefname{appxdefinition}{Definition}{Definitions}

\title{Optimizing Large Language Models with Chained LMOs}

\author{
Sungyoon Kim$^{1,*}$, Kaan Ozkara$^{2}$, Youngsuk Park$^{2}$\\
$^{1}$Department of Electrical Engineering, Stanford University\\
$^{2}$Amazon Annapurna Labs\\
\texttt{sykim777@stanford.edu, kaanozka@amazon.com, pyoungsu@amazon.com}
}

\iclrfinalcopy 
\begin{document}
\maketitle
\fancyhead{}
\renewcommand{\headrulewidth}{0pt}

\begingroup
\renewcommand{\thefootnote}{*}
\footnotetext{Work done during an internship at Amazon Annapurna Labs.}
\endgroup

\begin{abstract}
Muon has motivated a growing family of optimizers that compose multiple matrix normalizations, but these methods remain fragmented and lack a unified perspective. We introduce \emph{chained linear minimization oracles (chained LMOs)}, which cast these methods as compositions of LMOs. Despite their empirical success, many chains fall outside the standard LMO framework and can diverge on smooth convex objectives. To explain why composition can nevertheless help, we turn to linear associative memory and show that chaining can improve over Muon under anisotropic embeddings. Empirically, we propose \emph{TensorChain}, a novel optimizer within the framework that stacks compatible weight matrices across different layers and normalizes the 3d tensor across its axes. In Qwen3 0.6B and 1.7B pretraining, TensorChain outperforms all chained baselines in average token efficiency, with average token savings of $9.6\%$ over Muon at matched validation loss.
\end{abstract}

\section{Introduction}
\label{sec:introduction}
\input{tex_arXiv/1.intro}

\section{Optimizing with chained LMOs}
\label{sec:chainedLMO}
\input{tex_arXiv/2.chainedLMO}

\section{Provable Benefit of Chaining in Linear Associative Memory Learning}
\label{sec:lam_separation}
\input{tex_arXiv/3.assocmem}

\section{Exploring the Space of Chains}
\label{sec:nav}
\input{tex_arXiv/4.TC}

\section{Experiments}
\label{sec:experiments}
\input{tex_arXiv/5.experiments}

\section{Conclusion}
\label{sec:conclusion}
\input{tex_arXiv/6.conclusion}

\newpage
\subsection*{AI use statement}

In this paper, AI tools were used in the following capacities:
\begin{itemize}[leftmargin=*]
    \item Assistance in proof verification and organization

    \item Assistance in novel proofs in Section 3; specifically on probabilistic rigor
    
    \item Coding assistants were used to help implement the proposed methods, run experimental pipelines, and retrieve and organize experimental results.
\end{itemize}



\bibliography{iclr2027_conference}
\bibliographystyle{iclr2027_conference}

\newpage
\appendix

\input{tex_arXiv/appendix.tex}

\end{document}

%% file: math_commands.tex
\usepackage{amsmath,amsfonts,bm}

\def\eqref#1{equation~\ref{#1}}

\def\1{\bm{1}}

\DeclareMathAlphabet{\mathsfit}{\encodingdefault}{\sfdefault}{m}{sl}
\SetMathAlphabet{\mathsfit}{bold}{\encodingdefault}{\sfdefault}{bx}{n}

\DeclareMathOperator*{\argmax}{arg\,max}
\DeclareMathOperator*{\argmin}{arg\,min}

\DeclareMathOperator{\sign}{sign}

%% file: tex_arXiv/1.intro.tex
After the success of the Muon optimizer \citep{modded_nanogpt_2024} in pretraining large language models, numerous theoretical and empirical follow-up works have been proposed. Consider minimizing a differentiable objective \(f:\mathbb{R}^{m\times n}\to\mathbb{R}\) over a weight matrix \(W\in\mathbb{R}^{m\times n}\). Muon maintains a momentum matrix \(M_t\in\mathbb{R}^{m\times n}\), the exponential moving average (EMA) of the gradient, and updates the weights along its matrix sign:
\begin{equation}
\label{eq1}
M_t=\beta M_{t-1}+(1-\beta)\nabla f(W_t),\quad
W_{t+1}=W_t-\eta_t\msign(M_t).
\end{equation}
Here, \(\beta\in[0,1)\) is the momentum coefficient and \(\eta_t>0\) is the learning rate. For a matrix \(M\) with singular value decomposition \(M=U\Sigma V^\top\), the matrix sign is defined as \(\msign(M)=UV^\top\); in practice, it is approximated by a few Newton--Schulz iterations rather than by an explicit SVD.

A principled way to understand this update is through the linear minimization oracle (LMO) \citep{pethick2025training,kovalev2025understanding}. For a norm ball \(\mathcal D=\{X:\|X\|\le\rho\}\), define
\[
\operatorname{lmo}_{\mathcal D}(G)
\in
\arg\min_{X\in\mathcal D}\langle G,X\rangle .
\]
Over the spectral-norm ball, \(\operatorname{lmo}_{\mathcal D}(G)=-\rho\,\msign(G)\), so Muon is exactly the iteration
\begin{equation}
\label{eq:lmo_update}
M_t=\beta M_{t-1}+(1-\beta)\nabla f(W_t),\quad
W_{t+1}=W_t+\eta_t\operatorname{lmo}_{\mathcal D}(M_t),
\end{equation}
with \(\rho=1\). This iteration updates in the steepest descent direction under the geometry of \(\mathcal{D}\), and inherits the standard convergence guarantees under the corresponding norm-smoothness assumptions \citep{pethick2025training}. Other norms recover other familiar normalizations: up to sign and scale, the max-row- and max-column-\(\ell_2\) balls yield row normalization (\(\RN\)) and column normalization (\(\CN\)), and the entrywise-\(\ell_\infty\) ball yields the elementwise sign (\(\mathrm{Sgn}\)).

Recent Muon variants augment the matrix-sign step with exactly these normalizations, applied before the matrix sign \citep{chang2026muoneq,liu2026muon}, after it \citep{zhang2026muon+,li2025normuon}, on both sides \citep{si2025adamuon}, or in repeated alternation \citep{dewulf2026aurora}. Although these methods were proposed separately, each with its own motivation and method-specific analysis, they share a common structure: each is a composition of LMOs.

We introduce \textbf{chained LMOs} as a unifying framework for these variants and evaluate them in a unified experimental protocol. Given constraint sets \(\mathcal D_1,\dots,\mathcal D_k\), we define the chained LMO
\begin{equation}
\label{eq:chain}
T(G)
:=
(-1)^{k+1}
\bigl(
\operatorname{lmo}_{\mathcal D_k}
\circ\cdots\circ
\operatorname{lmo}_{\mathcal D_1}
\bigr)(G),
\end{equation}
and the corresponding iteration
\begin{equation}
\label{eq:chainedlmo_update}
M_t=\beta M_{t-1}+(1-\beta)\nabla f(W_t),\quad
W_{t+1}=W_t+\eta_t T(M_t).
\end{equation}
The factor \((-1)^{k+1}\) accounts for the negation introduced by each LMO. Muon is the single-LMO case \(k=1\) with the spectral-norm ball, while many recent variants become instances of \cref{eq:chainedlmo_update}, either directly or, for methods with Adam-like state, after turning off the EMA terms following \citet{bernstein2024old} (\Cref{table:intro_lmochain}).


Having identified chained LMOs as a common representation of Muon variants, a natural question is whether the favorable theory of individual LMOs survives composition. We show that for proposed Muon variants in the literature \citep{zhang2026muon+, liu2026muon, chang2026muoneq, si2025adamuon, li2025normuon, dewulf2026aurora}, it does not. This exposes the limitations of the existing LMO perspective and motivates a complementary view through linear associative memory, where we show that chaining can itself provide a learning advantage. Empirically, we compare these variants under a common protocol. Since each was originally evaluated in a different setup, their reported results are not directly comparable; a controlled comparison is therefore valuable in that it isolates the effect of the chain itself from differences in architecture, scale, and tuning. We further introduce a novel normalization axis that outperforms existing chained baselines under the same protocol, suggesting that the chained-LMO design space remains far from exhausted. Specifically,
\begin{itemize}[leftmargin=*]

    \item \textbf{A unified theory of chained LMOs.} We introduce chained LMOs as a common representation of recent Muon variants and characterize the limits of the standard LMO perspective. Many practical chains are not LMOs of any fixed constraint set (\Cref{sec:separation_lmo}). For positively aligned chains such as $\RN\circ\msign$, a $\nu$-approximate LMO framework recovers convergence guarantees, but with unavoidable dimension dependence (\Cref{prop:dimension}). Other chains admit negatively aligned directions and can diverge even on smooth convex objectives (\Cref{prop:divergence}).
    \item \textbf{A provable learning advantage from chaining.}
    Building on recent analyses of Muon in linear associative memory \citep{wang2026muon,kim2026sharp}, we show that different normalizations can address distinct sources of anisotropy. Under anisotropic answer embeddings and power-law pair frequencies, $\msign\circ\RN$ can achieve asymptotically perfect retrieval while both Muon and gradient descent fail.
    \item \textbf{A new normalization axis and unified empirical comparison.}
    We introduce \textbf{TensorChain}, which treats the layer dimension of stacked parameter matrices as a novel axis of normalization. Under a common training setup   (\Cref{sec:setup}), TensorChain outperforms existing chained Muon baselines in the 1M-token batch regime, and remains competitive in the 4M-token batch regime.
\end{itemize}

The rest of the paper is organized as follows.
In \Cref{sec:chainedLMO}, we show that many chains in \Cref{table:intro_lmochain} are not LMOs of any fixed constraint set, and analyze their convergence and divergence on smooth objectives. \Cref{sec:lam_separation} offers a complementary perspective through linear associative memory, where chaining provably improves learning. \Cref{sec:nav} shows our exploration of different chains and introduces TensorChain, which is compared with existing chained-LMO methods under a matched training protocol \Cref{sec:experiments}. \Cref{sec:conclusion} concludes the paper. Related optimization frameworks and their relation to chained LMOs are discussed in \Cref{sec:appendix_framework}.





%% file: tex_arXiv/2.chainedlmo.tex
\afterpage{%
\begin{table}[t]
\centering
\caption{\textbf{Existing and proposed Muon variants as chained LMOs.} Methods with Adam-like updates reduce to chains once the EMA is turned off. RN/CN: row/column normalization; $\msign$: matrix sign; Sgn: elementwise sign; \emph{Chain conv.}: whether the stateless chain drives the gradient norm to zero. $^\dagger$Also considers $\RN\circ\CN\circ\msign$ and $\CN\circ\RN\circ\msign$.}
\vspace{0.5em}
\label{table:intro_lmochain}
\small
\begin{tabular}{llcl}
\hline
Method & LMO Chain & Chain conv. & Notes \\
\hline
Muon$+$ \citep{zhang2026muon+}     & $\RN/\CN\circ\msign$$^\dagger$                & \yes & -- \\
NorMuon \citep{li2025normuon}      & $\RN\circ\msign$                              & \yes & EMA \\
MuonEq \citep{chang2026muoneq}     & $\msign\circ\RN/\CN$                          & \yes & -- \\
Muon$^2$ \citep{liu2026muon}       & $\msign\circ\Sgn$                             & \no  & EMA \\
AdaMuon \citep{si2025adamuon}      & $\Sgn\circ\msign\circ\Sgn$                    & \no  & EMA \\
Aurora \citep{dewulf2026aurora}    & $(\msign\circ\RN)^{\circ 2}$                  & \no  & tall only; accumulated row scaling\\

\hline
\end{tabular}
\end{table}
}

We work with the stateless chains in \Cref{table:intro_lmochain} (\Cref{sec:chain})
and write $N=N_k\circ\cdots\circ N_1$ for the corresponding chain of normalization
maps; since each $N_i$ is odd and invariant to positive rescaling, the chained LMO
(\eqref{eq:chain}) is $T=-N$ up to scale.
As $-N(G)\in\argmin_{X\in\mathcal{K}}\langle G,X\rangle$ exactly when
$N(G)\in\argmax_{X\in-\mathcal{K}}\langle G,X\rangle$, chained LMO is an LMO over some set if and only if $N$ is a \emph{linear \textbf{maximization} oracle}
over some set $\mathcal{C}$, that is,
$N(G)\in\argmax_{X\in\mathcal{C}}\langle G,X\rangle$ for every $G$.
For each chain, we therefore ask whether $N$ is a linear maximization oracle over any set.

\subsection{Chained LMOs are not necessarily LMOs}
\label{sec:separation_lmo}

\Cref{sec:chain} shows that many Muon variants optimize with chained LMOs. 
We now make precise the claim from the introduction that these chains generally leave the LMO class, so that the standard LMO theory does not apply to them.
The claim has to be established chain by chain, since some compositions do stay inside the class: for instance, positive row scaling preserves entrywise signs, so
$\Sgn\circ\RN=\Sgn$ and the chain $-(-\Sgn)\circ(-\RN)=-\Sgn$, which is indeed the LMO of the entrywise-$\ell_\infty$ ball.
We consider three groups: the chains used in practice (\Cref{table:intro_lmochain}), finite alternations of $\RN$ and $\CN$, and their limit $\caE$. 

\begin{theorem}\label{thm:not_lmo}
Let \(m,n\ge5\), and let normalizations map zero rows and columns to zero.
There is no fixed set \(\mathcal{K}\subseteq\mathbb{R}^{m\times n}\) that satisfies: for every \(G\) in its domain
\[
T(G)\in\argmax_{X\in\mathcal{K}}\langle G,X\rangle
\]
for each map \(T\) listed below:
\begin{enumerate}[label=(\roman*), leftmargin=*, nosep]
\item \(\msign\circ\RN\), \(\msign\circ\RN\circ\CN\), \(\RN\circ\msign\), \(\CN\circ\RN\circ\msign\), and \(\RN\circ\CN\circ\msign\), for \(m>n\).
\item Alternation with the matrix sign: \((\msign\circ\RN)^{\circ k}\) and \((\RN\circ\msign)^{\circ k}\), for every \(k\ge1\) and \(m>n\).
\item Every finite composition of \(\RN\) and \(\CN\) that contains both, in particular \((\RN\circ\CN)^{\circ k}\); its limit \(\mathcal{E}\); and \(\msign\circ\mathcal{E}\), the last two on matrices with no zero entries.
\item Every finite composition of \(\msign\) and \(\Sgn\) that contains both.
\end{enumerate}
Hence the map $-T$ fails to be a LMO of any fixed set.
\end{theorem}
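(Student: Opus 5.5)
The plan is to use a single necessary condition that every linear maximization oracle satisfies, and then refute it for each listed map with explicit small witnesses. If $T(G)\in\argmax_{X\in\mathcal{K}}\langle G,X\rangle$ for all $G$ in the domain, then every output $T(H)$ lies in $\mathcal{K}$. Applying optimality at $G$ against the competitor $T(H)$, and at $H$ against $T(G)$, gives
\[
\langle G, T(G)\rangle \ge \langle G, T(H)\rangle,
\qquad
\langle H, T(H)\rangle \ge \langle H, T(G)\rangle .
\]
Adding these yields the cyclic monotonicity condition $\langle G-H,\,T(G)-T(H)\rangle\ge 0$, and the analogous inequality around longer cycles. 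So for each map it suffices to produce a pair $G,H$ with
\[
\langle G, T(H)\rangle > \langle G, T(G)\rangle .
\]
Conversely, if such a pair exists, no $\mathcal{K}$ can work. Such a violation is found entirely by checking a finite numerical inequality. All normalizations involved are invariant under embedding a small block into the $m\times n$ matrix with zero padding, since zero rows and columns map to zero. We can therefore build witnesses of size at most $5\times 5$ (or $3\times 2$ for the tall cases) and pad; this is why $m,n\ge5$ is assumed.

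For group (i) and (ii) with $m>n$, I would exploit the fact that $\msign$ of a tall matrix is a partial isometry whose rows generally have unequal norms. Take $G$ with rows of very different magnitudes. Then $\RN$ equalizes the rows, and $\msign\circ\RN(G)$ tends to ignore the large-norm row. A competitor $H=G'$ aligned with the dominant row then achieves a strictly larger inner product with $G$. Concretely, I would parametrize $3\times2$ matrices, for example $G=\begin{pmatrix} a&0\\0&1\\0&1\end{pmatrix}$, take $H$ as a permuted or scaled variant, and compute both sides in closed form as functions of $a$. For large $a$ the inequality is violated. For $(\msign\circ\RN)^{\circ k}$ and $(\RN\circ\msign)^{\circ k}$ I would choose $G$ so that after the first application the iterate is a fixed point of the two-step map, for instance a matrix with orthogonal columns and equal row norms. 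Then the $k$-fold map equals the $k=1$ output on $G$, and the witness reduces to the $k=1$ case, uniformly in $k$. The three-map chains involving $\CN$ are handled by choosing $G$ already column-normalized, or with columns kept in fixed ratios, so that the extra $\CN$ acts trivially or predictably.

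For group (iii) I would use that any finite $\RN/\CN$ alternation, as well as the Sinkhorn limit $\mathcal{E}$, acts as $D_1GD_2$ with positive diagonal factors. This preserves zero patterns and signs but distorts relative magnitudes in a $G$-dependent way. On $2\times2$ positive matrices, padded, the limit $\mathcal{E}$ has an explicit form through the cross-ratio $g_{11}g_{22}/(g_{12}g_{21})$. From that form I would pick $G$ and $H$ differing in the cross-ratio to violate the displayed inequality. The nonzero-entry condition is met by using a small positive fill $\varepsilon$ instead of zero padding and invoking continuity. For a generic finite word containing both $\RN$ and $\CN$, the same $2\times2$ computation should go through with a rational closed form. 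Composing with $\msign$ in $\msign\circ\mathcal{E}$ reuses the group (i) mechanism.

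For group (iv), $\Sgn$ outputs have entries $\pm1$, so a word beginning with $\Sgn$ discards magnitudes. Choose $G,H$ with the same sign pattern but where $\msign(H)$, or more generally $T(H)$, is better aligned with $G$ than $T(G)$ is. A word ending in $\Sgn$ instead outputs a sign matrix determined by a spectral map, and a $5\times5$ example where $\Sgn(\msign(G))$ disagrees in sign with $G$ on some entry should suffice. The main obstacle I expect is uniformity: arbitrary words in (iii) and (iv), and all $k$ in (ii), require either fixed-point witnesses or an inductive argument showing that the violating configuration is preserved under one more application. The fixed-point idea should be tried first.
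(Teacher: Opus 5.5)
\textbf{There is a genuine gap.}

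Your reduction is sound. A pair with $\langle G,T(H)\rangle>\langle G,T(G)\rangle$ exists if and only if no $\mathcal{K}$ works; otherwise $\mathcal{K}=\operatorname{im}T$ does. The paper's idempotence and alignment lemmas are just the special pairs $(T(G),G)$ and $(G,-G)$. The gap is that the proposal never produces a working witness, and the ones you sketch fail.

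Your $3\times2$ matrix with rows $(a,0),(0,1),(0,1)$ has orthogonal columns. So each map in (i) returns $\msign(G)$, $\RN(G)$, or $\CN(\RN(G))$. These attain $\|G\|_*=a+\sqrt2$, $\sum_i\|G_{i,:}\|_2$, and $\sum_j\|G_{:,j}\|_2$ respectively. Those are exactly the support functions of the spectral, row-norm, and column-norm balls that contain the corresponding images. Hence no $H$ can beat $T(G)$, for any $a$.

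The dilution you describe only occurs when the heavy row shares its column direction with many light rows. For instance, take $G$ with rows $e_1^\top,\varepsilon e_1^\top,\varepsilon e_2^\top$ and compare against $Q'$ with rows $e_1^\top,e_2^\top/\sqrt2,e_2^\top/\sqrt2$. This gives $\langle G,Q'\rangle=1+\varepsilon/\sqrt2>(1+\varepsilon)/\sqrt2+\varepsilon=\langle G,(\msign\circ\RN)(G)\rangle$ for small $\varepsilon>0$. Both outputs are fixed points of $\msign\circ\RN$ with \emph{unequal} row norms, not the equal-row-norm class you propose. So this pair would handle every $k$ for $(\msign\circ\RN)^{\circ k}$. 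However, $\RN\circ\msign$, $\RN\circ\CN\circ\msign$ and $(\RN\circ\msign)^{\circ k}$ still need a separate construction, because this $G$ is optimal for them.

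In (iv), if $\Sgn$ is applied first, then $G$ and $H$ with the same sign pattern satisfy $T(G)=T(H)$. So the pair you describe gives equality, not a violation. A single entry where $\Sgn(\msign(G))$ disagrees with $G$ is also not a violation on its own. The missing idea is to use the invariance of the first map to amplify a badly aligned entry or quadratic form until $\langle G,T(G)\rangle<0$, and then take $H=-G$ by oddness. The relevant invariances are entry magnitudes for $\Sgn$, and $G\mapsto GP$ with $P\succ0$ for $\msign$ (e.g.\ $G_\gamma=Q+\gamma zz^\top$ with $Qz=z$).

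For $\mathcal{E}$, padding is not available. $\mathcal{E}$ is defined only on matrices without zero entries. Its compatible normalization (squared row norms $n/m$, squared column norms $1$) cannot be met by an isolated $2\times2$ block unless $m=n$, because the block's squared mass would have to equal both $2n/m$ and $2$. With an $\varepsilon$-fill, the off-block entries keep squared mass at least $2|1-n/m|$ in the limit, so there is no reduction to a $2\times2$ cross-ratio formula. The paper instead uses $\mathcal{E}(LBR)=B$ for an explicit positive balanced $B$. It perturbs the squared entries of $B$ along a zero-marginal $Z$, producing outputs $B(t)$ with $\langle LBR,B(t)\rangle>\langle LBR,B\rangle$.

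Likewise, $\msign\circ\mathcal{E}$ is handled by negative alignment: a balanced $B$ whose polar factor is negative at an entry where $B$ is positive, amplified via $L_MBR_M$. It is not done by reusing group (i), which is stated only for $m>n$, while $\msign\circ\mathcal{E}$ is claimed in every shape.

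Finally, the uniformity over words and over $k$, which you flag as the main obstacle, is the real content of the proof, and your proposal leaves it open. The paper obtains it from invariant one-parameter families:
\begin{itemize}
\item On $A(t),Q(t)$, every map in (i)--(ii) acts through the strictly increasing $\phi(t)=2t/(r+4t)$, with $r=m-n$ and $\phi(1/2)<1/2$.
\item On $R(t)$, every reduced $\RN/\CN$ word ending in $\RN$ acts as $F^k$, with $F$ increasing and $F(1)<1$.
\end{itemize}
So a second application always changes the output while its Frobenius norm stays $\sqrt m$ or $\sqrt n$. This contradicts the idempotence lemma for all words and all $k$ at once. In (iv), the alternating iterates stay inside a fixed set, $\{Q,B,R\}$ or $\{B,R\}$, on which the amplified direction is negative, so one input works for every word length.
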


The statement concerns the exact matrix sign and the stateless chains, and it covers every chain in \Cref{table:intro_lmochain} up to transpose. Normalizing after $\msign$ can be vacuous: on tall inputs of full column rank, the columns of $\msign(G)$ are already orthonormal, so $\CN\circ\msign=\msign$, and symmetrically, on wide inputs of full row rank, $\RN\circ\msign=\msign$. The latter is why the restriction $m>n$ exists. 

The proof relies on properties that a linear maximization oracle over a fixed set $\mathcal{C}$ must satisfy.

\begin{lemma}\label{lem:idempotence}
Let $N(G)\in\argmax_{X\in\mathcal{C}}\langle G,X\rangle$ for every $G$, for a
fixed set $\mathcal{C}$, and let $Y=N(G)$ for some $G$.
If $\|N(Y)\|_F=\|Y\|_F$, then $N(Y)=Y$.
\end{lemma}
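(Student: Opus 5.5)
Throughout, we use only the maximization property of $N$ at the two inputs $G$ and $Y$, together with the fact that $Y=N(G)$ and $N(Y)$ both lie in $\mathcal{C}$. Write $Z:=N(Y)$.

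\emph{Step 1: $Z$ is at least as good as $Y$ for the input $Y$.} By definition of $Z$ as a maximizer of $\langle Y,\cdot\rangle$ over $\mathcal{C}$, and since $Y\in\mathcal{C}$,
\[
\langle Y,Z\rangle\;\ge\;\langle Y,Y\rangle=\|Y\|_F^2 .
\]

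\emph{Step 2: Cauchy--Schwarz forces $Z=Y$.} We have
\[
\langle Y,Z\rangle\le\|Y\|_F\|Z\|_F=\|Y\|_F^2,
\]
using the hypothesis $\|Z\|_F=\|Y\|_F$. Combined with Step 1, this gives equality:
\[
\langle Y,Z\rangle=\|Y\|_F\|Z\|_F .
\]
Equality in Cauchy--Schwarz means $Z=cY$ for some $c\ge0$ (or $Y=0$, in which case $\|Z\|_F=0$ and $Z=0=Y$). Matching Frobenius norms then gives $c=1$ whenever $Y\neq0$. An equivalent, cleaner computation is
\[
\|Z-Y\|_F^2=\|Z\|_F^2+\|Y\|_F^2-2\langle Y,Z\rangle\le 2\|Y\|_F^2-2\|Y\|_F^2=0,
\]
which directly yields $Z=Y$.

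\emph{Where the care is needed.} The only subtle point is that Step 1 needs $Y\in\mathcal{C}$. This holds because $Y=N(G)$ is itself an argmax over $\mathcal{C}$, which is exactly where the hypothesis that $Y$ lies in the range of $N$ is used. The maximization property at $G$ itself plays no further role.

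The argument needs no assumptions on $\mathcal{C}$, such as convexity or symmetry, and no assumptions on $N$, such as homogeneity.
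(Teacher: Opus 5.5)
Your proof is correct and follows essentially the same route as the paper's: you use $Y\in\mathcal{C}$ and optimality at input $Y$ to get $\langle Y,N(Y)\rangle\ge\|Y\|_F^2$, then Cauchy--Schwarz with the equal norms to force equality and hence $N(Y)=Y$, including the case $Y=0$. Your expansion $\|Z-Y\|_F^2\le 0$ is a slightly tidier way to finish the same equality-case step.
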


\begin{lemma}\label{lem:alignment}
Let $N(G)\in\argmax_{X\in\mathcal{C}}\langle G,X\rangle$ for every $G$, for a
fixed set $\mathcal{C}$. If $N(-G)=-N(G)$, then $\langle G,N(G)\rangle\ge0$.
\end{lemma}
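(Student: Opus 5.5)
The plan is to compare the value that the oracle attains at $G$ with the value of a competing feasible point that oddness of $N$ gives us for free. The only inputs are the defining argmax property of $N$ and the hypothesis $N(-G)=-N(G)$; no structure of $\mathcal{C}$ (convexity, symmetry, boundedness) should be needed.

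\textbf{Step 1: feasibility of the reflected point.} By assumption, for every input $H$ we have $N(H)\in\argmax_{X\in\mathcal{C}}\langle H,X\rangle$, and in particular $N(H)\in\mathcal{C}$. Applying this with $H=-G$ gives $N(-G)\in\mathcal{C}$. Oddness then gives $-N(G)=N(-G)\in\mathcal{C}$. So both $N(G)$ and $-N(G)$ are feasible points of $\mathcal{C}$.

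\textbf{Step 2: optimality comparison.} Since $N(G)$ maximizes $\langle G,\cdot\rangle$ over $\mathcal{C}$ and $-N(G)\in\mathcal{C}$,
\[
\langle G,N(G)\rangle \;\ge\; \langle G,-N(G)\rangle \;=\; -\langle G,N(G)\rangle .
\]
Rearranging gives $2\langle G,N(G)\rangle\ge 0$, which is the claim.

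There is essentially no obstacle. The only point needing care is that the oddness hypothesis is applied at the specific $G$ (and $-G$) in question, so both must lie in the domain of $N$. We read the lemma as assuming this, as in \Cref{thm:not_lmo}, where domains such as ``matrices with no zero entries'' are closed under negation.

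This alignment property is what \Cref{thm:not_lmo} will use. Every chain in the theorem is odd, since each $N_i$ is odd. Hence exhibiting a single $G$ with $\langle G,N(G)\rangle<0$ rules out any fixed $\mathcal{C}$, complementing the idempotence test of \Cref{lem:idempotence}.
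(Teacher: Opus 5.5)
Your proof is correct and is the same argument the paper gives: $-N(G)=N(-G)$ is an oracle output and so lies in $\mathcal{C}$, and optimality at $G$ then gives $\langle G,N(G)\rangle\ge -\langle G,N(G)\rangle$. Your remark that $G$ and $-G$ must both lie in the domain of $N$ is a harmless extra precaution; the paper leaves it implicit.
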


For (i)--(ii) and the finite compositions in (iii), the normalizations keep undoing each other: on a suitable input, applying the chain a second time changes the output but not its norm, contradicting \Cref{lem:idempotence}.
For $\msign\circ\mathcal{E}$ and (iv), the first map is invariant to a family of transformations of its input: diagonal rescalings for $\mathcal{E}$, and changes of singular values or entry magnitudes for $\msign$ and $\Sgn$.
Transforming $G$ within this family leaves the output unchanged but can make it negatively aligned with $G$; since these chains are odd, this contradicts \Cref{lem:alignment}.
The limit $\mathcal{E}$ is idempotent, odd, and positively aligned, so both lemmas hold for it; we instead violate the LMO condition directly, by finding $G$ and $H$ with $\langle G,\mathcal{E}(G)\rangle<\langle G,\mathcal{E}(H)\rangle$. 


\subsection{Convergence behavior of chained LMOs}
\label{sec:convergence}

An LMO update always has a negative alignment with the gradient, which makes it a descent direction. When there exists a positive alignment, which is possible for chained LMOs, we can construct a counterexample where an optimization method diverges even for smooth functions (\Cref{prop:divergence}).

\begin{proposition}\label{prop:divergence}
Let $T$ be positively scale invariant with $\langle G, T(G)\rangle > 0$ for some $G$. For the convex, $1$-smooth quadratic $f(X) = \langle G, X\rangle^2 / (2\|G\|_F^2)$ with $X_0 = G/\|G\|_F^2$, the iterates $X_{t+1} = X_t + \eta_t T(\nabla f(X_t))$ satisfy $f(X_t) \to \infty$ for every step size sequence with $\sum_t \eta_t = \infty$.
\end{proposition}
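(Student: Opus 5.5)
The iterates never leave the line spanned by $G$. Set $u = G/\|G\|_F^2$, so that $\langle G,u\rangle = 1$, and write $X_t = X_0 + Z_t$ with $X_0 = u$. Then $\nabla f(X) = \langle G,X\rangle\, G/\|G\|_F^2$, so the gradient is always a scalar multiple of $G$. Define the scalar state $s_t := \langle G, X_t\rangle$, so that $f(X_t) = s_t^2/(2\|G\|_F^2)$ and $s_0 = 1$.

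While $s_t > 0$, positive scale invariance gives
\[
T(\nabla f(X_t)) = T\bigl(s_t G/\|G\|_F^2\bigr) = T(G).
\]
Taking the inner product with $G$ yields the scalar recursion
\[
s_{t+1} = s_t + \eta_t c, \qquad c := \langle G, T(G)\rangle > 0.
\]
I will prove by induction that $s_t = 1 + c\sum_{\tau<t}\eta_\tau$. Since $\eta_\tau > 0$, this gives $s_t \ge 1 > 0$ for all $t$, which is exactly what keeps the hypothesis $s_t>0$ (and hence the scale-invariance step) valid at every iteration.

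From $\sum_t \eta_t = \infty$ we get $s_t \to \infty$, and therefore $f(X_t) = s_t^2/(2\|G\|_F^2) \to \infty$.

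The remaining checks are routine. Convexity holds because $f$ is the square of a linear functional. For $1$-smoothness, the Hessian is $GG^\top/\|G\|_F^2$ in vectorized form, which has operator norm $1$ with respect to the Frobenius norm, so $\nabla f$ is $1$-Lipschitz.

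The only point needing care is where scale invariance is applied: $T$ is only assumed positively scale invariant, so the sign of $s_t$ must be controlled. This is the role of the invariant $s_t \ge 1$ maintained by the induction above. I expect no real obstacle beyond stating this induction cleanly. If the paper intends $\eta_t \ge 0$ rather than $\eta_t>0$, the same argument still goes through, because the increments $\eta_t c$ remain nonnegative and $s_t \ge 1$ persists.
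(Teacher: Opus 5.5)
Your proof is correct and is essentially the paper's argument: track $c_t=\langle G,X_t\rangle$ with $c_0=1$, use positive scale invariance while $c_t>0$ to get $c_{t+1}=c_t+\eta_t\langle G,T(G)\rangle$, conclude by induction that $c_t\to\infty$, and note that the Hessian is a rank-one projector (the paper likewise assumes nonnegative step sizes). One small wording slip is that it is the gradients, not the iterates, that stay on the line spanned by $G$; the iterates move along $X_0+\mathbb{R}\,T(G)$, which your argument never relies on.
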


The counterexamples of \Cref{thm:not_lmo} give such inputs for every composition of $\msign$ and $\Sgn$ that contains both, including the stateless chains of Muon$^2$ and AdaMuon, and for $\msign \circ\ \mathcal{E}$ in all sufficiently large shapes. The length-4 chain of Aurora, $(\msign\circ\RN)^{\circ 2}$, also admits such input for $n = 2$ and $m \ge 258$ (\Cref{appx:lem:aurora-negative}). None of these methods can therefore have a convergence guarantee even over smooth functions.

\textbf{Approximate LMOs.} For chains with negative alignment, we measure how far they are from an exact LMO. We formalize this notion using approximate LMOs.
\begin{definition}[Approximate LMO]\label{def:approx_lmo}
Let $\mathcal K$ be a compact set symmetric with respect to the origin, and define
$m_{\mathcal K}(G)=\min_{X\in\mathcal K}\langle G,X\rangle$.
For $\nu\in(0,1]$, a map $U$ is a \emph{$\nu$-approximate LMO} of
$\mathcal K$ if $U(G)\in\mathcal K$ and
$\langle G,U(G)\rangle\le \nu\, m_{\mathcal K}(G)$
for every $G$.
\end{definition}

The feasibility condition $U(G)\in\mathcal K$ prevents the approximation factor from being increased trivially by rescaling $U$. It also implies $\nu\le1$, where $\nu = 1$ implies exact LMO. Using the same machinery from the LMO framework, we can prove convergence of approximate-LMOs. 
\begin{proposition}\label{prop:convergence}
Let $f$ have an $L$-Lipschitz gradient, let $\Delta = f(X_0) - \inf f$, and let $U$ be a $\nu$-approximate LMO of $\mathcal{K}$.  Define $R := \max_{X\in\mathcal{K}} \|X\|_F$.
\begin{enumerate}[label=(\roman*), leftmargin=*, nosep]
    \item \emph{Gradient descent.} With $X_{t+1} = X_t + \eta\, U(\nabla f(X_t))$ and $\eta = \sqrt{2\Delta/(L R^2 T)}$,
    \[
        \frac{1}{T} \sum_{t=0}^{T-1} h_{\mathcal{K}}(\nabla f(X_t)) \le \frac{R}{\nu} \sqrt{\frac{2 L \Delta}{T}}.
    \]
    \item \emph{SGD with momentum.} With stochastic gradients of variance at most $\sigma^2$, $X_{t+1} = X_t + \eta\, U(M_t)$ with the momentum of \Cref{eq:chainedlmo_update}, $1 - \beta = T^{-1/2}$ and $\eta = c\, T^{-3/4}$, the same average in expectation is $O\big(\nu^{-1} [ \Delta/c + R (L R c + \sigma) ] T^{-1/4}\big)$.
\end{enumerate}
\end{proposition}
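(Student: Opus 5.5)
The plan is to rerun the standard nonconvex LMO descent argument and carry the factor $\nu$ through it. The quantity $h_{\mathcal K}$ is presumably the support function $h_{\mathcal K}(G)=\max_{X\in\mathcal K}\langle G,X\rangle$. Since $\mathcal K$ is symmetric, $h_{\mathcal K}(G)=-m_{\mathcal K}(G)$. The approximate-LMO condition then reads
\[
\langle G,U(G)\rangle\le -\nu\,h_{\mathcal K}(G),
\]
and feasibility gives $\|U(G)\|_F\le R$. These two facts replace the exact LMO identities everywhere.

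For part (i), I would use the $L$-smooth descent lemma:
\[
f(X_{t+1})\le f(X_t)+\eta\langle\nabla f(X_t),U(\nabla f(X_t))\rangle+\tfrac{L\eta^2}{2}\|U(\nabla f(X_t))\|_F^2
\le f(X_t)-\eta\nu\,h_{\mathcal K}(\nabla f(X_t))+\tfrac{L\eta^2R^2}{2}.
\]
Telescoping over $t=0,\dots,T-1$ and using $f(X_T)\ge\inf f$ gives
\[
\frac1T\sum_t h_{\mathcal K}(\nabla f(X_t))\le \frac{1}{\nu}\Bigl(\frac{\Delta}{\eta T}+\frac{L\eta R^2}{2}\Bigr).
\]
Plugging in $\eta=\sqrt{2\Delta/(LR^2T)}$ makes both terms equal to $\tfrac12 R\sqrt{2L\Delta/T}$, which yields the stated bound.

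For part (ii), I would follow the momentum analysis of \citet{pethick2025training}, with $M_t$ replacing the gradient inside $U$. The key inequality splits $\langle\nabla f,U(M_t)\rangle$ as $\langle M_t,U(M_t)\rangle+\langle\nabla f-M_t,U(M_t)\rangle$. The first term is at most $-\nu h_{\mathcal K}(M_t)$, and the second is at most $R\|\nabla f(X_t)-M_t\|_F$. I then need to pass from $h_{\mathcal K}(M_t)$ back to $h_{\mathcal K}(\nabla f(X_t))$. Since $h_{\mathcal K}$ is $R$-Lipschitz in Frobenius norm, $h_{\mathcal K}(M_t)\ge h_{\mathcal K}(\nabla f)-R\|\nabla f-M_t\|_F$. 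This gives a per-step descent of $-\eta\nu h_{\mathcal K}(\nabla f(X_t))+2\eta R\|\epsilon_t\|_F+L\eta^2R^2/2$, where $\epsilon_t=M_t-\nabla f(X_t)$.

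The main obstacle is bounding $\mathbb E\|\epsilon_t\|_F$. The error satisfies the recursion $\epsilon_t=\beta\epsilon_{t-1}+\beta(\nabla f(X_{t-1})-\nabla f(X_t))+(1-\beta)\xi_t$. The drift term is bounded by $L\eta R$ because iterates move by at most $\eta R$, and $\xi_t$ is noise with variance at most $\sigma^2$. Unrolling the recursion and taking square roots for the martingale noise should give
\[
\mathbb E\|\epsilon_t\|_F\lesssim \beta^t\|\epsilon_0\|+\frac{L\eta R}{1-\beta}+\sigma\sqrt{1-\beta}.
\]
Averaging and dividing by $\eta\nu$ then yields a bound of order
\[
\nu^{-1}\Bigl[\frac{\Delta}{\eta T}+\frac{R\,\mathbb E\|\epsilon_0\|}{T(1-\beta)}+\frac{LR^2\eta}{1-\beta}+R\sigma\sqrt{1-\beta}+LR^2\eta\Bigr].
\]
With $1-\beta=T^{-1/2}$ and $\eta=cT^{-3/4}$, each term is $O(T^{-1/4})$ with constants $\Delta/c$, $LR^2c$, and $R\sigma$. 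This matches $O\bigl(\nu^{-1}[\Delta/c+R(LRc+\sigma)]T^{-1/4}\bigr)$, assuming the initial momentum error is bounded by $\sigma$ or $M_{-1}$ is initialized appropriately.
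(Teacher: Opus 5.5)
Your proposal is correct and follows the paper's proof essentially step by step. For (i) it is the same descent-lemma argument. For (ii) it uses the same robust alignment bound $\langle g,U(v)\rangle\le\nu m_{\mathcal K}(g)+(1+\nu)R\|g-v\|_F$ (your constant $2$ is the paper's $1+\nu\le 2$), together with the identical tracking recursion and unrolled bound on $\mathbb E\|M_t-\nabla f(X_t)\|_F$. One small difference: you do not need the extra assumption on the initial momentum error. With $1-\beta=T^{-1/2}$, your term $R\,\mathbb E\|\epsilon_0\|_F/(T(1-\beta))$ is $O(T^{-1/2})$, and the paper simply treats it as lower order for any fixed initialization with finite initial tracking error.
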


Compared with an exact LMO, both rates lose exactly a factor $1/\nu$. Two chains from \Cref{table:intro_lmochain}, $\msign \circ \RN$ and $\RN \circ \msign$, fall into the class of approximate-LMOs (\Cref{prop:approx}).
\begin{proposition}\label{prop:approx}
On all $m \times n$ matrices, $- \RN \circ \msign$ is a $1/\sqrt{m}$-approximate LMO of $\mathcal{K}_{\mathrm{row}} = \{X : \max_i \|X_{i:}\|_2 \le 1\}$, and $- \msign \circ \RN$ is a $1/\sqrt{m}$-approximate LMO of the spectral-norm ball. By transposition, the same holds for $\CN$ with $1/\sqrt{n}$.
\end{proposition}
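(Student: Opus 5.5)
For each claim I would compute $m_{\mathcal K}$ in closed form, check feasibility, and then lower-bound $\langle G,N(G)\rangle$ using only elementary facts about the polar factor. Both minima are standard:
\[
m_{\mathcal K_{\mathrm{row}}}(G)=-\textstyle\sum_i\|g_i\|_2
\qquad\text{and}\qquad
m_{\{\|X\|_{\op}\le 1\}}(G)=-\|G\|_*,
\]
where $g_i$ denotes the $i$-th row of $G$ and $\|\cdot\|_*$ the nuclear norm. Writing $U=-N$, the condition $\langle G,U(G)\rangle\le\nu\,m_{\mathcal K}(G)$ becomes $\langle G,N(G)\rangle\ge\nu\,|m_{\mathcal K}(G)|$. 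Zero rows are mapped to zero throughout; they contribute nothing to either side.

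\emph{$-\RN\circ\msign$.} Feasibility is immediate, since every row of $\RN(\cdot)$ has norm $1$ or $0$. Let $G=P\Sigma V^\top$ and $Q=\msign(G)=PV^\top$, with rows $q_i$ and $r_i=\|q_i\|_2\le 1$. Then $GQ^\top=P\Sigma P^\top\succeq0$, so each $\langle g_i,q_i\rangle\ge0$, and $\langle g_i,q_i\rangle=0$ whenever $r_i=0$. It follows that
\[
\langle G,\RN(Q)\rangle=\sum_{i:\,r_i>0}\frac{\langle g_i,q_i\rangle}{r_i}\ \ge\ \sum_i\langle g_i,q_i\rangle=\tr(GQ^\top)=\|G\|_* .
\]
Finally, $\|G\|_*\ge\|G\|_F\ge m^{-1/2}\sum_i\|g_i\|_2$ by Cauchy--Schwarz, which gives the factor $1/\sqrt m$.

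\emph{$-\msign\circ\RN$.} Feasibility holds because $\|\msign(\cdot)\|_{\op}\le1$. Let $H=\RN(G)$, so $G=DH$ with $D=\diag(\|g_i\|_2)$, and let $Q=\msign(H)$. Then
\[
\langle G,Q\rangle=\sum_i\|g_i\|_2\,a_i,\qquad a_i:=(HQ^\top)_{ii},\qquad HQ^\top=(HH^\top)^{1/2}.
\]
The key step is a uniform lower bound on $a_i$ for nonzero rows. Set $A=HH^\top$, which has $A_{ii}=1$. For a PSD matrix, $A^{1/2}\succeq A/\|A\|_{\op}^{1/2}$, which can be checked eigenvalue by eigenvalue. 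Hence
\[
a_i\ \ge\ \frac{1}{\|H\|_{\op}}\ \ge\ \frac{1}{\|H\|_F}\ \ge\ \frac{1}{\sqrt m},
\]
so $\langle G,Q\rangle\ge m^{-1/2}\sum_i\|g_i\|_2$. To finish, write $G=\sum_i e_i g_i^\top$ and apply the triangle inequality for the nuclear norm: $\|G\|_*\le\sum_i\|e_ig_i^\top\|_*=\sum_i\|g_i\|_2$.

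The CN statements follow by applying both arguments to $G^\top$. The step I expect to be the main obstacle is the diagonal lower bound $a_i\ge1/\|H\|_{\op}$ in the second claim. The row weights $\|g_i\|_2$ can be arbitrarily uneven, so only a per-row bound, rather than a bound on the sum, is usable there. The operator-monotone-type inequality $A^{1/2}\succeq A/\|A\|^{1/2}$ is what supplies this per-row bound.
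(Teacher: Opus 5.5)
Your proof is correct and follows the paper's argument essentially step for step. The first part is the paper's $D\succeq I$ trace argument written entrywise on the diagonal of the PSD matrix $GQ^\top$, and the second part uses the same per-row bound $\langle h_i,q_i\rangle\ge 1/\|H\|_{\op}\ge 1/\sqrt m$, followed by the same row-wise nuclear-norm triangle inequality. The only cosmetic difference is that you get the per-row bound from $(HH^\top)^{1/2}\succeq HH^\top/\|H\|_{\op}$ on the row side. The paper instead uses $((H^\top H)^{\dagger})^{1/2}\succeq \|H\|_{\op}^{-1}I$ on the range of $H^\top H$, which is the same inequality.
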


This gives convergence guarantees for the stateless versions of NorMuon, Muon$+$ with a single normalization and MuonEq, with additional $\sqrt{m}$ appearing in the upper bound. The additional dimension factor cannot be removed within this analysis: any set for which a map is feasible must contain the convex hull of its image, and a larger set only makes the approximation factor smaller.

\begin{proposition}\label{prop:dimension}
Let $n = 2$. For every closed convex set $\mathcal{K}$ containing the image, the best approximation factor satisfies $\nu \le 3(m-1)^{-1/4}$ for $-\RN\circ\msign$ (for odd $m$) and $\nu \le 3/\sqrt{m-1}$ for $-\msign\circ\RN$.
\end{proposition}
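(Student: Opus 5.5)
The plan is to reduce the claim to exhibiting one bad gradient per chain. Let $N$ denote the chain, so that $U=-N$. Feasibility forces every point $N(H)$ into $\mathcal K$, and since $\mathcal K$ is symmetric, $m_{\mathcal K}(G)=-h_{\mathcal K}(G)\le -\sup_H\langle G,N(H)\rangle$. The defining inequality $\langle G,U(G)\rangle\le\nu\,m_{\mathcal K}(G)$ then gives
\[
\nu\;\le\;\frac{\langle G,N(G)\rangle}{\sup_H\langle G,N(H)\rangle}
\]
for every $G$ with positive self-alignment. Since $\mathcal K\supseteq\overline{\mathrm{conv}}\,\mathrm{im}(N)$ and enlarging $\mathcal K$ only shrinks the ratio, it suffices to produce, for each chain, one $G$ and one competitor $H$ making this ratio at most the stated bound. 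Both chains are odd and positively scale invariant, and they depend on $G$ only through a "shape" that forgets magnitude information. I will exploit exactly this: keep $N(G)$ fixed while moving $G$ so that its alignment with $N(G)$ decays relative to its best alignment over the image.

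For $-\msign\circ\RN$, I will take $G$ with first row $s\,e_1^\top$ and remaining $m-1$ rows all equal to $c\,w^\top$, where $w=(\cos\theta,\sin\theta)$. Then $\RN(G)=A$ has first row $e_1^\top$ and remaining rows $w^\top$, independent of $s$ and $c$, so $Q=\msign(A)$ is a fixed explicit matrix. Because $A^\top A=e_1e_1^\top+(m-1)ww^\top$ is dominated by the $w$-direction, I expect the first row of $Q$ to have only an $O(1/\sqrt{m-1})$ component along $e_1$ (after choosing $\theta$ suitably). Hence $\langle G,Q\rangle=a\,s+b\,c$ with $a=O(1/\sqrt{m-1})$ and $b\approx\sqrt{m-1}\,\cos\angle(w,\cdot)$. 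For the competitor I take $H=G'$ with rows arranged so that $\msign\circ\RN(H)$ puts the first row fully on $e_1$ (e.g.\ $w\perp e_1$), which gives $\sup_H\langle G,N(H)\rangle\gtrsim s$. Balancing $c\sqrt{m-1}\ll s$ (roughly $c=s/(m-1)$) leaves a ratio of order $1/\sqrt{m-1}$; I then track constants to reach $3/\sqrt{m-1}$.

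For $-\RN\circ\msign$, I write $\langle G,N(G)\rangle=\sum_i\langle G_{i:},P_{i:}\rangle/\|P_{i:}\|$ with $P=\msign(G)$. The competitor side can be made as large as roughly $\sum_i\|G_{i:}\|$, provided the image contains a matrix whose rows are parallel to the rows of $G$. That holds whenever there is an orthonormal-column $Q=\msign(H)$ with prescribed row directions $d_i$, which is possible iff $\sum_i r_i d_id_i^\top=I_2$ for some $r_i>0$. I will choose row directions spread symmetrically on the circle, which is where odd $m$ will be used to avoid antipodal pairs. Then I pick magnitudes of the rows of $G$ so that $\msign(G)$ rotates many rows away from their $G$-directions: a few heavy rows fix the singular frame, and many light rows get mapped to directions nearly orthogonal to themselves. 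Each light row then contributes $\|G_{i:}\|\cos\phi_i$ with $\cos\phi_i$ small.

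The main obstacle is this last balancing. Heavy rows contribute fully to both sides, while light rows must be numerous enough to dominate $\sum_i\|G_{i:}\|$ yet light enough not to move the singular frame. I expect the optimum to trade a misalignment angle $\phi\sim m^{-1/4}$ against a weight $\sim m^{-1/2}$, yielding the $(m-1)^{-1/4}$ rate rather than $m^{-1/2}$. I would first try a one-parameter family: heavy weight $1$ on two orthogonal rows, light weight $t$ on the others, and optimize $t$ explicitly using the closed form of $\msign$ for $n=2$ via the $2\times2$ Gram matrix.
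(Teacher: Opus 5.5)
Your reduction and your $-\msign\circ\RN$ half are fine and use the same mechanism as the paper. A heavy row along $e_1$ is swamped by $m-1$ light rows nearly along $e_1$, so $\msign$ shrinks the heavy row's coefficient to $O(1/\sqrt{m-1})$. With $\sqrt{m-1}\sin\theta=1$ and $c=s/(m-1)$ one gets $Q_{11}=2/\sqrt{m+2}$ and $b=m/\sqrt{m+2}\le\sqrt{m-1}$, which gives the bound $3/\sqrt{m-1}$. You also do not need $\mathcal K=-\mathcal K$, since $-N(H)\in\mathcal K$ already gives $m_{\mathcal K}(G)\le-\langle G,N(H)\rangle$.

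The gap is in the $-\RN\circ\msign$ half: the only concrete family you propose cannot work. Write $S=G^\top G$ with eigenpairs $(\lambda_1,v_1),(\lambda_2,v_2)$ and $\rho=\sqrt{\lambda_1/\lambda_2}$. A row $\|x\|(cv_1+sv_2)$ of $G$ becomes $\|x\|(c\lambda_1^{-1/2}v_1+s\lambda_2^{-1/2}v_2)$ in $\msign(G)$, and the cosine between the two is $(c^2+s^2\rho)/\sqrt{c^2+s^2\rho^2}$. Making this at most $\delta$ forces $|s|\lesssim\delta$ and $|s|\rho\gtrsim1/\delta$. So $\rho\gtrsim\delta^{-2}$: $G$ must be nearly rank one, with the weak direction populated only by such tiny components.

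With two orthogonal heavy unit rows, $\lambda_2\ge1+\sum_it^2s_i^2$ and $\lambda_1\le1+kt^2$, where the $k=m-2$ light rows have norm $t$. If $k'$ light rows have cosine at most $\delta$, then $\lambda_1=\rho^2\lambda_2\gtrsim\rho^2+k't^2/\delta^2$, which forces $k'=O(\delta^2k)$. Since $\langle G,N(H)\rangle\le\sum_i\|G_{i:}\|$ for every $H$, the ratio then stays above an absolute constant for every $t$. With symmetrically spread light directions it is worse: $S$ is a multiple of $I$, $\msign(G)$ is just a rescaling of $G$, and the ratio is exactly $1$. Your insistence that the competitor have rows parallel to those of $G$ is what pushes you toward these isotropic configurations, and it is unnecessary.

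The missing idea is to build $G$ from its polar factor. Take $Q=[p,q]$ with orthonormal columns and set $G=Q\diag(1,\varepsilon)$, so that $\msign(G)=Q$ exactly and no Gram-matrix optimization is needed. The paper uses $r=m-1$ even, $p=(\sqrt{1-r^{-1/2}},r^{-3/4},\ldots,r^{-3/4})^\top$ and $q=(0,\pm r^{-1/2},\ldots,\pm r^{-1/2})^\top$ with equally many signs of each kind. Odd $m$ is used exactly to make $p^\top q=0$, not to avoid antipodal pairs. Each light row $(r^{-3/4},\pm\varepsilon r^{-1/2})$ of $G$ becomes $(r^{-3/4},\pm r^{-1/2})$, with cosine about $r^{-1/4}$. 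Hence $\langle G,N(G)\rangle\le\sqrt{1-r^{-1/2}}+1+\varepsilon\sqrt r\le3$ for $\varepsilon\le r^{-1}$.

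The competitor is $N(Q')$, where $Q'$ has first row $e_2^\top$ and remaining rows $r^{-1/2}e_1^\top$. Its rows are $e_2^\top,e_1^\top,\ldots,e_1^\top$, so $\langle G,N(Q')\rangle=r\cdot r^{-3/4}=r^{1/4}$, deliberately forfeiting the heavy row. This yields $\nu\le3r^{-1/4}$. The light rows need norm $r^{-3/4}$: their squared mass $r^{-1/2}$ leaves the singular frame intact, while their total norm $r^{1/4}$ dominates.
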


Both bounds tend to zero, so any guarantee derived from \Cref{prop:convergence} degrades with dimension. A tighter analysis may need to go beyond uniform approximation, e.g.\ by bounding the approximation factor along the trajectory, which is out of scope for this work.

%% file: tex_arXiv/3.assocmem.tex
\Cref{sec:chainedLMO} shows that the standard worst-case guarantees of LMO-based optimization do not generally survive chaining: some chains can diverge even on smooth convex objectives, while others admit only dimension-dependent approximate-LMO guarantees. Any positive theory for chaining must therefore exploit additional problem structure beyond generic smoothness.


We instead demonstrate a benefit on the linear associative memory
(\Cref{sec:lam}), following recent theoretical work on Muon
\citep{wang2026muon, kim2026sharp, li2026muon}.
Muon helps there by whitening the anisotropy of the label frequencies, which suffices when the frequencies are the only source of anisotropy, as for the isotropic Gaussian embeddings of \citet{kim2026sharp}. With a second source, such as anisotropic embeddings, Muon alone may not suffice, and chains such as $\msign \circ \RN$ can provably help.
Anisotropic embeddings also make the model a closer abstraction of trained transformers, whose hidden representations are dominated by a few outlier coordinates \citep{timkey2021all,dettmers2022gpt3}.
We make this argument precise in the following sections.

We store $N$ facts with queries $y_i\in\mathbb R^n$ and answers $v_i=\Sigma x_i\in\mathbb R^d$, where $y_i\sim\mathcal N(0,I_n/n)$ and $x_i\sim\mathcal N(0,I_d/d)$ are independent, and $\Sigma=\diag(\sigma) \in \mathbb{R}^{d \times d}$ has positive entries. Fact $i$ occurs with probability $p_i=i^{-\alpha}/\sum_{r=1}^N r^{-\alpha}$ for a fixed $\alpha>1$. A memory $W\in\mathbb R^{d\times n}$ is trained with the cross-entropy loss and evaluated by its retrieval accuracy,
\[
\mathcal L(W)
=
\sum_{i=1}^N
p_i
\left(
-v_i^\top Wy_i
+
\log\sum_{j=1}^N e^{v_j^\top Wy_i}
\right),
\ \ 
\Acc(W)
=
\frac1N
\sum_{i=1}^N
\mathds{1}
\left\{
v_i^\top Wy_i
>
\max_{j\ne i}v_j^\top Wy_i
\right\}.
\]
Following prior theoretical analysis of linear associative memory \citep{kim2026sharp}, we study the retrieval accuracy after a single update from $W_0=0$. Let
$G=-\nabla\mathcal L(0)$ denote the negative gradient at initialization. We compare three methods, gradient descent, Muon, and row normalization followed by Muon:
\[
W_{\mathrm{GD}}=G,\quad
W_{\mathrm{Muon}}=\msign(G),\quad
W_{\mathrm{chain}}=\msign(\RN(G)).
\]
Step sizes are irrelevant because retrieval accuracy is invariant to positive rescaling of $W$. This chain is tailored to the anisotropy in our setting, and other forms of anisotropy may give provable advantages to other chains.


\textbf{Asymptotic regime.} Fix $\alpha$, an infinite positive sequence
$\sigma=(\sigma_j)_{j\ge1}$, and dimension sequences $N_n,d_n$.
For each $n$, the instance uses $N=N_n$ facts, query dimension $n$, answer dimension $d=d_n$,
and the scales $\sigma_{1:d_n}$.
For a method $\mathrm{M}\in\{\mathrm{GD},\mathrm{Muon},\mathrm{chain}\}$, write
\[
\mathrm{Acc}_n^{\mathrm{M}}
=
\mathrm{Acc}\bigl(
W_{\mathrm{M}};\,
N_n,n,d_n,\sigma_{1:d_n}
\bigr)
\]
for the random accuracy of the instance.
The quantity of interest is its limit in probability as $n\to\infty$.

\begin{assumption}[Scaling]
\label{ass:scaling}
Fix $0<\varepsilon<(\beta-1/2)/2$ and $\gamma>1/(1-\beta)$. Eventually, $n^{1+(1-\beta+\varepsilon)/\alpha}\le N_n\le n^{1+(1/2-\varepsilon)/\alpha}$ and $n^{1/(1-\beta)}\le d_n\le n^\gamma$.
\end{assumption}

\begin{assumption}[Anisotropy]
\label{ass:anisotropy}
For some fixed $1/2<\beta<1$ and constants $0<c_\sigma\le C_\sigma<\infty$, we have $c_\sigma j^{-\beta}\le \sigma_j\le C_\sigma j^{-\beta}$ for every $j\ge1$.
\end{assumption}

The scaling regime (\cref{ass:scaling}) is roughly $n\ll N\ll d$: there are more facts than query dimensions, so retrieval requires extracting a weak fact-specific signal, while the large answer dimension provides many coordinates over which that signal can accumulate. 

\Cref{ass:anisotropy} ensures that $\sum_j \sigma_j^2$ remains bounded while $\sum_{j\le d}\sigma_j$ diverges. Thus row normalization, which removes the answer-coordinate scales before the matrix sign, allows the chain to accumulate signal across coordinates in a regime where GD and Muon fail.

The core result is that the asymptotic one-step accuracy of chained Muon tends to
1, whereas those of Muon and GD go to 0.
\begin{theorem}[One-step separation]
\label{thm:lam-separation}
Under \Cref{ass:scaling,ass:anisotropy}, as \(n\to\infty\),
\[
\left(
\Acc_n^{\mathrm{chain}},
\Acc_n^{\mathrm{Muon}},
\Acc_n^{\mathrm{GD}}
\right)
\xrightarrow{\mathbb P}
(1,0,0).
\]
\end{theorem}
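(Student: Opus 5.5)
Start from the explicit gradient at $W_0=0$. Since all logits vanish, the softmax is uniform and
\[
G=-\nabla\mathcal L(0)=\sum_i p_i\Bigl(v_i-\bar v\Bigr)y_i^\top,\qquad \bar v=\tfrac1N\sum_j v_j ,
\]
so $G=\Sigma\bigl(X P Y^\top - \bar x\,(p^\top Y^\top)\bigr)$ with $X=[x_1,\dots,x_N]$, $Y=[y_1,\dots,y_N]$, $P=\diag(p)$. The rank-one mean term is negligible: $\bar x$ has norm $O(N^{-1/2})$. The plan is to treat $G\approx \Sigma A$ with $A:=XPY^\top$. Then row normalization removes $\Sigma$ exactly, since $\RN(\Sigma A)=\RN(A)$, and it approximately isotropizes the rows of $A$. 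The rows of $A$ are i.i.d.\ Gaussian with covariance $d^{-1}YP^2Y^\top$, whose trace concentrates around $\|p\|_2^2/d$. Hence $\RN(G)\approx \sqrt{d/\|p\|^2}\,A$ up to $1+o(1)$ row-wise factors, uniformly over $d\le n^\gamma$ rows, by $\chi^2$ concentration and a union bound. So $W_{\mathrm{chain}}\approx\msign(A)$, which is Muon applied to an isotropic-embedding problem of the \citet{kim2026sharp} type. The anisotropic answers $v_j=\Sigma x_j$ still enter at test time.

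For each method, analyze the margin $v_i^\top W y_i - v_j^\top W y_i=(x_i-x_j)^\top\Sigma W y_i$.

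\textbf{Chain.} Write $\msign(A)=A(A^\top A)^{-1/2}$. Since $d\gg N\gg n$, $X^\top X\approx (1)I_N$ and $A^\top A\approx d^{-1}\,YP^2Y^\top$ concentrates. This gives $\msign(A)y_i\approx X P Y^\top (YP^2Y^\top)^{-1/2}y_i$. The signal part is the coefficient of $x_i$, which is roughly $p_i\,y_i^\top(YP^2Y^\top)^{-1/2}y_i$. Contracting with $\Sigma$ and $x_i$ gives $x_i^\top\Sigma x_i\approx d^{-1}\sum_{j\le d}\sigma_j\asymp d^{-\beta}$. Cross terms $x_i^\top\Sigma x_k$ with $k\ne i$ have standard deviation $d^{-1}\bigl(\sum\sigma_j^2\bigr)^{1/2}\asymp d^{-1}$.

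The gain is therefore a factor $d^{1-\beta}$ (equivalently $\sum\sigma_j/\|\sigma\|_2$). This is exactly why $d\ge n^{1/(1-\beta)}$ is assumed. It should compensate for the $\sqrt N$-type crowd of competitors and for the power-law frequency deficit of rare facts: the whitening $(YP^2Y^\top)^{-1/2}$ equalizes facts up to the effective rank $\asymp n$, and $N\le n^{1+(1/2-\varepsilon)/\alpha}$ controls the tail. A Gaussian max bound over the $N$ competitors $j$, plus a union bound over a $1-o(1)$ fraction of $i$, then gives $\Acc^{\mathrm{chain}}\to1$. I would condition on $Y$ and use Gaussianity of $X$ given $Y$, with Hanson--Wright for the quadratic forms.

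\textbf{Muon and GD.} Now $\Sigma$ survives inside the method. For GD, $W=\Sigma A$, and the score contracts with $\Sigma^2$: the signal is $\propto d^{-1}\sum\sigma_j^2=O(d^{-1})$, while the noise is $d^{-1}\bigl(\sum\sigma_j^4\bigr)^{1/2}$, which is of the same order. There is no accumulation across coordinates, so only facts with large $p_i$ (relative to the crowd from the $n$-dimensional query interference) are retrieved. The lower bound $N\ge n^{1+(1-\beta+\varepsilon)/\alpha}$ should make this a vanishing fraction.

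For Muon, $\msign(\Sigma A)$ is dominated by the leading coordinates of $\Sigma$. Essentially it whitens $P$ but acts through an effective answer dimension $d_{\mathrm{eff}}\asymp(\sum\sigma^2)^2/\sum\sigma^4=O(1)$-ish, truncated at scale $\sim n^{1/(1-\beta)}$. I would approximate $(A^\top\Sigma^2A)^{-1/2}$ by concentration (it is $n\times n$, with $d\gg n$) and show that the per-fact margin's signal-to-noise is $o(\sqrt{\log N})$ for all but $o(N)$ facts. Then the argmax over $N$ competitors fails with probability $\to1$.

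\textbf{Main obstacle.} I expect the hardest step to be the Muon lower bound. It requires controlling $\msign(\Sigma A)$, whose spectrum is heavy-tailed because of both $\Sigma$ and $P$, precisely enough to show failure. I would first try a deterministic-equivalent for $(A^\top\Sigma^2A)^{-1/2}$ conditional on $Y$, together with anti-concentration of the max over competitors. The secondary difficulty is uniformity of the row-normalization factors across all $d$ rows.
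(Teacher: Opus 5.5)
There is a genuine gap. The failing step is your reduction $\RN(G)\approx\sqrt{d/\|p\|^2}\,A$ with $1+o(1)$ row factors.

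Conditional on $Y$, row $a$ of $XK$ (with $K=P_N\diag(p)Y^\top$, which also handles the centering term exactly) is $d^{-1/2}\xi_a^\top(K^\top K)^{1/2}$ with $\xi_a\sim\mathcal{N}(0,I_n)$. Its squared norm is $d^{-1}\sum_\ell\lambda_\ell\xi_{a\ell}^2$, where $\lambda_\ell$ are the eigenvalues of $K^\top K\approx\sum_j p_j^2y_jy_j^\top$. Since $p_j\propto j^{-\alpha}$ with $\alpha>1$, these eigenvalues decay like $\ell^{-2\alpha}$, and $\lambda_1\gtrsim p_1^2$ is a constant fraction of the trace. So the quadratic form has $O(1)$ effective degrees of freedom, fluctuates by $\Theta(1)$ relative factors, and its minimum over the $d$ rows is polylogarithmically small. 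The trace of the row covariance concentrates; the row norms do not.

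Hence $\RN$ rescales rows by non-concentrating factors that are correlated with the row directions, and $W_{\mathrm{chain}}\approx\msign(A)$ is false. The picture you must work with instead is this:
\begin{itemize}
\item the rows of $\RN(XK)$ are $\xi_a^\top(K^\top K)^{1/2}/\sqrt{S_a}$ with $S_a=\xi_a^\top K^\top K\xi_a$;
\item their Gram matrix concentrates around $d\,(K^\top K)^{1/2}C_Y(K^\top K)^{1/2}$, where $C_Y=\mathbb{E}[\xi\xi^\top/S_\xi]$ is not a multiple of $I$;
\item $W_{\mathrm{chain}}$ is close to the proxy with rows $\xi_a^\top C_Y^{-1/2}/\sqrt{dS_a}$;
\item the conditional score mean is $\frac{s_1}{d}t_j^\top J_Yy_i$, with $t_j=(K^\top K)^{-1/2}K^\top e_j$, $J_Y=\mathbb{E}[\xi\xi^\top/\sqrt{S_\xi}]\,C_Y^{-1/2}$ and $s_1=\sum_{a\le d}\sigma_a$.
\end{itemize}
Proving this requires several ingredients. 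You need $cI\preceq J_Y\preceq I$, via inverse moments of $S_\xi$ supplied by the top $\asymp\log n$ eigenvalues. You need a truncated matrix Bernstein bound, because $1/S_a$ is heavy-tailed. You need a polar perturbation bound losing only $\log\operatorname{cond}$, since $K^\top K$ has polynomially large condition number. You need a leave-one-out argument decoupling $y_i$ from $K$ to get the margin $t_i^\top J_Yy_i\gtrsim(n/N)^\alpha$. Your $s_1$-versus-$\sqrt{s_2}$ heuristic survives, but this core of the chain argument is missing.

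Three further steps would not go through as planned.

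\emph{Muon.} Approximating $(A^\top\Sigma^2A)^{-1/2}$ ``by concentration'' fails for the same reason. After whitening, the relevant Gram is $\mathcal{H}=\sum_a\sigma_a^2\xi_a\xi_a^\top$, and its first term alone has operator norm $\asymp n$; your own $d_{\mathrm{eff}}=O(1)$ computation points to this. The paper never approximates $\msign(G)$. It shows that the noise-normalized mean vector of the scores has squared norm at most $\|\mathcal{H}\mathcal{J}^{-1}\mathcal{H}\|_{\mathrm{op}}\le Cn$ for every $W$ with $\col(W)\subseteq\col(G)$. It proves that $N$ Gaussian scores with covariance $I-\Pi$ ($\Pi$ a rank-$n$ projection) and mean of norm $O(\sqrt n)$ have maximum $\gtrsim\sqrt{\log(N/n)}$. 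It bounds Muon's correct coordinate only on average over $i$, using $\frac1N\sum_i|y_i^\top By_i|\le\frac{C}{n}\|B\|_*$ and a uniform small-ball bound for $\|\Sigma W_My_i\|$.

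\emph{Dependence between the update and the test scores.} Your update depends on the same $X$ that generates the test scores. Bounding $x_j^\top\Sigma(W_{\mathrm{chain}}-\widetilde W)y_i$ by $\|W_{\mathrm{chain}}-\widetilde W\|_{\mathrm{op}}$ gives an error of order $\sqrt{n/d}$ up to logs. This exceeds the signal $\frac{s_1}{d}(n/N)^\alpha\asymp d^{-\beta}(n/N)^\alpha$, because $\beta>1/2$. The paper conditions on $(Y,G)$: the scores are then exactly Gaussian with mean $K(K^\top K)^{-1}G^\top Wy_i$ and covariance $\frac{\|\Sigma Wy_i\|^2}{d}(I-\Pi)$. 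The approximation error then enters only through the mean, multiplied by $\|G(K^\top K)^{-1/2}t_j\|\lesssim\sqrt{s_2\log n/d}$.

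\emph{Aggregation over facts.} A union bound over $(1-o(1))N$ facts is not available from per-fact bounds of this strength. Instead, show $\sup_{i\ge bN}\mathbb{P}(\text{fact } i \text{ fails})\to0$ and conclude with Markov's inequality.
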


The separation is driven by how the answer-coordinate scales enter the three
updates. For GD and Muon, the relevant normalized score quantities depend on
quadratic and quartic powers of \(\sigma_a\); because
\(\sum_a \sigma_a^2<\infty\) and \(\sum_a \sigma_a^4<\infty\), their normalized
score vectors have only \(O(n)\) total energy, while the correct-answer signal
remains too weak to overcome the maximum of the \(N\gg n\) incorrect scores.
By contrast, row normalization removes the answer-coordinate scales before the
matrix-sign step, since
\(\RN(\Sigma XK)=\RN(XK)\). The useful contributions can then add coherently in
proportion to \(\sum_a \sigma_a\), while the residual fluctuations are controlled
by \(\sqrt{\sum_a \sigma_a^2}\). Since
\(\sum_{a\le d}\sigma_a\to\infty\) while the quadratic and quartic sums remain
bounded, the chained update can accumulate signal across answer coordinates in
a regime where GD and Muon cannot, yielding the separation above. The full
proof is deferred to \Cref{app:lam_separation}.

%% file: tex_arXiv/4.TC.tex
\Cref{sec:lam_separation} suggests that equalizing the update across coordinates is what makes chaining useful. We push this further by exploring three directions in the space of chained LMOs. First, we introduce \textbf{TensorChain}, which adds a new normalization axis across layers. Second, we explore \textbf{Aurora-LN}, a simple long-side generalization of Aurora that applies the same chain to both tall and wide matrices. Third, motivated by recent work replacing Newton--Schulz with simpler axis-wise normalization \citep{glentis2025minimalist,deng2026rmnp}, we study whether the spectral step itself can be replaced by iterated row--column normalization, which we refer to as \textbf{SinkhornChain}. 

\begin{figure}[htbp]
\begin{minipage}[t]{0.48\linewidth}
\begin{algorithm}[H]
\caption{TensorChain}
\label{alg:tensor_chain}
\begin{algorithmic}[1]
\Require Momenta $M_{1:g}\in(\mathbb{R}^{m\times n})^g$
\State $T \gets \operatorname{Stack}(M_1,\ldots,M_g)
\in\mathbb{R}^{g\times m\times n}$
\State $A \gets
\AN_{\mathrm{layer}}
\circ\AN_{\mathrm{col}}
\circ\AN_{\mathrm{row}}$
\State $T \gets A^{\circ k}(T)$
\State $T \gets \NS_{\mathrm{slice}}(T)$
\State $T \gets A^{\circ k}(T)$
\State \Return $\operatorname{Unstack}(T)$
\end{algorithmic}
\end{algorithm}
\end{minipage}\hfill
\begin{minipage}[t]{0.48\linewidth}
\begin{algorithm}[H]
\caption{Aurora-LN}
\label{alg:aurora_ln}
\begin{algorithmic}[1]
\Require momentum $M\in\mathbb{R}^{m\times n}$
\If{$m\ge n$}
    \State $U \gets
    (\NS\circ\RN\circ\NS\circ\RN)(M)$
\Else
    \State $U \gets
    \bigl[(\NS\circ\RN\circ\NS\circ\RN)(M^\top)\bigr]^\top$
\EndIf
\State \Return $U$
\end{algorithmic}
\end{algorithm}
\end{minipage}
\end{figure}

\subsection{TensorChain: Normalizing Across Layers}
\label{sec:tensorchain}

Existing chains normalize only within individual weight matrices, through row, column, or spectral normalization, and treat each layer independently. TensorChain instead jointly normalizes parameters of the same shape across layers (\Cref{alg:tensor_chain}).

Parameters with the same shape and role, for example, the query projections \(W_Q\) of transformer blocks, are partitioned into groups. For a group of size \(g\), we stack their momenta into a tensor \(M\in\mathbb R^{g\times m\times n},\)
whose first axis indexes layers. Normalizing fibers along this axis prevents individual layers from dominating the corresponding coordinates of the group's update.

Let
\(\mathrm{AN}_{\mathrm{layer}},\mathrm{AN}_{\mathrm{row}},\mathrm{AN}_{\mathrm{col}}\)
denote \(\ell_2\)-normalization along the corresponding tensor axes, define
\[
\mathcal A
:=
\mathrm{AN}_{\mathrm{layer}}
\circ
\mathrm{AN}_{\mathrm{col}}
\circ
\mathrm{AN}_{\mathrm{row}},
\]
and let \(\mathrm{NS}_{\mathrm{slice}}\) apply Newton--Schulz independently to each matrix slice. TensorChain maps
\[
\mathrm{TensorChain}(M)
=
\mathcal A^{\circ k}
\circ
\mathrm{NS}_{\mathrm{slice}}
\circ
\mathcal A^{\circ k}(M),
\]
whose slices give the per-layer updates.

TensorChain fits directly into the chained-LMO framework of
\Cref{sec:chainedLMO} when we assume exact $\msign$ instead of Newton--Schulz. We will denote this as idealized TensorChain. Each fiber normalization is an exact LMO over a product of \(\ell_2\) balls, while slice-wise matrix sign is an exact LMO over the corresponding product of spectral-norm balls.

\begin{proposition}[Idealized TensorChain as chained LMO]
\label{prop:tensorchain}
For \(k\ge1\), Idealized TensorChain is a composition of exact LMOs over product norm balls, but the resulting map is not itself an LMO of any fixed constraint set
when \(m,n\ge5\). Moreover, Idealized TensorChain map admits a
negative-alignment input and can diverge on a convex smooth
objective under non-summable step sizes.
\end{proposition}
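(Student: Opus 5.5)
The proof has three parts, taken in order.

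\textbf{Chained-LMO structure.} Each fiber normalization $\mathrm{AN}_{\mathrm{axis}}$ maps every fiber $u$ along the chosen axis to $u/\|u\|_2$, and zero fibers to zero. This is $-\operatorname{lmo}$ of the product of unit $\ell_2$ balls indexed by the fibers. The reason is that $\langle G,X\rangle$ separates over fibers, so the minimization decouples into independent $\ell_2$-ball LMOs. In the same way, $\msign_{\mathrm{slice}}$ is $-\operatorname{lmo}$ of the product of spectral-norm balls over the $g$ slices. Hence idealized TensorChain is a composition of $2\cdot 3k+1$ exact LMOs, with sign factors absorbed as in \Cref{eq:chain}.

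\textbf{Not an LMO.} Reduce to \Cref{thm:not_lmo} by restricting to inputs with a single layer, $g=1$. (Alternatively, take stacks of $g$ identical slices, which the layer normalization only rescales by $1/\sqrt g$ and so leaves direction-preserving.) With $g=1$, $\mathrm{AN}_{\mathrm{layer}}$ acts on scalars and maps each entry to its sign, so it equals $\Sgn$. The map $\mathcal A$ becomes $\Sgn\circ\CN\circ\RN=\Sgn$, since positive row and column scalings preserve signs. The whole chain then becomes $\Sgn^{\circ k}\circ\msign\circ\Sgn^{\circ k}=\Sgn\circ\msign\circ\Sgn$, the AdaMuon chain.

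Part (iv) of \Cref{thm:not_lmo} states that this composition of $\msign$ and $\Sgn$ is not a linear maximization oracle for $m,n\ge5$. Its proof produces an input $G$ violating \Cref{lem:alignment} or \Cref{lem:idempotence}, which are statements about the map alone. An LMO over a fixed set of $g\times m\times n$ tensors would, restricted to tensors supported on one slice, violate the same lemma. The lemmas must therefore be applied on the full tensor space with the embedded counterexample. Care is needed only to check that the input and output stay in the subspace where the chain acts as the reduced map.

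If $g\ge2$ is forced, I would instead use stacks of identical slices $M_i=G$. The layer normalization gives $\Sgn(G)/\sqrt g$ on every slice, row and column normalizations commute with replication, and the same reduction holds up to a positive scalar. That scalar is harmless because every map in the chain is positively scale invariant.

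\textbf{Divergence.} Use the negative-alignment witness from (iv): an odd $G$ with $\langle G,\Sgn\circ\msign\circ\Sgn(G)\rangle<0$. Embed it as a tensor, either on one slice or replicated. This gives a tensor $\mathbf G$ for which the TensorChain map $N$ satisfies $\langle \mathbf G,N(\mathbf G)\rangle<0$. The update is $T=-N$ up to scale, so $\langle \mathbf G,T(\mathbf G)\rangle>0$. $T$ is positively scale invariant, so \Cref{prop:divergence} applied to $f(X)=\langle\mathbf G,X\rangle^2/(2\|\mathbf G\|_F^2)$ gives divergence for non-summable step sizes.

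\textbf{Main obstacle.} The hardest step is verifying that the embedding (one slice, or replicated slices) is exactly preserved by all three fiber normalizations under the zero-fiber convention. A second concern is that (iv)'s witness has no zero entries; otherwise $\Sgn$ could zero out entries and change the reduction.
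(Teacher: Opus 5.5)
Your proposal is correct and follows essentially the paper's route. You embed $G$ in a single slice, observe that $\mathcal A^{\circ k}$ acts there as $\Sgn$ so the chain reduces to $\Sgn\circ\msign\circ\Sgn$, and then transport the negative-alignment witness of \Cref{appx:lem:mixed-sign} to the tensor space, where oddness of the map, \Cref{lem:alignment}, and \Cref{prop:divergence} finish the argument. The zero-entry concern you flag is unnecessary, since under the zero-fiber convention $\mathcal A(\iota(G))=\iota(\Sgn(G))$ holds for every $G$; this matters because the rectangular witness in that lemma is zero-padded and does have zero entries.
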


A closely related approach is Teon~\citep{zhang2026teon}, which also stacks matrices across layers but applies Newton--Schulz to a 2d matricization of the tensor. TensorChain instead applies Newton--Schulz independently to each matrix slice, with axis-wise normalization before and after. Empirical comparison with the Teon optimizer can be found in \Cref{app:teon}.

We can consider the overhead of TensorChain in two axes: additional flops due to additional normalization, and additional communication cost to construct the tensor. Fiber normalizations cost \(O(gmn)\) per group, negligible compared with the
\(O(gmn\min(m,n))\) Newton--Schulz steps, so TensorChain matches Muon's optimizer FLOPs up to lower-order terms. Under FSDP, layer-axis normalization couples matrices within a group and requires additional communication. Empirical performance of TensorChain can be found in \cref{sec:results}.

\subsection{Aurora-LN: Normalizing Along the Long Side}
\label{sec:aurora_ln}

Aurora \citep{dewulf2026aurora} includes several design choices beyond the normalization chain itself: the Aurora update is applied only to tall matrices, while wide matrices fall back to Muon, and a $\beta$ parameter accumulates the diagonal row preconditioner across steps. From the perspective of chaining, we consider a simplified variant that removes this accumulation and applies the same chain along the long side of every matrix. For tall and square matrices we apply the chain directly, while for wide matrices we apply it to the transpose and transpose the resulting update back. We refer to this variant as \emph{Aurora-LN} (\Cref{alg:aurora_ln}, LN stands for Longside Normalization).

Long-side or wide-matrix variants of Aurora-style normalization have previously appeared in modded-nanoGPT experiments \citep{modded_nanogpt_2024}, so we do not present Aurora-LN as a novel optimizer or algorithmic primitive. Rather, we revisit this design choice as a controlled simplification of Aurora. Our experiments in \Cref{sec:results} show that this simplified variant can be more effective than the default Aurora implementation.

\subsection{Replacing Newton--Schulz with Sinkhorn Normalization}
\label{sec:sinkhorn}

A complementary direction is to ask whether chaining can eliminate the spectral step altogether. Motivated by recent work on inexpensive normalizations replacing Newton--Schulz \citep{deng2026rmnp,glentis2025minimalist}, we consider \emph{SinkhornChain}, which alternates row and column rescaling as in Sinkhorn balancing, but uses $\ell_2$ rather than $\ell_1$ normalization. For an $m\times n$ matrix with $m>n$, each row--column normalization sweep costs $O(mn)$ FLOPs, whereas each
Newton--Schulz step costs $O(mn^2)$. Thus, for a fixed number
of iterations, SinkhornChain has substantially lower arithmetic
complexity than Newton--Schulz. 

However, none of our tested normalization variants or layerwise replacement configurations outperformed Muon; the strongest replaces NS only on $Q,K$, reaching validation loss $2.5945$ versus Muon's $2.5905$. Nevertheless, we still report these results, as in large-scale training the computational cost of Muon may be problematic and Sinkhorn shows a possibility of replacing it. Full details and ablations are given in \Cref{app:ns_replacement}.


%% file: tex_arXiv/5.experiments.tex
\subsection{Setup}
\label{sec:setup}
\textbf{Models and data.}
We train Qwen3 models \citep{yang2025qwen3} of sizes 0.6B and 1.7B on FineWeb \citep{penedo2024fineweb} with sequence length 4096, using pre-norm RMSNorm \citep{zhang2019root}, QK-norm, untied embeddings, and the default \texttt{torchtitan} initialization \citep{liang2025torchtitan}. Training uses a 30B-token corpus, with experiments exceeding this budget spanning multiple epochs. To assess the effect of data repetition, we additionally conduct experiments on a 112B-token corpus without repeated training tokens, obtaining similar results and consistent relative performance across best performing optimizers (see \Cref{fig:raw-cos-17b-4m-fixed}). We report loss on a held-out FineWeb split of 26M tokens.

\textbf{Training.}
We evaluate token batch sizes of 1M and 4M. The 0.6B model uses 8-way FSDP, while the 1.7B model uses 4-way data parallelism and 8-way FSDP. Learning rate schedules of both cosine decay with 10\% warmup and WSD results (10\% warmup and 10\% decay) were evaluated. Matrix parameters use the optimizer under comparison with Nesterov momentum, with each update $U$ rescaled by $0.2\sqrt{mn}/\|U\|_F$ \citep{liu2025muon}. Embeddings, output, and normalization parameters use AdamW with learning rate $8\times10^{-3}$, and all parameters use decoupled weight decay $0.1$ \citep{loshchilov2017decoupled}. We use a fixed random seed.

\textbf{Baselines and tuning.}
We compare the methods in \Cref{table:intro_lmochain} together with the two explorations in \Cref{sec:nav}, TensorChain and Aurora-LN. We use the official implementation where available and our own otherwise, with the same five-step \texttt{bfloat16} Newton--Schulz for all methods. For TensorChain, we grouped all the weights that have the same role as a same group for simplicity, and used $k=5$. On 0.6B at $1\times$ Chinchilla (20 tokens per non-embedding parameter; \citealt{hoffmann2022training}), we sweep learning rates over $\{1,2,3,5,8\}\times10^{-3}$ for WSD and $\{5,8,12,16,20\}\times10^{-3}$ for cosine decay.

Across these sweeps, we observe that most baselines attain their best validation loss at the same learning rate within each schedule: $3\times10^{-3}$ under WSD and $12\times10^{-3}$ under cosine decay. This motivates a matched-learning-rate comparison in which the selected baselines are evaluated at a shared learning rate. AdaMuon is the only clear exception. Under WSD, it shares the same optimum and shows no clear advantage. However, under cosine decay, its optimum shifts to a substantially larger learning rate and its performance improves markedly, though it remains worse than TensorChain. We therefore exclude AdaMuon from the matched-learning-rate comparison, report its full sweep separately in \Cref{app:rawresult}, and leave a systematic study to future work.

Among the remaining baselines, we select MuonEq, Aurora-LN, and Muon$^2$ as the best-performing representatives using one Newton--Schulz, two Newton--Schulz, and additional momentum-dependent state, respectively, while retaining Muon and Aurora as references. For $4\times$ Chinchilla, we rescale it by $4^{-0.32}\approx0.64$ from the learning rate tuned at 1x Chinchilla \citep{bjorck2025scaling}.

\subsection{Results}
\label{sec:results}

\begin{figure}[t]
\centering

\includegraphics[width=0.75\linewidth]
{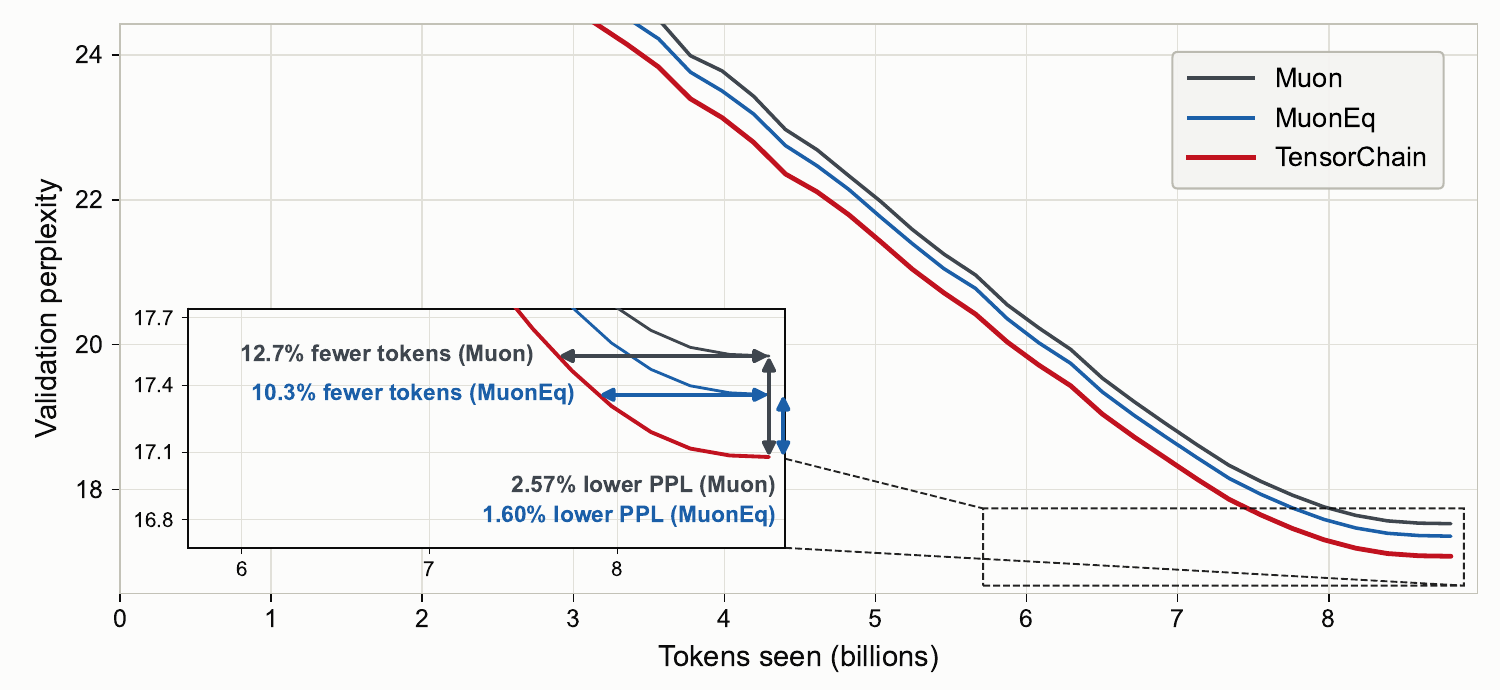}

\vspace{0.4em}

\includegraphics[width=0.75\linewidth]
{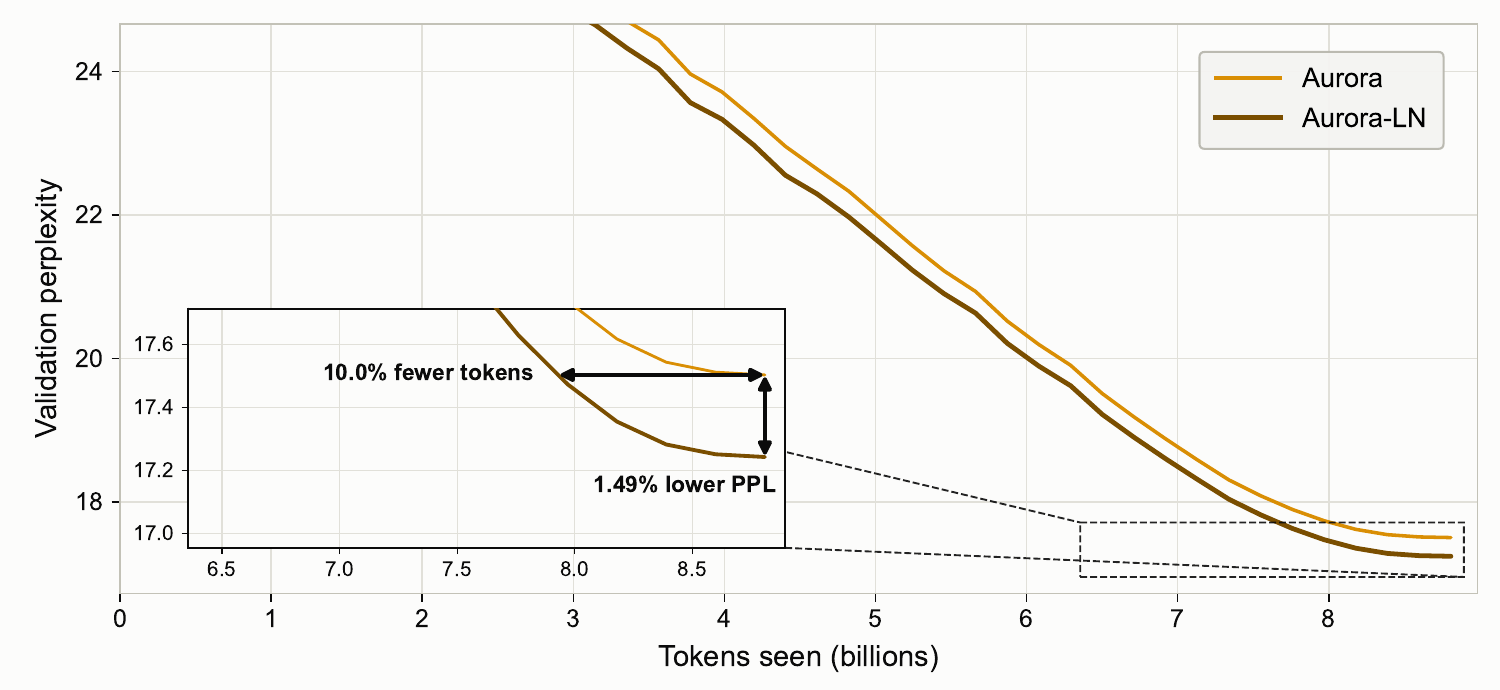}

\caption{
\textbf{Top:} Comparison of Muon, MuonEq and TensorChain;
\textbf{Bottom:} Comparison of Aurora and Aurora-LN, both setups are training on Qwen3-0.6B with a
1M-token batch at $1\times$ Chinchilla under cosine LR. For other setups see \cref{app:token_efficiency}.
}
\label{fig:results}
\end{figure}

\begin{table}[htbp!]
\centering
\caption{
Final validation loss under WSD and cosine schedules.
Bold marks the best method in each row. The two methods on the right (Aurora, Aurora-LN) use Newton--Schulz twice, whereas other methods apply Newton--Schulz once. Muon$^2$ has an additional momentum state. (*) were introduced in \Cref{sec:nav}.
}
\label{tab:main-final}
\small
\setlength{\tabcolsep}{4pt}
\begin{tabular}{l l l l || c c c c | c c}
\toprule
Schedule & Model & Batch & Tokens &
Muon & MuonEq & TensorChain$^*$ & Muon$^2$ &
Aurora & Aurora-LN$^*$ \\
\midrule
WSD
& 0.6B & 1M & 8.8B
& 2.9201 & 2.9151 & \textbf{2.9000} & 2.9104
& 2.9184 & 2.9116 \\
WSD
& 1.7B & 4M & 112B
& 2.5310 & 2.5279 & \textbf{2.5236} & 2.5264
& 2.5301 & 2.5242 \\
\midrule
Cosine
& 0.6B & 1M & 8.8B
& 2.8639 & 2.8540 & \textbf{2.8379} & 2.8497
& 2.8624 & 2.8474 \\
Cosine
& 0.6B & 4M & 35B
& 2.7527 & 2.7434 & \textbf{2.7344} & 2.7443
& 2.7487 & 2.7391 \\
Cosine
& 1.7B & 1M & 28.1B
& 2.5974 & 2.5924 & \textbf{2.5857} & 2.5885
& 2.5964 & 2.5882 \\
Cosine
& 1.7B & 4M & 112B
& 2.4823 & 2.4784 & 2.4747 & 2.4744
& 2.4810 & \textbf{2.4734} \\
\bottomrule
\end{tabular}
\end{table}

\textbf{Main comparison.}
TensorChain shows a consistent advantage over the more directly comparable chained baselines. Under WSD, it achieves the best final validation loss in both reported settings; under cosine decay, it is best in three of the four settings and outperforms Muon and MuonEq throughout. The only cases in which TensorChain is surpassed are by methods that use either an additional Newton--Schulz application or additional momentum-dependent state. Thus, among variants with comparable computational and state complexity, TensorChain provides a consistent improvement across model scales, batch sizes, and learning-rate schedules.

The original Aurora performs only slightly better than Muon in our experiments, whereas the simplified Aurora-LN variant consistently improves over Aurora and becomes the best-performing method in the largest cosine setting. This suggests that seemingly minor choices in how a chain is applied across matrix shapes can have a substantial effect on performance. \Cref{fig:results} visualizes these two comparisons: TensorChain against Muon and MuonEq, and Aurora-LN against the original Aurora.


\begin{table}[htbp!]
\centering
\caption{
Token-efficiency gain of TensorChain: percentage$(\%)$ reduction in tokens needed to reach a common validation-loss target, estimated by linear interpolation between adjacent checkpoints. Positive values favor TensorChain.
}
\label{tab:token-efficiency}
\small
\setlength{\tabcolsep}{4pt}
\begin{tabular}{lll||ccccc}
\toprule
Model & Batch & Tokens & Aurora-LN & Muon$^2$ & MuonEq & Aurora & Muon \\
\midrule
0.6B & 1M & 8.8B  & 8.2  & 9.0  & 10.3 & 12.4 & 12.8 \\
0.6B & 4M & 35B   & 5.4  & 7.6  & 7.4  & 8.9  & 10.0 \\
1.7B & 1M & 28.1B & 3.9  & 4.0  & 6.2  & 7.5  & 7.8 \\
1.7B & 4M & 112B  & -1.9 & -0.5 & 6.7  & 7.4  & 7.8 \\
\bottomrule
\end{tabular}
\end{table}
The token-efficiency comparison in \Cref{tab:token-efficiency} makes the same trend more explicit. We report the percentage reduction in training tokens required to reach the same validation loss, estimated by linear interpolation between adjacent checkpoints; within each fixed-batch setting,
this is equivalently the reduction in training steps.
Across the four settings, TensorChain requires on average $9.6\%$ fewer tokens than Muon and $7.7\%$ fewer than MuonEq.

\subsection{TensorChain Ablations}

\textbf{Grouping.} We compare consecutive, stride, greedy load-balanced, and all- compatible-layer
grouping (\Cref{tab:tc-grouping_maintext}); implementation details are deferred to \Cref{app:grouping-results}. Consecutive grouping is consistently worse with validation loss gap of $\approx 0.007$ for WSD and $\approx 0.01$ for cosine, while stride, greedy, and all performed similarly compared to consecutive grouping. This suggests that mixing matrices across
distant layers matters.

\begin{table}[htbp]
\centering
\small
\caption{TensorChain grouping ablation on Qwen3-0.6B, 1M-token batch, $1\times$ Chinchilla. Each entry reports the best validation loss from the corresponding full learning-rate sweep.}
\label{tab:tc-grouping_maintext}
\begin{tabular}{l||cccc}
\toprule
Schedule & Consecutive & Stride & Greedy & All \\
\midrule
WSD    & 2.9070 & \textbf{2.8990} & 2.8994 & 2.9000 \\
Cosine & 2.8474 & \textbf{2.8357} & 2.8370 & 2.8379 \\
\bottomrule
\end{tabular}
\end{table}
\textbf{Cross-layer normalization and iteration count.} We conduct ablation studies to investigate two aspects of TensorChain: the contribution of cross-layer normalization and the effect of repeated normalization. All experiments use Qwen3-0.6B with a 1M-token batch size and a (1$\times$) Chinchilla training budget, under both WSD and cosine learning-rate schedules.

To isolate the effect of cross-layer normalization, we compare TensorChain against Sink--NS--Sink, which applies iterative row and column normalization before and after Newton--Schulz orthogonalization without stacking parameters across layers. As shown in Table \ref{tab:normalization-ablation}, TensorChain consistently achieves lower validation loss across all settings, with improvements of 0.0208–0.0220 under WSD and 0.0248–0.0250 under cosine. These results highlight the contribution of cross-layer normalization beyond within-matrix normalization alone.

We next examine the effect of the number of normalization iterations \(k\), varying it from 1 to 5 for both methods. Increasing \(k\) does not consistently improve performance: validation loss remains nearly unchanged across iteration counts, with (\(k=1\)) achieving the lowest or tied-lowest loss in every setting. This suggests that the benefits of TensorChain arise primarily from cross-layer normalization rather than convergence of the normalization procedure, making smaller \(k\) an attractive choice when computational efficiency is important.

\begin{table}[htbp]
\centering
\small
\caption{Ablation of cross-layer normalization and iteration count
$k$ on Qwen3-0.6B (1M-token batch, $1\times$ Chinchilla).
Each entry reports final validation loss.}
\label{tab:normalization-ablation}
\begin{tabular}{ll||ccccc}
\toprule
Schedule & Method & $k=1$ & $k=2$ & $k=3$ & $k=4$ & $k=5$ \\
\midrule
WSD & TensorChain
& \textbf{2.8992} & 2.9003 & 2.9005 & 2.9002 & 2.9000 \\
& Sink--NS--Sink
& 2.9210 & 2.9218 & 2.9214 & 2.9216 & 2.9213 \\
\midrule
Cosine & TensorChain
& \textbf{2.8368} & 2.8381 & 2.8386 & 2.8382 & 2.8379 \\
& Sink--NS--Sink
& 2.8624 & 2.8630 & 2.8626 & 2.8624 & 2.8627 \\
\bottomrule
\end{tabular}
\end{table}

%% file: tex_arXiv/6.conclusion.tex
We introduced the chained-LMO framework as a unified view of recent Muon variants. Although chaining generally falls outside the standard LMO framework and therefore may lose its usual smooth optimization guarantee, it can still provide substantial practical benefits. To close this gap, we analyzed linear associative memory with multiple sources of anisotropy and proved that chaining can yield a strict learning advantage. Empirically, we introduced TensorChain, which adds normalization across the layer axis, and showed that it can outperform existing chained variants in language-model pretraining.

An interesting direction for future work is to better understand how TensorChain's gains vary across model scales and training settings, including the particularly strong improvements observed at 0.6B. On the theory side, extending the associative-memory analysis to more general chains and forms of anisotropy would clarify when chaining should help. Finally, TensorChain naturally extends to additional tensor axes arising in multi-head attention and mixture-of-experts models, where understanding the potential gains is an interesting direction for future work.

%% file: tex_arXiv/appendix.tex
\section{Appendix: Related work}
\label[appendix]{sec:appendix_relwork}
\subsection{Frameworks around Muon and variants}
\label[appendix]{sec:appendix_framework}
\citet{bernstein2024old} take a descent step that minimizes the local quadratic model
\[
\mathrm{argmin}_{\Delta W}\langle \nabla f(W), \Delta W\rangle + \tfrac{\lambda}{2}\|\Delta W\|^2
\]
with the norm $\|\cdot\|$ fixed in advance. The solution splits into a step size and a direction, $-(\|\nabla f(W)\|^\dagger/\lambda) \cdot \arg\max_{\|X\|=1} \langle \nabla f(W), X\rangle$. Taking $\|\cdot\|$ to be the induced operator norm $\|W\|_{\alpha\to\beta} = \max_x \|Wx\|_\beta/\|x\|_\alpha$ gives a family of optimizers including popular adaptive algorithms. Switching off their EMAs, Adam \citep{kingma2014adam}, Shampoo \citep{gupta2018shampoo} and Prodigy  \citep{mishchenko2023prodigy} are each steepest descent under a particular member: $(\alpha,\beta) = (\ell_1, \ell_\infty)$ for Adam and Prodigy, which differ only in their step-size rule, and $(\ell_2, \ell_2)$ for Shampoo, which is a positive scaling of the Muon update when the EMA is switched off. \citet{pethick2025training} write the same direction as a \emph{linear minimization oracle}
\[
\mathrm{lmo}_{\mathcal{B}}(\nabla f(W)) =
\mathrm{argmin}_{X \in \mathcal{B}} \langle \nabla f(W), X \rangle
\]
over a norm ball, and interpret existing normalized gradient methods as optimizing with LMOs. By understanding the induced norm as how the output is amplified in a linear layer under bounded input, \citet{pethick2025training} propose the Scion optimizer, whose design principle is to bound the RMS norm of intermediate layers in neural network training. \citet{kovalev2025understanding} interprets the constraint set $\mathcal{B}$ as a trust region, and both works prove convergence guarantees for their respective classes of algorithms.

These frameworks cast optimizer choice as an appropriate choice of norm or constraint set, and subsequent work varies that choice along several axes. \citet{lau2025polargrad} write vector preconditioning and matrix-aware updates as the same subproblem under different norms: the time-dependent quadratic metric gives the variable-metric methods that date back to \citet{fletcher1970new} and are made data-driven in AdaGrad \citep{duchi2011adaptive}, while the spectral norm gives $\|G\|_{\mathrm{nuc}}\,\msign(G)$, the solution of \citet{bernstein2024old}. \citet{crawshaw2025exploration} aggregate per-layer norms into a product norm, whose oracle decomposes into scaled per-layer oracles; this expresses the standard practice of running Muon and Adam on separate parameter classes. Others vary the geometry itself: the Schatten-$p$ family interpolates between the Frobenius and spectral cases \citep{ma2026schattor, pethick2026use}, and preconditioned matrix norms widen the family further \citep{veprikov2025preconditioned}.

Certain frameworks deviate from the norm-choice template. Lion-$\mathcal{K}$ \citep{chen2024lion} replaces the sign map by the subgradient of a convex $\mathcal{K}$, and shows that the iterates converge to KKT points of $\min_X F(X)$ subject to $\mathcal{K}^*(X) \le 1/\lambda$ for decoupled weight decay $\lambda$. Muon is the special case $\mathcal{K} = \|\cdot\|_{\mathrm{tr}}$ \citep{chen2025muon}. \citet{shumaylov2026muon} instead exhibit updates of the form $(GG^\top)^{-c}G$ whose best-performing exponents fall in the quasi-norm regime $c > 1/2$, where the update is provably not the LMO of any unitarily invariant norm ball; they establish convergence directly, through alignment and a descent potential rather than through the oracle.

These frameworks share one structural feature: a single geometry determines the update map, and the same geometry supplies the analysis.
An LMO over a norm ball is aligned with the negative gradient, $\langle G, \mathrm{lmo}_{\mathcal{B}}(G)\rangle = -\rho\|G\|_*$, which yields a descent lemma under the matching norm-smoothness assumption; Lion-$\mathcal{K}$ replaces this with a Lyapunov function built from $\mathcal{K}$.
Chained LMOs do not in general inherit either structure, because the LMO class is not closed under composition.
Even when every constituent map is an LMO, many practical chains are not the LMO of any fixed constraint set (\Cref{sec:chainedLMO}), nor in general the subgradient of a convex $\mathcal{K}$, so neither the oracle-based nor the Lyapunov-based argument transfers.
We therefore analyze chains directly, in the spirit of \citet{shumaylov2026muon}.
For positively aligned chains, an $\alpha$-approximate LMO framework recovers convergence, at the cost of an unavoidable dimension dependence.
Chains that admit negative-alignment directions, in contrast, can diverge even on smooth convex objectives (\Cref{sec:convergence}).

Closest to our setting is \citet{lau2026symmetry}, whose contribution is a design principle: update maps derived from the symmetry group of each weight block, with NorMuon, Muon$+$, MuonEq and Aurora \citep{li2025normuon, zhang2026muon+, chang2026muoneq, dewulf2026aurora} placed within that perspective. We instead study the effect of chaining itself: how composed updates relate to existing frameworks, whether they converge, and how they can improve linear associative memory learning. Their construction composes row-norm and one-sided spectral maps only, so the elementwise chains of \citet{si2025adamuon} and \citet{liu2026muon} fall outside it; our framework places no such restrictions.

\subsection{Existing Muon variants as chained LMOs}
\label[appendix]{sec:chain}

Recent Muon variants add normalization at different stages of the orthogonalization pipeline.
Post-orthogonalization methods correct the row- or column-wise imbalance left after the matrix-sign step.
NorMuon \citep{li2025normuon} does so with a row-wise second-moment state, while Muon$+$ \citep{zhang2026muon+} applies stateless row, column, column--row, or row--column normalization directly after $\msign$.
Pre-orthogonalization methods instead precondition the momentum before Newton--Schulz to improve the spectrum entering the orthogonalization step: MuonEq \citep{chang2026muoneq} uses row/column equilibration, and Muon$^2$ \citep{liu2026muon} uses Adam-style second-moment preconditioning.
AdaMuon \citep{si2025adamuon} acts on both sides, applying an elementwise sign before orthogonalization and a second-moment normalization afterward, which combines coordinate-wise adaptivity with orthogonal updates.
Aurora \citep{dewulf2026aurora} alternates row normalization and the matrix sign on tall matrices through the chain $(\msign\circ\RN)^{\circ 2}$, so that the update is approximately balanced in both its spectrum and its row norms.

Some of these optimizers are stateful because of exponential moving averages.
\citet{bernstein2024old} recover norm-based steepest-descent methods from Adam \citep{kingma2014adam}, Shampoo \citep{gupta2018shampoo}, and Prodigy \citep{mishchenko2023prodigy} by switching off their EMAs, and we use the same reduction to expose the underlying normalization maps.
For example, Adam with $\beta_1=\beta_2=0$ reduces to $\Sgn$; similarly, NorMuon reduces to $\RN\circ\msign$, Muon$^2$ to $\msign\circ\Sgn$, and AdaMuon to $\Sgn\circ\msign\circ\Sgn$.
We use this reduction only to isolate the normalization chain, not to identify the full stateful optimizer with its stateless counterpart.

\subsection{Linear Associative Memory and Muon}
\label[appendix]{sec:lam}

An associative memory stores key--value pairs $\{(k_i, v_i)\}_{i=1}^N$, with $k_i \in \mathbb{R}^{d_k}$ and $v_i \in \mathbb{R}^{d_v}$, in a matrix $W \in \mathbb{R}^{d_v \times d_k}$ so that presenting $k_i$ recalls $v_i$; for orthonormal keys, $W = \sum_i v_i k_i^\top$ does so exactly. Attention can be read this way \citep{zhong2025understanding}: without the softmax normalization it computes $o_t = \sum_{i \le t} \exp(q_t^\top k_i / \sqrt{d_k})\, v_i = S_t \phi(q_t)$ with $S_t = \sum_{i \le t} v_i \phi(k_i)^\top$, where $\phi$ is the infinite-dimensional feature map of the exponential kernel, and FFN layers are similar through ReLU kernels. From an interpretability perspective, factual associations in a transformer can be edited by a rank-one update to an MLP projection \citep{meng2022locating, meng2022mass}.

A growing line of work relates the success of Muon to associative memory. \citet{wang2026muon} find empirically that Muon's advantage over Adam comes mainly from the value/output and FFN weights, precisely the parameters that act as associative memories. To study this theoretically, \citet{wang2026muon, kim2026sharp, li2026muon} turn to the linear associative memory, where $W$ retrieves pair $i$ correctly when $v_i^\top W k_i > v_j^\top W k_i$ for all $j \neq i$, and is trained with the cross-entropy loss $\mathcal{L}$ on these logits, with pair $i$ occurring with frequency $p_i$. When the frequencies are imbalanced, Muon provably retrieves better than gradient descent, because it \emph{whitens} the frequencies. At initialization, $-\nabla \mathcal{L}(0) \approx \sum_i p_i\, v_i k_i^\top$, so gradient descent learns each pair at a rate proportional to $p_i$. With orthonormal embeddings, this sum is a singular value decomposition with singular values $p_i$, and $\msign\big(\sum_i p_i\, v_i k_i^\top\big) = \sum_i v_i k_i^\top$ gives every pair the same weight. \citet{wang2026muon, li2026muon} analyze orthonormal embeddings, and \citet{kim2026sharp} extend the result to Gaussian embeddings.

\newpage
\section{Appendix: Proofs in \cref{sec:chainedLMO}}
\label[appendix]{app:sec3}

\paragraph{Sign convention.}
Throughout the paper, LMO means \emph{linear minimization oracle}:
\[
\operatorname{lmo}_K(G)\in\arg\min_{X\in K}\langle G,X\rangle.
\]
We write \(N=N_k\circ\cdots\circ N_1\) for a chain of normalization maps
and \(T=-N\) for the signed update used in minimization. For the norm balls
and odd oracle selections considered here, \(\operatorname{lmo}_{\mathcal D_j}=-N_j\), so
\[
T=(-1)^{k+1}
\bigl(\operatorname{lmo}_{\mathcal D_k}\circ\cdots\circ
\operatorname{lmo}_{\mathcal D_1}\bigr)=-N,
\qquad
X_{t+1}=X_t+\eta_tT(M_t)=X_t-\eta_tN(M_t).
\]
For any fixed set \(C\), with \(-C:=\{-X:X\in C\}\),
\[
N(G)\in\arg\max_{Z\in C}\langle G,Z\rangle
\quad\Longleftrightarrow\quad
-N(G)\in\arg\min_{X\in-C}\langle G,X\rangle.
\]
Thus the non-LMO proofs below test whether \(N\) is a \emph{linear
maximization oracle}; their conclusions concern the signed update
\(T=-N\) as a linear minimization oracle. The reflected set is the same set
only when \(C=-C\), as for the norm balls used here.

We write \(\operatorname{msign}(G)=UV^\top\) for the canonical polar factor
of \(G=U\Sigma V^\top\), with zero singular directions mapped to zero,
and \(\operatorname{Sgn}\) for the entrywise sign with
\(\operatorname{Sgn}(0)=0\). Row and column normalization are denoted by
\(\RN\) and \(\CN\), respectively. The proofs use two necessary conditions
for a linear maximization oracle.

\subsection{Non-LMO results}
\label{app:nonlmo}

\begin{appxlemma}[\Cref{lem:idempotence} of the paper]
\label{appx:lem:lmo-idempotence}
Let \(N(G)\in\arg\max_{X\in K}\langle G,X\rangle\) for every \(G\),
for a fixed set \(K\), and let \(Y=N(G)\). If
\(\|N(Y)\|_F=\|Y\|_F\), then \(N(Y)=Y\).
\end{appxlemma}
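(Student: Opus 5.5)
The argument uses only two facts: membership of \(Y\) in the constraint set, and the Cauchy--Schwarz inequality. First, since \(N(G)\in\arg\max_{X\in K}\langle G,X\rangle\), the output \(Y=N(G)\) is in particular an element of \(K\). Second, applying the oracle property to the input \(Y\) itself, \(N(Y)\) maximizes \(\langle Y,X\rangle\) over \(X\in K\). Comparing \(N(Y)\) with the feasible competitor \(X=Y\) gives
\[
\langle Y,N(Y)\rangle\;\ge\;\langle Y,Y\rangle=\|Y\|_F^2 .
\]

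Next, I bound the same inner product from above. By Cauchy--Schwarz and the hypothesis \(\|N(Y)\|_F=\|Y\|_F\),
\[
\langle Y,N(Y)\rangle\;\le\;\|Y\|_F\,\|N(Y)\|_F=\|Y\|_F^2 .
\]
So equality holds throughout. Equivalently,
\[
\|N(Y)-Y\|_F^2=\|N(Y)\|_F^2-2\langle Y,N(Y)\rangle+\|Y\|_F^2=\|Y\|_F^2-2\|Y\|_F^2+\|Y\|_F^2=0,
\]
which gives \(N(Y)=Y\). This expansion also covers the degenerate case \(Y=0\): there the norm hypothesis forces \(N(Y)=0=Y\), so no separate treatment is needed.

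I expect no real obstacle. The only point needing care is that the argument relies on \(Y\in K\), and this holds because the oracle's output is by definition a maximizer taken from \(K\), not merely a direction. No compactness, convexity, or symmetry of \(K\) is required, which is what lets the lemma rule out every fixed set \(K\) in \Cref{thm:not_lmo}.
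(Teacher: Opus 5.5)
Your proof is correct and follows the paper's argument exactly: feasibility of \(Y\in K\) plus optimality at input \(Y\) gives the lower bound \(\|Y\|_F^2\), Cauchy--Schwarz with the equal-norm hypothesis gives the matching upper bound, and equality forces \(N(Y)=Y\). Your explicit expansion of \(\|N(Y)-Y\|_F^2=0\) just spells out the paper's appeal to the equality case of Cauchy--Schwarz, and it also covers \(Y=0\).
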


\begin{proof}
Since every oracle output belongs to \(K\), \(Y\in K\). Optimality at the
input \(Y\) gives
\[
\langle Y,N(Y)\rangle\ge\langle Y,Y\rangle=\|Y\|_F^2.
\]
By Cauchy--Schwarz and the assumed equality of norms,
\[
\langle Y,N(Y)\rangle\le\|Y\|_F\|N(Y)\|_F=\|Y\|_F^2.
\]
Hence equality holds in Cauchy--Schwarz. The equal norms imply
\(N(Y)=Y\), including the case \(Y=0\).
\end{proof}

\begin{appxlemma}[\Cref{lem:alignment} of the paper]
\label{appx:lem:lmo-odd}
If \(N(G)\in\arg\max_{X\in K}\langle G,X\rangle\) for every \(G\), for a fixed set \(K\), and \(N(-G)=-N(G)\), then
\[
\langle G,N(G)\rangle\ge0.
\]
\end{appxlemma}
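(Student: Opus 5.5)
The argument uses only the oracle property of $N$ together with oddness, tested against the single competitor $N(G)$ itself. Fix $G$ in the domain and write $Y=N(G)$; every oracle output lies in $K$, so $Y\in K$. Oddness gives $N(-G)=-Y$, so $-Y$ is also an oracle output and $-Y\in K$ as well.

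Next, apply the maximization property at the input $-G$. Since $N(-G)=-Y$ maximizes $\langle -G,X\rangle$ over $X\in K$, and $Y\in K$ is a feasible competitor,
\[
\langle -G,-Y\rangle\ge\langle -G,Y\rangle .
\]
The left side equals $\langle G,Y\rangle$ and the right side equals $-\langle G,Y\rangle$. Hence $2\langle G,Y\rangle\ge0$, that is, $\langle G,N(G)\rangle\ge0$.

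Equivalently, one can apply optimality at $G$ against the competitor $-Y\in K$, which gives $\langle G,Y\rangle\ge-\langle G,Y\rangle$. Either route suffices.

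There is no real obstacle. The only point to be careful about is the domain: $-G$ must lie in the domain whenever $G$ does, which the hypothesis $N(-G)=-N(G)$ implicitly presupposes. For the chains considered here, such as matrices with no zero entries, the domain is symmetric under negation, so this holds. Notably, neither convexity nor symmetry of $K$ is needed; the argument uses only membership of $\pm Y$ in $K$.
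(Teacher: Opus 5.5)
Your proof is correct and matches the paper's argument: the paper uses your second route, noting $-N(G)=N(-G)\in K$ and applying optimality at $G$ against that competitor to get $2\langle G,N(G)\rangle\ge 0$. Your primary route, optimality at $-G$ against the competitor $N(G)$, is the same one-line argument with the roles of $G$ and $-G$ swapped.
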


\begin{proof}
Because \(N(-G)=-N(G)\) is itself an output of the oracle, it belongs to \(K\). Therefore optimality at \(G\) implies \(\langle G,N(G)\rangle \ge \langle G,-N(G)\rangle = -\langle G,N(G)\rangle.\) Thus \(2\langle G,N(G)\rangle\ge0\).
\end{proof}

Consequently, to rule out a linear maximization-oracle representation for
\(N\), it suffices to find \(G\) such that
\(N(N(G))\ne N(G)\) but \(\|N(N(G))\|_F=\|N(G)\|_F\), or, for an odd
map, to find \(G\) such that \(\langle G,N(G)\rangle<0\).
By the sign convention above, either argument rules out an LMO representation
for the signed update \(-N\).

\begin{appxlemma}[Tall matrix-sign and row-normalization witnesses]
\label{appx:lem:tall-witnesses}
Assume \(m>n\). Each normalization map \(N\) listed in parts (i) and (ii) of Theorem~1 has an input \(G\) for which \(N(N(G))\ne N(G)\) and \(\|N(N(G))\|_F=\|N(G)\|_F\).
\end{appxlemma}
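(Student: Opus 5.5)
The plan is to handle each map by writing down an explicit small witness and then embedding it into general shapes $m>n$. Every map in (i)--(ii) is built from $\RN$, $\CN$ and $\msign$, and each of these acts blockwise on block-diagonal matrices; zero rows and columns are sent to zero by convention. So I will take $G=\diag(G_0,I_{n-2})$ padded with $m-n-1$ zero rows, where $G_0$ is a $3\times 2$ witness. Then $N(G)=\diag(N(G_0),I_{n-2})$ padded with the same zero rows. Both the failure of idempotence and the equality of Frobenius norms then reduce to the $3\times2$ block.

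\textbf{Norm equality comes for free.} I will split the maps according to their last factor.
\begin{itemize}
\item If the last factor is $\msign$ (namely $\msign\circ\RN$, $\msign\circ\RN\circ\CN$, and $(\msign\circ\RN)^{\circ k}$), the output $Y$ has orthonormal columns, so $\|Y\|_F=\sqrt n$. Applying $N$ again first performs row and column rescalings by positive diagonals. On a matrix with nonzero rows and columns these preserve the column rank $n$, so $\|N(Y)\|_F=\sqrt n$ as well.
\item If the last factor is $\RN$ (namely $\RN\circ\msign$, $\CN\circ\RN\circ\msign$ read so that $\RN$ is applied last, and $(\RN\circ\msign)^{\circ k}$), the output has $\|Y\|_F=\sqrt{\#\text{nonzero rows}}$. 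This equals $\sqrt{m'}$, where $m'$ is the number of nonzero rows, as long as $\msign$ of the relevant input has no zero row.
\item If the last factor is $\CN$, the output has $\|Y\|_F=\sqrt n$.
\end{itemize}
In every case norm equality reduces to a generic nondegeneracy condition: no zero rows or columns appear, and full column rank is kept. I will check this directly on the $3\times2$ block.

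\textbf{Non-idempotence.} For the non-iterated chains I will take the rows of $G_0$ to be unit vectors in $\mathbb R^2$ at angles $0$, $\pi/2$ and $\theta$, with $\theta$ generic. A unit-row $Y$ is fixed by $\RN\circ\msign$ exactly when $\msign(Y)=DY$ for some positive diagonal $D$. Since $\msign(Y)=Y(Y^\top Y)^{-1/2}$, this means $Y(Y^\top Y)^{-1/2}=DY$, i.e.\ each row is an eigenvector of $(Y^\top Y)^{-1/2}$ acting on the right. Three pairwise non-parallel rows in $\mathbb R^2$ cannot all be eigenvectors unless $Y^\top Y$ is a multiple of $I$. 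That is a single closed condition on $\theta$, so a generic $\theta$ avoids it.

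Dually, an orthonormal-column $Q$ is fixed by $\msign\circ\RN$ iff $D_QQ=QS$ with $S\succ0$, where $D_Q$ is the inverse row norms. This forces the column space of $Q$ to be invariant under $D_Q$, so $Q$ must have equal row norms on each block of $\operatorname{col}(Q)$. I will pick the witness so that $\msign(\RN(G_0))$ has three distinct row norms, which should follow by explicit computation of a $2\times2$ inverse square root. The chains containing $\CN$ are treated identically: on the $3\times 2$ block, $\CN$ is a right diagonal scaling, so it commutes with the eigenvector argument.

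\textbf{Main obstacle: the powers $(\msign\circ\RN)^{\circ k}$ and $(\RN\circ\msign)^{\circ k}$ for every $k$.} Here I need $P^{2k}(G)\ne P^k(G)$ for a single $P$, uniformly in $k$. My first attempt is a symmetric one-parameter family: rows $(\cos\phi,\sin\phi)$, $(\cos\phi,-\sin\phi)$ and $(1,0)$. This family is preserved by $\RN$ and by $\msign$, because the reflection symmetry makes $Y^\top Y$ diagonal. The map $P$ then reduces to an explicit scalar map $\phi\mapsto h(\phi)$ on $(0,\pi/2)$. I expect $h$ to be strictly monotone, with a unique fixed point $\phi^\ast$ (the balanced configuration) and no other periodic points; the target is a contraction toward $\phi^\ast$ with $h(\phi)\neq\phi$ for $\phi\neq\phi^\ast$. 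Then for any starting $\phi_0\neq\phi^\ast$ the orbit never becomes stationary, so $h^{2k}(\phi_0)\ne h^k(\phi_0)$ for every $k$. Combined with the norm equality above, this violates \Cref{lem:idempotence}. Verifying the monotonicity and absence of periodic points of $h$ is the one genuinely computational step. If it is messy, I would fall back on the weaker property that $h$ is injective with $\phi^\ast$ as its only fixed point: then $h^k(\phi_0)$ is not a fixed point, and so $h^{2k}(\phi_0)=h^k(h^k(\phi_0))$ differs from $h^k(\phi_0)$.
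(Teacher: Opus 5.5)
Your plan works once you carry out the computation you deferred, which does go through, and it reaches the result by a different route from the paper's.

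\textbf{How the paper argues.} The paper uses one global family $A(t)\in\mathbb R^{m\times n}$ with unit rows and orthogonal columns. Its rows are $(\sqrt{1-t},\pm\sqrt t,0,\ldots,0)$, then $m-n$ copies of $e_2^\top$, then $e_3^\top,\ldots,e_n^\top$. On this family $\CN(A(t))=\msign(A(t))=Q(t)$ and $\RN(Q(t))=A(t')$ with $t'=2t/(m-n+4t)$. Every map in (i)--(ii) therefore acts as a power of one increasing scalar map, and non-idempotence follows because the orbit of $t_0=1/2$ strictly decreases.

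\textbf{How you argue.} You reduce to a $3\times 2$ block by block-diagonal embedding with zero padding. This is legitimate under the zero-row convention: at full column rank, $\RN$, $\CN$ and $\msign$ all preserve the zero-row pattern. The paper's witness has no zero rows at all. For the non-iterated chains you characterize fixed points structurally:
\begin{itemize}
\item a unit-row block with pairwise non-parallel rows is fixed by $\RN\circ\msign$ iff it is a tight frame;
\item an orthonormal-column block is fixed by $\msign\circ\RN$ only if its column space is invariant under the inverse-row-norm diagonal.
\end{itemize}
You reserve a reflection-symmetric family for the powers. That family uses the paper's mechanism, with the extra row on the symmetric coordinate instead of the antisymmetric one, and it works. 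With $u=\tan^2\phi$, $\RN\circ\msign$ sends $Y(\phi)$ to $Y(\phi')$ with $u'=(3+u)/2$. This is an affine contraction to $u^\ast=3$, the equiangular frame $\phi^\ast=\pi/3$. Likewise $\msign\circ\RN$ sends $Q(\phi):=\msign(Y(\phi))$ to $Q(\phi')$, so $(\msign\circ\RN)^{\circ k}(Y(\phi_0))=Q(h^{k-1}(\phi_0))$. Since $\phi\mapsto Y(\phi)$ and $\phi\mapsto Q(\phi)$ are injective, all iterates are distinct.

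\textbf{What each route buys.} Your fixed-point characterization explains why non-idempotence is generic rather than special to one family. The paper's route is more economical. In fact $\CN=\msign$ on your symmetric family too, so that family alone already settles all five maps in (i), and the generic-angle argument is not needed.

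\textbf{Loose ends to fix.}
\begin{itemize}
\item Your fallback is invalid as stated. Injectivity plus a unique fixed point does not rule out $h^k(\psi)=\psi$ for a periodic point whose period divides $k$. You need $h$ increasing, or the explicit formula above.
\item ``Generic $\theta$'' needs one concrete value showing each closed condition is proper. At $\theta=\pi/4$, $\msign(G_0)$ has only two distinct row norms. At $\theta=\pi/3$ every non-iterated map works:
  \begin{itemize}
  \item $\msign(G_0)$ has squared row norms $7/8,5/8,1/2$;
  \item $\RN(\msign(G_0))$ has non-orthogonal columns;
  \item $\msign(\RN(\CN(G_0)))$ has squared row norms $37/44,29/44,1/2$.
  \end{itemize}
\item In $\CN\circ\RN\circ\msign$ the map $\CN$ is applied last, not $\RN$. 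So the output has unit columns and norm $\sqrt n$. Write the map as $\CN(\RN(\msign(Y)))=D\,Y(Y^\top Y)^{-1/2}E$ with positive diagonals $D,E$. A fixed point then needs every row of $Y$ to be a left eigenvector of $E^{-1}(Y^\top Y)^{1/2}$. With three non-parallel rows this forces $Y^\top Y$ to be diagonal. That condition is again violated at $\theta=\pi/3$, so your conclusion stands once this is corrected.
\end{itemize}
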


\begin{proof}
Assume throughout this subsection that \(m>n\), and put \(r=m-n\ge1.\) For \(0<t<1\), define \(A(t)\in\mathbb R^{m\times n}\) by the rows

\[
\begin{aligned}
A(t)_{1,:}&=(\sqrt{1-t},\sqrt t,0,\ldots,0),\\
A(t)_{2,:}&=(\sqrt{1-t},-\sqrt t,0,\ldots,0),
\end{aligned}
\]

followed by \(r\) copies of \(e_2^\top\), and then \(e_3^\top,\ldots,e_n^\top.\) Thus \(A(t)\) has exactly \(2+r+(n-2)=m\) rows. Every row has Euclidean norm one, so \(\RN(A(t))=A(t).\) Moreover, \(A(t)^\top A(t) = \diag\bigl(2(1-t),\,r+2t,\,1,\ldots,1\bigr).\)

Define \(Q(t):=\operatorname{msign}(A(t)) = A(t)\bigl(A(t)^\top A(t)\bigr)^{-1/2}.\) Since the columns of \(A(t)\) are orthogonal, column normalization produces exactly the same matrix: \(\CN(A(t))=Q(t).\) Since \(Q(t)\) has orthonormal columns, \(\operatorname{msign}(Q(t))=Q(t),  \CN(Q(t))=Q(t).\) The only nontrivial transition is row normalization of \(Q(t)\). For either of its first two rows, the squared second coordinate after normalization is \(\frac{t/(r+2t)} {\frac{1}{2}+t/(r+2t)} = \frac{2t}{r+4t}.\) Hence, with \(\phi(t):=\frac{2t}{r+4t},\) we have the exact identities
\begin{equation}
\label{eq:AQ-dynamics}
\begin{aligned}
\RN(A(t))&=A(t),\\
\CN(A(t))&=\operatorname{msign}(A(t))=Q(t),\\
\CN(Q(t))&=\operatorname{msign}(Q(t))=Q(t),\\
\RN(Q(t))&=A(\phi(t)).
\end{aligned}
\end{equation}

The map \(\phi\) is strictly increasing: \(\phi'(t)=\frac{2r}{(r+4t)^2}>0.\) At \(t_0=\frac{1}{2},\) we have \(\phi(t_0)=\frac1{r+2}<\frac{1}{2}=t_0.\) Strict monotonicity therefore gives \(t_0>\phi(t_0)>\phi^2(t_0)>\cdots\) for every finite number of iterations.

We now apply this one-dimensional dynamics to the maps in parts (i)--(ii) of Theorem~1.

For part (i), \eqref{eq:AQ-dynamics} gives the following transitions.
The map \(N=\operatorname{msign}\circ\RN\) sends \(A(t)\) to \(Q(t)\),
and a second application sends it to \(Q(\phi(t))\).
The map \(\RN\circ\operatorname{msign}\) sends \(A(t)\) to
\(A(\phi(t))\), then to \(A(\phi^2(t))\).
The maps \(\operatorname{msign}\circ\RN\circ\CN\) and
\(\CN\circ\RN\circ\operatorname{msign}\) both send \(A(t)\) to
\(Q(\phi(t))\), then to \(Q(\phi^2(t))\).
Finally, \(\RN\circ\CN\circ\operatorname{msign}\) has the same
transitions as \(\RN\circ\operatorname{msign}\) on this family, because
\(\CN(Q(t))=Q(t)\).
All five maps are therefore nonidempotent at \(t=t_0\).
Along these witness orbits, outputs have Frobenius norm \(\sqrt n\) when
the final operation is \(\operatorname{msign}\) or \(\CN\), and
\(\sqrt m\) when it is \(\RN\).
Lemma~\ref{appx:lem:lmo-idempotence} excludes a fixed-set linear
maximization-oracle representation for each normalization map, and hence
an LMO representation for its negative.

For part (ii),
let \(F=\operatorname{msign}\circ\RN\). Then \(F(A(t))=Q(t)\) and \(F(Q(t))=Q(\phi(t))\), so for every \(k\ge1\), \(F^k(A(t))=Q(\phi^{k-1}(t))\). Applying the same map \(F^k\) once more replaces \(\phi^{k-1}(t)\) by \(\phi^{2k-1}(t)\), which is distinct at \(t=t_0\).

For the reverse composition \(H=\RN\circ\operatorname{msign}\), the same argument gives the distinct iterates \(A(\phi^k(t_0))\) and \(A(\phi^{2k}(t_0))\). Again the outputs along these witness orbits have constant Frobenius norm, so Lemma~\ref{appx:lem:lmo-idempotence} proves part (ii).
\end{proof}

\begin{appxlemma}[Finite mixed row/column-normalization words]
\label{appx:lem:rncn-words}
Every finite composition of \(\RN\) and \(\CN\) that contains both is not a linear maximization oracle of a fixed set $\K$.
\end{appxlemma}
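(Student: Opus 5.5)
The plan is to use Lemma~\ref{lem:idempotence}, as in Lemma~\ref{appx:lem:tall-witnesses}. For a word $w$ in $\RN,\CN$ containing both letters, I want an input $G$ such that $w(w(G))\neq w(G)$ while $\|w(w(G))\|_F=\|w(G)\|_F$.

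\textbf{Step 1 (reduce the word).} $\RN$ and $\CN$ are idempotent on their own, so adjacent repeated letters collapse. We may therefore take $w$ to be an alternating word of length $L\ge2$. Then $w\circ w$ is again, after collapsing, an alternating word. Its length is $2L-1$ if the first and last letters of $w$ coincide, and $2L$ otherwise. In either case it has the same final letter as $w$ and is strictly longer than $w$.

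\textbf{Step 2 (norm equality).} I choose $G$ so that along the orbit no nonzero row or column ever vanishes, and the set of zero rows and columns stays fixed. The output of $\RN$ then has Frobenius norm $\sqrt{\#\text{nonzero rows}}$, and the output of $\CN$ has norm $\sqrt{\#\text{nonzero columns}}$. Since $w$ and $w\circ w$ end in the same letter, their outputs have equal norm automatically.

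\textbf{Step 3 (witness family).} I embed a $2\times2$ block $B(s)=\begin{pmatrix}1&s\\0&1\end{pmatrix}$, $s>0$, in the top-left corner of an $m\times n$ matrix. The remaining diagonal positions $(3,3),\dots,(\min(m,n),\min(m,n))$ are filled with ones, and every other entry is zero. All rows and columns outside the block are then already unit vectors and remain untouched by both maps. Zero rows and columns (if $m\neq n$) stay zero by the stated convention. So both maps act only on the block, and the zero pattern of the block is preserved.

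Within the block, a single normalization produces a matrix with the same zero pattern. It can be rescaled into the canonical form $\begin{pmatrix}a&b\\0&1\end{pmatrix}$ after $\RN$, or into the analogous form after $\CN$. Hence the orbit is described by one scalar, for instance the ratio $\rho=|b/a|$ of the first-row entries. I compute $\rho$ explicitly:
\begin{itemize}
\item $\RN$ leaves $\rho$ unchanged;
\item $\CN$ divides the first column by $a$ and the second column by $\sqrt{b^2+1}$, which gives $\rho\mapsto \rho/\sqrt{b^2+1}$ with $b$ determined by $\rho$ and the current normalization.
\end{itemize}
So each $\RN\!\to\!\CN$ round sends $\rho$ to $\psi(\rho)$ for an explicit scalar map $\psi$ with $\psi(\rho)<\rho$ whenever $\rho>0$. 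Moreover $\rho$ stays positive and never hits $0$ in finitely many steps. In particular, the first row never becomes zero, which keeps Step~2 valid.

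\textbf{Step 4 (conclude).} The $\rho$-value after the longer word $w\circ w$ has gone through strictly more $\CN$ applications than after $w$, so it is strictly smaller, and $w(w(G))\neq w(G)$. Lemma~\ref{lem:idempotence} then rules out a fixed set $\mathcal K$.

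The main obstacle is Step~3. I must get the scalar recursion right for both the $\RN$ and the $\CN$ half-steps. I also need strict monotonicity of $\psi$ for every finite number of iterations, mirroring the role of $\phi$ in Lemma~\ref{appx:lem:tall-witnesses}, together with the fact that words starting with $\CN$ versus $\RN$ only shift the orbit by one half-step. If the triangular block proves awkward, I would fall back to a block-constant positive pattern, for example row groups and column groups of unequal sizes. There the Sinkhorn-type $\ell_2$ alternation reduces to a one-parameter strictly contracting map whose only fixed point is not reached in finite time.
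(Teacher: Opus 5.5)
Your proof is correct and follows the paper's route. You collapse to an alternating word, follow a one-parameter witness family on which each effective $\RN$/$\CN$ round is a strictly monotone scalar map with no finite-time fixed point, and apply \Cref{lem:idempotence} using that $w$ and $w\circ w$ end in the same letter. The differences are only in the witness and in the bookkeeping. Your sparse upper-triangular block is legitimate here, since only $\mathcal{E}$ and $\msign\circ\mathcal{E}$ are restricted to zero-free inputs, though unlike the paper's strictly positive family $R(t)$ it relies on zero entries and, when $m\neq n$, on the zero-row/column convention. You also track both canonical forms directly, where the paper transposes to handle $\CN$-ending words.

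One small slip in Step~3: after $\CN$ the new ratio is $b/\sqrt{b^2+1}=\rho\,a/\sqrt{b^2+1}$, not $\rho/\sqrt{b^2+1}$. With $a^2+b^2=1$ this gives $\psi(\rho)=\rho/\sqrt{1+2\rho^2}$, i.e.\ $\rho^{-2}\mapsto\rho^{-2}+2$. This map is increasing with $\psi(\rho)<\rho$, so your conclusion is unaffected.
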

\begin{proof}
Repeated adjacent copies of the same normalization may be removed because \(\RN^2=\RN, \CN^2=\CN.\) Thus it suffices to consider an alternating sequence of $\RN$ and $\CN$.

Put \(r=m-1, s=n-1,\) and fix any constant \(a>1\). For \(t>0\), define the positive row-normalized matrix \(R(t)\) by its squared entries:

\[
R(t)_{ij}^2=
\begin{cases}
\dfrac{at}{at+s},&i=1,\ j=1,\\[2mm]
\dfrac1{at+s},&i=1,\ j\ge2,\\[2mm]
\dfrac t{t+s},&i\ge2,\ j=1,\\[2mm]
\dfrac1{t+s},&i\ge2,\ j\ge2.
\end{cases}
\]

Let \(C_1(t)\) denote the squared norm of the first column and \(C_2(t)\) the squared norm of any other column: \(C_1(t) = \frac{at}{at+s} + r\frac t{t+s},\) and \(C_2(t) = \frac1{at+s} + r\frac1{t+s}.\) Column normalization multiplies the squared first-to-other coordinate ratio in every row by \(C_2(t)/C_1(t)\). Row normalization preserves these within-row ratios. Therefore \(\RN(\CN(R(t)))=R(F(t)),\) where \(F(t) = t\frac{C_2(t)}{C_1(t)} = \frac{(1+ra)t+ms} {am\,t+s(a+r)}.\) This scalar map is strictly increasing. Indeed, \(F'(t) = \frac{rs(a-1)^2} {\bigl(am\,t+s(a+r)\bigr)^2} >0.\) The fixed-point equation \(F(t)=t\) is a quadratic with positive leading coefficient and negative constant term: \(am\,t^2+ \bigl[s(a+r)-(1+ra)\bigr]t-ms=0.\) It therefore has exactly one positive root, denoted \(t_\star\).

Moreover, \(F(1)-1 = \frac{n(1-a)} {am+s(a+r)} <0.\) Thus \(1>t_\star\). Since \(F\) is increasing, \(t_\star<F(t)<t \,(t>t_\star).\) Consequently \(1>F(1)>F^2(1)>\cdots>t_\star,\) and no two finite iterates coincide.

Suppose first that a reduced chain of $\RN$ and $\CN$ ends in \(\RN\). Starting from \(R(t)\), every effective \(\CN\)-then-\(\RN\) pair applies one copy of \(F\); any initial \(\RN\) is inert. Hence for some integer \(k\ge1\), \(N(R(t))=R(F^k(t)).\) Applying the same word again gives \(N(N(R(t))) = R(F^{2k}(t)).\) At \(t=1\) these are distinct.

If instead the word ends in \(\CN\), transpose the entire construction. Transposition exchanges \(\RN\) and \(\CN\), and \(\langle G,X\rangle=\langle G^\top,X^\top\rangle.\) Thus a linear maximization-oracle representation for the original word would induce one for the transposed RN-ending word, which we just ruled out.

Finally, on the positive witness family, an RN-ending word has output norm \(\sqrt m\), while a CN-ending word has output norm \(\sqrt n\). Lemma~\ref{appx:lem:lmo-idempotence} therefore proves that every finite composition containing both \(\RN\) and \(\CN\) is not a linear maximization oracle of any fixed set.
\end{proof}

The limiting row/column equilibration map \(\caE\) uses the compatible normalization \(\|\caE(G)_{i,:}\|_2^2=n/m\) and \(\|\caE(G)_{:,j}\|_2^2=1\), so every equilibrated matrix has squared Frobenius norm \(n\). The following lemma identifies the balanced representative inside a positive diagonal orbit.

\begin{appxlemma}[Uniqueness in a positive diagonal orbit]
\label{appx:lem:balanced-unique}
Let \(B,C\) be strictly positive matrices in the same positive diagonal orbit: \(C=D_1BD_2, D_1,D_2\succ0.\) If \(B\) and \(C\) have the same squared row sums and the same squared column sums, then \(B=C\).
\end{appxlemma}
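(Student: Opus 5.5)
The plan is a uniqueness argument for the matrix-balancing problem. I would pass to squared entries and logarithms, and then use a strictly convex potential whose critical points are exactly the balanced matrices in the orbit.

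Write \(D_1=\diag(e^{u_1},\ldots,e^{u_m})\) and \(D_2=\diag(e^{w_1},\ldots,e^{w_n})\), so that
\[
C_{ij}^2=B_{ij}^2\,e^{2u_i+2w_j}.
\]
Put \(b_{ij}:=B_{ij}^2>0\) and \(x_i:=2u_i\), \(y_j:=2w_j\). The hypotheses then say two things. For every \(i\), \(\sum_j b_{ij}e^{x_i+y_j}=\sum_j b_{ij}\). For every \(j\), \(\sum_i b_{ij}e^{x_i+y_j}=\sum_i b_{ij}\).

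Next consider the convex function
\[
\Phi(x,y)=\sum_{i,j} b_{ij}\bigl(e^{x_i+y_j}-(x_i+y_j)\bigr).
\]
Its partial derivatives are
\[
\partial_{x_i}\Phi=\sum_j b_{ij}\bigl(e^{x_i+y_j}-1\bigr),\qquad
\partial_{y_j}\Phi=\sum_i b_{ij}\bigl(e^{x_i+y_j}-1\bigr).
\]
The hypotheses say exactly that \((x,y)\) is a critical point of \(\Phi\). The point \((0,0)\) is trivially a critical point as well.

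Each summand \(e^{z}-z\) is strictly convex in \(z\) with minimum at \(z=0\). Since \(\Phi\) is convex, both critical points are global minimizers with the same value. Hence the whole segment between them is a set of minimizers, and \(\Phi\) is constant along it. On that segment each summand is convex in the parameter and their weighted sum is constant, so every summand must be affine. Strict convexity of \(e^z-z\) then forces \(x_i+y_j\) to be constant along the segment, that is, \(x_i+y_j=0\) for all \(i,j\).

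An alternative route gives the same conclusion directly. Sum the row identities weighted by \(x_i\) and the column identities weighted by \(y_j\). This gives
\[
\sum_{ij} b_{ij}\,(x_i+y_j)\bigl(e^{x_i+y_j}-1\bigr)=0.
\]
Each term is nonnegative, since \(z(e^z-1)\ge 0\) with equality only at \(z=0\). Because \(b_{ij}>0\), every \(x_i+y_j\) must vanish. I would present this monotonicity argument, as it is cleanest.

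From \(x_i+y_j=0\) we get \(C_{ij}^2=B_{ij}^2\) for all \(i,j\). Since both matrices are strictly positive, \(C=B\). Note that the diagonal factors themselves are not unique, because \((D_1,D_2)\) can be replaced by \((\lambda D_1,\lambda^{-1}D_2)\). This is consistent with the argument, which only pins down the sums \(x_i+y_j\).

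The only delicate step is checking that the weighted-sum identity really follows from the hypotheses. The row identities state \(\sum_j b_{ij}(e^{x_i+y_j}-1)=0\), and multiplying by \(x_i\) and summing over \(i\) gives zero. Doing the same for the columns with weights \(y_j\) and adding the two results produces \(\sum_{ij}b_{ij}(x_i+y_j)(e^{x_i+y_j}-1)=0\). Strict positivity of \(B\) is used at the end, to conclude that every term, and hence every \(x_i+y_j\), vanishes.
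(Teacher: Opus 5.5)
Your proposal is correct, and the monotonicity argument you plan to present is the same as the paper's proof. With \(S_{ij}=B_{ij}^2\), \(U_{ij}=C_{ij}^2\) and \(x_i+y_j=\log(U_{ij}/S_{ij})\), your identity \(\sum_{ij}b_{ij}(x_i+y_j)\bigl(e^{x_i+y_j}-1\bigr)=0\) is exactly the paper's \(\sum_{ij}(U_{ij}-S_{ij})\log(U_{ij}/S_{ij})=0\), obtained by pairing the additive log-scalings with the vanishing row and column marginal differences.
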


\begin{proof}
Write \(S_{ij}=B_{ij}^2, U_{ij}=C_{ij}^2.\) For suitable scalars \(\alpha_i,\beta_j\), \(\log\frac{U_{ij}}{S_{ij}}=\alpha_i+\beta_j.\) Therefore

\[
\begin{aligned}
\sum_{ij}(U_{ij}-S_{ij})
\log\frac{U_{ij}}{S_{ij}}
= \sum_i\alpha_i
\sum_j(U_{ij}-S_{ij})
+ \sum_j\beta_j
\sum_i(U_{ij}-S_{ij}) =0,
\end{aligned}
\]

because the row and column marginals agree. But each summand satisfies \((u-s)\log(u/s)\ge0\) for \(u,s>0\), with equality only when \(u=s\). Hence \(U=S\), and positivity gives \(B=C\).
\end{proof}

\begin{appxlemma}[The limiting equilibration map is not a linear maximization oracle]
\label{appx:lem:E-nonlmo}
The limiting equilibration map \(E\) in Theorem~1 - (iii) is not a linear maximization oracle of any fixed set.
\end{appxlemma}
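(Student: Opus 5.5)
The plan is to violate the defining inequality of a linear maximization oracle directly. As the main text notes, \(\caE\) is idempotent, odd, and positively aligned, so neither \Cref{lem:idempotence} nor \Cref{lem:alignment} applies. Instead we will exhibit two strictly positive matrices \(G,H\) with \(\langle G,\caE(G)\rangle<\langle G,\caE(H)\rangle\). If \(\caE\) were an oracle for some fixed \(\K\), then \(\caE(H)\in\K\) and optimality at \(G\) would give the reverse inequality, a contradiction.

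The key structural fact is how \(\caE\) acts on positive matrices. \(\caE(G)\) is the unique matrix of the form \(D_1GD_2\), with \(D_1,D_2\succ0\) diagonal, whose squared row sums equal \(n/m\) and whose squared column sums equal \(1\). Uniqueness is exactly \Cref{appx:lem:balanced-unique}. Existence holds because \(\caE\) is the limit of alternating \(\RN,\CN\), i.e.\ the Sinkhorn scaling of the positive matrix \(G^{\circ2}\) toward the prescribed marginals. Consequently \(\caE(G)\) depends on \(G\) only through its positive diagonal orbit: \(\caE(D_1GD_2)=\caE(G)\). This gives the freedom we exploit. Choose \(H=D_1GD_2\) for a rescaling, or more conveniently fix a target equilibrated matrix \(Y\) and compare two inputs in different orbits. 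The candidate set \(\{\caE(H)\}\) contains every strictly positive equilibrated matrix, since each such \(Y\) satisfies \(\caE(Y)=Y\).

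So it suffices to find a positive \(G\) and a positive equilibrated \(Y\) with \(\langle G,Y\rangle>\langle G,\caE(G)\rangle\). We will build this inside a low-parameter family mirroring \Cref{appx:lem:rncn-words}: matrices whose first row and first column carry one parameter and whose remaining block is constant. On this family \(\caE\) can be computed in closed form. Write the squared entries in block form \(\bigl(\begin{smallmatrix} p & q\\ u & w\end{smallmatrix}\bigr)\) with multiplicities \(1,s\) and \(1,r\), where \(r=m-1\) and \(s=n-1\). The marginal constraints then give explicit values, and the equilibrated matrices of this shape form a one-parameter curve \(Y(\theta)\). Take \(G\) itself positive but heavily weighted on the \((1,1)\) entry, e.g.\ \(G_{11}=M\) large and all other entries \(1\). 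We then compute \(\caE(G)\), which is the point \(Y(\theta_G)\) on the curve obtained by solving for the diagonal scaling. We compare it with the point \(Y(\theta_{\max})\) that maximizes \(\theta\mapsto\langle G,Y(\theta)\rangle\). Heuristically, the maximizer places as much mass as allowed at \((1,1)\), namely \(Y_{11}^2\to\min(n/m,1)\), whereas Sinkhorn scaling of \(G\) gives \(Y_{11}^2\) strictly less than that for any finite \(M\). A first-order expansion in \(1/M\), or an explicit small case, should show the strict gap. If a gap cannot be certified generically, we will use the concavity of \(\theta\mapsto\langle G,Y(\theta)\rangle\) and check that its derivative at \(\theta_G\) is nonzero.

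The main obstacle is this last computation: solving the scaling equations for \(\caE(G)\) on the family, and verifying that the derivative of \(\langle G,Y(\theta)\rangle\) at \(\theta_G\) does not vanish. In principle \(\caE(G)\) could coincidentally be the maximizer over the curve. The plan is to reduce to the scalar fixed-point equation analogous to \(F(t)=t\) in \Cref{appx:lem:rncn-words}, whose positive root \(t_\star\) determines \(\caE(G)\). We then show, using \(n\ge5\) if needed to keep the marginal bounds nondegenerate, that \(\partial_\theta\langle G,Y(\theta)\rangle\ne0\) there. Once this is done, \(H:=Y(\theta')\) for a nearby \(\theta'\) gives \(\langle G,\caE(H)\rangle>\langle G,\caE(G)\rangle\), which completes the proof.
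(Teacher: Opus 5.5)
Your reduction is sound and matches the paper's. By diagonal invariance and uniqueness, every strictly positive balanced matrix is its own image under $\caE$. So it suffices to move from $\caE(G)$ along a curve of positive balanced matrices in a direction that strictly increases $\langle G,\cdot\rangle$. Your symmetry reduction to the one-parameter block curve is also correct, and your fallback criterion, a nonzero derivative at $\theta_G$, is the right one (concavity is not even needed). \textbf{The gap is that you never show this derivative is nonzero}, and that is the entire content of the lemma, so as written it is not proved.

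The route you sketch for it is also shaky. For every finite $M$ the maximizer of $\theta\mapsto\langle G,Y(\theta)\rangle$ is interior, because the derivative tends to $-\infty$ where $q$ or $u$ vanishes. So noting that $\caE(G)$ puts less than $\min(n/m,1)$ on $Y_{11}^2$ does not separate the two points. Both tend to the boundary as $M\to\infty$, and solving the fixed-point equation and expanding in $1/M$ is unnecessary.

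The missing idea is that the ratios at $Y=\caE(G)=D_1GD_2$ factor: $G_{ij}/Y_{ij}=a_ib_j$ with $a_i=1/(D_1)_{ii}$ and $b_j=1/(D_2)_{jj}$. Parametrize your curve by $p=Y_{11}^2$. The squared entries then move with velocity $Z$, where $Z_{11}=1$, $Z_{1j}=-1/s$, $Z_{i1}=-1/r$, and $Z_{ij}=1/(rs)$ for $i,j\ge2$. This $Z$ has zero row and column sums, and
\[
\frac{d}{dp}\langle G,Y(p)\rangle\Big|_{p=p_G}
=\frac12\sum_{i,j}\frac{G_{ij}}{Y_{ij}}Z_{ij}
=\frac12(a_1-a_2)(b_1-b_2).
\]
Suppose $a_1=a_2$. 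Then rows $1$ and $i\ge2$ of $Y$ are a common multiple of $(M(D_2)_{11},(D_2)_{22},\ldots)$ and $((D_2)_{11},(D_2)_{22},\ldots)$. Equal row norms then force $M=1$, and the same argument applies to $b$. Hence for $M>1$ both factors are positive, so the derivative is positive. No fixed-point solve, $1/M$ expansion, concavity, or $n\ge5$ is needed. The ``coincidence'' you worry about is exactly the case $M=1$.

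The paper avoids computing $\caE(G)$ at all by reversing the construction. It starts from an explicit positive balanced $B$ and sets $G=LBR$ with $L=R=\diag(2,1,\ldots,1)$, so $\caE(G)=B$ for free. It then perturbs the squared entries of $B$ by the $2\times2$ zero-marginal pattern $\bigl(\begin{smallmatrix}1&-1\\-1&1\end{smallmatrix}\bigr)$, which gives derivative $\tfrac12(L_{11}-L_{22})(R_{11}-R_{22})=\tfrac12$. The mechanism is the same in both versions. $G_{ij}/\caE(G)_{ij}$ is multiplicative in $(i,j)$, while stationarity of $\langle G,\cdot\rangle$ on the balanced set requires the additive form $\alpha_i+\beta_j$.
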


\begin{proof}
If \(B\) is already balanced and \(L,R\succ0\) are diagonal, then
\begin{equation}
\label{eq:E-diagonal-invariance}
\caE(LBR)=B.
\end{equation}

We now construct a positive balanced matrix explicitly. First suppose \(m\ge n\). Set \(b=\frac1{\sqrt{2m}}, a=\sqrt{\frac{n+1}{2m}}, c=\frac1{\sqrt m}.\) Let the first \(n\) rows form the \(n\times n\) matrix \((a-b)I_n+b\mathbf1\mathbf1^\top,\) and let each of the remaining \(m-n\) rows equal \(c\mathbf1^\top.\) Call the resulting matrix \(B\).

For a top row, \(a^2+(n-1)b^2 = \frac{n+1}{2m} + \frac{n-1}{2m} = \frac nm.\) For a lower row, \(nc^2=\frac nm.\) Each column therefore has squared norm \(\frac nm+(m-n)c^2 = \frac nm+\frac{m-n}{m} = 1.\) Thus \(B\) is balanced.

When \(m<n\), use \(B_{m,n} = \sqrt{\frac nm}\,B_{n,m}^\top,\) which has squared row norms \(n/m\) and squared column norms one.

We prove that \(E\) is not a linear maximization oracle by a local perturbation inside the balanced manifold. Let \(L=\diag(2,1,\ldots,1), R=\diag(2,1,\ldots,1),\) and define \(G=LBR.\) By \eqref{eq:E-diagonal-invariance}, \(E(G)=B.\) Let \(Z\) be supported on the leading \(2\times2\) block with \(Z_{11}=Z_{22}=1,\, Z_{12}=Z_{21}=-1.\) Its row sums and column sums are zero. For sufficiently small \(t\), define \(B(t)\) entrywise by \(B(t)_{ij} = \sqrt{B_{ij}^2+tZ_{ij}}.\) All entries remain positive. Since \(Z\) has zero row and column sums, \(B(t)\) has exactly the same squared row and column norms as \(B\), so \(B(t)\) is balanced.

Hence \(E(B(t))=B(t),\) so \(B(t)\) is itself an output of \(E\). On the other hand, \(\left.\frac{d}{dt}\langle G,B(t)\rangle\right|_{t=0} = \frac{1}{2}\sum_{ij}L_{ii}R_{jj}Z_{ij}.\) Only the first \(2\times2\) block contributes, giving \(\left.\frac{d}{dt}\langle G,B(t)\rangle\right|_{t=0} = \frac{1}{2}(2-1)(2-1) = \frac{1}{2}>0.\) Thus for all sufficiently small \(t>0\), \(\langle G,B(t)\rangle > \langle G,B\rangle = \langle G,E(G)\rangle.\) If \(E\) were a linear maximization oracle of a fixed set \(K\), then every output \(B(t)\) would belong to \(K\), contradicting optimality of \(E(G)=B\) at the input \(G\).

Therefore the limiting equilibration \(E\) is not a linear maximization oracle of any fixed set.
\end{proof}

Now we find explicit counterexamples of \Cref{lem:alignment} for $\msign \circ\  \mathcal{E}$ and the iteration of $\sign$ and $\msign$, proving these chains are not linear maximization oracles.
\begin{appxlemma}[Negative alignment after equilibration]
\label{appx:lem:msign-E}
For every shape covered by Theorem~1, the odd map \(\operatorname{msign}\circ \caE\) admits an input \(G\) such that
\[
\langle G,(\operatorname{msign}\circ \caE)(G)\rangle<0.
\]
Hence \(\operatorname{msign}\circ \caE\) is not a linear maximization oracle of any fixed set.
\end{appxlemma}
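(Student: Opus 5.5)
The plan rests on \Cref{lem:alignment}. The map $\msign\circ\caE$ is odd on matrices with no zero entries, because $\caE(-G)=-\caE(G)$ (row and column rescalings commute with negation) and $\msign$ is odd. So it suffices to exhibit one such $G$ with $\langle G,\msign(\caE(G))\rangle<0$. The key tool is the diagonal invariance \eqref{eq:E-diagonal-invariance}, $\caE(LBR)=B$ for balanced $B$ and positive diagonal $L,R$. It decouples the output from the input: once a balanced $B$ is fixed, $Q:=\msign(B)$ is fixed, while $G=LBR$ ranges over a whole orbit. Then
\[
\langle LBR,Q\rangle=\sum_{i,j}L_{ii}R_{jj}B_{ij}Q_{ij}.
\]
If some balanced $B$ with strictly positive entries has a polar factor with $Q_{i_0j_0}<0$, take $L_{i_0i_0}=R_{j_0j_0}=M$ and all other diagonal weights equal to $1$. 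The $(i_0,j_0)$ term scales like $M^2B_{i_0j_0}Q_{i_0j_0}<0$. Every other term is $O(M)$. Hence the pairing is negative for $M$ large, and $G=LBR$ has no zero entries.

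The whole proof therefore reduces to producing a strictly positive balanced $B$ whose $\msign$ has a negative entry, and this is the main obstacle. The explicit balanced matrix from \Cref{appx:lem:E-nonlmo} is not enough in the square case. There $B=(a-b)I_n+b\mathbf 1\mathbf 1^\top$ is symmetric positive definite, so $\msign(B)=I_n$, which has zero off-diagonal entries but no negative ones. Symmetric perturbations keep $\msign=I$, so I would use an \emph{asymmetric} perturbation that stays in the balanced manifold.

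Concretely, take the $3$-cycle pattern $Z$ on the leading $3\times3$ block, with $Z_{12}=Z_{23}=Z_{31}=1$ and $Z_{21}=Z_{32}=Z_{13}=-1$. It has zero row and column sums, so, exactly as in \Cref{appx:lem:E-nonlmo}, $B(t)_{ij}=\sqrt{B_{ij}^2+tZ_{ij}}$ is positive and balanced for small $|t|$. To first order, $B(t)=B+tE+O(t^2)$ with $E_{ij}=Z_{ij}/(2B_{ij})$, which is antisymmetric on the block. For symmetric positive definite $B$, the derivative of the polar factor in direction $E$ is the antisymmetric $\Omega$ solving $\Omega B+B\Omega=E-E^\top$. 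Since $B$ has eigenvalue $a-b$ on $\mathbf 1^\perp$ and $E-E^\top$ is a nonzero antisymmetric matrix supported on $\mathbf 1^\perp$ directions, one gets $\Omega=(E-E^\top)/(2(a-b))\neq0$ to leading order. It has an off-diagonal entry of strictly negative sign, for example at $(2,1)$ when $t>0$. Consequently, for small $t>0$, $\msign(B(t))=I+t\Omega+O(t^2)$ has a strictly negative entry. This needs $n\ge3$, which is covered by $m,n\ge5$. I would double-check the Sylvester solve and the sign bookkeeping, but only nonvanishing of the antisymmetric part is really needed: one of $\pm t$ then works.

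For $m>n$, I would use the tall balanced matrix of \Cref{appx:lem:E-nonlmo}, whose top $n\times n$ block is as above and whose lower rows are $c\mathbf 1^\top$. Its Gram matrix $B^\top B$ is still of the form $\alpha I+\beta\mathbf 1\mathbf 1^\top$, so $B^\top B$ and $(B^\top B)^{-1/2}$ commute with permutations. The same cycle perturbation in the top block changes $\msign(B)=B(B^\top B)^{-1/2}$ at first order by a term whose top block has a nonzero antisymmetric part of order $t$. The top block of $\msign(B)$ is a positive combination of $I$ and $\mathbf 1\mathbf 1^\top$ with small off-diagonal entries. Because the perturbation is first order in $t$, the argument has to exploit this order-$t$ antisymmetric part rather than size alone, possibly after choosing the parameters so that the off-diagonal entries of the top block of $\msign(B)$ vanish. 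As a fallback I would compute a small explicit instance numerically and then verify it in exact arithmetic. The wide case follows by transposition, since $\caE$ and $\msign$ commute with transposition up to the stated normalization constant. Once such a $B$ is in hand, the diagonal-weighting step and \Cref{lem:alignment} finish the proof.
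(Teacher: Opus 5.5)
\textbf{Gap: the case $m\ne n$ is not proved.}
For $m>n$ you assert that the top block of $\msign(B)$ is a positive combination of $I$ and $\mathbf 1\mathbf 1^\top$. You then defer to an unspecified parameter choice or a numerical check. That premise is false. Put $s=a+(n-1)b$ and $\lambda=s^2+n(m-n)c^2$. Then
\[
B^\top B=(a-b)^2\bigl(I-\tfrac1nJ\bigr)+\lambda\,\tfrac1nJ,
\qquad
\msign(B)_{\mathrm{top}}=I-\tfrac1nJ+\tfrac{s}{\sqrt\lambda}\,\tfrac1nJ .
\]
So every off-diagonal entry of the top block equals $\tfrac1n\bigl(s/\sqrt\lambda-1\bigr)$. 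This is strictly negative, because $m>n$ forces $\lambda>s^2$. For the same reason, your fallback of tuning parameters until these entries vanish is impossible within this family.

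The missing step is this short computation. The unperturbed tall $B$ is already a witness, with $B_{12}>0$ and $\msign(B)_{12}<0$. Your amplification with $(i_0,j_0)=(1,2)$ then finishes, exactly as in the paper, and the wide case follows by transposition as you say. As written, your proposal establishes the lemma only for square shapes.

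\textbf{What is correct.}
Your reduction is the paper's: oddness of $\msign\circ\caE$, \Cref{lem:alignment}, and amplifying one entry via $L_{i_0i_0}=R_{j_0j_0}=M$.

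Your square-case witness is also correct, though it differs from the paper's. All off-diagonal entries of $B=(a-b)I+bJ$ equal $b$, so the tangent $E=Z/(2b)$ satisfies $E\mathbf 1=0=\mathbf 1^\top E$. Hence $EB=BE=(a-b)E$, and $\Omega=E/(a-b)$ solves the Sylvester equation exactly. This gives $\msign(B(t))_{21}=-t/(2b(a-b))+O(t^2)<0$, while $B(t)_{21}>0$.

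The paper instead takes the explicit $3\times3$ circulant $B_3$ with $(\msign B_3)_{11}=(1-\sqrt3)/3$. It embeds $B_3\oplus I_{q-3}$, then mixes squared entries with $J_q/q$ and uses continuity of the polar factor to get strict positivity. Your $B(t)$ is strictly positive and balanced by construction, so you avoid that smoothing step.
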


\begin{proof}
Write \(N=\operatorname{msign}\circ\caE\). Both constituents are odd, so \(N\) is odd. By Lemma~\ref{appx:lem:lmo-odd}, it is therefore enough to find \(G\) with \(\langle G,\operatorname{msign}(\caE(G))\rangle<0.\)

First suppose \(m>n\), and use the balanced positive matrix \(B\) from the previous subsection. Put \(r=m-n, \, s=a+(n-1)b, \, \lambda=s^2+nrc^2.\) The leading \(n\times n\) block of \(B\) is \((a-b)I_n+bJ_n.\) A direct calculation gives

\[
B^\top B
=
(a-b)^2\left(I-\frac{1}{n}J_n\right)
+
\lambda\,\frac{1}{n}J_n.
\]

Hence, writing \(Q=\operatorname{msign}(B),\) the leading \(n\times n\) block of \(Q\) is
\begin{equation}
\label{eq:balanced-polar-block}
Q_{\rm top}
=
I-\frac{1}{n}J_n
+
\frac{s}{\sqrt{\lambda}}\frac{1}{n}J_n.
\end{equation}
Since \(r>0\), \(\lambda>s^2,\) and therefore every off-diagonal entry of the block \eqref{eq:balanced-polar-block} is strictly negative: \((Q_{\rm top})_{ij} = \frac{1}{n}\left(\frac{s}{\sqrt{\lambda}}-1\right)<0, \, i\ne j.\) Choose, say, the entry \((1,2)\), and set \(\delta=-B_{12}Q_{12}>0.\) For a scalar \(M\ge1\), let \(L_M=\diag(M,1,\ldots,1), \, R_M=\diag(1,M,1,\ldots,1),\) and define \(G_M=L_MBR_M.\) By diagonal invariance, \(E(G_M)=B,\) so \(N(G_M)=Q.\) The selected \((1,2)\) contribution to
\(\langle G_M,Q\rangle\) is \(-\delta M^2\). Every other entry receives weight at most \(M\). Moreover \(\sum_{ij}|B_{ij}Q_{ij}| \le \|B\|_F\|Q\|_F = n,\) because \(B\) has squared Frobenius norm \(n\) and \(Q\) has \(n\) orthonormal columns. Therefore \(\langle G_M,Q\rangle \le -\delta M^2+nM.\) Taking \(M>\frac n\delta\) gives \(\langle G_M,N(G_M)\rangle<0.\) For \(m<n\), transpose the corresponding \(n\times m\) witness. Compatible equilibration obeys the transpose relation up to the positive scalar \(\sqrt{n/m}\), and the polar factor removes this scalar. Thus the same negative inner product is obtained.

Now suppose \(m=n=q\ge3.\) Consider first the \(3\times3\) positive circulant matrix

\[
B_3
=
\frac1{\sqrt{14}}
\begin{pmatrix}
1&2&3\\
3&1&2\\
2&3&1
\end{pmatrix}.
\]

Every row and every column has norm one. Furthermore, \(B_3^\top B_3 = \frac{3}{14}I_3+\frac{11}{14}J_3,\) and direct evaluation of the polar factor gives

\[
Q_3:=\operatorname{msign}(B_3)
=
\frac{1}{3}
\begin{pmatrix}
1-\sqrt{3}&1&1+\sqrt{3}\\
1+\sqrt{3}&1-\sqrt{3}&1\\
1&1+\sqrt{3}&1-\sqrt{3}
\end{pmatrix}.
\]

In particular, \((Q_3)_{11}=\frac{1-\sqrt{3}}{3}<0.\) For \(q>3\), let \(B_0=B_3\oplus I_{q-3}.\) This matrix is balanced and invertible, but has zero entries. To obtain a strictly positive witness, define the squared-entry matrix \(S_\varepsilon = (1-\varepsilon)(B_0\circ B_0) + \varepsilon\frac{J_q}{q},\) where \(\circ\) denotes the Hadamard product, and set \((B_\varepsilon)_{ij}=\sqrt{(S_\varepsilon)_{ij}}.\) Every row and column of \(S_\varepsilon\) sums to one, so \(B_\varepsilon\) is balanced. It is strictly positive. Since \(B_\varepsilon\to B_0\) and \(B_0\) is invertible, \(B_\varepsilon\) is invertible for all sufficiently small \(\varepsilon>0\). Continuity of the polar factor at invertible matrices gives \(\operatorname{msign}(B_\varepsilon) \longrightarrow Q_3\oplus I_{q-3}.\) Hence, for sufficiently small fixed \(\varepsilon\), \((B_\varepsilon)_{11}>0, \, [\operatorname{msign}(B_\varepsilon)]_{11}<0.\) We can therefore repeat the preceding diagonal-amplification argument, now scaling the first row and first column. For sufficiently large \(M\), \(G_M=L_MB_\varepsilon L_M\) satisfies \(E(G_M)=B_\varepsilon\) and \(\left\langle G_M, \operatorname{msign}(B_\varepsilon) \right\rangle<0.\) Thus \(\operatorname{msign}\circ E\) has a negative-alignment witness in every shape \(m,n\ge5\). Since it is odd, Lemma~\ref{appx:lem:lmo-odd} proves that it is not a linear maximization oracle of any fixed set.
\end{proof}

\begin{appxlemma}[Mixed matrix-sign and entrywise-sign words]
\label[appxlemma]{appx:lem:mixed-sign}
Every finite composition of \(\operatorname{msign}\) and \(\operatorname{Sgn}\) that contains both admits an input \(G\) with negative alignment,
\[
\langle G,N(G)\rangle<0.
\]
Hence no such composition is a linear maximization oracle of any fixed set.
\end{appxlemma}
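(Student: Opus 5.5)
The plan is to use \Cref{lem:alignment}, exactly as in \Cref{appx:lem:msign-E}. Both \(\msign\) and \(\Sgn\) are odd, so every word \(N\) in them is odd, and it suffices to exhibit a single \(G\) with \(\langle G,N(G)\rangle<0\). Since \(\msign^2=\msign\) and \(\Sgn^2=\Sgn\), we may first delete repeated letters and assume \(N=N_k\circ\cdots\circ N_1\) alternates, with \(k\ge2\). Write \(N=N'\circ N_1\). The idea, as in the proof for \(\msign\circ\caE\), is to exploit the invariance of the first letter. We fix a seed \(P\) in the image of \(N_1\), put \(Q:=N'(P)\), and then move \(G\) inside the fiber \(N_1^{-1}(P)\) so that it picks up only an anti-aligned part of \(Q\).

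\textbf{Fiber step.} If \(N_1=\Sgn\), the fiber of a sign pattern \(P\) consists of all \(G=P\circ A\) with \(A>0\) on the support of \(P\). Then
\[
\langle G,Q\rangle=\sum_{ij}A_{ij}P_{ij}Q_{ij},
\]
which can be made negative, by sending one \(A_{ij}\to\infty\) as in the diagonal-amplification argument, as soon as \(P_{ij}Q_{ij}<0\) for some \((i,j)\). If \(N_1=\msign\), take \(P=UV^\top\) with \(G=U\diag(s)V^\top\) and \(s>0\) arbitrary. Then
\[
\langle G,Q\rangle=\sum_k s_k\,u_k^\top Qv_k,
\]
which is negative once some \(u_k^\top Qv_k<0\) for an admissible choice of singular vectors. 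If the singular values of \(P\) are repeated, the singular vectors can even be rotated within the eigenspace. Thus the task reduces to a seed condition: find \(P\) in the image of \(N_1\) such that \(N'(P)\) has a strictly anti-aligned entry, respectively an anti-aligned singular pair.

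\textbf{Seed step.} To handle all word lengths at once, I will look for a small explicit sign matrix \(P_0\), of size \(3\times3\) or \(4\times4\), whose orbit under \(\Sgn\circ\msign\) is eventually periodic and explicitly computable. Examples are \(P_0\) with \(\Sgn(\msign(P_0))=P_1\) and \(\Sgn(\msign(P_1))=P_1\), together with polar factors given in closed form as for the circulant \(B_3\), where \(\msign(B_3)\) had a negative diagonal entry. Then \(N'(P)\) lies in a finite list \(\{P_i,\msign(P_i)\}\) depending only on the parity and length of the word, and the seed condition becomes a finite sign check.

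To reach every shape \(m,n\ge5\), pad with \(\oplus I\) or \(\oplus 0\). Both maps act blockwise on direct sums, with \(\Sgn(0)=0\) and the zero singular directions sent to zero. Tall and wide cases come from transposition, which commutes with both maps.

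\textbf{Main obstacle.} The hard part is the seed: choosing \(P_0\) so that the mismatch survives arbitrarily many alternations. A naive fixed point \(P=\Sgn(\msign(P))\) is useless for words starting with \(\msign\). There \(P=QH\) with \(H\succeq0\) forces \(u^\top Pv\ge0\) along the singular pairs of \(Q\), so no anti-aligned pair exists. I would therefore first try a seed with a genuine 2-cycle, or a transient step in which a sign actually flips, computed by hand on a \(3\times3\) circulant sign pattern. If no single seed serves all words, I would fall back to choosing the seed per residue of the word length, verifying each case by the closed-form polar factor.
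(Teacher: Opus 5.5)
Your reduction is correct and has the same skeleton as the paper's proof. You collapse to alternating words, use oddness together with \Cref{lem:alignment}, amplify inside the fiber of the first letter (one entry of $A$ for $\Sgn$, a positive definite right factor for $\msign$), and pad to rectangular shapes. The genuine gap is that the seed step, which is the entire content of the lemma, remains a search plan (``I will look for'', ``I would first try'', ``fall back to''). You never exhibit either of the two objects the argument needs. The first is a sign pattern $P$ such that each element of the orbit $\msign(P),\Sgn(\msign(P)),\msign(\Sgn(\msign(P))),\dots$ has a supported entry of sign opposite to $P$. The second is an orthogonal seed $O$ and, for each later iterate $X$, a direction $v$ in the row space of $O$ with $v^\top O^\top X v<0$. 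Without these the argument does not close. Your hope for a $3\times3$ or $4\times4$ seed is also unsupported. The paper's witnesses work only for $q\ge5$: the entry that flips below is $\msign(A)_{11}=(3-k)/\sqrt{k^2-2k+9}$ with $k=q-1$, which is positive at $q=3$ and vanishes at $q=4$. This matches the hypothesis $m,n\ge5$.

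The paper fills this gap with two explicit orbits, each reaching a fixed point of $\Sgn\circ\msign$ after one transient step.

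\emph{Words beginning with $\Sgn$.} The matrix $B=J-2I$ satisfies $\msign(B)=2J/q-I=:R$ and $\Sgn(R)=B$. Flip one sign to get $A=B+2e_1e_1^\top$. Reducing to $\operatorname{span}\{e_1,u\}$ with $u\propto(0,1,\dots,1)^\top$ shows that $\msign(A)$ has negative diagonal and positive off-diagonal entries. So the orbit is $\msign(A),B,R,B,R,\dots$, and every element has a negative $(1,1)$ entry. Hence $G_t=B+(t+1)e_1e_1^\top$, which satisfies $\Sgn(G_t)=A$, is a witness for all such words at once once $t$ is large.

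\emph{Words beginning with $\msign$.} Your obstruction only excludes the seed $\msign(P)$ itself. The paper still lands on the fixed point $2I-J$, but enters it from a different orthogonal matrix with the same sign pattern: the Householder reflection $Q=I-2ww^\top/\|w\|_2^2$ with $w=(8q,8q,1,\dots,1)^\top$. It fixes $z=(I-ww^\top/\|w\|_2^2)\mathbf{1}$, and $\|z\|_2^2>q/2$ gives $z^\top(2I-J)z<0$ and $z^\top(I-2J/q)z<0$. So $G_\gamma=Q(I+\gamma zz^\top)=Q+\gamma zz^\top$ works for large $\gamma$.

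Supplying explicit seeds of this kind, with the accompanying sign checks, is what your proposal still needs.
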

\begin{proof}
Write \(P=\operatorname{msign}\) and \(S=\operatorname{Sgn}\). Since both maps are idempotent, every finite chain containing both reduces to an alternating chain of length at least two. Both maps are odd, so by Lemma~\ref{appx:lem:lmo-odd} it suffices to construct a negative-alignment witness for every alternating chain.

We first consider square matrices of size \(q\ge5\). Suppose the first applied operation is \(S\). Put \(k=q-1\), \(J=\mathbf1\mathbf1^\top\), \(B=J-2I\), and \(A=B+2e_1e_1^\top\). Thus \(A\) has \(+1\) in its \((1,1)\) entry, \(-1\) on the remaining diagonal, and \(+1\) off the diagonal.

Let \(Q=P(A)\). To determine its signs, let \(u=k^{-1/2}(0,1,\ldots,1)^\top\) and decompose \(\mathbb R^q=\operatorname{span}\{e_1,u\}\oplus\{e_1,u\}^\perp\). On the orthogonal complement, \(A=-2I\), while on \(\operatorname{span}\{e_1,u\}\) it is represented by

$$
H_A=
\begin{pmatrix}
1&\sqrt{k}\\
\sqrt{k}&k-2
\end{pmatrix}.
$$

Since \(\det H_A=-2\), its eigenvalues have opposite signs. Setting \(D=\sqrt{k^2-2k+9}\), its matrix sign is

$$
\operatorname{msign}(H_A)
=
\frac1D
\begin{pmatrix}
3-k&2\sqrt{k}\\
2\sqrt{k}&k-3
\end{pmatrix}.
$$

It follows that \(Q\) has negative diagonal entries and positive off-diagonal entries for \(k\ge4\). Hence \(S(Q)=B\).

The eigenvalues of \(B\) are \(q-2\) along \(\mathbf1\) and \(-2\) on \(\mathbf1^\perp\), so \(P(B)=2J/q-I=:R\). For \(q\ge5\), \(R\) again has negative diagonal and positive off-diagonal entries, hence \(S(R)=B\). Thus every alternating chain beginning with \(S\) outputs \(Q\), \(B\), or \(R\).

Now take \(G_t=B+(t+1)e_1e_1^\top\). Since \(S(G_t)=A\), the possible chain outputs remain \(Q,B,R\), independently of \(t\). Their \((1,1)\) entries are all strictly negative. Therefore \(\langle G_t,X\rangle=\langle B,X\rangle+(t+1)X_{11}\to-\infty\) for every \(X\in\{Q,B,R\}\). Taking \(t\) sufficiently large gives a negative-alignment witness for every chain beginning with \(S\).

Now suppose the first applied operation is \(P\). Let \(w=(8q,8q,1,\ldots,1)^\top\), let \(d_w=\|w\|_2^2\), and define the Householder reflection \(Q=I-2ww^\top/d_w\). Its diagonal entries are positive and its off-diagonal entries are negative, so \(S(Q)=2I-J=:B\). Since \(B\) has eigenvalue \(2-q\) along \(\mathbf1\) and eigenvalue \(2\) on \(\mathbf1^\perp\), we have \(P(B)=I-2J/q=:R\), and again \(S(R)=B\). Thus every such alternating chain eventually alternates between \(B\) and \(R\).

It remains to choose an input with polar factor \(Q\) and negative alignment with both \(B\) and \(R\). Define \(z=(I-ww^\top/d_w)\mathbf1\). Then \(z\perp w\), so \(Qz=z\). Moreover, if \(r=\|z\|_2^2\), then

$$
r
=
q-\frac{(17q-2)^2}{128q^2+q-2}
>
\frac{q}{2}
$$

for \(q\ge5\). Consequently \(z^\top Bz=2r-r^2<0\) and \(z^\top Rz=r-2r^2/q<0\).

For \(\gamma>0\), set \(G_\gamma=Q(I+\gamma zz^\top)\). The second factor is positive definite, so \(P(G_\gamma)=Q\). Since \(Qz=z\), we also have \(G_\gamma=Q+\gamma zz^\top\). Therefore both \(\langle G_\gamma,B\rangle\) and \(\langle G_\gamma,R\rangle\) tend to \(-\infty\) as \(\gamma\to\infty\). Taking \(\gamma\) sufficiently large gives a negative-alignment witness for every chain beginning with \(P\).

Finally, for rectangular matrices, take \(q=\min\{m,n\}\) and zero-pad the corresponding \(q\times q\) witness. Both \(P\) and \(S\) commute with zero padding, and Frobenius inner products are unchanged. Lemma~\ref{appx:lem:lmo-odd} therefore rules out a linear maximization-oracle representation for every finite composition containing both \(\operatorname{msign}\) and \(\operatorname{Sgn}\).
\end{proof}

\begin{appxtheorem}[\Cref{thm:not_lmo} of the paper]
\label{appx:thm:nonlmo}
Let \(m,n\ge5\), and let normalizations map zero rows and columns to zero.
For each map \(N:\mathbb{R}^{m\times n}\to\mathbb{R}^{m\times n}\) listed below,
there is no fixed set \(K\subseteq\mathbb{R}^{m\times n}\) such that
$$
N(G)\in\arg\max_{X\in K}\langle G,X\rangle
$$
for every \(G\):
\begin{enumerate}[label=(\roman*), leftmargin=*, nosep]
\item[(i)] \(\operatorname{msign}\circ\RN\), \(\operatorname{msign}\circ\RN\circ\CN\), \(\RN\circ\operatorname{msign}\), \(\CN\circ\RN\circ\operatorname{msign}\), and \(\RN\circ\CN\circ\operatorname{msign}\) for \(m>n\).
\item[(ii)] Alternation with matrix sign: \((\operatorname{msign}\circ\RN)^{\circ k}\) and \((\RN\circ\operatorname{msign})^{\circ k}\) for every \(k\ge1\) and \(m>n\).
\item[(iii)] Every finite composition of \(\RN\) and \(\CN\) that contains both, in particular \((\RN\circ\CN)^{\circ k}\); its limit \(\caE\); and \(\operatorname{msign}\circ \caE\), with the last two considered on matrices with no zero entries.
\item[(iv)] Every finite composition of \(\operatorname{msign}\) and \(\operatorname{Sgn}\) that contains both.
\end{enumerate}
Equivalently, for every listed normalization map \(N\), the signed update \(-N\) is not an LMO of any fixed set.
\end{appxtheorem}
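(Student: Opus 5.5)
The theorem follows by assembling the witness lemmas already established in this appendix. In each case we reduce to one of the two necessary conditions for a linear maximization oracle over a fixed set: Lemma~\ref{appx:lem:lmo-idempotence} (norm-preserving reapplication forces idempotence) and Lemma~\ref{appx:lem:lmo-odd} (odd oracles are positively aligned). By the sign convention at the start of this appendix, ruling out a linear maximization-oracle representation for $N$ over some $K$ is equivalent to ruling out an LMO representation for $-N$ over $-K$. So the final sentence of the theorem follows once each listed $N$ is excluded.

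\textbf{Parts (i) and (ii).} We invoke Lemma~\ref{appx:lem:tall-witnesses}. For $m>n$, the family $A(t)$ and its polar factor $Q(t)$ close under $\RN$, $\CN$ and $\msign$, with $\RN(Q(t))=A(\phi(t))$. Here $\phi$ is strictly increasing and satisfies $\phi(1/2)<1/2$. Hence every listed chain, and every power $(\msign\circ\RN)^{\circ k}$ or $(\RN\circ\msign)^{\circ k}$, maps the orbit to a strictly shifted parameter. Reapplying the chain therefore changes the output. The Frobenius norm, however, stays fixed at $\sqrt n$ or $\sqrt m$ according to the last operation, so Lemma~\ref{appx:lem:lmo-idempotence} applies.

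\textbf{Part (iii).} The finite mixed $\RN$/$\CN$ words are handled by Lemma~\ref{appx:lem:rncn-words}. That lemma reduces to alternating words, reduces $\CN$-ending words to $\RN$-ending ones by transposition, and runs the monotone scalar map $F$ on the family $R(t)$. The limit $\caE$ satisfies both necessary conditions, so we use Lemma~\ref{appx:lem:E-nonlmo} instead. It finds a balanced perturbation $B(t)$, which is itself an output of $\caE$, with $\langle G,B(t)\rangle>\langle G,\caE(G)\rangle$, directly contradicting optimality. For $\msign\circ\caE$, Lemma~\ref{appx:lem:msign-E} supplies a negative-alignment witness via diagonal amplification of a balanced matrix. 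In the tall and wide cases this uses the explicit polar block; in the square case it uses a perturbed circulant. Oddness plus Lemma~\ref{appx:lem:lmo-odd} then finishes.

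\textbf{Part (iv).} This follows from Lemma~\ref{appx:lem:mixed-sign}. Idempotence of $\msign$ and $\Sgn$ reduces every word to an alternating one. Witnesses built from $J-2I$ and from a Householder reflection then give negative alignment for words beginning with either map, and zero padding extends these from $q=\min\{m,n\}\ge5$ to rectangular shapes.

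The only genuine care needed is checking that the hypotheses of each lemma match the theorem's standing assumptions. Parts (i) and (ii) need $m>n$. The $\caE$ statements are restricted to matrices with no zero entries, and all witnesses there are strictly positive. The condition $m,n\ge5$ covers the $q\ge5$ requirement in part (iv) and the shapes in Lemma~\ref{appx:lem:msign-E}. The main obstacle is the convention that zero rows and columns are mapped to zero: we must confirm that no witness orbit hits a degenerate row or column. This holds because every row of $A(t)$, $Q(t)$ and $R(t)$ is nonzero for $0<t<1$ and $t>0$ respectively. Zero padding in part (iv) creates zero rows, but $\Sgn$ and $\msign$ handle those consistently.
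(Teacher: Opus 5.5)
Your proposal is correct and follows essentially the same route as the paper's proof. The paper likewise handles (i)--(ii) and the finite $\RN$/$\CN$ words with the $A(t)$/$Q(t)$ and $R(t)$ witness families plus the idempotence lemma, excludes $\caE$ by the direct balanced-perturbation contradiction, and handles $\msign\circ\caE$ and part (iv) with negative-alignment witnesses plus oddness.

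One small imprecision: for $\msign\circ\RN$ the first application sends $A(t)$ to $Q(t)$ without shifting $t$, so the parameter shift appears only on the second application. This is all the idempotence lemma needs, so the argument stands.
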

\begin{proof}
Parts (i) and (ii) follow from Lemma~\ref{appx:lem:tall-witnesses} and Lemma~\ref{appx:lem:lmo-idempotence}. The finite-word claim in part (iii) follows from Lemma~\ref{appx:lem:rncn-words} and Lemma~\ref{appx:lem:lmo-idempotence}; the limit \(E\) is excluded by Lemma~\ref{appx:lem:E-nonlmo}, and \(\operatorname{msign}\circ E\) is excluded by Lemmas~\ref{appx:lem:msign-E} and \ref{appx:lem:lmo-odd}. Part (iv) follows from Lemmas~\ref{appx:lem:mixed-sign} and \ref{appx:lem:lmo-odd}.
\end{proof}

\subsection{Convergence and non-convergence guarantees}
\label{app:convergence}

\begin{appxproposition}[\Cref{prop:divergence} of the paper]
\label{appx:prop:divergence}
Let \(T\) be positively scale invariant with \(\langle G,T(G)\rangle>0\)
for some \(G\). For the convex, \(1\)-smooth quadratic
\[
f(X)=\frac{\langle G,X\rangle^2}{2\|G\|_F^2}
\]
with \(X_0=G/\|G\|_F^2\), the iterates
\(X_{t+1}=X_t+\eta_tT(\nabla f(X_t))\) satisfy \(f(X_t)\to\infty\)
for every nonnegative step-size sequence with \(\sum_t\eta_t=\infty\).
\end{appxproposition}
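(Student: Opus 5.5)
The plan is to reduce the dynamics to a single scalar recursion along the direction $G$. The gradient is
\[
\nabla f(X)=\frac{\langle G,X\rangle}{\|G\|_F^2}\,G .
\]
Set $s_t:=\langle G,X_t\rangle$, so that $s_0=1$ and $\nabla f(X_t)=(s_t/\|G\|_F^2)\,G$. As long as $s_t>0$, the gradient is a positive multiple of $G$. Positive scale invariance then gives $T(\nabla f(X_t))=T(G)$, so the update direction is the same fixed matrix at every such step. Write $c:=\langle G,T(G)\rangle>0$. Taking the inner product of the update with $G$ yields
\[
s_{t+1}=s_t+\eta_t\,c .
\]

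Next, run an induction on $t$. Since $\eta_t\ge0$ and $c>0$, the sequence $s_t$ is nondecreasing, so $s_t\ge s_0=1>0$ for all $t$. This positivity is exactly what makes the scale-invariance step legitimate at every iteration. Unrolling the recursion gives
\[
s_t=1+c\sum_{r<t}\eta_r\to\infty
\]
by non-summability. Hence $f(X_t)=s_t^2/(2\|G\|_F^2)\to\infty$.

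It remains to check the claimed properties of $f$. It is convex because it is the square of a linear functional. Its Hessian is the rank-one operator $X\mapsto \langle G,X\rangle G/\|G\|_F^2$, whose operator norm is $1$, so $f$ is $1$-smooth in the Frobenius geometry.

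There is essentially no obstacle. The only subtlety is keeping $s_t$ strictly positive, so that the scale invariance (only for positive scalars) applies and we never hit a zero gradient where $T$ could behave differently. The monotone induction above handles this. One should also note that $T$ is only assumed positively homogeneous of degree zero, which is all that is used.
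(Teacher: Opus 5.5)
Your proof is correct and follows essentially the same route as the paper: track $s_t=\langle G,X_t\rangle$, use $s_0=1$ and $\eta_t\ge0$ to keep $s_t>0$ so that positive scale invariance gives $T(\nabla f(X_t))=T(G)$, then unroll $s_{t+1}=s_t+\eta_t\langle G,T(G)\rangle$ to get $f(X_t)=s_t^2/(2\|G\|_F^2)\to\infty$. The paper likewise establishes convexity and $1$-smoothness by noting that the Hessian is the rank-one orthogonal projector onto $G$.
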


\begin{proof}
Let \(a:=\langle G,T(G)\rangle>0\). The Hessian of \(f\) is the
rank-one orthogonal projector \(H[Z]=G\langle G,Z\rangle/\|G\|_F^2\),
so \(f\) is convex, nonnegative, and \(1\)-smooth.
Put \(c_t=\langle G,X_t\rangle\). Since \(c_0=1\), whenever \(c_t>0\),
\(\nabla f(X_t)=(c_t/\|G\|_F^2)G\) is a positive multiple of \(G\).
Positive scale invariance gives \(T(\nabla f(X_t))=T(G)\).
Thus \(c_{t+1}=c_t+\eta_t a\), and induction yields
\[
c_t=1+a\sum_{j<t}\eta_j>0,
\qquad
X_t=X_0+\left(\sum_{j<t}\eta_j\right)T(G).
\]
Consequently,
\[
f(X_t)=\frac{\left(1+a\sum_{j<t}\eta_j\right)^2}{2\|G\|_F^2}
\longrightarrow\infty.
\]
\end{proof}

The witnesses in Lemmas~\ref{appx:lem:mixed-sign} and
\ref{appx:lem:msign-E} have \(\langle G,N(G)\rangle<0\) for the
\emph{normalization map} \(N\). Their signed updates \(T=-N\) therefore
satisfy \(\langle G,T(G)\rangle>0\), as required by
Proposition~\ref{appx:prop:divergence}. Equivalently, the stateless iteration
\(X_{t+1}=X_t-\eta_tN(\nabla f(X_t))\) diverges.
The Aurora claim follows by the same sign conversion from the next lemma.

\begin{appxlemma}[Negative alignment for two Aurora cycles]
\label[appxlemma]{appx:lem:aurora-negative}
If \(n=2\) and \(m\ge258\), then \((\operatorname{msign}\circ\RN)^{\circ2}\) has a full-column-rank input \(G\) with nonzero entries such that
\[
\left\langle G,(\operatorname{msign}\circ\RN)^{\circ2}(G)\right\rangle<0.
\]
\end{appxlemma}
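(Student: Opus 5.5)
The plan is to reduce the claim to a two-dimensional quadratic-form computation, using the fact that $\RN$ forgets row scales. Write $N=(\msign\circ\RN)^{\circ 2}$. For any positive diagonal $D$ and any $Y$ with unit rows, $\RN(DY)=Y$, so $N(DY)=N(Y)$. Therefore
\[
\langle DY,N(Y)\rangle=\sum_i d_i\,\langle Y_{i,:},N(Y)_{i,:}\rangle .
\]
It suffices to find a row-normalized, full-column-rank $Y\in\mathbb{R}^{m\times 2}$ with nonzero entries and one row index $i_0$ such that $\langle Y_{i_0,:},N(Y)_{i_0,:}\rangle<0$. Then put $d_{i_0}=M$ and all other $d_i=1$. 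The bound $\sum_i|\langle Y_{i,:},N(Y)_{i,:}\rangle|\le\|Y\|_F\|N(Y)\|_F=\sqrt{2m}$ shows that $G=DY$ is negatively aligned once $M$ is large. This is the same diagonal-amplification trick used in \Cref{appx:lem:msign-E}.

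Next I would write $N(Y)$ row by row. Let $y_i\in\mathbb{R}^2$ be the rows of $Y$ and $S=Y^\top Y$.

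\begin{itemize}
\item The first matrix sign has rows $S^{-1/2}y_i$.
\item Row normalization gives $z_i=S^{-1/2}y_i/\|S^{-1/2}y_i\|_2$.
\item With $P=\sum_i z_iz_i^\top$, the second matrix sign has rows $P^{-1/2}z_i$.
\end{itemize}

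Up to a positive factor, the sign condition therefore becomes
\[
y_{i_0}^\top P^{-1/2}S^{-1/2}y_{i_0}<0 .
\]
The matrix $P^{-1/2}S^{-1/2}$ is a product of two positive definite $2\times2$ matrices and is not symmetric. Its quadratic form can be negative only if both factors are strongly anisotropic and their eigenbases are substantially rotated relative to each other. The construction must produce exactly this situation.

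For the construction I would use a few row directions with multiplicities. Take $k$ unit rows at angle $\theta_1$ and $m-k-1$ unit rows at angle $\theta_2$, chosen so that $S$ is highly anisotropic. After whitening by $S^{-1/2}$ and renormalizing, the two clusters map to two unit directions whose weighted frame $P$ is again anisotropic, but along a rotated axis. The witness row $y_{i_0}$ is one extra unit row at angle $\psi$, chosen near the direction that makes the non-symmetric product most negative. Every quantity here ($S$, the whitened angles, $P$, and the quadratic form) is an explicit rational-trigonometric function of $(\theta_1,\theta_2,\psi,k,m)$. I would fix concrete angles, for instance one cluster near $e_1$ with small tilt and the other at a moderate angle, and let $k$ scale with $m$. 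I would then check that the form is negative for all $m\ge258$; I expect the threshold $258$ to come from the anisotropy of $S$ and $P$ growing with the cluster sizes.

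The main obstacle is this explicit verification: choosing parameters for which the sign of $y_{i_0}^\top P^{-1/2}S^{-1/2}y_{i_0}$ is provably negative uniformly in $m$, and tracking the constant. I would first try taking $m\to\infty$ with fixed proportions to find the limiting sign, then bound the finite-$m$ error. The remaining conditions are handled by perturbation. The chosen angles avoid the coordinate axes, so all entries are nonzero. The two distinct directions give full column rank. Small perturbations of the rows, needed if any rows coincide or an entry vanishes, preserve the strict inequality by continuity of $\msign$ at full-rank inputs.
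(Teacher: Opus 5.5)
Your reduction is correct and matches the last step of the paper's argument. Since $\RN$ discards row scales, it suffices to find a row-normalized, full-column-rank $Y$ and a row $i_0$ with $\langle Y_{i_0,:},N(Y)_{i_0,:}\rangle<0$, and then amplify that row. Your $\sqrt{2m}$ bound and your rewriting of the condition as $y_{i_0}^\top P^{-1/2}S^{-1/2}y_{i_0}<0$ are also fine.

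\textbf{The gap.} The existence of such a $Y$ for every $m\ge258$ is the entire content of the lemma, and you never supply it. No multiplicities, angles or witness direction are fixed, and no inequality is checked for any $m$.

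\textbf{The suggested regime fails as stated.} Take your fixed-proportion limit: cluster weights $\lambda,1-\lambda$, angle $\delta$ between the clusters, and a witness row of vanishing weight. Then the whitened clusters become orthonormal, and $P_0=O\Lambda O^\top$ with $\Lambda=\diag(\lambda,1-\lambda)$ and $O=\msign(F)$, where $F=[\sqrt\lambda\,y_1,\sqrt{1-\lambda}\,y_2]$. The sign condition becomes indefiniteness of the symmetric part of $(F^\top F)^{1/2}\Lambda^{-1/2}$. Evaluating the $2\times2$ square root shows that, in this limit, a negatively aligned direction exists if and only if
\[
\cos^2\delta-4|\sin\delta|>\sqrt{\lambda(1-\lambda)}\,\bigl(2+6\sin^2\delta\bigr).
\]
So the clusters must be unbalanced and nearly collinear: $|\sin\delta|<\sqrt5-2$, i.e.\ within about $13.7^\circ$ of each other up to sign. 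A second cluster at a ``moderate angle'' gives no witness at all in this limit.

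Even with admissible parameters, you would still owe an explicit finite-$m$ bound. In that regime $S$ has an eigenvalue of order $m\lambda(1-\lambda)\sin^2\delta$, and the $O(1)$ contribution of the witness row must be controlled against it. This is where any threshold like $258$ would have to come from.

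\textbf{The missing idea.} The paper decouples the two cycles with a degenerate limit instead of tracking $S$ and $P$ jointly.
\begin{enumerate}
\item It takes the input $[\mathbf 1,\varepsilon q]$ with $\mathbf 1^\top q=0$ and $\|q\|_2=1$. All rows are nearly parallel, and a QR expansion shows $\msign(\RN([\mathbf 1,\varepsilon q]))\to Q=[m^{-1/2}\mathbf 1,q]$ as $\varepsilon\downarrow0$.
\item So the first cycle outputs, in the limit, an explicitly designed orthonormal $Q$, and only one application of $\msign\circ\RN$ remains to analyze.
\item With $q\propto(r^{-1},\ldots,r^{-1},-2,1)^\top$ and $r=m-2$, the rows of $\RN(Q)$ have slopes $c/r$ (repeated $r$ times), $-2c$ and $c$. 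Here $c=\sqrt m/s$, and $s=\sqrt{5+1/r}$ normalizes $q$.
\item The closed form of the inverse square root of the $2\times2$ Gram matrix $C$ then applies. Together with $C_{22}<3$, $C_{11}\le m$ and $cC_{12}\ge r/6-1/2$, it shows that the last row of $V=\msign(\RN(Q))$ has a negative first coordinate once $r\ge256$.
\item Meanwhile the last row of $\RN([\mathbf 1,\varepsilon q])$ tends to $e_1^\top$.
\end{enumerate}
Continuity at a small fixed $\varepsilon$ and the row amplification you already have finish the proof. Nonzero entries and full column rank come for free.
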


\begin{proof}
Let \(m=r+2\) with \(r\ge256\), put
\(s=\sqrt{5+1/r}\), and define
\[
 q=\frac1s\left(\underbrace{r^{-1},\ldots,r^{-1}}_{r\text{ entries}},-2,1\right)^\top,
 \,
 Q=\begin{bmatrix}m^{-1/2}\mathbf1&q\end{bmatrix}.
\]
Then \(\mathbf1^Tq=0\) and \(\|q\|=1\), so \(Q^TQ=I_2\). Write
\(c=\sqrt m/s\) and \(a=c/r\). The rows of \(\RN(Q)\) have slopes
\(a,-2c,c\), with the first slope repeated \(r\) times. Let
\(C=\RN(Q)^T\RN(Q)\). Direct summation gives \(C_{22}<3\), \(C_{11}\le m\), and
\[
 cC_{12}\ge \frac{c^2}{1+a^2}-\frac12
 =\frac{(r+2)r^2}{5r^2+2r+2}-\frac12
 \ge \frac r6-\frac12.
\]
For \(h=\sqrt{\det C}\), the inverse square root of a positive definite \(2\times2\)
matrix is a positive scalar multiple of
\(\bigl(\begin{smallmatrix}C_{22}+h&-C_{12}\\-C_{12}&C_{11}+h\end{smallmatrix}\bigr)\).
Hence the first coordinate of the last row of
\(V:=\operatorname{msign}(\RN(Q))\) has the sign of
\(C_{22}+h-cC_{12}\). Since \(h\le\sqrt{3m}\) and, for \(r\ge256\),
\(r/6-1/2>3+\sqrt{3(r+2)}\), this coordinate is strictly negative.

Now let \(A_\varepsilon=\RN([\mathbf1,\varepsilon q])\). A QR expansion gives
\(\operatorname{msign}(A_\varepsilon)\to Q\) as \(\varepsilon\downarrow0\): the
second QR diagonal is \(\varepsilon+O(\varepsilon^3)\), the off-diagonal term is
\(O(\varepsilon^3)\), and the polar factor of the resulting upper-triangular
\(2\times2\) matrix tends to the identity. Therefore
\[
 (\operatorname{msign}\circ\RN)^2([\mathbf1,\varepsilon q])
 \longrightarrow V.
\]
For sufficiently small fixed \(\varepsilon>0\), the inner product between the last row of
\(A_\varepsilon\) and the last row of this output is negative. Multiplying only the last input row
by a sufficiently large positive scalar leaves the first row-normalization, and hence the chained
output, unchanged, while making the total alignment negative. The input remains full column rank
with nonzero entries. This proves the claimed negative-alignment example for
\((\operatorname{msign}\circ\RN)^{\circ2}\) when \(n=2\) and \(m\ge258\).
\end{proof}

\begin{appxdefinition}[\Cref{def:approx_lmo} of the paper]
\label{appx:def:approx-lmo}
Let \(K\) be a compact set containing the origin and define
\[
m_K(G):=\min_{X\in K}\langle G,X\rangle.
\]
For \(\nu\in(0,1]\), a map \(U\) is a \emph{\(\nu\)-approximate LMO}
of \(K\) if, for every \(G\),
\[
U(G)\in K,
\qquad
\langle G,U(G)\rangle\le\nu m_K(G).
\]
\end{appxdefinition}

Write \(h_K(G):=\max_{X\in K}\langle G,X\rangle\). The nonnegative
minimization gap is \(-m_K(G)=h_K(-G)=h_{-K}(G)\).
For an origin-symmetric set \(K=-K\), this equals \(h_K(G)\), the
notation used for the norm-ball bounds in the main text.

\begin{appxproposition}[\Cref{prop:convergence} of the paper]
\label{appx:prop:approx-descent}
Let \(f\) have an \(L\)-Lipschitz gradient, let
\(\Delta=f(X_0)-\inf f<\infty\), and let \(U\) be a
\(\nu\)-approximate LMO of a compact set \(K\) containing the origin.
Write \(R:=\max_{X\in K}\|X\|_F\).
\begin{enumerate}
\item[(i)] Gradient descent. With
\(X_{t+1}=X_t+\eta U(\nabla f(X_t))\) and
\[
\eta=\sqrt{\frac{2\Delta}{LR^2T}},
\qquad
\frac1T\sum_{t=0}^{T-1}[-m_K(\nabla f(X_t))]
\le\frac{R}{\nu}\sqrt{\frac{2L\Delta}{T}}.
\]
\item[(ii)] SGD with momentum. With unbiased stochastic gradients of
conditional variance at most \(\sigma^2\),
\[
M_t=\beta M_{t-1}+(1-\beta)\widehat g_t,
\qquad
X_{t+1}=X_t+\eta U(M_t),
\]
\(1-\beta=T^{-1/2}\), \(\eta=cT^{-3/4}\), and fixed initialization
with finite initial tracking error, the same average in expectation is
\[
O\!\left(
\nu^{-1}\bigl[\Delta/c+R(LRc+\sigma)\bigr]T^{-1/4}
\right).
\]
\end{enumerate}
For \(K=-K\), both statements can equivalently be written with
\(h_K(\nabla f(X_t))\) in place of \(-m_K(\nabla f(X_t))\).
\end{appxproposition}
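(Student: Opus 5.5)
The plan is to run the standard descent-lemma argument for LMO methods, paying for the approximation once, when we bound the alignment term. Two facts about a $\nu$-approximate LMO $U$ of $K$ drive everything. First, $U(G)\in K$ gives $\|U(G)\|_F\le R$. Second, $\langle G,U(G)\rangle\le \nu\, m_K(G)$, where $m_K(G)\le 0$ because $0\in K$.

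For part (i), $L$-smoothness gives
\[
f(X_{t+1})\le f(X_t)+\eta\langle \nabla f(X_t),U(\nabla f(X_t))\rangle+\tfrac{L\eta^2}{2}\|U(\nabla f(X_t))\|_F^2 .
\]
Applying the two facts, this becomes
\[
f(X_{t+1})\le f(X_t)-\eta\nu\,[-m_K(\nabla f(X_t))]+\tfrac{L\eta^2R^2}{2}.
\]
Telescoping over $t=0,\dots,T-1$ and using $f(X_T)\ge\inf f$, we get
\[
\frac1T\sum_t[-m_K(\nabla f(X_t))]\le \frac{\Delta}{\nu\eta T}+\frac{L\eta R^2}{2\nu}.
\]
With $\eta=\sqrt{2\Delta/(LR^2T)}$ the two terms balance, and their sum is $\frac{R}{\nu}\sqrt{2L\Delta/T}$.

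For part (ii), the same inequality holds with $U(M_t)$ in place of $U(\nabla f(X_t))$. The key extra step is to compare the alignment with $M_t$ against the alignment with the true gradient. Write $g_t=\nabla f(X_t)$ and $e_t=M_t-g_t$. Then
\[
\langle g_t,U(M_t)\rangle=\langle M_t,U(M_t)\rangle-\langle e_t,U(M_t)\rangle\le \nu\, m_K(M_t)+R\|e_t\|_F .
\]
Since $K$ is compact, $-m_K$ is a support function, hence sublinear. This gives $-m_K(M_t)\ge -m_K(g_t)-h_K(e_t)\ge -m_K(g_t)-R\|e_t\|_F$. Together these yield
\[
\langle g_t,U(M_t)\rangle\le -\nu[-m_K(g_t)]+(1+\nu)R\|e_t\|_F\le -\nu[-m_K(g_t)]+2R\|e_t\|_F .
\]
The descent inequality therefore becomes
\[
f(X_{t+1})\le f(X_t)-\eta\nu[-m_K(g_t)]+2\eta R\|e_t\|_F+\tfrac{L\eta^2R^2}{2}.
\]
Next I bound the momentum tracking error by the usual recursion. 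Using
\[
e_{t}=\beta e_{t-1}+\beta(g_{t-1}-g_t)+(1-\beta)\xi_t,
\]
where $\xi_t$ is the mean-zero noise, we have $\|g_{t-1}-g_t\|\le L\eta R$. The noise terms are martingale differences, so their contribution is computed in second moment. This gives
\[
\mathbb E\|e_t\|_F\le \beta^t\|e_0\|+\frac{L\eta R}{1-\beta}+\sqrt{1-\beta}\,\sigma .
\]

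Summing, dividing by $\nu\eta T$, and inserting $1-\beta=T^{-1/2}$ and $\eta=cT^{-3/4}$ gives four contributions:
\begin{itemize}
\item $\Delta/(\nu c)\,T^{-1/4}$ from the telescoped function values;
\item $R\cdot LRc\,T^{-1/4}/\nu$ from the drift term, since $L\eta R/(1-\beta)=LRcT^{-1/4}$;
\item $R\sigma T^{-1/4}/\nu$ from the noise term;
\item $LR^2cT^{-3/4}/\nu$ from the curvature term.
\end{itemize}
The initial tracking error contributes $\sum_t\beta^t\|e_0\|\le \|e_0\|/(1-\beta)$, and after division by $T$ this is $O(T^{-1/2})$, which is lower order. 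Collecting terms gives the claimed $O(\nu^{-1}[\Delta/c+R(LRc+\sigma)]T^{-1/4})$.

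The step I expect to be the main obstacle is the alignment transfer from $M_t$ to $g_t$ in part (ii). Because $U$ is only approximately optimal, we cannot compare $U(M_t)$ directly with an exact LMO at $g_t$. The sublinearity of the support function $-m_K$ together with the feasibility bound $\|U\|\le R$ is what rescues it. This costs only a constant factor on $R\|e_t\|$ and leaves exactly one factor of $\nu^{-1}$ overall. Finally, for $K=-K$ we have $-m_K=h_K$, which gives the restated form.
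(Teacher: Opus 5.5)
Your proposal is correct and follows essentially the same proof as the paper. Part (i) is the same smoothness-plus-telescoping argument with the same step-size balancing. Part (ii) rests on the same robust alignment inequality $\langle g,U(v)\rangle\le\nu\,m_K(g)+(1+\nu)R\|g-v\|_F$, followed by the same EMA tracking-error recursion with drift term of order $LR\eta/(1-\beta)$ and noise term of order $\sigma\sqrt{1-\beta}$. The only cosmetic difference is that you transfer $m_K(M_t)$ to $m_K(g_t)$ via sublinearity of the support function $-m_K=h_{-K}$, whereas the paper phrases the same fact as $R$-Lipschitzness of $m_K$.
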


\begin{proof}
Feasibility implies \(\|U(v)\|_F\le R\). The minimum-value function is
\(R\)-Lipschitz:
\(\lvert m_K(g)-m_K(v)\rvert\le R\|g-v\|_F\).
The robust descent-alignment inequality is
\begin{equation}
\label{eq:robust-alignment}
\langle g,U(v)\rangle
\le\nu m_K(g)+(1+\nu)R\|g-v\|_F.
\end{equation}
Indeed,
\[
\langle g,U(v)\rangle
=\langle v,U(v)\rangle+\langle g-v,U(v)\rangle
\le\nu m_K(v)+R\|g-v\|_F,
\]
and the Lipschitz bound for \(m_K\) gives \eqref{eq:robust-alignment}.

For part (i), let \(g_t=\nabla f(X_t)\). Smoothness and approximate
optimality give
\[
f(X_{t+1})
\le f(X_t)+\nu\eta m_K(g_t)+\frac{LR^2\eta^2}{2}.
\]
Summing and using \(f(X_0)-f(X_T)\le\Delta\) yields
\[
\frac1T\sum_{t=0}^{T-1}[-m_K(g_t)]
\le\frac{\Delta}{\nu\eta T}+\frac{LR^2\eta}{2\nu}.
\]
The stated choice of \(\eta\) balances the two terms and proves part (i).
The degenerate cases \(R=0\) or \(\Delta=0\) have zero minimization gap
along the corresponding constant iterates and can be treated separately.

For part (ii), write \(\widehat g_t=g_t+\zeta_t\), where
\(\mathbb E[\zeta_t\mid\mathcal F_t]=0\) and
\(\mathbb E[\|\zeta_t\|_F^2\mid\mathcal F_t]\le\sigma^2\).
Let \(e_t=M_t-g_t\). Applying \eqref{eq:robust-alignment} with \(v=M_t\)
and then smoothness gives
\[
\nu\eta[-m_K(g_t)]
\le f(X_t)-f(X_{t+1})+(1+\nu)R\eta\|e_t\|_F
+\frac{LR^2\eta^2}{2}.
\]
Thus it remains to bound \(T^{-1}\sum_t\mathbb E\|e_t\|_F\).
For \(t\ge1\), the tracking recursion is
\[
e_t=\beta e_{t-1}+\beta(g_{t-1}-g_t)+(1-\beta)\zeta_t.
\]
Since \(\|X_t-X_{t-1}\|_F\le R\eta\), smoothness gives
\(\|g_t-g_{t-1}\|_F\le LR\eta\). Unrolling the recursion, bounding
the drift by the triangle inequality, and using martingale orthogonality
for the noise yields
\begin{equation}
\label{eq:ema-track}
\mathbb E\|e_t\|_F
\le\beta^{t+1}E_0+
\frac{\beta}{1-\beta}LR\eta+
\sigma\sqrt{\frac{1-\beta}{1+\beta}},
\end{equation}
where \(E_0:=\mathbb E\|M_{-1}-g_0\|_F<\infty\).
Averaging and summing the descent inequality gives
\[
\frac1T\sum_{t<T}\mathbb E[-m_K(g_t)]
\le\frac1\nu\left[
\frac{\Delta}{\eta T}
+CR\left(
\frac{LR\eta}{1-\beta}+\sigma\sqrt{1-\beta}
+\frac{E_0}{(1-\beta)T}
\right)
+\frac{LR^2\eta}{2}
\right]
\]
for a universal constant \(C\). Setting \(1-\beta=T^{-1/2}\) and
\(\eta=cT^{-3/4}\) gives
\[
\frac1T\sum_{t<T}\mathbb E[-m_K(g_t)]
=O\!\left(\nu^{-1}
[\Delta/c+R(LRc+\sigma)]T^{-1/4}\right).
\]
For fixed initialization, the initial-error and direct-smoothness terms
have lower orders \(T^{-1/2}\) and \(T^{-3/4}\), respectively.
If \(r\mathbb B_F\subseteq K\) for some \(r>0\), then
\(-m_K(g)\ge r\|g\|_F\), so these bounds also control the gradient norm.
Here \(\mathbb B_F\) is the Frobenius unit ball.
The proof uses momentum before applying \(U\); an EMA of a statistic
internal to \(U\) is not covered.
\end{proof}

\begin{appxproposition}[\cref{prop:approx} of the paper]
\label{appx:prop:positive-chains}
Let $K_{\rm row}:=\{X:\max_i\|X_{i,:}\|_2\le1\}.$
Then $-(\RN\circ\operatorname{msign})$ is a $1/\sqrt m$-approximate LMO of $K_{\rm row}$, while $-(\operatorname{msign}\circ\RN)$ is a $1/\sqrt m$-approximate LMO of the spectral-norm unit ball.
By transposition, the same holds for \(\CN\) with \(1/\sqrt n\).
\end{appxproposition}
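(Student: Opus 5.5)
For each chain we check feasibility and then a uniform alignment bound. Throughout, $\RN$ maps each nonzero row to its unit-$\ell_2$ normalization, and both feasibility claims are immediate. For $-(\RN\circ\msign)$ the output has rows of norm at most one, so it lies in $K_{\rm row}$. For $-(\msign\circ\RN)$ the polar factor has operator norm at most one, so it lies in the spectral unit ball. Both sets are symmetric, so $m_K(G)=-h_K(G)$. For $K_{\rm row}$ we have $h_{K_{\rm row}}(G)=\sum_i\|G_{i,:}\|_2=:\|G\|_{2,1}$, and for the spectral ball $h=\|G\|_*$, the nuclear norm. The alignment condition $\langle G,U(G)\rangle\le\nu m_K(G)$ with $U=-N$ therefore becomes the lower bound $\langle G,N(G)\rangle\ge \nu\,h_K(G)$, which we prove with $\nu=1/\sqrt m$.

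\textbf{The chain $\RN\circ\msign$.} Let $G=U\Sigma V^\top$ (compact SVD) and $P=UV^\top$. Since $\RN$ divides row $i$ by $\rho_i=\|P_{i,:}\|_2\le 1$, we get $\langle G,\RN(P)\rangle=\sum_i \rho_i^{-1}\langle G_{i,:},P_{i,:}\rangle$. The key observation is that each row term is nonnegative: $G_{i,:}=e_i^\top U\Sigma V^\top$ and $P_{i,:}=e_i^\top UV^\top$, so $\langle G_{i,:},P_{i,:}\rangle=e_i^\top U\Sigma U^\top e_i\ge0$. Using $\rho_i\le1$ then gives $\langle G,\RN(P)\rangle\ge\sum_i\langle G_{i,:},P_{i,:}\rangle=\langle G,P\rangle=\|G\|_*$. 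It remains to compare with $\|G\|_{2,1}$. By Cauchy--Schwarz, $\|G\|_{2,1}\le\sqrt m\|G\|_F\le\sqrt m\|G\|_*$, so $\langle G,N(G)\rangle\ge m^{-1/2}h_{K_{\rm row}}(G)$. Rows with $\rho_i=0$ have $e_i^\top U=0$, hence $G_{i,:}=0$, and they contribute nothing.

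\textbf{The chain $\msign\circ\RN$.} Write $G=DR$ with $R=\RN(G)$ and $D=\diag(\|G_{i,:}\|_2)\succeq0$, and let $Q=\msign(R)$, so that $\langle R,Q\rangle=\|R\|_*$. Then $\langle G,Q\rangle=\langle DR,Q\rangle=\tr(R^\top D Q)$. I expect this step to be the main obstacle, because $D$ sits between $R$ and $Q$ and the trace need not factor as it does for $P$ above. The plan is to show
\[
\tr(R^\top DQ)\ge \min_i D_{ii}\,\|R\|_*,
\]
but this is too weak, since $\min_i D_{ii}$ can be zero.

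A better route is to use $QQ^\top$, the projection onto $\mathrm{col}(R)$, which satisfies $Q^\top R=(R^\top R)^{1/2}$ and $R=QS$ with $S=(R^\top R)^{1/2}$. Then $\langle DR,Q\rangle=\tr(Q^\top DQS)$. Both $Q^\top DQ$ and $S$ are positive semidefinite, so this trace is nonnegative, but it needs a quantitative lower bound. Using $\|R\|_*\ge\|R\|_F=\sqrt m$ when all rows are nonzero does not immediately help either.

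The plan, attempted in order:
\begin{enumerate}
\item Bound $\langle G,Q\rangle$ by $\|G\|_*$ directly, and try to show it is at least $\|G\|_*/\sqrt m$, by exploiting $\tr(Q^\top DQS)\ge\lambda_{\min}(S)\tr(Q^\top DQ)$. If that fails,
\item target the weaker quantity $\|G\|_F^2/\|G\|_{2,1}$. Note $\langle G,Q\rangle\ge\langle G,R\rangle/\|R\|_{op}$ would follow if $Q$ dominated $R/\|R\|_{op}$ in the relevant PSD sense; $\langle G,R\rangle=\|G\|_{2,1}\ge\|G\|_*/\sqrt{\mathrm{rank}}\cdot$ const, and $\|R\|_{op}\le\sqrt m$. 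This gives $\nu=1/\sqrt m$ provided $\langle DR,Q\rangle\ge\langle DR,R\rangle/\|R\|_{op}$, which I would try to prove by writing $Q=R S^{-1}$ and using the operator monotonicity $S^{-1}\succeq S/\|S\|^2$.
\end{enumerate}

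\textbf{The $\CN$ statements.} These follow by applying the results above to $G^\top$, using $\msign(G^\top)=\msign(G)^\top$, $\CN(G)=\RN(G^\top)^\top$, and $\langle G,X\rangle=\langle G^\top,X^\top\rangle$.
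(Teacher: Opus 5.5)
Your argument for $\RN\circ\msign$ is correct and is the paper's argument written row by row. The nonnegativity of $e_i^\top U\Sigma U^\top e_i$ together with $\rho_i\le 1$ is exactly the paper's inequality $\tr(HQ^\top(D-I)Q)\ge 0$, and both proofs then use $\|G\|_{2,1}\le\sqrt m\|G\|_F\le\sqrt m\|G\|_*$. Feasibility, the support functions, and the transposition step are also fine.

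The gap is in $\msign\circ\RN$. You leave it as a list of attempts, and neither attempt, as written, gives $\nu=1/\sqrt m$.
\begin{itemize}
\item Route~1 cannot work. The bound $\tr(Q^\top DQS)\ge\lambda_{\min}(S)\tr(Q^\top DQ)$, combined with the only uniform estimate $\|Q_{i,:}\|_2^2\ge\|S\|_{\op}^{-2}$, leaves a factor $\lambda_{\min}(S)$. This factor is not bounded below uniformly, and it is zero whenever $\RN(G)$ is rank deficient.
\item Route~2 names the right target, $\langle DR,Q\rangle\ge\langle DR,R\rangle/\|R\|_{\op}$, but the tool you propose is the wrong inequality. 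Using $S^{-1}\succeq S/\|S\|_{\op}^2$ gives $\tr(R^\top DRS^{-1})\ge\tr(R^\top DRS)/\|S\|_{\op}^2$. Since $R^\top DR\succeq0$ and $S\preceq\|S\|_{\op}I$, this lower bound is \emph{at most} $\tr(R^\top DR)/\|S\|_{\op}$, so it cannot imply the target. It compares $Q$ with $RS/\|S\|_{\op}^2$, not with $R/\|S\|_{\op}$.
\item Your last step is too weak. The bound $\|G\|_{2,1}\gtrsim\|G\|_*/\sqrt{\operatorname{rank}G}$, combined with $\|R\|_{\op}\le\sqrt m$, only gives $\nu$ of order $1/\sqrt{m\operatorname{rank}G}$, not $1/\sqrt m$.
\item Writing $Q=RS^{-1}$ also excludes rank-deficient $R$ (for example, whenever $m<n$).
\end{itemize}

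The missing facts are both elementary.
\begin{enumerate}
\item Write $Q=\msign(R)=RS^\dagger$. On $\operatorname{range}(S)$, which is the row space of $R$ and contains every $R_{i,:}^\top$, you have $S^\dagger\succeq\|S\|_{\op}^{-1}I$. Hence, for each nonzero row, $\langle R_{i,:},Q_{i,:}\rangle=R_{i,:}S^\dagger R_{i,:}^\top\ge\|R_{i,:}\|_2^2/\|R\|_{\op}=1/\|R\|_{\op}\ge1/\sqrt m$.
\item Multiply by $d_i=\|G_{i,:}\|_2\ge0$ and sum to get $\langle G,Q\rangle\ge\|G\|_{2,1}/\sqrt m$.
\item Conclude with $\|G\|_*\le\|G\|_{2,1}$, which is the triangle inequality for the nuclear norm applied to the rank-one decomposition $G=\sum_i e_iG_{i,:}$. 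This inequality runs the opposite way from your $\sqrt{\operatorname{rank}}$ bound and is the one needed here.
\end{enumerate}
With these steps you recover exactly the paper's proof. The paper uses $M=A^\top A$ and the estimate $(M^\dagger)^{1/2}\succeq\|A\|_{\op}^{-1}I$ on $\operatorname{range}(M)$.
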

\begin{proof}
We prove the two row-normalized statements; the column-normalized versions follow by transposition. The zero input is immediate, so assume \(G\ne0\).
First let \(Q=\operatorname{msign}(G)\) and write the polar factorization \(G=QH\), where
\(H=(G^TG)^{1/2}\succeq0\). Let \(D\) row-normalize the nonzero rows of \(Q\), and set
\(D_{ii}=1\) on zero rows, so that \(\RN(Q)=DQ\). Since \(QQ^T\) is an orthogonal
projection, every row of \(Q\) has norm at most one and hence \(D\succeq I\). Also
\(HQ^TQ=H\). Therefore
\[
 \langle G,\RN(Q)\rangle
 =\tr(HQ^TDQ)
 =\tr H+\tr\!\bigl(HQ^T(D-I)Q\bigr)
 \ge \tr H=\|G\|_*.
\]
The support function of
\(K_{\rm row}=\{X:\max_i\|X_{i,:}\|_2\le1\}\) is
\(h_{K_{\rm row}}(G)=\sum_i\|G_{i,:}\|_2\). Since
\(\sum_i\|G_{i,:}\|_2\le\sqrt m\|G\|_F\le\sqrt m\|G\|_*\), we obtain
\[
 \langle G,(\RN\circ\operatorname{msign})(G)\rangle
 \ge \frac1{\sqrt m}h_{K_{\rm row}}(G).
\]
Because \(K_{\rm row}=-K_{\rm row}\), the signed output
\(-\RN(Q)\) is feasible and
\[
\langle G,-\RN(Q)\rangle
\le -\frac1{\sqrt m}h_{K_{\rm row}}(G)
=\frac1{\sqrt m}m_{K_{\rm row}}(G).
\]
This is the required approximate \emph{minimization} inequality.

For the reverse order, let \(A=\RN(G)\) and write
\(G=\diag(d_1,\ldots,d_m)A\), where \(d_i=\|G_{i,:}\|_2\). For each nonzero
row, \(\|A_{i,:}\|_2=1\). Put \(M=A^TA\). On \(\operatorname{range}(M)\),
\((M^\dagger)^{1/2}\succeq \|A\|_{\op}^{-1}I\), and each \(A_{i,:}^T\) belongs to this
range. Hence, with \(P=\operatorname{msign}(A)=A(M^\dagger)^{1/2}\),
\[
 \langle A_{i,:},P_{i,:}\rangle
 \ge \frac1{\|A\|_{\op}}
 \ge \frac1{\sqrt m}
\]
for every nonzero row. Zero rows have \(d_i=0\) and contribute nothing. Thus
\[
 \langle G,P\rangle
 \ge \frac1{\sqrt m}\sum_i d_i
 \ge \frac1{\sqrt m}\|G\|_*.
\]
The last inequality is the nuclear-norm triangle inequality applied to the single-row decomposition
of \(G\). Write \(K_{\rm sp}:=\{X:\|X\|_{\op}\le1\}\).
Since \(\|P\|_{\op}\le1\), the signed output \(-P\) is feasible and
\[
\langle G,-P\rangle\le-\frac1{\sqrt m}\|G\|_*
=\frac1{\sqrt m}m_{K_{\rm sp}}(G).
\]
Thus \(-(\operatorname{msign}\circ\RN)\) is a
\(1/\sqrt m\)-approximate LMO.
\end{proof}

\begin{appxproposition}[\Cref{prop:dimension} of the paper]
\label{appx:prop:upper-factor}
Let \(n=2\). For every closed convex set \(K\) containing the image of
the \emph{signed} map in question, the best uniform approximation factor satisfies
\[
\nu\le3(m-1)^{-1/4}
\]
for \(-(\RN\circ\operatorname{msign})\) when \(m\) is odd, and
\[
\nu\le\frac3{\sqrt{m-1}}
\]
for \(-(\operatorname{msign}\circ\RN)\).
\end{appxproposition}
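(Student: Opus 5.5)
The approach is to reduce the claim to exhibiting a single bad input for each map. Write \(U=-N\) with \(N\) the normalization chain, and let \(\mathcal I=N(\mathbb R^{m\times 2})\). Any closed convex \(K\) containing the image of \(U\) contains \(-\operatorname{conv}\mathcal I\), so \(-m_K(G)\ge\sup_{H}\langle G,N(H)\rangle\). The approximate-LMO inequality \(\langle G,U(G)\rangle\le\nu\,m_K(G)\) then forces
\[
\nu\le\frac{\langle G,N(G)\rangle}{\sup_{H}\langle G,N(H)\rangle}
\]
for every \(G\). It therefore suffices, for each map, to exhibit one \(G\) whose own output is poorly aligned with it while some other output of the same map is well aligned. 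Both maps are positively scale invariant row by row in the relevant sense, so the row weights of \(G\) can be chosen freely.

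\textbf{The case \(\msign\circ\RN\).} Let \(c=\cos\theta\) and \(s=\sin\theta\), and take \(A=\RN(G)\) to have \(m-1\) rows equal to \(e_1^\top\) and last row \((c,s)\). Choose \(G=\diag(\delta,\ldots,\delta,D)\,A\) with \(\delta\) small and \(D=1\). Then \(A^\top A=\bigl(\begin{smallmatrix}m-1+c^2&cs\\ cs&s^2\end{smallmatrix}\bigr)\), and the \(2\times2\) inverse square root formula (as used in \Cref{appx:lem:aurora-negative}) gives the last row \(P_m\) of \(P=\msign(A)\) explicitly. I expect \(\langle (c,s),P_m\rangle\lesssim |s|+|c|/\sqrt{m-1}\), up to constants. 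Taking \(s\asymp(m-1)^{-1/2}\), the own alignment is \(\langle G,N(G)\rangle\le 2(m-1)^{-1/2}+O(\delta\sqrt m)\).

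For the competitor, let \(H\) be any full-rank matrix with unit rows whose polar factor has last row exactly \((c,s)\). For instance, take \(H=Q\) orthonormal with last row \((c,s)\), the other rows spread evenly and made unit-norm. Its output has last row close to \((c,s)\), so \(\langle G,N(H)\rangle\ge 1-O(\delta\sqrt m)\). Letting \(\delta\to0\) gives \(\nu\le 3/\sqrt{m-1}\). The constant \(3\) absorbs the slack from the explicit \(2\times2\) computation, and checking that slack is routine.

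\textbf{The case \(\RN\circ\msign\), with \(m\) odd.} The mechanism here is different: \(\RN\) inflates short rows of the polar factor. The plan is to choose an orthonormal \(Q\in\mathbb R^{m\times2}\) built from a symmetric pattern, which is where odd \(m\) should be used: a central row together with \((m-1)/2\) paired rows. In this pattern one designated row is very short, of length \(\rho\). Row normalization then turns that row into a unit vector whose direction is dictated by \(Q\) rather than by \(G\). Next set \(G=QH\), choosing the positive semidefinite factor \(H\) to be strongly anisotropic. This factor leaves \(\msign(G)=Q\), and hence \(N(G)\), unchanged, while it tilts \(G\) so that the inflated row is nearly orthogonal to the corresponding row of \(G\). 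That keeps \(\langle G,\RN(Q)\rangle\) of order \(\|G\|_*\).

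Meanwhile \(\sup_H\langle G,N(H)\rangle\ge h_{K_{\rm row}}(G)\)-type quantities, up to constants, because unit-row outputs pointing along the rows of \(G\) are attainable. These quantities are of order \(\sum_i\|G_{i,:}\|\), which can be much larger than \(\|G\|_*\). Balancing \(\rho\) against the anisotropy of \(H\) should give a ratio of order \((m-1)^{-1/4}\). The quarter power is the geometric mean of the \(O(1)\) and \(O(m^{-1/2})\) scales, which suggests choosing \(\rho\asymp (m-1)^{-1/4}\).

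This second construction is the main obstacle. There are two points to settle. First, one must exhibit a competitor \(H\) whose output \(\RN(\msign(H))\) actually attains the row-sum-like value; the natural attempt is to take \(H\) proportional to \(G\) with rows rescaled so that \(\msign\) has nearly equal row norms. Second, the \(2\times2\) polar computations must be tracked uniformly in \(m\). I would first try the explicit symmetric family and optimize \(\rho\) at the end.
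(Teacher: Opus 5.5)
Your reduction $\nu\le\langle G,N(G)\rangle/\langle G,N(H)\rangle$ (for any $H$ with positive denominator) is exactly the paper's. Your $\msign\circ\RN$ half is correct and a minor variant of the paper's witness. The dominant row of $G$ lies nearly along the direction carried by the other $m-1$ rows of $\RN(G)$, so the polar factor shrinks it by about $(m-1)^{-1/2}$; explicitly, $\langle v,P_m\rangle=(1+\sqrt{m-1}\,|s|)/\sqrt{m+2\sqrt{m-1}\,|s|}$ for $v=(c,s)$. A clean competitor is the fixed point $N(Q')=Q'$ with last row $(c,s)$ and all other rows $(m-1)^{-1/2}(-s,c)$. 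One correction: row-rescaling invariance holds only for $\msign\circ\RN$. For $\RN\circ\msign$ the usable invariance is $\msign(QH)=Q$ for $H\succ0$.

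The $\RN\circ\msign$ half, which you flag as unresolved, has a genuine gap. The mechanism you sketch cannot close it, because one misaligned short row never makes the ratio decay with $m$. Each row term $\langle G_{i,:},N(G)_{i,:}\rangle$ is nonnegative. The proof of \Cref{prop:approx} gives $\langle G,N(G)\rangle\ge\|G\|_*\ge\max_i\|G_{i,:}\|$. Every output has rows of norm at most one, so $\langle G,N(H)\rangle\le\sum_i\|G_{i,:}\|$. Now suppose $N(G)$ has cosine at least $c_0$ with every row of $G$ outside a set $B$ of $k$ rows. Since $\sum_{i\in B}\|G_{i,:}\|\le\sqrt k\,\|G\|_F\le\sqrt k\,\|G\|_*$, the ratio is at least $c_0/(1+c_0\sqrt k)$. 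A ratio of order $(m-1)^{-1/4}$ therefore needs $k$ at least of order $\sqrt m$, and $k=1$ only gives a constant.

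The paper uses all $r=m-1$ non-leading rows. For $r$ even, take $Q=[p,q]$ with $p_1=\sqrt{1-r^{-1/2}}$, $p_i=a=r^{-3/4}$, $q_1=0$, and $q_i=\pm b=\pm r^{-1/2}$ with balanced signs. The balanced signs give $p^\top q=0$, and this is where odd $m$ enters. Set $G=Q\diag(1,\varepsilon)$ with $\varepsilon\le r^{-1}$. Every short row of $Q$ is dominated by its $q$-entry, so $\RN$ points it nearly along $\pm e_2$, while the corresponding row of $G$ points nearly along $e_1$. This gives $\langle G,N(G)\rangle=p_1+r(a^2+\varepsilon b^2)/\sqrt{a^2+b^2}\le3$. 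The exponent comes from choosing $a$ to balance these rows' own contribution $ra^2/b=r^{3/2}a^2$ against the competitor's gain $ra$. It does not come from a single row length $\rho\asymp(m-1)^{-1/4}$.

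Your competitor argument also fails as stated. On full-rank inputs, outputs are unit-row matrices whose rows can be positively rescaled to orthonormal columns. When the rows of $G$ are nearly parallel, which is exactly what a strongly anisotropic $H$ produces, rescaling the rows of $G$ to balance $\msign$ is infeasible. The paper instead sacrifices one row. Take $Q'$ with first row $e_2^\top$ and other rows $r^{-1/2}e_1^\top$; it has orthonormal columns, and $N(Q')$ has rows $e_1^\top$ for $i\ge2$. Then $\langle G,N(Q')\rangle=ra=r^{1/4}$, giving $\nu\le3r^{-1/4}$. Alternatively, use the paper's convention that zero singular directions map to zero. Rank-one inputs $xu^\top$ with $\|u\|=1$ then have outputs $\operatorname{sign}(x)\,u^\top$. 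This yields $\sup_H\langle G,N(H)\rangle\ge\max_u\|Gu\|_1\ge\frac{2}{\pi}\sum_i\|G_{i,:}\|$ for $n=2$, which is the row-sum bound you wanted.
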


\begin{proof}
For an unbounded \(K\), interpret \(m_K\) as an infimum; if it is
\(-\infty\) in a tested direction, no positive approximation factor is
possible. Set \(n=2\). First let \(N=\RN\circ\operatorname{msign}\)
and \(U=-N\). Put \(r=m-1\), and assume \(r\) is even. Define
\(p,q\in\mathbb R^m\) by
\[
p_1=\sqrt{1-r^{-1/2}},\qquad p_i=r^{-3/4}\quad(i\ge2),
\]
\[
q_1=0,\qquad
q_i=\begin{cases}
r^{-1/2},&2\le i\le r/2+1,\\
-r^{-1/2},&r/2+2\le i\le m.
\end{cases}
\]
Then \(\|p\|=\|q\|=1\) and \(p^\top q=0\), so \(Q=[p,q]\) has
orthonormal columns. For \(G=Q\diag(1,\varepsilon)\), \(\varepsilon>0\),
we have \(\operatorname{msign}(G)=Q\). With \(a=r^{-3/4}\) and
\(b=r^{-1/2}\), direct rowwise normalization gives
\[
\langle G,N(G)\rangle
=\sqrt{1-r^{-1/2}}+
 r\frac{a^2+\varepsilon b^2}{\sqrt{a^2+b^2}}
\le3
\]
for sufficiently small \(\varepsilon\), for example
\(0<\varepsilon\le r^{-1}\).
Define \(Q'\) by \(Q'_{1,:}=e_2^\top\) and
\(Q'_{i,:}=r^{-1/2}e_1^\top\) for \(i\ge2\).
Then \((Q')^\top Q'=I_2\), so
\(N(Q')=\RN(Q')\), whose first row is \(e_2^\top\) and remaining
rows are \(e_1^\top\). Since \(-N(Q')\in K\),
\[
m_K(G)\le\langle G,-N(Q')\rangle=-ra=-r^{1/4}.
\]
If \(U\) is a \(\nu\)-approximate LMO of \(K\), then
\[
-3\le\langle G,U(G)\rangle
\le\nu m_K(G)\le-\nu r^{1/4},
\]
which proves \(\nu\le3(m-1)^{-1/4}\).

For the reverse chain, let \(N=\operatorname{msign}\circ\RN\) and
\(U=-N\). Set \(\varepsilon=m^{-2}\) and define
\(G\in\mathbb R^{m\times2}\) by
\[
G_{1,:}=e_1^\top,\qquad
G_{i,:}=\varepsilon e_1^\top\ (2\le i\le m-1),\qquad
G_{m,:}=\varepsilon e_2^\top.
\]
Then \(A=\RN(G)\) has \(m-1\) copies of \(e_1^\top\) and one copy
of \(e_2^\top\), so \(A^\top A=\diag(m-1,1)\). Hence
\[
\langle G,N(G)\rangle
=\frac{1+(m-2)\varepsilon}{\sqrt{m-1}}+\varepsilon
\le\frac3{\sqrt{m-1}}.
\]
Take \(Q'\) with first row \(e_1^\top\) and all remaining rows
\(e_2^\top/\sqrt{m-1}\). It has orthonormal columns and satisfies
\(N(Q')=Q'\). Thus \(-Q'\in K\), and
\[
m_K(G)\le\langle G,-Q'\rangle
=-1-\frac{\varepsilon}{\sqrt{m-1}}\le-1.
\]
Approximate minimization gives
\[
-\frac3{\sqrt{m-1}}
\le\langle G,U(G)\rangle
\le\nu m_K(G)\le-\nu,
\]
so \(\nu\le3/\sqrt{m-1}\). Both bounds vanish as \(m\to\infty\),
proving \Cref{prop:dimension}.
\end{proof}

\section{Appendix: Proofs in \cref{sec:lam_separation}}
\label[appendix]{app:lam_separation}

We use the model and scaling assumptions of \Cref{sec:lam_separation}.
For clarity, we first specify the notation used throughout this appendix.
Let
\[
X=[x_1,\ldots,x_N]\in\mathbb R^{d\times N},\qquad
Y=[y_1,\ldots,y_N]\in\mathbb R^{n\times N},
\]
where the entries of $X$ and $Y$ are independent centered Gaussians with
variances $1/d$ and $1/n$, respectively, and $X$ is independent of $Y$.
Write $p_i=i^{-\alpha}/\sum_{j=1}^N j^{-\alpha}$, and define
\[
P_N=I_N-\frac1N\mathbf1\mathbf1^\top,\qquad
s=Yp,\qquad K=P_N\diag(p)Y^\top\in\mathbb R^{N\times n}.
\]
Differentiating the cross-entropy loss at zero gives the exact identity
\[
G=-\nabla\mathcal L(0)
 =\sum_{i=1}^Np_i
 \left(\Sigma x_i-\frac1N\sum_{j=1}^N\Sigma x_j\right)y_i^\top
 =\Sigma XK.
\]
In particular,
\[
K^\top e_i=p_i y_i-\frac{s}{N},\qquad
K^\top K=\sum_{i=1}^Np_i^2y_i y_i^\top-\frac1Nss^\top.
\]
We abbreviate the three updates by
\[
W_G=G,\qquad W_M=\operatorname{pol}(G),\qquad
W_C=\operatorname{pol}(\RN(G)).
\]
Here $\operatorname{pol}$ is the exact canonical polar factor, equivalently
$\operatorname{msign}$ in the main text. All inverse matrix powers below
are taken on the full-column-rank event, which has probability one for
all sufficiently large $n$. The statements concern these exact updates,
not a fixed number of Newton--Schulz iterations.

Set
\[
h=(n/N)^\alpha,\qquad L=\log(e nNd),\qquad
s_1=\sum_{a=1}^d\sigma_a,\qquad s_2=\sum_{a=1}^d\sigma_a^2.
\]
The assumptions imply $L\asymp\log n$, $N/n\to\infty$, $d/n\to\infty$,
and
\begin{align*}
&n^{-1/2+\epsilon}\le h\le n^{-(1-\beta+\epsilon)},\qquad
s_1\asymp d^{1-\beta}\ge c n,\qquad c\le s_2\le C,\\
&h\to0,\qquad hn^{1-\beta}\le n^{-\epsilon},\qquad
\frac{\sqrt{L/n}}{h}\to0,\qquad
\frac{\sqrt n}{Nh}\to0,\\
&L^\alpha\sqrt{\frac nd}\to0,\qquad
\frac{\sqrt{ns_2}\,L^{\alpha+2}}{s_1h}
 \le Cn^{-\epsilon}L^{\alpha+2}\to0.
\end{align*}
Constants $c,C$ may change between displays and depend only on the fixed
parameters in the assumptions. A subscript $b$ also permits dependence
on a fixed $b\in(0,1)$. The eigenvalues of a positive semidefinite matrix
are indexed in nonincreasing order. Conditional $O_{\mathbb P}$ bounds
below are uniform over the specified query event; bounds for an index
$i\ge bN$ are uniform over deterministic indices in this range.

The proof is organized by conditioning: first on the query realization
$Y$, then on the observed gradient $G$, and finally on the residual score
randomness left after conditioning on $(Y,G)$.

\subsection{Query-side event and population geometry}

\begin{appxlemma}[High-probability regularity of the queries]
\label{appx:lem:lam-query-event}
There are constants $c,C>0$ such that the event $\mathcal E_Y$ defined by
\begin{align}
&\max_{i\le N}\|y_i\|_2\le 2,\qquad
\left\|\frac1NYY^\top\right\|_{\op}\le \frac{C}{n},
\label{eq:EY-basic}\\
&c n^{-2\alpha}I\preceq K^\top K\preceq CI,\qquad
c\le \tr(K^\top K)\le C,
\label{eq:EY-gram}\\
&\tr\!\bigl((K^\top K)^{-1/2}\bigr)\ge c n^{\alpha+1},\qquad
\|(K^\top K)^{-1/2}s\|_2\le C\sqrt n,
\label{eq:EY-inverse}\\
&\lambda_\ell(K^\top K)\ge c\ell^{-2\alpha}
\qquad (1\le \ell\le \lfloor A_*L\rfloor),
\label{eq:EY-leading}
\end{align}
and
\begin{equation}
\sup_{B\ne0}\frac1N\#\left\{
i:\|By_i\|_2\le\eta\frac{\|B\|_F}{\sqrt n}
\right\}
\le C\left(\eta+\sqrt{\frac nN}\right)
\qquad (0<\eta<1)
\label{eq:EY-smallball}
\end{equation}
satisfies $\mathbb P_Y(\mathcal E_Y^c)\to0$.
Here $A_*>0$ is a fixed sufficiently large constant, and the supremum
in \eqref{eq:EY-smallball} is over all nonzero real matrices with $n$
columns and any finite number of rows. The inequalities hold
simultaneously for all $B$ and $\eta$.

For a deterministic index $i$, define
\[
s_{-i}=\sum_{j\ne i}p_jy_j,\qquad
K_{-i}=P_N\bigl(\diag(p)-p_ie_ie_i^\top\bigr)Y^\top.
\]
Let $\mathcal E_{Y,-i}$ consist of
\eqref{eq:EY-gram}--\eqref{eq:EY-leading} with $K,s$ replaced by
$K_{-i},s_{-i}$. Thus $\mathcal E_{Y,-i}$ is measurable with respect to
$Y_{-i}=(y_j)_{j\ne i}$; it does not include conditions involving $y_i$.
For every fixed $b\in(0,1)$,
\[
\sup_{i\ge bN}
\mathbb P_Y\!\left((\mathcal E_Y\cap\mathcal E_{Y,-i})^c\right)\to0.
\]
\end{appxlemma}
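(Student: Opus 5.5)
The plan is to prove each inequality family separately with probability tending to one, then take a union bound; since only finitely many families appear, it suffices that each fails with probability $o(1)$. The basic tools are Gaussian norm concentration and a union bound over $N\le n^{2}$ indices, matrix concentration for Wishart-type sums $\sum_i w_i y_iy_i^\top$ with deterministic weights, and a net argument for the small-ball bound \eqref{eq:EY-smallball}.

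\textbf{Basic bounds \eqref{eq:EY-basic}.} Since $\|y_i\|_2^2\sim\chi^2_n/n$, we have $\max_{i\le N}\|y_i\|_2\le 2$ with probability at least $1-Ne^{-cn}\to1$, because $\log N\ll n$. For the covariance, $\frac1N YY^\top$ is a sample covariance of $N\gg n$ vectors with covariance $I_n/n$. Standard covariance concentration then gives $\|\frac1NYY^\top\|_{\op}\le (1+C\sqrt{n/N})^2/n\le C/n$.

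\textbf{Gram bounds \eqref{eq:EY-gram}--\eqref{eq:EY-leading}.} Write $K^\top K=S-\frac1Nss^\top$ with $S=\sum_i p_i^2y_iy_i^\top$. Because $\sum_i p_i^2\asymp1$ and $\max_i\|y_i\|\le2$, the upper bounds $\|S\|_{\op}\le C$ and $\tr S\le C$ follow. The trace lower bound uses $\tr S\ge p_1^2\|y_1\|^2\ge c$. The rank-one correction is harmless: $\|s\|\le \sum_i p_i\|y_i\|\le 2$, so $\frac1N\|s\|^2=o(1)$.

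For the leading eigenvalues, the first $\ell\le A_*L$ terms $p_j^2y_jy_j^\top$ have nearly orthogonal directions, since $|\langle y_j,y_k\rangle|\lesssim\sqrt{L/n}$. Courant--Fischer restricted to $\mathrm{span}(y_1,\dots,y_\ell)$ then gives $\lambda_\ell(S)\ge c\,\ell^{-2\alpha}$. The correction $\frac1Nss^\top$ has rank one and norm $o(\ell^{-2\alpha})$ in this range, so it cannot spoil this bound; more precisely, one works in the orthogonal complement of $s$ and absorbs the loss of one index.

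The smallest-eigenvalue bound comes from the block of indices $i\in[n,2n]$. Each such term has weight $p_i^2\asymp n^{-2\alpha}$, and the block $\sum_{n\le i\le 2n}y_iy_i^\top$ is a square-ish Wishart matrix of dimension $n$ with $n+1$ samples. I expect this to be the main obstacle, because of the near-square regime. The plan is to use the indices $i\in[n,2n]$, i.e.\ $\asymp n$ samples with aspect ratio $\tfrac12$, so that the smallest eigenvalue is $\ge c$ by Bai--Yin or Rudelson--Vershynin. This gives $K^\top K\succeq c\,n^{-2\alpha}I$ after projecting out the span of $s$. The projection matters only slightly; I would handle it through the identity $(S-\frac1Nss^\top)\succeq S\,(1-s^\top S^{-1}s/N)$ together with a bound on $s^\top S^{-1}s$.

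The inverse bounds \eqref{eq:EY-inverse} follow from these eigenvalue bounds. For the trace, $\tr\bigl((K^\top K)^{-1/2}\bigr)\ge n\cdot \lambda_{\max}\bigl(K^\top K\bigr)^{-1/2}$ is too weak. Instead I would note that at least $n/2$ eigenvalues of $K^\top K$ are $\lesssim n^{-2\alpha}$: indices $i\ge n/2$ carry weights $\lesssim n^{-2\alpha}$, so removing the first $n/2$ rank-one terms (a Weyl interlacing step) leaves a matrix of norm $\lesssim n^{-2\alpha}$. This yields $\tr\bigl((K^\top K)^{-1/2}\bigr)\ge c\,n^{1+\alpha}$.

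For $\|(K^\top K)^{-1/2}s\|_2$, I would write $s=\sum_i p_iy_i = Y\,\diag(p)\mathbf1$. Then $(K^\top K)^{-1/2}s$ is controlled by $\|(S)^{-1/2}Y\diag(p)\|_{\op}\sqrt{N}$-type bounds. Cauchy--Schwarz gives $\|S^{-1/2}s\|^2\le \sum_i \|S^{-1/2}p_iy_i\|^2 = \tr(S^{-1}S)=n$, and it remains to transfer this from $S$ to $K^\top K$ by a Sherman--Morrison argument.

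\textbf{Small ball \eqref{eq:EY-smallball}.} By homogeneity we may assume $\|B\|_F=1$ and replace $B$ by $(B^\top B)^{1/2}$, which is $n\times n$ PSD. For fixed $B$, the Gaussian small-ball (Carbery--Wright or anti-concentration) estimate gives $\mathbb P\bigl(\|By\|\le\eta/\sqrt n\bigr)\le C\eta$. A Chernoff bound then makes the empirical fraction at most $C\eta+t$ with probability $1-e^{-cNt^2}$. Next I would take an $\epsilon$-net of the unit Frobenius sphere of PSD matrices, of cardinality $e^{Cn^2\log(1/\epsilon)}$, together with a grid in $\eta$. Setting $t=\sqrt{n/N}$ makes $Nt^2=n$, which is too small against $n^2$ net entropy. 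I would therefore instead use the VC/ellipsoid class: sets $\{y:\|By\|\le r\}$ are ellipsoids, with VC dimension $O(n^2)$. This yields a uniform deviation of order $\sqrt{n^2/N}$, which is larger than $\sqrt{n/N}$. To recover $\sqrt{n/N}$, I would reduce to $\|B\|_{\op}$-dominant directions: $\|By\|\ge |u^\top B y|$ for a suitable unit vector, and halfspace-slab classes have VC dimension $O(n)$.

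\textbf{Leave-one-out events.} These events are $\mathcal E_{Y,-i}$, for indices $i\ge bN$. They follow by the same arguments, since dropping one term with weight $p_i\lesssim N^{-\alpha}$ changes the matrices negligibly. All the failure probabilities above are uniform in $i$.
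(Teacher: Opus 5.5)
\textbf{The centering term.} The genuine gap is your control of $v=s^\top S^{-1}s$. Two bounds rest on it. The bound $K^\top K\succeq cn^{-2\alpha}I$ goes through $K^\top K\succeq(1-v/N)S$ and needs $v\le N/2$. The bound $\|(K^\top K)^{-1/2}s\|_2\le C\sqrt n$ goes through Sherman--Morrison and needs $v\le Cn$.

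The inequality $\|S^{-1/2}s\|_2^2\le\sum_i\|S^{-1/2}p_iy_i\|_2^2$ is not Cauchy--Schwarz. For a sum of $N$ vectors, Cauchy--Schwarz gives $\|\sum_ia_i\|^2\le N\sum_i\|a_i\|^2$, hence only $v\le nN$. Your operator-norm variant gives only $\|S^{-1/2}s\|_2\le\sqrt N$. The crude bound $v\le\|s\|_2^2\|S^{-1}\|_{\mathrm{op}}\le Cn^{2\alpha}$ exceeds $N$.

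The value $\operatorname{tr}(S^{-1}S)=n$ is right, but only as an expectation over random signs. That symmetrization is the missing idea:
\begin{itemize}
\item $(y_i)_i$ has the law of $(\varepsilon_i\tilde y_i)_i$ with independent Rademacher signs, and $S$ does not see the signs.
\item Conditionally on $(\tilde y_i)_i$, one has $v=\varepsilon^\top\Pi\varepsilon$ with $\Pi=A^\top(AA^\top)^{-1}A$ and $A=[p_i\tilde y_i]_i$; $\Pi$ is a rank-$n$ orthogonal projection.
\item Hence $\mathbb E_\varepsilon v=n$ and $\operatorname{Var}_\varepsilon v=2\sum_{j\ne k}\Pi_{jk}^2\le 2n$, so Chebyshev gives $v\le 2n$ with probability at least $1-2/n$.
\end{itemize}
Markov alone would not give a fixed constant with probability tending to one.

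The rest of your Gram-matrix plan matches the paper's, with two small repairs. First, the block $[n,2n]$ contains only $n+1$ samples, so its Wishart matrix has smallest eigenvalue of order $n^{-2}$, not order one. Use $i\le 4n$ instead, where still $p_i\ge cn^{-\alpha}$; the near-square difficulty you anticipated then disappears. Second, the tail bound $\|\sum_{j>n/2}p_j^2y_jy_j^\top\|_{\mathrm{op}}\le Cn^{-2\alpha}$ needs a log-free argument, such as dyadic blocks or an MGF bound with a net. Matrix Bernstein loses a $\log n$ and would weaken the inverse-trace bound to $cn^{\alpha+1}/\sqrt{\log n}$.

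\textbf{The small-ball step.} Reducing to a single direction loses a factor $\|B\|_F/\|B\|_{\mathrm{op}}$. From $\|By\|_2\ge|u^\top By|$ you only learn that $\|By_i\|_2\le\eta\|B\|_F/\sqrt n$ puts $y_i$ in a slab of normalized half-width at least $\eta\|B\|_F/\|B\|_{\mathrm{op}}$. This gives $C(\eta\|B\|_F/\|B\|_{\mathrm{op}}+\sqrt{n/N})$, which for $B=I_n$ is $C(\eta\sqrt n+\sqrt{n/N})$.

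Average over all eigendirections instead. Put $z_i=\sqrt n\,y_i$ and write $B^\top B/\|B\|_F^2=\sum_kw_ku_ku_k^\top$ with $w_k\ge0$ and $\sum_kw_k=1$. The condition $\|Bz_i\|_2\le\eta\|B\|_F$ means $\sum_kw_k(u_k^\top z_i)^2\le\eta^2$. By Markov in $k$, at least half of the $w$-mass sits on those $k$ with $|u_k^\top z_i|\le\sqrt2\eta$. Therefore $\frac1N\#\{i:\|Bz_i\|_2\le\eta\|B\|_F\}\le 2\sum_kw_k\frac1N\#\{i:|u_k^\top z_i|\le\sqrt2\eta\}$.

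This reduces everything to a one-dimensional bound: $\frac1N\#\{i:|u^\top z_i|\le t\}\le C(t+\sqrt{n/N})$, uniformly over unit $u$ and $t>0$. For that bound your slab-class idea is a valid alternative to the paper's route. The paper uses dyadic thresholds $t\ge\sqrt{n/N}$, nets of mesh $\asymp t^{3/2}$, a multiplicative Chernoff bound, and an empirical second-moment bound to pass from net points to arbitrary $u$. In your route, slabs have VC dimension $O(n)$, the log-free VC bound gives expected uniform deviation $C\sqrt{n/N}$, and bounded differences upgrade this to probability $1-e^{-cn}$.

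Your basic bounds, the leading-eigenvalue argument, and the perturbative treatment of $\mathcal E_{Y,-i}$ are fine. For the last of these, the deleted term has $p_i^2y_i^\top S^{-1}y_i\le C_b(n/N)^{2\alpha}$ for $i\ge bN$.
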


\begin{proof}
Gaussian norm and singular-value bounds give \eqref{eq:EY-basic} with
probability tending to one, since $\log N=o(n)$ and $N/n\to\infty$.
Write
\[
S=\sum_{j=1}^Np_j^2y_jy_j^\top,\qquad
K^\top K=S-\frac1Nss^\top.
\]
To control the centering term, represent $y_j=\varepsilon_j\widetilde y_j$,
where the $\varepsilon_j$ are independent Rademacher variables, independent
of the Gaussian vectors $\widetilde y_j$. Conditional on $\widetilde Y$,
let $A=[p_j\widetilde y_j]_j$. Then
\[
v=s^\top S^{-1}s
 =\varepsilon^\top A^\top(AA^\top)^{-1}A\varepsilon.
\]
The matrix between the signs is a rank-$n$ orthogonal projection.
Consequently
\[
\mathbb E_\varepsilon v=n,\qquad
\operatorname{Var}_\varepsilon(v)
 =2\sum_{j\ne k}\bigl[A^\top(AA^\top)^{-1}A\bigr]_{jk}^{\,2}
 \le2n.
\]
Thus $v\le2n$ with probability at least $1-2/n$. On this event, for
all sufficiently large $n$,
\[
(1-v/N)S\preceq K^\top K\preceq S,\qquad
s^\top(K^\top K)^{-1}s=\frac{v}{1-v/N}\le Cn.
\]
The equality is the rank-one inverse formula. This proves the second
bound in \eqref{eq:EY-inverse} and allows spectral bounds for $S$ to be
transferred to $K^\top K$ with fixed constants.

The first $4n$ queries satisfy
$\sum_{j\le4n}y_jy_j^\top\succeq cI$ except with probability $e^{-cn}$.
Since $p_j\ge c n^{-\alpha}$ on this block,
$S\succeq c n^{-2\alpha}I$.
The upper trace bound follows from
$\max_j\|y_j\|\le2$ and $\sum_jp_j^2\le C$.
The first term $p_1^2\|y_1\|^2$ gives a positive lower trace bound with
probability tending to one. For $q=\lfloor A_*L\rfloor=o(n)$,
the $n\times q$ matrix $[y_1,\ldots,y_q]$ has all singular values in
$[c,C]$ except with probability $e^{-cn}$. The nonzero eigenvalues of
its weighted Gram matrix therefore give
$\lambda_\ell(S)\ge cp_\ell^2\ge c\ell^{-2\alpha}$ for $\ell\le q$.
This proves \eqref{eq:EY-gram} and \eqref{eq:EY-leading}.

For the inverse trace, put $r=\lfloor n/4\rfloor$ and
$S_{>r}=\sum_{j>r}p_j^2y_jy_j^\top$.
For a fixed unit vector $u$, $nu^\top S_{>r}u$ is a weighted sum of
independent squared standard Gaussians. Its weights obey
\[
\sum_{j>r}p_j^2\le Cn^{1-2\alpha},\qquad
\sum_{j>r}p_j^4\le Cn^{1-4\alpha},\qquad
\max_{j>r}p_j^2\le Cn^{-2\alpha}.
\]
The Gaussian moment-generating-function bound, at a tail parameter
of order $n$, followed by a $1/4$-net of the sphere, yields
$\|S_{>r}\|_{\op}\le Cn^{-2\alpha}$ except with probability $e^{-cn}$.
The first $r$ terms of $S$ have rank at most $r$. Hence at least $n-r$
eigenvalues of $K^\top K\preceq S$ are at most $Cn^{-2\alpha}$, and
\[
\tr\!\bigl((K^\top K)^{-1/2}\bigr)\ge cn^{\alpha+1}.
\]

We next prove \eqref{eq:EY-smallball}, including its uniformity.
Write $z_i=\sqrt n\,y_i$ and $a_n=\sqrt{n/N}$.
On \eqref{eq:EY-basic},
$\|N^{-1}\sum_i z_i z_i^\top\|_{\op}\le C_0$ for a fixed $C_0$.
For a fixed unit vector $v$ and $0<t\le1$,
\[
\mathbb P(|v^\top z_i|\le2t)\le C_1t.
\]
A Bernoulli Chernoff bound therefore gives
\[
\mathbb P\!\left\{
\frac1N\#\{i:|v^\top z_i|\le2t\}>C_2t
\right\}\le e^{-cNt}
\]
for a sufficiently large fixed $C_2$; when $C_2t\ge1$ the event is empty.
For each dyadic $t\in[a_n,1]$, also including the smallest dyadic
$t$ not below $a_n$, take a sphere net of mesh
$\delta=c_0t^{3/2}$. Its cardinality is at most $(3/\delta)^n$.
A union bound over these nets has failure probability at most
\[
C\log(e/a_n)
\exp\!\left\{Cn\log(e/a_n)-cNa_n\right\}=o(1),
\]
because $N/n\to\infty$ and
$Na_n/n=\sqrt{N/n}\gg\log(eN/n)$.
For an arbitrary unit vector $u$, choose a net point $v$ within
$\delta$. If $|u^\top z_i|\le t$ but $|v^\top z_i|>2t$, then
$|(u-v)^\top z_i|>t$. Thus
\[
\frac1N\#\{i:|(u-v)^\top z_i|>t\}
 \le \frac1{Nt^2}\sum_i((u-v)^\top z_i)^2
 \le \frac{C_0\delta^2}{t^2}\le Ct.
\]
Combining these bounds and using monotonicity between dyadic thresholds
and below $a_n$ proves, simultaneously for all unit $u$ and all $t>0$,
\[
\frac1N\#\{i:|u^\top z_i|\le t\}\le C(t+a_n).
\]
For any nonzero $B$, diagonalize
\[
\frac{B^\top B}{\|B\|_F^2}
 =\sum_k w_k u_k u_k^\top,\qquad w_k\ge0,\quad\sum_k w_k=1.
\]
If $\|Bz_i\|\le\eta\|B\|_F$, at least half of the weight is on
indices $k$ for which $|u_k^\top z_i|\le\sqrt2\eta$. Therefore
\[
\begin{split}
\frac1N\#\{i:\|Bz_i\|\le\eta\|B\|_F\}
&\le2\sum_k w_k\frac1N\#\{i:|u_k^\top z_i|\le\sqrt2\eta\}\\
&\le C(\eta+a_n).
\end{split}
\]
This is \eqref{eq:EY-smallball}; it is pathwise and allows $B$ to depend
on the data.

Finally, for a fixed $b>0$, a deterministic index $i\ge bN$ lies outside
the first $4n$ and first $\lfloor A_*L\rfloor$ queries once $n$ is large.
Deleting its weighted term leaves these blocks intact and only decreases
the positive semidefinite tail matrix. The same centering argument
applies with its weight set to zero; the projection still has rank $n$.
All failure bounds are independent of the particular deterministic
$i\ge bN$. This proves the final assertion.
\end{proof}

\begin{appxlemma}[Bulk leverage on $\mathcal E_Y$]
\label{appx:lem:lam-bulk-leverage}
On \(\mathcal E_Y\), \(t_j:=(K^\top K)^{-1/2}K^\top e_j\) satisfies \(\|t_j\|_2\le1\) for every \(j\). For every fixed \(b>0\),
\[
\max_{i\ge bN}\|t_i\|_2\le C_bh.
\]
\end{appxlemma}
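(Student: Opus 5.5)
The first claim is a leverage bound. I will note that $t_j^\top t_j = e_j^\top K(K^\top K)^{-1}K^\top e_j$. This is the $(j,j)$ entry of the orthogonal projection onto $\col(K)$. On $\mathcal E_Y$ the matrix $K^\top K$ is invertible, by the lower bound in \eqref{eq:EY-gram}. Hence $\|t_j\|_2^2\le 1$ for every $j$.

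For the bulk bound I will write $K^\top e_i=p_iy_i-s/N$ and split
\[
t_i=p_i(K^\top K)^{-1/2}y_i-\frac1N(K^\top K)^{-1/2}s .
\]
The second term is controlled directly by \eqref{eq:EY-inverse}: its norm is at most $C\sqrt n/N$. This is $o(h)$, since $\sqrt n/(Nh)\to0$ under the scaling consequences listed above.

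For the first term, $i\ge bN$ gives $p_i\le C_b N^{-\alpha}$, because $\sum_r r^{-\alpha}\ge1$. Together with $\|y_i\|\le2$, it suffices to show
\[
\|(K^\top K)^{-1/2}\|_{\op}\le C n^{\alpha}
\]
on $\mathcal E_Y$. This follows from the lower bound $K^\top K\succeq c n^{-2\alpha}I$ in \eqref{eq:EY-gram}. The first term is then at most $C_bN^{-\alpha}n^{\alpha}=C_bh$.

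Adding the two terms gives $\max_{i\ge bN}\|t_i\|_2\le C_bh+C\sqrt n/N\le C_b'h$ for large $n$. The bound holds uniformly in $i$, since every estimate used depends only on $\mathcal E_Y$ and the deterministic bound on $p_i$. There is no genuine obstacle here. The only thing to check is that $\sqrt n/N\lesssim h$, which is one of the listed scaling relations; if it needed an explicit check, it reduces to $N^{1-\alpha}\gg n^{1/2-\alpha}$, which holds since $N\gg n$ and $\alpha>1$.
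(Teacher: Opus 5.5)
Your proof is correct and is essentially the paper's argument. The leverage bound comes from $K(K^\top K)^{-1/2}$ having orthonormal columns, which is equivalent to your projection-diagonal observation. The bulk bound uses the same two-term split, controlled by $p_i\le b^{-\alpha}N^{-\alpha}$, $\|y_i\|_2\le 2$, $\|(K^\top K)^{-1/2}\|_{\op}\le Cn^{\alpha}$, and $\|(K^\top K)^{-1/2}s\|_2\le C\sqrt n$.

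Citing the listed relation $\sqrt n/(Nh)\to0$ is fine, but your fallback verification is backwards.
\begin{itemize}
\item Since $1-\alpha<0$, increasing $N$ decreases $N^{1-\alpha}$. So the needed inequality $N^{1-\alpha}\gg n^{1/2-\alpha}$, equivalently $N^{\alpha-1}\ll n^{\alpha-1/2}$, is an \emph{upper} bound on $N$. It does not follow from $N\gg n$.
\item For example, $\alpha=2$ and $N=n^{10}$ give $\sqrt n/N=n^{-19/2}\gg h=n^{-18}$, so the centering term would dominate.
\item The correct justification uses the upper scaling bound $N\le n^{1+(1/2-\epsilon)/\alpha}$. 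This gives $h\ge n^{-1/2+\epsilon}$. Combined with $N\ge n$, it yields $\sqrt n/(Nh)\le n^{-1/2}\cdot n^{1/2-\epsilon}=n^{-\epsilon}\to0$.
\end{itemize}
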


\begin{proof}
The matrix \(K(K^\top K)^{-1/2}\) has orthonormal columns, so every row has norm at most one. For \(i\ge bN\),
\[
\|t_i\|_2
\le
p_i\|(K^\top K)^{-1/2}\|_{\op}\|y_i\|_2
+\frac1N\|(K^\top K)^{-1/2}s\|_2.
\]
On \(\mathcal E_Y\), the first term is at most \(C_bN^{-\alpha}n^\alpha=C_bh\), while the second is \(C\sqrt n/N=o(h)\). Absorbing the latter term into the constant proves the result.
\end{proof}

On $\mathcal E_Y$, define the response matrices for a fresh
$\xi\sim N(0,I_n)$ by
\[
S_\xi=\xi^\top K^\top K\,\xi,\qquad
C_Y=\mathbb E\frac{\xi\xi^\top}{S_\xi},\qquad
H_Y=\mathbb E\frac{\xi\xi^\top}{\sqrt{S_\xi}},\qquad
J_Y=H_YC_Y^{-1/2}.
\]
Once $Y$ is fixed, these are deterministic matrices.

\begin{appxlemma}[Response moments on $\mathcal E_Y$]
\label{appx:lem:lam-response-moments}
On $\mathcal E_Y$,
\[
cI\preceq C_Y\preceq CI,\qquad cI\preceq J_Y\preceq I.
\]
The matrices $C_Y,H_Y,J_Y$ commute with $K^\top K$.
For every fixed positive integer $r$, for all sufficiently large $n$,
\[
\sup_{Y\in\mathcal E_Y}\mathbb E\bigl[(\xi^\top K^\top K\xi)^{-r}\bigr]
 \le C_r<\infty.
\]
If
\[
(1-e)K^\top K\preceq\widetilde K^\top\widetilde K
 \preceq(1+e)K^\top K,\qquad 0\le e\le\frac12,
\]
and $\widetilde J$ is defined from $\widetilde K^\top\widetilde K$
in the same way, then
\[
\|\widetilde J-J_Y\|_{\op}\le Ce.
\]
The same conclusions apply to $K_{-i}$ on $\mathcal E_{Y,-i}$.
\end{appxlemma}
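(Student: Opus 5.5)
The plan is to diagonalize \(K^\top K\) and reduce every claim to scalar Gaussian computations in its eigenbasis. Write \(K^\top K=\sum_k\lambda_k u_ku_k^\top\) and \(g_k=u_k^\top\xi\), so the \(g_k\) are i.i.d.\ \(N(0,1)\) and \(S_\xi=\sum_k\lambda_k g_k^2\).

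\emph{Commutation.} Let \(D\) flip the sign of one eigen-coordinate. Then \(D\) is orthogonal, commutes with \(K^\top K\), leaves \(S_\xi\) invariant, and \(D\xi\overset{d}{=}\xi\). Hence \(C_Y=DC_YD\), and likewise for \(H_Y\), for every such \(D\). So \(C_Y\) and \(H_Y\) are diagonal in the basis \((u_k)\), and therefore so is \(J_Y\). In particular all three commute with \(K^\top K\). The entries are \(c_k=\mathbb E[g_k^2/S_\xi]\), \(h_k=\mathbb E[g_k^2/\sqrt{S_\xi}]\) and \(j_k=h_k/\sqrt{c_k}\).

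\emph{Negative moments.} Fix \(r\). Since \(\lfloor A_*L\rfloor\to\infty\), \eqref{eq:EY-leading} gives \(\lambda_\ell\ge c(2r+1)^{-2\alpha}\) for \(\ell\le 2r+1\) once \(n\) is large. Then \(S_\xi\ge c_r\chi^2_{2r+1}\), and \(\mathbb E(\chi^2_q)^{-r}<\infty\) whenever \(q>2r\). This bound is uniform over \(\mathcal E_Y\).

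\emph{Bounds on \(C_Y\) and \(J_Y\).}
\begin{itemize}[leftmargin=*, nosep]
\item Upper bound on \(c_k\): by Cauchy--Schwarz, \(c_k\le(\mathbb E g_k^4)^{1/2}(\mathbb E S_\xi^{-2})^{1/2}\le C\).
\item Lower bound on \(c_k\): by Cauchy--Schwarz, \((\mathbb E|g_k|)^2\le\mathbb E[g_k^2/S_\xi]\,\mathbb E S_\xi\). Using \(\mathbb E S_\xi=\tr(K^\top K)\le C\) from \eqref{eq:EY-gram}, this gives \(c_k\ge (2/\pi)/C\).
\item \(J_Y\preceq I\): write \(h_k=\mathbb E[g_k\cdot g_kS_\xi^{-1/2}]\le(\mathbb E g_k^2)^{1/2}c_k^{1/2}\), so \(j_k\le1\).
\item \(J_Y\succeq cI\): apply Jensen to the convex map \(x\mapsto x^{-1/2}\) under the probability measure \(g_k^2\,d\mathbb P\). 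This gives \(h_k\ge(\mathbb E[g_k^2S_\xi])^{-1/2}\ge(3\tr K^\top K)^{-1/2}\ge c\). Combined with \(c_k\le C\), we get \(j_k\ge c\).
\end{itemize}

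\emph{Perturbation.} This is the main obstacle: \(\widetilde K^\top\widetilde K\) and \(K^\top K\) need not commute, so the diagonal picture cannot be used directly. I would instead interpolate \(M_t=K^\top K+t\Delta\) with \(\Delta=\widetilde K^\top\widetilde K-K^\top K\). The hypothesis gives \(-eK^\top K\preceq\Delta\preceq eK^\top K\), so \(|\xi^\top\Delta\xi|\le eS_\xi\le 2e\,\xi^\top M_t\xi\) along the whole path.

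Differentiating \(C(M_t)=\mathbb E[\xi\xi^\top/\xi^\top M_t\xi]\) and testing against a unit vector \(v\) gives
\[
\bigl|v^\top\dot C_tv\bigr|\le 2e\,v^\top C_tv .
\]
Integrating over \(t\in[0,1]\) yields \(e^{-2e}C_Y\preceq\widetilde C\preceq e^{2e}C_Y\). The same argument gives \(\widetilde H\) within a factor \(e^{\pm e}\) of \(H_Y\).

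In particular \(\widetilde C\) also satisfies \(cI\preceq\widetilde C\preceq CI\), and \(\|\widetilde H-H_Y\|_{\op}\le Ce\) and \(\|\widetilde C-C_Y\|_{\op}\le Ce\), since \(H_Y\) and \(C_Y\) are bounded. Next, the integral representation
\[
A^{-1/2}=\frac{2}{\pi}\int_0^\infty (A+s^2I)^{-1}\,ds,
\]
together with the spectral floor \(cI\), gives \(\|\widetilde C^{-1/2}-C_Y^{-1/2}\|_{\op}\le Ce\). Finally,
\[
\widetilde J-J_Y=(\widetilde H-H_Y)\widetilde C^{-1/2}+H_Y\bigl(\widetilde C^{-1/2}-C_Y^{-1/2}\bigr),
\]
and each term is \(O(e)\).

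\emph{Leave-one-out version.} Every step above used only \eqref{eq:EY-gram}--\eqref{eq:EY-leading}. Those are exactly the conditions defining \(\mathcal E_{Y,-i}\), so the same argument applies verbatim to \(K_{-i}\).
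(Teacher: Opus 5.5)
Your proposal is correct and follows essentially the same route as the paper: sign-flip symmetry diagonalizes $C_Y,H_Y$ in the eigenbasis of $K^\top K$; a $\chi^2_q$ floor with $q>2r$ from the leading-eigenvalue condition gives the inverse moments; Cauchy--Schwarz/Jensen bounds the diagonal entries using $\mathbb E[g_k^2S_\xi]\le 3\tr(K^\top K)$; and the perturbation uses a relative Loewner comparison, the integral formula for the inverse square root, the resolvent identity, and the product decomposition of $\widetilde J-J_Y$. The only real difference is that your interpolation path is unnecessary: the pointwise bound $(1-e)S_\xi\le\xi^\top\widetilde K^\top\widetilde K\xi\le(1+e)S_\xi$ already gives the two-sided Loewner comparison of $\widetilde C,\widetilde H$ with $C_Y,H_Y$ directly, which is what the paper uses, so the non-commutativity you flag is not actually an obstacle.
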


\begin{proof}
For a fixed $r$, choose a fixed integer $q>2r$.
In an eigenbasis of $K^\top K$, \eqref{eq:EY-leading} gives
\[
S_\xi\ge cq^{-2\alpha}\sum_{\ell=1}^q\xi_\ell^2.
\]
The negative $r$-th moment of $\chi_q^2$ is finite; more explicitly it is
$2^{-r}\Gamma(q/2-r)/\Gamma(q/2)$. This proves the inverse-moment
bound, uniformly on $\mathcal E_Y$.

Sign symmetry in this eigenbasis makes $C_Y$ and $H_Y$ diagonal. Put
\[
c_j=\mathbb E\frac{\xi_j^2}{S_\xi},\qquad
h_j=\mathbb E\frac{\xi_j^2}{\sqrt{S_\xi}}.
\]
Cauchy--Schwarz gives $h_j^2\le c_j$. Also
\[
1\le h_j\,\mathbb E[\xi_j^2\sqrt{S_\xi}],\qquad
\mathbb E[\xi_j^2S_\xi]
 =\tr(K^\top K)+2\lambda_j(K^\top K)\le C.
\]
Since $\mathbb E[\xi_j^2\sqrt{S_\xi}]
\le\sqrt{\mathbb E[\xi_j^2S_\xi]}$, it follows that $h_j\ge c$.
A further application of Cauchy--Schwarz and the inverse second-moment
bound gives $c_j\le(\mathbb E\xi_j^4)^{1/2}
(\mathbb E S_\xi^{-2})^{1/2}\le C$.
Thus $c\le c_j\le C$ and $c\le h_j/\sqrt{c_j}\le1$, proving the
spectral bounds and commutation statements.

The relative Loewner comparison gives
$|\widetilde S_\xi^{-1}-S_\xi^{-1}|\le CeS_\xi^{-1}$ and
$|\widetilde S_\xi^{-1/2}-S_\xi^{-1/2}|\le CeS_\xi^{-1/2}$ pointwise.
Testing the resulting matrix expectations against arbitrary vectors
shows $\|\widetilde C-C_Y\|_{\op}+\|\widetilde H-H_Y\|_{\op}\le Ce$.
Both covariance matrices are bounded below by a fixed positive multiple
of the identity. The integral formula
\[
D^{-1/2}=\frac1\pi\int_0^\infty t^{-1/2}(D+tI)^{-1}\,dt
\]
and the resolvent identity therefore give
$\|\widetilde C^{-1/2}-C_Y^{-1/2}\|_{\op}\le Ce$.
Taking the difference of the products $\widetilde H\widetilde C^{-1/2}$
and $H_YC_Y^{-1/2}$ proves the last bound.
Only the Gram-matrix conditions were used, so the same proof applies
on $\mathcal E_{Y,-i}$.
\end{proof}

\begin{appxlemma}[Population retrieval margin on $\mathcal E_Y$]
\label{appx:lem:lam-population-margin}
Fix \(b\in(0,1)\). There is \(c_b>0\) such that for every fixed \(\varepsilon>0\),
\[
\sup_{i\ge bN}
\mathbb P_Y\!\left(
t_i^\top J_Yy_i<c_bh
\ \ \text{or} \ \ 
\max_{j\ne i}|t_j^\top J_Yy_i|>\varepsilon h
\,\middle|\,
\mathcal E_Y
\right)
\to0.
\]
\end{appxlemma}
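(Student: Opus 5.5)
The plan is a leave-one-out decoupling. Fix a deterministic index \(i\ge bN\) and work on \(\mathcal E_Y\cap\mathcal E_{Y,-i}\), whose complement has vanishing probability uniformly in \(i\ge bN\) by \Cref{appx:lem:lam-query-event}. Write \(M:=(K^\top K)^{-1/2}J_Y\) and \(M_{-i}:=(K_{-i}^\top K_{-i})^{-1/2}J_{-i}\), where \(J_{-i}\) is built from \(K_{-i}^\top K_{-i}\). Both matrices are symmetric, since \(J\) commutes with the Gram matrix.

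The first step is to compare \(M\) with \(M_{-i}\). We have \(K^\top K-K_{-i}^\top K_{-i}\), which is a rank-one-plus-centering correction involving \(p_iy_i\) and \(s/N\). On \(\mathcal E_{Y,-i}\) we know \(K_{-i}^\top K_{-i}\succeq c n^{-2\alpha}I\), while \(p_i^2\|y_i\|^2\le C_bN^{-2\alpha}\). The centering cross terms are of lower order by the bound \(\|(K^\top K)^{-1/2}s\|\le C\sqrt n\). Together these should give a relative Loewner bound
\[
(1-e)K_{-i}^\top K_{-i}\preceq K^\top K\preceq(1+e)K_{-i}^\top K_{-i},
\]
with \(e\le C_b h+o(h)\to0\). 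The perturbation part of \Cref{appx:lem:lam-response-moments} then gives \(\|J_Y-J_{-i}\|_{\op}\le Ce\). The same integral representation gives a relative bound for the inverse square roots. Hence \(\|M-M_{-i}\|_{\op}\le Ce\,n^{\alpha}\). I also plan to control this difference in the weighted form
\[
\|(K^\top K)^{1/2}(M-M_{-i})\|_{\op}\le Ce,
\]
which is what the bounds on the leverages \(t_j\) require.

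For the diagonal term, note that \(t_i^\top J_Yy_i=p_i\,y_i^\top My_i-N^{-1}s^\top My_i\). The second piece is at most \(C\sqrt n\cdot 2/N=o(h)\), because \(\|J\|\le1\) and \(\|y_i\|\le2\). In the first piece I replace \(M\) by \(M_{-i}\), at a cost of \(p_i\cdot 4\|M-M_{-i}\|\le C_b h\,e=o(h)\). The matrix \(M_{-i}\) is independent of \(y_i\sim\mathcal N(0,I_n/n)\), so by the Hanson--Wright inequality \(p_iy_i^\top M_{-i}y_i\) concentrates around \(p_i\tr(M_{-i})/n\). The fluctuation is of order \(p_i\|M_{-i}\|_F/n\le p_iCn^{\alpha}/\sqrt n=O(h/\sqrt n)\). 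For the mean, the facts that \(J_{-i}\succeq cI\) and that \(J_{-i}\) commutes with the Gram matrix give \(\tr M_{-i}\ge c\,\tr\bigl((K_{-i}^\top K_{-i})^{-1/2}\bigr)\ge cn^{\alpha+1}\). So the mean is at least \(c\,p_in^{\alpha}\ge c_bh\). This yields \(t_i^\top J_Yy_i\ge c_bh\) with probability tending to one, uniformly in \(i\ge bN\).

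For the off-diagonal terms, with \(j\ne i\), write \(t_j^\top J_Yy_i=p_j\,y_j^\top My_i-N^{-1}s^\top My_i\). The centering part is again \(o(h)\) uniformly in \(j\). Next I replace \(M\) by \(M_{-i}\). The error is \(p_j|y_j^\top(M-M_{-i})y_i|\), which I bound by \(Ce\) times a leverage-type quantity \(\|(K^\top K)^{-1/2}p_jy_j\|\lesssim\|t_j\|+o(1)\le C\). This gives \(O(e)\), which is not \(o(h)\). So this step must be done more carefully. Instead of a crude operator bound, I would expand the rank-one update exactly via the Sherman--Morrison formula and the integral formula for \(D^{-1/2}\), so that the \(y_i\)-dependence of \(M\) appears as an explicit factor. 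The remaining main term is \(p_jy_j^\top M_{-i}y_i\), computed with \(K_{-i}\) and leverages \(t_j^{(-i)}\). Conditional on \(Y_{-i}\), it is a centered Gaussian with variance \(\|M_{-i}p_jy_j\|^2/n\le C/n\), since \(\|t_j^{(-i)}\|\le 1\) and \(J_{-i}\preceq I\). A Gaussian maximum over the \(N\) indices \(j\) gives \(\max_j|\cdot|\le C\sqrt{L/n}=o(h)\), which is below \(\varepsilon h\) by the standing asymptotics \(\sqrt{L/n}/h\to0\).

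I expect the main obstacle to be the replacement step for the off-diagonal terms, meaning the exact leave-one-out expansion that makes the \(y_i\)-dependent correction to \(t_j^\top J_Y\) provably \(o(h)\) uniformly in \(j\). The remaining steps are routine Gaussian concentration arguments, combined with the deterministic bounds from \Cref{appx:lem:lam-query-event,appx:lem:lam-bulk-leverage,appx:lem:lam-response-moments}.
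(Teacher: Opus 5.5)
Your architecture is the paper's: leave-one-out on $\mathcal E_Y\cap\mathcal E_{Y,-i}$, quadratic-form concentration for the correct coordinate, and a Gaussian maximum over deleted objects for the wrong ones. Your treatment of the correct coordinate goes through. The gap is the off-diagonal replacement step, which you leave open and propose to handle by an unspecified Sherman--Morrison expansion.

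That obstacle comes from your estimate $e\le C_bh$. The relative perturbation is actually second order in $h$. Write $A_0=K_{-i}^\top K_{-i}$. The rank-one part $p_i^2(1-1/N)y_iy_i^\top$ has relative size $\|A_0^{-1/2}p_iy_i\|_2^2\le 4p_i^2\|A_0^{-1/2}\|_{\op}^2\le C_b(n/N)^{2\alpha}=C_bh^2$; the two bounds you quote give exactly this. The centering cross terms have relative size at most $(2p_i/N)\|A_0^{-1/2}y_i\|_2\|A_0^{-1/2}s_{-i}\|_2\le C_bh\sqrt n/N$, which is $o(h^2)$ because $\sqrt n/(Nh)\to0$. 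Hence $e=O_b(h^2)$. The crude bound $O(e)$ you already derived for the replacement error is then $O(h^2)=o(h)$, uniformly in $j\ne i$, so no exact rank-one expansion is needed. The paper does exactly this, phrased through the deleted leverages. From $t_j=(K^\top K)^{-1/2}A_0^{1/2}t_j^{-i}-\frac{p_i}{N}(K^\top K)^{-1/2}y_i$ it gets $\max_{j\ne i}\|t_j-t_j^{-i}\|_2\le C_b(h^2+h/N)$. Together with $\|J_Y-J_{-i}\|_{\op}\le C_bh^2$ and $\|y_i\|_2\le2$, this makes the replacement error $o(h)$.

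You still need to justify the weighted estimate $\|(K^\top K)^{1/2}(M-M_{-i})\|_{\op}\le Ce$, which you assert via the integral representation. It amounts to $\|(K^\top K)^{1/2}A_0^{-1/2}-I\|_{\op}\le Ce$. This is a bound on a non-symmetric product, and a condition-number-free version does not follow from the Loewner sandwich alone. Operator monotonicity only gives symmetric comparisons such as $\|(K^\top K)^{-1/2}-A_0^{-1/2}\|_{\op}\le Ce\|A_0^{-1/2}\|_{\op}$. That suffices for your diagonal term but not here. For indices with $p_j\asymp1$, the unweighted bound $p_j\|M-M_{-i}\|_{\op}\lesssim en^{\alpha}$ is not $o(h)$. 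The paper gets the condition-number-free bound by differentiating $H(t)=\bigl(A_0+t(K^\top K-A_0)\bigr)^{1/2}$. It then bounds $H^{-1}\dot H$ in Frobenius norm by the Frobenius norm of the relative perturbation. That Frobenius norm is comparable to $e$ because the perturbation has rank at most two.
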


\begin{proof}
We first record the relative square-root estimate needed below.
For positive definite $A_0,A_1$, suppose
\[
\delta=\|A_0^{-1/2}(A_1-A_0)A_0^{-1/2}\|_F\le\frac12.
\]
Then
\[
\|A_1^{-1/2}A_0^{1/2}-I\|_{\op}\le C\delta.
\]
Indeed, let $A(t)=A_0+t(A_1-A_0)$ and $H(t)=A(t)^{1/2}$.
Differentiating $H(t)^2=A(t)$ gives
$H\dot H+\dot H H=A_1-A_0$.
If $F(t)=H(t)^{-1}(A_1-A_0)H(t)^{-1}$, then in an eigenbasis of $H(t)$
with eigenvalues $\lambda_j>0$,
\[
[H^{-1}\dot H]_{jk}=\frac{\lambda_k}{\lambda_j+\lambda_k}F(t)_{jk},
\qquad \|H^{-1}\dot H\|_F\le\|F(t)\|_F.
\]
Since $A(t)\succeq(1-t\delta)A_0$,
\[
\|F(t)\|_F\le\frac{\delta}{1-t\delta},\qquad
\|H(t)^{-1}A_0^{1/2}\|_{\op}\le(1-t\delta)^{-1/2}.
\]
It follows that
\[
\left\|\frac{d}{dt}\bigl(H(t)^{-1}A_0^{1/2}\bigr)\right\|_F
\le\frac{\delta}{(1-t\delta)^{3/2}}.
\]
Integration from zero to one proves the claim. In particular, this
estimate uses the Frobenius norm of the relative perturbation, not a
condition-number-dependent additive perturbation bound.

Fix a deterministic \(i\ge bN\), and work on \(\mathcal E_Y\cap\mathcal E_{Y,-i}\). Let \(J_{-i}\) denote the response operator obtained from \(K_{-i}^\top K_{-i}\). Direct expansion gives
\[
K^\top K-K_{-i}^\top K_{-i}
=
p_i^2(1-1/N)y_iy_i^\top
-
\frac{p_i}{N}(y_is_{-i}^\top+s_{-i}y_i^\top).
\]
Set
\[
e_i:=
\left\|
(K_{-i}^\top K_{-i})^{-1/2}
\bigl(K^\top K-K_{-i}^\top K_{-i}\bigr)
(K_{-i}^\top K_{-i})^{-1/2}
\right\|_F.
\]
Since \(p_i n^\alpha\asymp_b h\),
\[
e_i\le C_b\left(h^2+\frac{h\sqrt n}{N}\right)=O_b(h^2).
\]
Hence
\[
(1-e_i)K_{-i}^\top K_{-i}
\preceq
K^\top K
\preceq
(1+e_i)K_{-i}^\top K_{-i}.
\]
Lemma~\ref{appx:lem:lam-response-moments} and the relative square-root
estimate just proved (with $A_0=K_{-i}^\top K_{-i}$ and
$A_1=K^\top K$) give
\begin{align}
\|J_Y-J_{-i}\|_{\op}&\le C_bh^2,
\label{eq:lam-J-delete}\\
\left\|
(K^\top K)^{-1/2}(K_{-i}^\top K_{-i})^{1/2}-I
\right\|_{\op}
&\le C_bh^2,
\label{eq:lam-root-delete}\\
\left\|
(K^\top K)^{-1/2}J_Y
-
(K_{-i}^\top K_{-i})^{-1/2}J_{-i}
\right\|_{\op}
&\le C_bn^\alpha h^2.
\label{eq:lam-op-delete}
\end{align}

For \(j\ne i\), define
\[
t_j^{-i}:=
(K_{-i}^\top K_{-i})^{-1/2}
\left(p_jy_j-\frac{s_{-i}}N\right),
\]
and \(t_i^{-i}:=-(K_{-i}^\top K_{-i})^{-1/2}s_{-i}/N\). These are rows of \(K_{-i}(K_{-i}^\top K_{-i})^{-1/2}\), so \(\|t_j^{-i}\|_2\le1\). Also,
\[
t_j
=
(K^\top K)^{-1/2}
(K_{-i}^\top K_{-i})^{1/2}t_j^{-i}
-
\frac{p_i}{N}(K^\top K)^{-1/2}y_i,
\]
hence
\begin{equation}
\max_{j\ne i}\|t_j-t_j^{-i}\|_2
\le C_b(h^2+h/N).
\label{eq:lam-t-delete}
\end{equation}

Conditional on a realization of $Y_{-i}$ satisfying $\mathcal E_{Y,-i}$,
the deleted objects are fixed and $y_i\sim N(0,I_n/n)$ is independent.
At this step we do not condition on $\mathcal E_Y$, which also involves
$y_i$. Each wrong deleted response is centered Gaussian with variance
at most $1/n$. A Gaussian tail bound and a union bound give
\[
\max_{j\ne i}|(t_j^{-i})^\top J_{-i}y_i|
=
O_{\mathbb P}\!\left(\sqrt{\frac Ln}\right)
=
o_{\mathbb P}(h).
\]
The replacement error from \((t_j^{-i},J_{-i})\) to \((t_j,J_Y)\) is \(o(h)\) by \eqref{eq:lam-J-delete}, \eqref{eq:lam-t-delete}, and \(\|y_i\|\le2\).

For the correct coordinate, put
\[
A_i:=
(K_{-i}^\top K_{-i})^{-1/2}J_{-i}.
\]
This matrix is symmetric positive definite and independent of \(y_i\), with
\[
\tr A_i\ge c n^{\alpha+1},\,
\|A_i\|_{\op}\le Cn^\alpha,\,
\|A_i\|_F\le Cn^{\alpha+1/2}.
\]
Therefore
\[
\mathbb E[y_i^\top A_i y_i\mid Y_{-i}]
=
\frac1n\tr A_i\ge cn^\alpha,
\]
\begin{align*}
\operatorname{Var}(y_i^\top A_i y_i\mid Y_{-i})
&=\frac2{n^2}\|A_i\|_F^2\\
&\le Cn^{2\alpha-1}.
\end{align*}
Chebyshev gives \(y_i^\top A_i y_i\ge cn^\alpha/2\) with failure \(O(1/n)\). Finally,
\[
t_i^\top J_Yy_i
=
p_i y_i^\top(K^\top K)^{-1/2}J_Yy_i
-
\frac1N s^\top(K^\top K)^{-1/2}J_Yy_i.
\]
The matrix replacement error is \(O_b(h^3)\) after multiplying by \(p_i\), while
\[
\frac1N
\left|
s^\top(K^\top K)^{-1/2}J_Yy_i
\right|
\le
C\frac{\sqrt n}{N}
=
o(h).
\]
Since \(p_i n^\alpha\asymp_bh\), the correct coordinate is at least \(c_bh-o(h)\). The preceding bounds show that the margin failure event, intersected with \(\mathcal E_Y\cap\mathcal E_{Y,-i}\), has probability tending to zero, with a bound depending on \(i\) only through \(i\ge bN\). Since
\(\sup_{i\ge bN}\mathbb P_Y((\mathcal E_Y\cap\mathcal E_{Y,-i})^c)\to0\) and
\(\mathbb P_Y(\mathcal E_Y)\to1\), conditioning on \(\mathcal E_Y\) yields the stated supremum.
\end{proof}

\subsection{Condition on \(Y\): randomness revealed through \(G\)}

Fix $Y\in\mathcal E_Y$. For the $a$-th row of $X$, define
\[
\xi_a=\sqrt d\,(K^\top K)^{-1/2}K^\top X_{a,:}^\top.
\]
Conditional on $Y$, the vectors $\xi_1,\ldots,\xi_d$ are independent
$N(0,I_n)$, and
\[
G_{a,:}=\frac{\sigma_a}{\sqrt d}\xi_a^\top(K^\top K)^{1/2}.
\]
The conditional law of the $\xi_a$ does not depend on $Y$, so the family
$\{\xi_a\}_{a=1}^d$ is independent of $Y$. Conversely, $(Y,G)$ determines
all $\xi_a$, because every $\sigma_a$ is positive.

Define the observed-gradient matrices
\begin{align*}
\mathcal H&=d(K^\top K)^{-1/2}G^\top G(K^\top K)^{-1/2},\\
\mathcal J&=d(K^\top K)^{-1/2}G^\top\Sigma^2G(K^\top K)^{-1/2}.
\end{align*}
Conditional on $Y$,
\[
\mathcal H=\sum_{a=1}^d\sigma_a^2\xi_a\xi_a^\top,\qquad
\mathcal J=\sum_{a=1}^d\sigma_a^4\xi_a\xi_a^\top.
\]

\begin{appxlemma}[Observed-gradient estimates]
\label{appx:lem:lam-baseline-X}
There is $C<\infty$ such that
\[
\sup_{Y\in\mathcal E_Y}\mathbb P_\xi\!\left(
\|\mathcal H\mathcal J^{-1}\mathcal H\|_{\op}>Cn\mid Y\right)\to0,
\]
and
\[
\sup_{Y\in\mathcal E_Y}\mathbb E_\xi\!\left[
\sqrt d\,\|G(K^\top K)^{-1/2}\|_*\mid Y\right]
\le Cn^{3/2-\beta}.
\]
For every fixed $\delta>0$,
\[
\sup_{Y\in\mathcal E_Y}\mathbb P_\xi\!\left(
\left|\frac1n\tr\mathcal H-\sum_{a\ge1}\sigma_a^2\right|>\delta
\ \text{or}\ 
\left|\frac1n\tr\mathcal J-\sum_{a\ge1}\sigma_a^4\right|>\delta
\ \middle|\ Y\right)\to0.
\]
\end{appxlemma}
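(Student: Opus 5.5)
Conditional on $Y\in\mathcal E_Y$, the $\xi_a$ are i.i.d.\ $N(0,I_n)$ and independent of $Y$, so all three claims are statements about the weighted Wishart-type sums $\mathcal H=\sum_a\sigma_a^2\xi_a\xi_a^\top$ and $\mathcal J=\sum_a\sigma_a^4\xi_a\xi_a^\top$, together with $G(K^\top K)^{-1/2}=d^{-1/2}\Sigma\Xi$, where $\Xi$ has rows $\xi_a^\top$. The law of $\xi$ does not involve $Y$. Hence every bound below is automatically uniform over $\mathcal E_Y$; the only role of the conditioning is to identify these objects. I would treat the three claims in the order trace, nuclear norm, operator bound.

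\textbf{Trace concentration.} We have $\frac1n\tr\mathcal H=\sum_{a\le d}\sigma_a^2\,\frac{\|\xi_a\|^2}{n}$. Its mean is $s_2=\sum_{a\le d}\sigma_a^2$, which converges to $\sum_{a\ge1}\sigma_a^2$ because $d\to\infty$ and the tail is summable ($2\beta>1$). Its variance is $\frac2n\sum_a\sigma_a^4\le C/n$. Chebyshev then gives the first half of the claim, and the same argument with $\sigma_a^4$ in place of $\sigma_a^2$ gives the second.

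\textbf{Nuclear norm.} We have $\sqrt d\,\|G(K^\top K)^{-1/2}\|_*=\|\Sigma\Xi\|_*$. Split the rows at $a\le n$ and $a>n$ and apply the triangle inequality. For the head, use the row decomposition: $\|\Sigma_{\le n}\Xi_{\le n}\|_*\le\sum_{a\le n}\sigma_a\|\xi_a\|$, whose expectation is at most $C\sqrt n\sum_{a\le n}a^{-\beta}\le Cn^{3/2-\beta}$. For the tail, use $\|B\|_*\le\sqrt n\,\|B\|_F$ for a matrix with $n$ columns. Its expectation is at most $\sqrt n\,(n\sum_{a>n}\sigma_a^2)^{1/2}\le Cn\cdot n^{1/2-\beta}=Cn^{3/2-\beta}$, by Jensen and $\sum_{a>n}a^{-2\beta}\asymp n^{1-2\beta}$. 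Summing the two pieces gives the bound.

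\textbf{Operator bound on $\mathcal H\mathcal J^{-1}\mathcal H$ (main obstacle).} The difficulty is that $\mathcal J$ is dominated by a few heavy coordinates and can be small in most directions. The natural tool is the Cauchy--Schwarz/Schur-complement identity
\[
u^\top\mathcal H\mathcal J^{-1}\mathcal H u=\sup_v\frac{(u^\top\mathcal H v)^2}{v^\top\mathcal J v}.
\]
I would first try to decompose $\mathcal H=\mathcal H_{\rm lo}+\mathcal H_{\rm hi}$, with index sets chosen so that $\sigma_a^2\le\tau\sigma_a^4$ on the ``lo'' part. This holds iff $\sigma_a^{-2}\le\tau$, i.e.\ $a\lesssim\tau^{1/(2\beta)}$. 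For these indices $\mathcal H_{\rm lo}\preceq\tau\mathcal J$, so $\mathcal H_{\rm lo}\mathcal J^{-1}\mathcal H_{\rm lo}\preceq\tau\,\mathcal H_{\rm lo}\preceq\tau\,\|\mathcal H_{\rm lo}\|$. The ``hi'' part is the bulk tail, with $\|\mathcal H_{\rm hi}\|_{\rm op}=O(1)$ whp, since $\sum_a\sigma_a^2<\infty$ and $\max\sigma_a^2$ is small there, so a matrix Bernstein bound applies. It then requires a lower bound $\mathcal J\succeq cI\cdot\lambda$ on that range.

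Rather than optimizing this, the cleaner route I would take is a direct one. Choose $n'\asymp n$ so that the first $4n$ indices $a\le4n$ give $\mathcal J\succeq c\,\sigma_{4n}^4\sum_{a\le4n}\xi_a\xi_a^\top\succeq c\,n^{1-4\beta}I$, using the Gaussian lower singular value bound (failure probability $e^{-cn}$). Then bound
\[
\|\mathcal H\mathcal J^{-1}\mathcal H\|\le\|\mathcal H\|^2/\lambda_{\min}(\mathcal J).
\]
If this loses powers of $n$, I would refine the estimate with the block decomposition above, and verify that the weights balance so that the total is $Cn$. Here $Cn$ should emerge as $\sum_a(\sigma_a^2)^2/\sigma_a^4\cdot(\text{typical }\|\xi_a\|^2/n)$ per direction, i.e.\ $n\cdot O(1)$. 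Checking this balance is the step I expect to require the most care.
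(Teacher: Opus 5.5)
Your trace-concentration and nuclear-norm arguments are correct and are essentially the paper's: Chebyshev with variance $\frac2n\sum_a\sigma_a^4$, and rows $a\le n$ bounded by row norms, rows $a>n$ by $\sqrt n\,\|\cdot\|_F$ plus Jensen. The gap is the operator bound on $\mathcal H\mathcal J^{-1}\mathcal H$. Neither route you sketch reaches $Cn$, and the ``balance'' you defer cannot be achieved with the tools you name.

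\textbf{Why your routes fail.} The direct bound $\|\mathcal H\|_{\op}^2/\lambda_{\min}(\mathcal J)$ is off by $n^{4\beta}$. Indeed $\|\mathcal H\|_{\op}\ge\sigma_1^2\|\xi_1\|_2^2\asymp n$, while $\lambda_{\min}(\mathcal J)$ is genuinely of order $n^{1-4\beta}$: on the orthogonal complement of $\xi_1,\ldots,\xi_{n/2}$ only the tail $a>n/2$ contributes.

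The single-ratio split fails for every choice of cutoff. With $\mathrm{lo}=\{a\le A\}$ you need $\tau\ge\sigma_A^{-2}\gtrsim A^{2\beta}$, so your bound $\tau\|\mathcal H_{\rm lo}\|_{\op}$ is at least of order $A^{2\beta}n$. Meanwhile $\|\mathcal H_{\rm hi}\|_{\op}\ge\sigma_{A+1}^2\|\xi_{A+1}\|_2^2\gtrsim A^{-2\beta}n$. Each summand has norm $\approx\sigma_a^2 n$, so your claim $\|\mathcal H_{\rm hi}\|_{\op}=O(1)$ would require $A\gtrsim n^{1/(2\beta)}$. Dividing the square of this by $\lambda_{\min}(\mathcal J)$ gives order $(n/A)^{4\beta}n$. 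No $A$ makes both terms $O(n)$.

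Your closing heuristic has the same defect. $\sum_a(\sigma_a^2)^2/\sigma_a^4$ counts rows, so it gives $d$, not $n$, unless the sum is confined to $O(n)$ rows.

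\textbf{The missing idea.} Use your variational formula with Cauchy--Schwarz over the row index, but only on a head block of $O(n)$ rows. Let $\mathcal H_0,\mathcal J_0$ be the contributions of $a\le 2n$, and let $\Xi_0$ be the $2n\times n$ matrix with rows $\xi_a^\top$. Then
$(u^\top\mathcal H_0v)^2=\bigl(\sum_{a\le2n}(u^\top\xi_a)\,\sigma_a^2(\xi_a^\top v)\bigr)^2\le\|\Xi_0u\|_2^2\,v^\top\mathcal J_0v\le\|\Xi_0u\|_2^2\,v^\top\mathcal Jv$.
Hence $\mathcal H_0\mathcal J^{-1}\mathcal H_0\preceq\Xi_0^\top\Xi_0\preceq CnI$ with high probability. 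The paper phrases this as $\mathcal H_0\mathcal J_0^{-1}\mathcal H_0=\Xi_0^\top B(B^\top B)^{-1}B^\top\Xi_0\preceq\Xi_0^\top\Xi_0$, where $B$ has rows $\sigma_a^2\xi_a^\top$. This pairs $\sigma_a^2$ with $\sigma_a^4$ row by row, so it pays no factor $\sigma_a^{-2}$, which a single $\tau$ cannot do.

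For the tail $a>2n$ you then need the sharp estimate $\|\mathcal H_1\|_{\op}\le Cn^{1-2\beta}$, not merely $O(1)$. It follows from dyadic row blocks, summable because $2\beta>1$. Together with $\lambda_{\min}(\mathcal J)\ge cn^{1-4\beta}$ from the same head block, it gives $\|\mathcal H_1\|_{\op}^2/\lambda_{\min}(\mathcal J)\le Cn$. Finally, $(\mathcal H_0+\mathcal H_1)\mathcal J^{-1}(\mathcal H_0+\mathcal H_1)\preceq2\mathcal H_0\mathcal J^{-1}\mathcal H_0+2\mathcal H_1\mathcal J^{-1}\mathcal H_1$ finishes the claim.
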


\begin{proof}
Split the first $2n$ rows from the rest:
$\mathcal H=\mathcal H_0+\mathcal H_1$ and
$\mathcal J=\mathcal J_0+\mathcal J_1$.
Let $\Xi_0$ have rows $\xi_a^\top$, $a\le2n$, and let $B$ have rows
$\sigma_a^2\xi_a^\top$ on the same block. Then
$\mathcal J_0=B^\top B$ and $\mathcal H_0=\Xi_0^\top B=B^\top\Xi_0$.
Since $B(B^\top B)^{-1}B^\top$ is an orthogonal projection,
\[
\mathcal H_0\mathcal J^{-1}\mathcal H_0
\preceq\mathcal H_0\mathcal J_0^{-1}\mathcal H_0
\preceq\Xi_0^\top\Xi_0\preceq CnI
\]
except with probability $e^{-cn}$.
The same block gives
$\mathcal J\succeq cn^{-4\beta}\Xi_0^\top\Xi_0
\succeq cn^{1-4\beta}I$.
On each dyadic block of rows with indices of order $2^k n$, the maximum
weight $\sigma_a^2$ is at most $C(2^k n)^{-2\beta}$ and the unweighted
Gaussian Gram matrix has operator norm at most $C2^k n$, with failure
$e^{-c2^k n}$. Summing over these blocks, using $2\beta>1$, gives
\[
\|\mathcal H_1\|_{\op}\le
C\sum_{k\ge1}(2^k n)^{1-2\beta}\le Cn^{1-2\beta}
\]
except with probability $Ce^{-cn}$. Therefore
$\mathcal H_1\mathcal J^{-1}\mathcal H_1\preceq CnI$.
Finally,
\[
(\mathcal H_0+\mathcal H_1)\mathcal J^{-1}(\mathcal H_0+\mathcal H_1)
\preceq2\mathcal H_0\mathcal J^{-1}\mathcal H_0
       +2\mathcal H_1\mathcal J^{-1}\mathcal H_1,
\]
which proves the first assertion.

The matrix $\sqrt d\,G(K^\top K)^{-1/2}$ has rows
$\sigma_a\xi_a^\top$. Bound the nuclear norm of the first $n$ rows by
the sum of their row norms, and bound that of the remaining rows by
$\sqrt n$ times their Frobenius norm. Jensen's inequality gives
\[
\begin{split}
\mathbb E_\xi\!\left[\sqrt d\,\|G(K^\top K)^{-1/2}\|_*\mid Y\right]
&\le\sqrt n\sum_{a\le n}\sigma_a
 +\sqrt n\left(n\sum_{a>n}\sigma_a^2\right)^{1/2}\\
&\le Cn^{3/2-\beta}.
\end{split}
\]
For the trace statements,
\[
\mathbb E_\xi\frac{\tr\mathcal H}{n}=s_2,\qquad
\operatorname{Var}_\xi\!\left(\frac{\tr\mathcal H}{n}\right)
 =\frac2n\sum_{a\le d}\sigma_a^4\le\frac Cn.
\]
The analogous variance for $\tr\mathcal J/n$ is
$2\sum_{a\le d}\sigma_a^8/n\le C/n$.
The corresponding finite sums converge to the stated infinite sums,
so Chebyshev's inequality completes the proof.
All these laws and bounds are uniform in $Y$.
\end{proof}

For any update $W\in\mathbb R^{d\times n}$, write
\[
m_i(W)=K(K^\top K)^{-1}G^\top Wy_i,\qquad
m_{ji}(W)=e_j^\top m_i(W).
\]
When $\|\Sigma Wy_i\|_2>0$, define
\[
a^{(i)}(W)=\frac{\sqrt d\,m_i(W)}{\|\Sigma Wy_i\|_2}.
\]
If $Wy_i=0$, set $a^{(i)}(W)=0$; all scores then vanish and strict
retrieval fails. This convention covers zero denominators in the
statements below. For the two baselines, nonzero queries have nonzero
denominators almost surely.

\begin{appxcorollary}[Conditional mean-energy bound]
\label{appx:cor:lam-mean-energy}
For any update $W$ with $\col(W)\subseteq\col(G)$, on the event
$\|\mathcal H\mathcal J^{-1}\mathcal H\|_{\op}\le Cn$,
\[
\|a^{(i)}(W)\|_2^2\le Cn\qquad\text{for every }i.
\]
\end{appxcorollary}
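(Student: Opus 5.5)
The plan is to reduce the ratio defining $a^{(i)}(W)$ to a generalized Rayleigh quotient of the pair $(\mathcal H^2,\mathcal J)$. Then the assumed bound on $\mathcal H\mathcal J^{-1}\mathcal H$ finishes the argument directly. Throughout, fix $Y\in\mathcal E_Y$, so that $K^\top K$ is invertible by \eqref{eq:EY-gram}. Define the whitened gradient
\[
\widetilde G:=\sqrt d\,G(K^\top K)^{-1/2}\in\mathbb R^{d\times n}.
\]
Its rows are $\sigma_a\xi_a^\top$, and by the definitions preceding Lemma~\ref{appx:lem:lam-baseline-X},
\[
\mathcal H=\widetilde G^\top\widetilde G,\qquad \mathcal J=\widetilde G^\top\Sigma^2\widetilde G.
\]
Because $(K^\top K)^{-1/2}$ is invertible, $\col(\widetilde G)=\col(G)$.

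First, fix $i$ and set $w=Wy_i$. If $\Sigma w=0$, then $w=0$ because $\sigma_a>0$, so $a^{(i)}(W)=0$ by the stated convention and there is nothing to prove. Otherwise, compute the numerator using the identity $\|K(K^\top K)^{-1}v\|_2^2=v^\top(K^\top K)^{-1}v$:
\[
d\,\|m_i(W)\|_2^2=d\,w^\top G(K^\top K)^{-1}G^\top w=\|\widetilde G^\top w\|_2^2 .
\]
Second, use the hypothesis $\col(W)\subseteq\col(G)=\col(\widetilde G)$ to write $w=\widetilde G z$ for some $z\in\mathbb R^n$. Then
\[
\widetilde G^\top w=\mathcal H z,\qquad \|\Sigma w\|_2^2=z^\top\widetilde G^\top\Sigma^2\widetilde G z=z^\top\mathcal J z,
\]
and therefore
\[
\|a^{(i)}(W)\|_2^2=\frac{z^\top\mathcal H^2 z}{z^\top\mathcal J z}.
\]

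Third, bound this quotient. On the stated event, $\|\mathcal H\mathcal J^{-1}\mathcal H\|_{\op}$ is finite, which presupposes that $\mathcal J$ is invertible. This also follows from the lower bound $\mathcal J\succeq cn^{1-4\beta}I$ in the proof of Lemma~\ref{appx:lem:lam-baseline-X}, but here we simply work on the event where the expression is defined. Substituting $u=\mathcal J^{1/2}z$ gives
\[
\frac{z^\top\mathcal H^2 z}{z^\top\mathcal J z}\le\bigl\|\mathcal J^{-1/2}\mathcal H^2\mathcal J^{-1/2}\bigr\|_{\op}.
\]
This operator has the form $B^\top B$ with $B=\mathcal H\mathcal J^{-1/2}$, so its norm equals $\|BB^\top\|_{\op}=\|\mathcal H\mathcal J^{-1}\mathcal H\|_{\op}\le Cn$. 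The bound is uniform in $i$ and in $W$, because $z$ was arbitrary.

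The only delicate point is the second step: the column-space hypothesis is exactly what lets $w$ be expressed through $\widetilde G$. Without it, the denominator $\|\Sigma w\|_2$ could be small along directions invisible to $\mathcal J$. Everything else is algebraic bookkeeping of the factors $\sqrt d$ and $(K^\top K)^{\pm1/2}$.
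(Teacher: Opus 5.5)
Your proof is correct and takes essentially the paper's route. You use $\col(W)\subseteq\col(G)$ to write $Wy_i$ through the (whitened) gradient, reduce $\|a^{(i)}(W)\|_2^2$ to the generalized Rayleigh quotient $z^\top\mathcal H^2z/z^\top\mathcal Jz$, and bound it by $\|\mathcal H\mathcal J^{-1/2}\|_{\op}^2=\|\mathcal H\mathcal J^{-1}\mathcal H\|_{\op}$. The only cosmetic difference is that the paper factors out the isometry $K(K^\top K)^{-1/2}$, whereas you expand $\|K(K^\top K)^{-1}v\|_2^2$ directly; the two are equivalent.
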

\begin{proof}
The case $Wy_i=0$ follows from the convention above. Otherwise, write
$Wy_i=Gq$ and $\widetilde q=(K^\top K)^{1/2}q$. Direct substitution gives
\[
a^{(i)}(W)=K(K^\top K)^{-1/2}
 \frac{\mathcal H\widetilde q}
 {\sqrt{\widetilde q^\top\mathcal J\widetilde q}}.
\]
The matrix $K(K^\top K)^{-1/2}$ has orthonormal columns. Hence
\[
\|a^{(i)}(W)\|_2^2
\le\|\mathcal H\mathcal J^{-1/2}\|_{\op}^2
=\|\mathcal H\mathcal J^{-1}\mathcal H\|_{\op}\le Cn.
\]
\end{proof}

\begin{appxlemma}[Correct conditional mean for gradient descent]
\label{appx:lem:lam-gd-correct}
Fix $b\in(0,1)$ and $u>0$. Then
\[
\sup_{Y\in\mathcal E_Y}\sup_{i\ge bN}
\mathbb P_\xi\!\left(|a_i^{(i)}(W_G)|>u\mid Y\right)\to0.
\]
\end{appxlemma}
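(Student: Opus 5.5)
The plan is to condition on $Y\in\mathcal E_Y$ and rewrite $a_i^{(i)}(W_G)$ exactly in terms of the Gaussian vectors $\xi_a$. Using $G_{a,:}=\sigma_a d^{-1/2}\xi_a^\top(K^\top K)^{1/2}$, we get $G^\top G=d^{-1}(K^\top K)^{1/2}\mathcal H(K^\top K)^{1/2}$. We also have $e_i^\top K(K^\top K)^{-1/2}=t_i^\top$, the leverage vector of Lemma~\ref{appx:lem:lam-bulk-leverage}. Put $w:=(K^\top K)^{1/2}y_i$. Then
\[
a_i^{(i)}(W_G)=\frac{t_i^\top\mathcal H w}{\sqrt{w^\top\mathcal J w}}
=\frac{\sum_a\sigma_a^2(t_i^\top\xi_a)(\xi_a^\top w)}{\bigl(\sum_a\sigma_a^4(\xi_a^\top w)^2\bigr)^{1/2}}.
\]
Here $t_i$ and $w$ are fixed given $Y$, and $\xi_a\sim N(0,I_n)$ are i.i.d. The crude bound $\|\mathcal H\mathcal J^{-1}\mathcal H\|_{\op}^{1/2}\|t_i\|_2\lesssim\sqrt n\,h$ from Corollary~\ref{appx:cor:lam-mean-energy} is not enough, since $\sqrt n h$ need not vanish. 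So I will use the Gaussian structure directly.

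Write $\hat w=w/\|w\|_2$ and decompose $t_i=\alpha\hat w+t_\perp$ with $t_\perp\perp\hat w$. Put $g_a=\hat w^\top\xi_a$ and $z_a=t_\perp^\top\xi_a$. These are independent Gaussian families, and the variance of $z_a$ is $\|t_\perp\|_2^2\le\|t_i\|_2^2\le C_bh^2$ by Lemma~\ref{appx:lem:lam-bulk-leverage}. The statistic then splits as
\[
a_i^{(i)}=\alpha\,\frac{\sum_a\sigma_a^2g_a^2}{(\sum_a\sigma_a^4g_a^2)^{1/2}}+\frac{\sum_a\sigma_a^2g_az_a}{(\sum_a\sigma_a^4g_a^2)^{1/2}}.
\]

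For the second term, I condition on $(g_a)$. It is then exactly centered Gaussian with variance $\|t_\perp\|_2^2\le C_bh^2\to0$, so it is $o_{\mathbb P}(1)$ uniformly in $Y\in\mathcal E_Y$ and $i\ge bN$.

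For the first term, I need $|\alpha|\lesssim h$. We have $\alpha\|w\|_2=t_i^\top w=e_i^\top Ky_i=p_i(1-1/N)\|y_i\|^2-s^\top y_i/N$. The part $|s^\top y_i|/N$ is $O(\sqrt n/N)$ from \eqref{eq:EY-basic}. On the other side, $\|w\|_2\ge c n^{-\alpha}\|y_i\|_2$ by \eqref{eq:EY-gram}, and $\|y_i\|_2$ is bounded below by a constant with probability tending to one. Hence $|\alpha|\le C(p_in^\alpha+\sqrt n\,n^{\alpha}/N)\le C_bh+o(h)$, using $\sqrt n/(Nh)\to0$ and $p_in^\alpha\asymp_b h$.

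It then remains to show that the ratio $\sum_a\sigma_a^2g_a^2/(\sum_a\sigma_a^4g_a^2)^{1/2}$ is $O_{\mathbb P}(1)$. The numerator has mean $s_2\le C$, so it is $O_{\mathbb P}(1)$ by Markov. The denominator is at least $\sigma_1^2|g_1|$, and since $g_1\sim N(0,1)$, $\mathbb P(|g_1|<\delta)\le C\delta$. Multiplying by $|\alpha|\lesssim h\to0$ gives $o_{\mathbb P}(1)$.

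All bounds depend on $Y$ and $i$ only through $\mathcal E_Y$ and $i\ge bN$. The exception is the lower bound on $\|y_i\|_2$, which holds with uniformly small failure probability. Combining the two terms proves the lemma.

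The main obstacle is the first step of getting away from the crude Corollary~\ref{appx:cor:lam-mean-energy} bound: one must see that the component of $t_i$ along $w$ carries only the tiny deterministic overlap $e_i^\top Ky_i\approx p_i$. The orthogonal component, in turn, produces an exactly Gaussian term after conditioning, and its variance is the small leverage $\|t_i\|_2^2$.
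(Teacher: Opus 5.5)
Your overall structure matches the paper's proof: the rewriting $a_i^{(i)}(W_G)=t_i^\top\mathcal Hw/\sqrt{w^\top\mathcal Jw}$, the split $t_i=\alpha\hat w+t_\perp$, the conditionally exact $N(0,\|t_\perp\|_2^2)$ law of the orthogonal term, and the tightness of $\sum_a\sigma_a^2g_a^2/(\sum_a\sigma_a^4g_a^2)^{1/2}$. The step that fails as written is your bound on $\alpha$.

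\textbf{The failing step.} You divide $e_i^\top Ky_i$ by the worst-case bound $\|w\|_2\ge cn^{-\alpha}\|y_i\|_2$. The centering part then contributes $C\sqrt n\,n^{\alpha}/N$, and this is not $o(h)$: its ratio to $h=(n/N)^\alpha$ is $\sqrt n\,N^{\alpha-1}\to\infty$. The hypothesis $\sqrt n/(Nh)\to0$ controls $\sqrt n/N$, not $n^\alpha\sqrt n/N$. In fact, the scaling bound $N\le n^{1+(1/2-\epsilon)/\alpha}$ gives
\[
\frac{n^{\alpha+1/2}}{N}\ge n^{\alpha-1/2-(1/2-\epsilon)/\alpha}\ge n^{\epsilon}.
\]
So your upper bound on $|\alpha|$ diverges, and the first term is never shown to vanish.

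\textbf{A secondary problem.} You invoke a lower bound on $\|y_i\|_2$ holding ``with probability tending to one.'' The statement is a supremum over every fixed $Y\in\mathcal E_Y$. Since $y_i$ is part of $Y$ and $\mathcal E_Y$ imposes no such lower bound, you cannot discard an exceptional set of queries here.

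\textbf{The repair.} The fix is immediate and is what the paper does. Since $\hat w$ is a unit vector, Cauchy--Schwarz and Lemma~\ref{appx:lem:lam-bulk-leverage} give
\[
|\alpha|=|\langle t_i,\hat w\rangle|\le\|t_i\|_2\le C_bh.
\]
This is the same bound you already use for $\|t_\perp\|_2$. It needs neither the overlap computation nor any lower bound on $\|y_i\|_2$, and it is uniform over $Y\in\mathcal E_Y$ and $i\ge bN$. That lemma handles the centering correctly, through $\|(K^\top K)^{-1/2}s\|_2\le C\sqrt n$ from \eqref{eq:EY-inverse}, with no $n^\alpha$ loss.

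If you prefer to keep the explicit overlap, write $s^\top y_i=((K^\top K)^{-1/2}s)^\top w$. Then $|s^\top y_i|/(N\|w\|_2)\le C\sqrt n/N=o(h)$. The other term satisfies $p_i\|y_i\|_2^2/\|w\|_2\le Cp_in^\alpha\|y_i\|_2\le C_bh$, using only $\|y_i\|_2\le2$. Incidentally, $e_i^\top Ky_i=p_i\|y_i\|_2^2-s^\top y_i/N$; the factor $1-1/N$ belongs with $s_{-i}$, though this is harmless.

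So the component along $w$ is not the ``main obstacle'' you describe. It is controlled by exactly the same leverage bound as the orthogonal component.
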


\begin{proof}
For $W_G=G$, set $q=(K^\top K)^{1/2}y_i$ and $t=t_i$.
The case $q=0$ is immediate. Otherwise,
\[
a_i^{(i)}(W_G)=\frac{t^\top\mathcal Hq}{\sqrt{q^\top\mathcal Jq}}.
\]
Write $e=q/\|q\|$ and $t=\langle t,e\rangle e+t_\perp$.
The Gaussian coordinates along $e$ and $t_\perp$ are independent.
Conditioning on those along $e$ gives the distributional identity
\[
a_i^{(i)}(W_G)\overset d=
\langle t,e\rangle
\frac{\sum_{a\le d}\sigma_a^2g_a^2}
 {\sqrt{\sum_{a\le d}\sigma_a^4g_a^2}}
+\|t_\perp\|_2g_0,
\]
where $g_0,g_1,g_2,\ldots$ are independent standard Gaussians.
Under a common coupling across $d$, the numerator is bounded by
$\sum_{a\ge1}\sigma_a^2g_a^2<\infty$ almost surely, and the denominator
is at least $\sigma_1^2|g_1|>0$ almost surely.
Thus the displayed ratio is uniformly tight, independently of $Y$ and $i$.
Lemma~\ref{appx:lem:lam-bulk-leverage} gives $\|t_i\|\le C_bh$ for
$i\ge bN$, and $h\to0$. This proves the assertion.
\end{proof}

\begin{appxlemma}[Correct conditional mean for Muon]
\label{appx:lem:lam-muon-correct}
For every fixed $u>0$ and $\delta>0$,
\[
\sup_{Y\in\mathcal E_Y}\mathbb P_\xi\!\left[
\frac1N\#\{i:|a_i^{(i)}(W_M)|>u\}>\delta\ \middle|\ Y\right]\to0.
\]
\end{appxlemma}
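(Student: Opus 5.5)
The plan is to reduce the statement to a counting argument over indices: first discard the small fraction of indices that are either too frequent ($i<bN$) or have an atypically small denominator, then show the remaining correct scores are uniformly $o(1)$. Fix $Y\in\mathcal E_Y$. Write $R=(K^\top K)^{1/2}$ and $Z=\sqrt d\,GR^{-1}$, whose rows are $\sigma_a\xi_a^\top$. Since $G^\top\operatorname{pol}(G)=(G^\top G)^{1/2}$, the numerator of $a_i^{(i)}(W_M)$ equals $\sqrt d\,t_i^\top R^{-1}(G^\top G)^{1/2}y_i$. Define
\[
M:=\sqrt d\,R^{-1}(G^\top G)^{1/2}=R^{-1}\bigl(RZ^\top ZR\bigr)^{1/2},\qquad
B:=\Sigma W_M.
\]
Then
\[
a_i^{(i)}(W_M)=\frac{t_i^\top My_i}{\|By_i\|_2}.
\]
Choosing $b=\delta/3$, the indices $i<bN$ contribute at most a $\delta/3$ fraction, so it suffices to handle $i\ge bN$. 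For these, Lemma~\ref{appx:lem:lam-bulk-leverage} gives $\|t_i\|_2\le C_bh$, so $|t_i^\top My_i|\le C_bh\|My_i\|_2$.

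Next I would control the numerator and the denominator for all but a small fraction of indices, using only query-side averaging. For the numerator, \eqref{eq:EY-basic} gives $\frac1N\sum_i\|My_i\|_2^2=\tr\bigl(M^\top M\cdot\tfrac1NYY^\top\bigr)\le C\|M\|_F^2/n$. Hence, by Markov, all but a $\delta/3$ fraction of indices satisfy $\|My_i\|_2\le C_\delta\|M\|_F/\sqrt n$. For the denominator, the pathwise small-ball bound \eqref{eq:EY-smallball} applies with the data-dependent matrix $B$ and $\eta=c\delta$. It shows that all but a $C(\delta+\sqrt{n/N})$ fraction of indices satisfy $\|By_i\|_2\ge c\delta\|B\|_F/\sqrt n$. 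On the intersection of these events, and with $i\ge bN$,
\[
|a_i^{(i)}(W_M)|\le C_{\delta}\,h\,\frac{\|M\|_F}{\|B\|_F}.
\]
The lemma therefore follows once I show $h\|M\|_F/\|B\|_F\to0$ in $\mathbb P_\xi$-probability, uniformly over $Y\in\mathcal E_Y$.

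The main obstacle is this ratio estimate, and I would attack it with the observed-gradient quantities of Lemma~\ref{appx:lem:lam-baseline-X}. For the denominator, $B=\Sigma ZR(RZ^\top ZR)^{-1/2}$ and $W_M$ has orthonormal columns. My first attempt would be the crude bound $\|B\|_F^2\ge\sigma_d^2 n$, but this is far too weak. Instead I would lower-bound $\|B\|_F^2=\tr(W_M^\top\Sigma^2W_M)$ through the head block $a\le 2n$, exactly as in the proof of Lemma~\ref{appx:lem:lam-baseline-X}. That block carries weights $\sigma_a\gtrsim n^{-\beta}$ and dominates the spectrum of $Z^\top Z$, since the tail block has $\|\mathcal H_1\|_{\op}\le Cn^{1-2\beta}$. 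The aim is $\|B\|_F^2\gtrsim n\cdot\min_{a\le 2n}\sigma_a^2\asymp n^{1-2\beta}$, or better. For the numerator, I would bound $\|M\|_F\le\|R^{-1}\|_{\op}\,\|(RZ^\top ZR)^{1/2}\|_F$. Because $\|(RZ^\top ZR)^{1/2}\|_F^2=\|ZR\|_*\le\|Z\|_*\|R\|_{\op}$, this gives $\|M\|_F^2\le\|R^{-1}\|_{\op}^2\|R\|_{\op}\,\|Z\|_*$. The expectation bound $\mathbb E\|Z\|_*\le Cn^{3/2-\beta}$ together with \eqref{eq:EY-gram} then yields $\|M\|_F^2=O_{\mathbb P}(n^{2\alpha+3/2-\beta})$.

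I expect the crude operator-norm factor $\|R^{-1}\|_{\op}^2\le Cn^{2\alpha}$ to lose too much relative to $h^2=(n/N)^{2\alpha}$. If so, I would instead weight by the spectrum of $K^\top K$, using \eqref{eq:EY-leading} and the inverse-trace bound \eqref{eq:EY-inverse}. Concretely, I would compare $\|M\|_F$ directly with $\|B\|_F$ via $\mathcal H$ and $\mathcal J$, in the spirit of Corollary~\ref{appx:cor:lam-mean-energy}. The goal is to establish $\|M\|_F^2/\|B\|_F^2\le Cn^{2\alpha}\cdot n^{1-\beta+o(1)}$ or similar. This would be combined with the scaling condition $hn^{1-\beta}\le n^{-\epsilon}$ from \Cref{ass:scaling}, which is precisely the inequality that makes the final ratio vanish. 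The random-$\xi$ bounds are uniform in $Y$, so the supremum over $\mathcal E_Y$ is inherited.
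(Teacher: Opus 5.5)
There is a genuine gap: the quantity you reduce to diverges in this regime, so the argument cannot be completed. From your own definitions, $MM^\top=R^{-1}(RZ^\top ZR)R^{-1}=Z^\top Z$. Hence $\|M\|_F^2=\|Z\|_F^2=\operatorname{tr}\mathcal H$, which is of order $n$ by Lemma~\ref{appx:lem:lam-baseline-X}. (Incidentally, $\|(RZ^\top ZR)^{1/2}\|_F^2=\|ZR\|_F^2$, not $\|ZR\|_*$.) On the other hand, $W_M$ has orthonormal columns, so its rows have norm at most one and $\|B\|_F^2=\sum_a\sigma_a^2\|(W_M)_{a,:}\|_2^2\le s_2\le C$. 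Therefore $h\|M\|_F/\|B\|_F\gtrsim h\sqrt n\ge n^{\epsilon}\to\infty$, because the upper bound on $N_n$ forces $h\ge n^{-1/2+\epsilon}$.

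The loss comes from the Cauchy--Schwarz split $|t_i^\top My_i|\le\|t_i\|_2\|My_i\|_2$. After it, the average $\frac1N\sum_i\|My_i\|_2^2$ is of order $\|M\|_F^2/n\asymp1$, so the best available bound on a typical numerator is of order $h$. The denominator $\|By_i\|_2$, however, is only of order $n^{-1/2}$, and $h\gg n^{-1/2}$. Your denominator target is also too weak. With only $\|B\|_F^2\gtrsim n^{1-2\beta}$, even the correct numerator would give $hn^{1-\beta}/n^{1/2-\beta}=h\sqrt n$. You need $\|\Sigma W_M\|_F\ge c_0$. This follows from $\|\Sigma W_M\|_F^2=\operatorname{tr}(\mathcal H^{-1}\mathcal J)\ge\operatorname{tr}\mathcal J/\operatorname{tr}\mathcal H$ and the trace limits in Lemma~\ref{appx:lem:lam-baseline-X}.

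The missing idea is to keep the cancellation inside the inner product rather than splitting it. Write $t_i=R^{-1}(p_iy_i-s/N)$. Then the numerator is $\sqrt d\,\nu_i$ with $\nu_i=p_i\,y_i^\top Ay_i-N^{-1}z^\top y_i$, where $A=(K^\top K)^{-1}G^\top W_M$ and $z=A^\top s$.

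\begin{itemize}
\item Quadratic forms average through a trace, pathwise even though $A$ depends on the data: $\frac1N\sum_i|y_i^\top Ay_i|\le\operatorname{tr}(|\operatorname{sym}A|\,YY^\top/N)\le\frac Cn\|A\|_*$.
\item By the nuclear-norm estimate of Lemma~\ref{appx:lem:lam-baseline-X}, $\sqrt d\|A\|_*\le\|R^{-1}\|_{\op}\sqrt d\|GR^{-1}\|_*\|W_M\|_{\op}=O_{\mathbb P}(n^{\alpha+3/2-\beta})$.
\item The linear term is negligible, since $\mathbb E_\xi[d\|z\|_2^2]\le s_2\|R^{-1}s\|_2^2\le Cn$.
\end{itemize}

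Together these give $\frac1N\sum_{i\ge bN}|\sqrt d\,\nu_i|=O_{\mathbb P}(hn^{1/2-\beta}+N^{-1})$. The gain over your bound is exactly $\|Z\|_*/n\asymp n^{1/2-\beta}$ versus $\|Z\|_F/\sqrt n\asymp1$: the nuclear norm registers the decay of the $\sigma_a$, while the Frobenius norm does not. Your index-discarding and small-ball steps then go through as planned. Markov's inequality over the remaining indices bounds the exceptional fraction by $O_{\mathbb P}\bigl((hn^{1-\beta}+\sqrt n/N)/(u\eta)\bigr)=o_{\mathbb P}(1)$. This is where the scaling condition $hn^{1-\beta}\le n^{-\epsilon}$ actually enters.
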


\begin{proof}
Put $W=W_M$, and define
\[
\nu_i=t_i^\top(K^\top K)^{-1/2}G^\top Wy_i,\qquad
v_i=\|\Sigma Wy_i\|_2.
\]
For $v_i>0$, $a_i^{(i)}(W)=\sqrt d\,\nu_i/v_i$.
The correct expansion is obtained with
\[
A=(K^\top K)^{-1}G^\top W,\qquad
z=A^\top s=W^\top G(K^\top K)^{-1}s:
\qquad
\nu_i=p_i y_i^\top Ay_i-\frac1Nz^\top y_i.
\]
On $\mathcal E_Y$, pathwise even when $A$ depends on the data,
\[
\frac1N\sum_i|y_i^\top Ay_i|
\le\tr\!\left(|\operatorname{sym}A|\frac{YY^\top}{N}\right)
\le\frac Cn\|A\|_*,
\]
where $\operatorname{sym}A=(A+A^\top)/2$ and
$\|\operatorname{sym}A\|_*\le\|A\|_*$.
Since $\|W\|_{\op}\le1$, Lemma~\ref{appx:lem:lam-baseline-X} gives
\[
\begin{split}
\mathbb E_\xi[\sqrt d\,\|A\|_*\mid Y]
&\le\|(K^\top K)^{-1/2}\|_{\op}\,
 \mathbb E_\xi[\sqrt d\,\|G(K^\top K)^{-1/2}\|_*\mid Y]\\
&\le Cn^{\alpha+3/2-\beta}.
\end{split}
\]
Also, although $W$ depends on $G$, its operator norm bound implies
\[
\begin{split}
\mathbb E_\xi[d\|z\|_2^2\mid Y]
&\le\mathbb E_\xi[d\|G(K^\top K)^{-1}s\|_2^2\mid Y]\\
&=s_2\|(K^\top K)^{-1/2}s\|_2^2\le Cn.
\end{split}
\]
Consequently $\sqrt d\,\|A\|_*=O_{\mathbb P}(n^{\alpha+3/2-\beta})$
and $\sqrt d\,\|z\|_2=O_{\mathbb P}(\sqrt n)$, uniformly in $Y$.
For fixed $b>0$, use $p_i\le C_bN^{-\alpha}$ on $i\ge bN$ and
\[
\frac1N\sum_i|z^\top y_i|
\le\left(z^\top\frac{YY^\top}{N}z\right)^{1/2}
\le\frac C{\sqrt n}\|z\|_2
\]
to obtain
\begin{equation}
\begin{split}
\frac1N\sum_{i\ge bN}|\sqrt d\,\nu_i|
&\le C_b\frac{N^{-\alpha}}n\sqrt d\,\|A\|_*
 +\frac C{N\sqrt n}\sqrt d\,\|z\|_2\\
&=O_{\mathbb P,b}\!\left(hn^{1/2-\beta}+N^{-1}\right).
\end{split}
\label{eq:lam-muon-numerator}
\end{equation}

Since $W_M$ is the polar factor of the full-column-rank matrix $G$,
\[
\begin{split}
\|\Sigma W_M\|_F^2
&=\tr\!\left[(G^\top G)^{-1}G^\top\Sigma^2G\right]
 =\tr(\mathcal H^{-1}\mathcal J)\\
&\ge\frac{\tr\mathcal J}{\|\mathcal H\|_{\op}}
 \ge\frac{\tr\mathcal J}{\tr\mathcal H}.
\end{split}
\]
The trace limits in Lemma~\ref{appx:lem:lam-baseline-X} have positive
finite values. Hence $\|\Sigma W_M\|_F\ge c_0>0$ with conditional
probability tending to one, uniformly in $Y$.
On this event, apply \eqref{eq:EY-smallball} pathwise with
$B=\Sigma W_M$. For each fixed $\eta\in(0,1)$, all but at most
$CN(\eta+\sqrt{n/N})$ indices satisfy
\[
v_i\ge c_0\eta/\sqrt n.
\]
No independence of $B$ and $Y$ is required.
Counting the exceptional indices and then using Markov's inequality
for the remaining deterministic list of numerators gives
\[
\begin{split}
\frac1N\#\{i:|a_i^{(i)}(W_M)|>u\}
\le b+\frac1N+C\eta+C\sqrt{\frac nN}
 +\frac{\sqrt n}{uc_0\eta N}\sum_{i\ge bN}|\sqrt d\,\nu_i|.
\end{split}
\]
By \eqref{eq:lam-muon-numerator}, the last term is
\[
O_{\mathbb P,b}\!\left(
\frac{hn^{1-\beta}+\sqrt n/N}{u\eta}\right)=o_{\mathbb P}(1).
\]
For a given $\delta>0$, choose $b,\eta$ small and fixed so that
$b+C\eta<\delta/2$, and then send $n\to\infty$.
The exceptional observed-gradient event also has probability tending
to zero uniformly in $Y$, which proves the assertion.
\end{proof}

For the chained update, the answer-coordinate scales cancel exactly:
$\RN(G)=\RN(XK)$, since every $\sigma_a$ is positive. Define
\[
\widetilde W_{a,:}
 =\frac{\xi_a^\top C_Y^{-1/2}}
 {\sqrt{d\,\xi_a^\top K^\top K\xi_a}}.
\]

\begin{appxlemma}[Fixed-$Y$ approximation for the chain]
\label{appx:lem:lam-chain-X}
There is $C<\infty$ such that
\[
\sup_{Y\in\mathcal E_Y}\mathbb P_\xi\!\left(
\|W_C-\widetilde W\|_{\op}>CL^{\alpha+1}\sqrt{\frac nd}
\ \middle|\ Y\right)\to0,
\]
and
\[
\sup_{Y\in\mathcal E_Y}\mathbb P_\xi\!\left(
\max_{a\le d}\|(W_C)_{a,:}\|_2>CL^{\alpha+1}\sqrt{\frac nd}
\ \middle|\ Y\right)\to0.
\]
Moreover,
\[
\sup_{Y\in\mathcal E_Y}\sup_{i\le N}\mathbb P_\xi\!\left(
\left\|m_i(W_C)-\frac{s_1}{d}
K(K^\top K)^{-1/2}J_Yy_i\right\|_\infty
>C\frac{\sqrt{ns_2}}dL^{\alpha+2}
\ \middle|\ Y\right)\to0.
\]
\end{appxlemma}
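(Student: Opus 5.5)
The plan is to reduce everything to i.i.d.\ sums over answer coordinates, conditional on \(Y\in\mathcal E_Y\). Write \(M=K^\top K\), and let \(S_a=\xi_a^\top M\xi_a\), which has the law of \(S_\xi\). Since \(G_{a,:}=\sigma_a d^{-1/2}\xi_a^\top M^{1/2}\) and \(\sigma_a>0\), row normalization gives
\[
R:=\RN(G),\qquad R_{a,:}=\frac{\xi_a^\top M^{1/2}}{\sqrt{S_a}},
\]
so the scales \(\sigma_a\) disappear. With \(Z_a=\xi_a/\sqrt{S_a}\) (i.i.d.\ in \(a\)) we get
\[
R^\top R=M^{1/2}\Bigl(\sum_a Z_aZ_a^\top\Bigr)M^{1/2},\qquad \mathbb E\,R^\top R=d\,M^{1/2}C_YM^{1/2}=:d\,\Gamma.
\]
Since \(C_Y\) commutes with \(M\) (Lemma~\ref{appx:lem:lam-response-moments}), \(\Gamma^{-1/2}M^{1/2}=C_Y^{-1/2}\). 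Hence \(R(d\Gamma)^{-1/2}=\widetilde W\) exactly. It follows that
\[
W_C=R(R^\top R)^{-1/2}=\widetilde W\,(I+\Delta),\qquad I+\Delta:=(d\Gamma)^{1/2}(R^\top R)^{-1/2}.
\]
So the first two claims reduce to bounding the relative perturbation of \(R^\top R\) around \(d\Gamma\).

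\emph{Step 1 (relative concentration).} I would bound
\[
\delta:=\Bigl\|\frac1d\sum_a C_Y^{-1/2}Z_aZ_a^\top C_Y^{-1/2}-I\Bigr\|_{\op}.
\]
The summands need a truncation. First, \(\max_a\|\xi_a\|^2\le Cn\) with high probability. Second, by \eqref{eq:EY-leading} with \(q=\lfloor A_*L\rfloor\),
\[
S_a\ge cL^{-2\alpha}\sum_{\ell\le q}\xi_{a\ell}^2\ge cL^{1-2\alpha}
\]
except with probability \(e^{-cq}\). Taking \(A_*\) large makes a union bound over \(a\le d\) cost only \(o(1)\), since \(\log d\lesssim L\). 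On this event \(\|C_Y^{-1/2}Z_a\|^2\le CnL^{2\alpha}\). The removed tail has negligible mean, controlled by the inverse-moment bound of Lemma~\ref{appx:lem:lam-response-moments}. Matrix Bernstein in dimension \(n\) then gives
\[
\delta\le CL^{\alpha+1}\sqrt{n/d}=o(1).
\]
The relative square-root estimate proved inside Lemma~\ref{appx:lem:lam-population-margin} then yields \(\|\Delta\|_{\op}\le C\delta\). Also \(\|\widetilde W\|_{\op}^2=\|\tfrac1d\sum_a C_Y^{-1/2}Z_aZ_a^\top C_Y^{-1/2}\|_{\op}\le 1+\delta\). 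Together these give the operator-norm claim. For the row claim, \((W_C)_{a,:}=\widetilde W_{a,:}(I+\Delta)\) and \(\|\widetilde W_{a,:}\|\le C\|\xi_a\|/\sqrt{dS_a}\le CL^{\alpha}\sqrt{n/d}\) on the same truncation event.

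\emph{Step 2 (the score vector).} Using \(M^{-1/2}G^\top=d^{-1/2}\sum_a\sigma_a\xi_a e_a^\top\), write
\[
m_i(W_C)=K M^{-1/2}\bigl(\mathcal B+\mathcal B\Delta\bigr)y_i,\qquad \mathcal B:=\frac1d\sum_a\sigma_a\frac{\xi_a\xi_a^\top}{\sqrt{S_a}}\,C_Y^{-1/2}.
\]
Its mean is \(\mathbb E\mathcal B=\tfrac{s_1}{d}H_YC_Y^{-1/2}=\tfrac{s_1}{d}J_Y\), which produces exactly the claimed centering term. The remainder splits into three pieces.
\begin{itemize}
\item The fluctuation \(t_j^\top(\mathcal B-\mathbb E\mathcal B)y_i\), with \(t_j\) as in Lemma~\ref{appx:lem:lam-bulk-leverage} and \(\|t_j\|\le1\). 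For fixed \((i,j)\) this is a sum of independent scalars with variance proxy \(\sigma_a^2L^{2\alpha}/d^2\) after the same truncation. Scalar Bernstein plus a union bound over the \(N\) indices \(j\) gives \(C\sqrt{s_2}L^{\alpha+1}/d\), well within the target \(C\sqrt{ns_2}L^{\alpha+2}/d\).
\item The cross term \(\tfrac{s_1}{d}\,t_j^\top J_Y\Delta y_i\), of size at most \(C\tfrac{s_1}{d}\,\delta\asymp d^{-\beta}L^{\alpha+1}\sqrt{n/d}\). This is at most \(L^{\alpha+1}\sqrt n/d\) because \(d^{1/2-\beta}\le1\).
\item The product \(t_j^\top(\mathcal B-\mathbb E\mathcal B)\Delta y_i\), bounded by \(\|\mathcal B-\mathbb E\mathcal B\|_{\op}\,\|\Delta\|_{\op}\). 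Here \(\|\mathcal B-\mathbb E\mathcal B\|_{\op}\lesssim\sqrt{ns_2}L^{\alpha+1}/d\) by matrix Bernstein, and \(\|\Delta\|_{\op}=o(1)\).
\end{itemize}
All the bounds above depend on \(Y\) only through the \(\mathcal E_Y\) conditions, so they are uniform over \(Y\in\mathcal E_Y\) and over \(i\le N\).

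I expect the \(\Delta\)-term of Step 2 to be the main obstacle. A crude bound \(\|M^{-1/2}G^\top\widetilde W\|_{\op}\,\|\Delta\|_{\op}\) loses a factor \(\sqrt n\), because \(\|\Sigma\Xi\|_{\op}\sim\sqrt n\) for \(\Xi\) with rows \(\xi_a^\top\). The fix is to split \(\mathcal B\) into its deterministic mean \(\tfrac{s_1}{d}J_Y\) and its fluctuation before multiplying by \(\Delta\), as done above. The remaining delicate point is running the truncation of \(S_a^{-1/2}\) uniformly in \(a\le d\) with only a logarithmic loss; that is exactly where \eqref{eq:EY-leading} with \(q\asymp L\) is needed.
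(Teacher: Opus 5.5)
Your reduction matches the paper's. With $P=(\widetilde W^\top\widetilde W)^{1/2}$ one checks $I+\Delta=P^{-1}\operatorname{pol}\bigl(P(d\Gamma)^{1/2}\bigr)$, so $W_C=\operatorname{pol}(\widetilde W)\operatorname{pol}(PM)$ with $M=C_Y^{1/2}(K^\top K)^{1/2}$, which is exactly the paper's factorization. The gap is the sentence ``the relative square-root estimate then yields $\|\Delta\|_{\op}\le C\delta$.''

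\emph{The cited estimate does not give this.} The estimate inside Lemma~\ref{appx:lem:lam-population-margin} takes the \emph{Frobenius} norm of the relative perturbation as input. Here $\|\widetilde W^\top\widetilde W-I\|_F$ is of order $n/\sqrt d$, since each of the $n^2$ entries fluctuates at scale $d^{-1/2}$; it is not of order $\sqrt{n/d}$. Used correctly, the estimate only gives $\|\Delta\|_{\op}\lesssim n/\sqrt d$, which is a polynomial factor $\sqrt n$ short of the claim.

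\emph{Swapping $F$ for $\op$ is not valid either.} Work in an eigenbasis of $\Gamma$, with $\lambda_j$ the eigenvalues of $\Gamma^{1/2}$. To first order, $\Delta\approx-\bigl(\lambda_j/(\lambda_j+\lambda_k)\bigr)_{jk}\circ(\widetilde W^\top\widetilde W-I)$. This Schur multiplier is not bounded on the operator norm uniformly in the conditioning: it tends to triangular truncation when the spectrum spreads over many scales. On $\mathcal E_Y$ the spectrum of $K^\top K$ does spread, from order $n^{-2\alpha}$ to order one.

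\emph{The missing ingredient and the fix.} You need an operator-norm polar perturbation bound that pays for the conditioning. The paper uses $\|\operatorname{pol}(PM)-I\|_{\op}\le C(1+\log\operatorname{cond}(M))\|P-I\|_{\op}$, proved from the Sylvester equation $\Omega H+H\Omega=DH-HD$ and its integral solution; here $1+\log\operatorname{cond}(M)\le CL$. To still reach $CL^{\alpha+1}\sqrt{n/d}$, you also need the sharper truncation that your own bound $S_a\ge cL^{1-2\alpha}$ already gives: $\|C_Y^{-1/2}Z_a\|_2^2\le CnL^{2\alpha-1}$, not $CnL^{2\alpha}$. Matrix Bernstein then yields $\delta\le CL^{\alpha}\sqrt{n/d}$, hence $\|\Delta\|_{\op}\le CL^{\alpha+1}\sqrt{n/d}$. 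Your row-norm claim survives unchanged, because $\|I+\Delta\|_{\op}\le(1-\delta)^{-1/2}$ holds with no conditioning factor.

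There is a secondary error in Step~2: $\|\mathcal B-\mathbb E\mathcal B\|_{\op}$ is not $O(\sqrt{ns_2}L^{\alpha+1}/d)$. Each leading row contributes a rank-one piece of norm of order $\sigma_a n/d$, so this fluctuation is of order $n/d$ up to logarithms. The third term is still negligible, but only because of the quantitative bound $\|\Delta\|_{\op}\lesssim L^{\alpha+1}\sqrt{n/d}$ together with $n^2/d\to0$ polynomially; $\|\Delta\|_{\op}=o(1)$ alone does not suffice. The paper avoids the split entirely. It bounds $|e_j^\top(m_i(W_C)-m_i(\widetilde W))|\le\|G(K^\top K)^{-1/2}t_j\|_2\,\|W_C-\widetilde W\|_{\op}\,\|y_i\|_2$ and uses $\max_{j\le N}\|G(K^\top K)^{-1/2}t_j\|_2\le C\sqrt{s_2L/d}$. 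This per-$j$ vector norm is what saves the factor $\sqrt n$ you were worried about.
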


\begin{proof}
We give the concentration, polar perturbation, and conditional-mean
steps separately. Throughout the proof $Y\in\mathcal E_Y$ is fixed.

\paragraph{Concentration of the proxy Gram matrix.}
Write
\[
Z_a=\frac{C_Y^{-1/2}\xi_a\xi_a^\top C_Y^{-1/2}}
 {\xi_a^\top K^\top K\xi_a},\qquad
\widetilde W^\top\widetilde W=\frac1d\sum_{a=1}^d Z_a,
\qquad \mathbb E_\xi Z_a=I.
\]
Choose $q=\lceil AL\rceil$ with $A$ a sufficiently large fixed constant;
in Lemma~\ref{appx:lem:lam-query-event}, take $A_*>2A$.
In an eigenbasis of $K^\top K$,
\[
\xi_a^\top K^\top K\xi_a
 \ge cq^{-2\alpha}\sum_{\ell=1}^q\xi_{a,\ell}^2.
\]
Let $\mathcal T_a$ be the event
$\|\xi_a\|^2\le Cn$ and $\sum_{\ell\le q}\xi_{a,\ell}^2\ge q/2$.
By choosing the fixed constants sufficiently large, Gaussian and
chi-square tails give
\[
\sup_{Y\in\mathcal E_Y}\mathbb P_\xi(\mathcal T_a^c\mid Y)
 \le e^{-20L}
\]
for all sufficiently large $n$. On $\mathcal T_a$,
\[
\xi_a^\top K^\top K\xi_a\ge cL^{1-2\alpha},\qquad
0\preceq Z_a\preceq CnL^{2\alpha-1}I.
\]
For $\widehat Z_a=Z_a\mathbf1_{\mathcal T_a}$, positivity gives
$\mathbb E\widehat Z_a\preceq I$ and
\[
\mathbb E\widehat Z_a^2
 \preceq CnL^{2\alpha-1}\mathbb E\widehat Z_a
 \preceq CnL^{2\alpha-1}I.
\]
Thus the centered truncated sum has variance parameter at most
$CdnL^{2\alpha-1}$ and summand norm at most $CnL^{2\alpha-1}$.
The inverse moments in Lemma~\ref{appx:lem:lam-response-moments} also give
\[
\mathbb E\|Z_a\|_{\op}^2
\le C\mathbb E\!\left[
\frac{\|\xi_a\|^4}{(\xi_a^\top K^\top K\xi_a)^2}\right]
\le Cn^2,
\]
so Cauchy--Schwarz controls the truncation bias:
\[
\|\mathbb E\widehat Z_a-I\|_{\op}\le Cn e^{-10L}.
\]
Apply matrix Bernstein to both signs of the centered truncated sum
\citep[Theorem~6.1]{tropp2012user}. Together with
$\mathbb P(\bigcup_{a\le d}\mathcal T_a^c\mid Y)\le de^{-20L}$, it gives
\[
\|\widetilde W^\top\widetilde W-I\|_{\op}
\le C\left(L^\alpha\sqrt{\frac nd}+L^{2\alpha}\frac nd\right)
\le CL^\alpha\sqrt{\frac nd}=:e_n
\]
with conditional probability tending to one uniformly in $Y$.
The last inequality uses $L^\alpha\sqrt{n/d}\to0$; the truncation bias
is negligible on this scale.

\paragraph{A relative polar estimate.}
For positive definite matrices $M,P$ with
$\delta=\|P-I\|_{\op}\le1/2$, we claim
\begin{equation}
\|\operatorname{pol}(PM)-I\|_{\op}
\le C\bigl(1+\log\operatorname{cond}(M)\bigr)\delta,
\label{eq:lam-relative-polar}
\end{equation}
where $\operatorname{cond}(M)=\lambda_{\max}(M)/\lambda_{\min}(M)$.
Here is a direct proof of the spectral-norm estimate.
Set $E=P-I$ and write the polar decomposition
\[
(I+tE)M=U(t)H(t),\qquad 0\le t\le1,
\]
where $U(t)$ is orthogonal and $H(t)$ is positive definite.
The factors are differentiable along this nonsingular path.
Put $\Omega(t)=U(t)^\top\dot U(t)$ and
\[
D(t)=U(t)^\top E(I+tE)^{-1}U(t).
\]
The matrix $D(t)$ is symmetric, because $E$ commutes with $I+tE$,
and $\|D(t)\|_{\op}\le2\delta$.
Differentiating the polar decomposition and subtracting its transpose
gives the Sylvester equation
\[
\Omega H+H\Omega=DH-HD.
\]
Its integral solution is
\[
\Omega=\int_0^\infty e^{-sH}(DH-HD)e^{-sH}\,ds.
\]
If $\mu=\lambda_{\min}(H)$ and $\Lambda=\lambda_{\max}(H)$, then
\[
\|\Omega\|_{\op}
\le2\|D\|_{\op}\int_0^\infty
 \|e^{-sH}\|_{\op}\|He^{-sH}\|_{\op}\,ds
\le C\|D\|_{\op}\bigl(1+\log(\Lambda/\mu)\bigr).
\]
For the last inequality, split the integral at $1/\Lambda$ and $1/\mu$:
on the first interval use $\|He^{-sH}\|_{\op}\le\Lambda$;
on the middle interval use $\|He^{-sH}\|_{\op}\le1/(es)$;
on the last interval the product is at most $\mu e^{-2s\mu}$.
Moreover,
\[
\frac{\Lambda}{\mu}
=\operatorname{cond}((I+tE)M)
\le\frac{1+\delta}{1-\delta}\operatorname{cond}(M)
\le3\operatorname{cond}(M).
\]
Since $U(0)=I$ and $\|\dot U(t)\|_{\op}=\|\Omega(t)\|_{\op}$,
integration over $t\in[0,1]$ proves \eqref{eq:lam-relative-polar}.

\paragraph{Application to $W_C$ and the row bound.}
The factorization
\[
\frac{\RN(G)}{\sqrt d}
=\widetilde W\,C_Y^{1/2}(K^\top K)^{1/2}
\]
is exact. On the proxy-Gram event, set
\[
P=(\widetilde W^\top\widetilde W)^{1/2},\qquad
V=\operatorname{pol}(\widetilde W),\qquad
M=C_Y^{1/2}(K^\top K)^{1/2}.
\]
Then $\widetilde W=VP$ and $V^\top V=I$.
By Lemma~\ref{appx:lem:lam-response-moments}, the two factors in $M$
commute, so $M$ is symmetric positive definite. Its eigenvalues lie
between $cn^{-\alpha}$ and $C$, and thus
$1+\log\operatorname{cond}(M)\le CL$.
Left multiplication by an isometry preserves the polar factorization,
so
\[
W_C=V\operatorname{pol}(PM),\qquad
\|P-I\|_{\op}\le Ce_n.
\]
Applying \eqref{eq:lam-relative-polar} yields
\[
\begin{split}
\|W_C-\widetilde W\|_{\op}
&=\|\operatorname{pol}(PM)-P\|_{\op}\\
&\le\|\operatorname{pol}(PM)-I\|_{\op}+\|P-I\|_{\op}\\
&\le CLe_n\le CL^{\alpha+1}\sqrt{\frac nd}.
\end{split}
\]
For the row bound, all $\mathcal T_a$ hold with conditional probability
at least $1-de^{-20L}$. On these events,
\[
\max_{a\le d}\|\widetilde W_{a,:}\|_2
\le CL^{\alpha-1/2}\sqrt{\frac nd}.
\]
The square matrix $\operatorname{pol}(PM)$ is orthogonal and
$V=\widetilde WP^{-1}$, with $\|P^{-1}\|_{\op}\le2$ for large $n$.
Therefore
\[
\|(W_C)_{a,:}\|_2=\|V_{a,:}\|_2
\le2\|\widetilde W_{a,:}\|_2
\le CL^{\alpha-1/2}\sqrt{\frac nd},
\]
which is stronger than the stated row bound.

\paragraph{Conditional-mean concentration.}
Fix a deterministic $i$ and write $b=C_Y^{-1/2}y_i$, so $\|b\|\le C$.
The contribution from row $a$ to coordinate $j$ of $m_i(\widetilde W)$ is
\[
\frac{\sigma_a}{d}f_{aj},\qquad
f_{aj}=\frac{(t_j^\top\xi_a)(b^\top\xi_a)}
 {\sqrt{\xi_a^\top K^\top K\xi_a}}.
\]
Its expectation is
$\mathbb E f_{aj}=t_j^\top H_YC_Y^{-1/2}y_i=t_j^\top J_Yy_i$.
Gaussian moments, $\|t_j\|\le1$, and the inverse-moment bound imply
\[
\mathbb E f_{aj}^2\le C
\]
uniformly in $a,j,i,Y$.
For each row, refine $\mathcal T_a$ by also requiring
\[
\max_{j\le N}|t_j^\top\xi_a|\le C\sqrt L,
\qquad |b^\top\xi_a|\le C\sqrt L.
\]
Gaussian tails and a union bound over $j$ allow the complement of this
refined event to have probability at most $e^{-20L}$, after increasing
fixed constants. On the refined event,
$|f_{aj}|\le CL^{\alpha+1/2}$.
The truncated variables remain independent across $a$; their total
weighted variance is at most $Cs_2/d^2$, and their centered summand
bound is $C\max_a\sigma_a L^{\alpha+1/2}/d$.
Their total bias is at most $Cs_1e^{-10L}/d$, which is negligible since
$s_1\le\sqrt{ds_2}$.
Scalar Bernstein and a union bound over $j\le N$ now give
\[
\begin{split}
\max_{j\le N}\left|e_j^\top m_i(\widetilde W)
 -\frac{s_1}{d}t_j^\top J_Yy_i\right|
&\le\frac Cd\left(\sqrt{s_2L}
 +\max_{a\le d}\sigma_a L^{\alpha+3/2}\right)\\
&\le C\frac{\sqrt{s_2}}dL^{\alpha+3/2}
\end{split}
\]
with conditional failure probability tending to zero. The constants
depend on $i$ only through $\|y_i\|\le2$, so the same failure bound
holds uniformly over deterministic $i\le N$.

Finally, the replacement error obeys
\[
|e_j^\top(m_i(W_C)-m_i(\widetilde W))|
\le\|G(K^\top K)^{-1/2}t_j\|_2\,
\|W_C-\widetilde W\|_{\op}\,\|y_i\|_2.
\]
Here
\[
\|G(K^\top K)^{-1/2}t_j\|_2^2
=\frac1d\sum_{a\le d}\sigma_a^2(\xi_a^\top t_j)^2.
\]
The weighted chi-square moment-generating-function bound, together with
$\|t_j\|\le1$, $\sum_a\sigma_a^4\le s_2^2$, and a union bound over $j$,
gives
\[
\max_{j\le N}\|G(K^\top K)^{-1/2}t_j\|_2
\le C\sqrt{\frac{s_2L}{d}}
\]
with conditional probability tending to one uniformly in $Y$.
Combining this with the operator-norm approximation and the proxy-mean
bound yields
\[
\left\|m_i(W_C)-\frac{s_1}{d}K(K^\top K)^{-1/2}J_Yy_i\right\|_\infty
\le C\frac{\sqrt{ns_2}}dL^{\alpha+3/2}
\le C\frac{\sqrt{ns_2}}dL^{\alpha+2},
\]
which proves the last assertion.
\end{proof}

\subsection{Condition on \((Y,G)\): residual score randomness}

Once $(Y,G)$ is fixed, any update $W$ measurable with respect to $(Y,G)$,
its conditional mean, and all query-side quantities are fixed.
The remaining randomness is the component of $X$ not revealed by
$G=\Sigma XK$. Define the retrieval score
\[
S_{ji}(W)=x_j^\top\Sigma Wy_i.
\]

\begin{appxlemma}[Exact conditional score law]
\label{appx:lem:lam-score-law}
Let $W$ be any measurable function of $(Y,G)$. For a fixed query $y_i$,
write $\mathbf S_i(W)=(S_{1i}(W),\ldots,S_{Ni}(W))^\top$.
Conditional on $(Y,G)$,
\[
\mathbf S_i(W)\sim N\!\left(
m_i(W),\frac{\|\Sigma Wy_i\|_2^2}{d}
\bigl[I_N-K(K^\top K)^{-1}K^\top\bigr]\right),
\]
where $m_i(W)=K(K^\top K)^{-1}G^\top Wy_i$.
\end{appxlemma}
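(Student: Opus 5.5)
The idea is to treat the score vector as a linear functional of the Gaussian matrix $X$ and condition on the linear statistic $G=\Sigma XK$. Since $X$ is independent of $Y$, we fix $Y$ throughout and work on the full-column-rank event for $K$. The scores are
\[
\mathbf S_i(W)=X^\top\Sigma Wy_i,
\]
so with $w:=\Sigma Wy_i\in\mathbb R^d$ we have $\mathbf S_i(W)=X^\top w$. The difficulty is that $w$ depends on $X$ through $G$. To handle this, decompose $X$ row by row, $X_{a,:}^\top\in\mathbb R^N$, as
\[
X_{a,:}^\top=\Pi X_{a,:}^\top+(I-\Pi)X_{a,:}^\top,\qquad \Pi:=K(K^\top K)^{-1}K^\top.
\]
Since $X_{a,:}^\top\sim N(0,I_N/d)$ is isotropic, the two pieces are independent Gaussians. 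The first piece is a function of $K^\top X_{a,:}^\top$, and hence of $G_{a,:}=\sigma_aX_{a,:}K$ because $\sigma_a>0$. The second piece, the residual $R_a:=(I-\Pi)X_{a,:}^\top$, is independent of $G$ and distributed as $N(0,(I-\Pi)/d)$. The rows of $X$ are independent, so the residuals $R_1,\ldots,R_d$ are jointly independent of $G$ given $Y$.

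Next, express the projected part through $G$. From $\Pi X_{a,:}^\top=K(K^\top K)^{-1}K^\top X_{a,:}^\top$ and $K^\top X_{a,:}^\top=G_{a,:}^\top/\sigma_a$, we get
\[
X^\top=K(K^\top K)^{-1}G^\top\Sigma^{-1}+R,\qquad R=[R_1,\ldots,R_d].
\]
Therefore
\[
\mathbf S_i(W)=K(K^\top K)^{-1}G^\top\Sigma^{-1}\Sigma Wy_i+Rw=m_i(W)+Rw,
\]
where the $\Sigma^{-1}\Sigma$ cancellation yields exactly the stated mean.

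Finally, condition on $(Y,G)$. Then $w$ is fixed, because $W$ is $(Y,G)$-measurable, and $R$ keeps its unconditional law by independence. Hence $Rw=\sum_a w_aR_a$ is Gaussian with mean zero and covariance
\[
\sum_a w_a^2(I-\Pi)/d=\frac{\|\Sigma Wy_i\|_2^2}{d}\,(I-\Pi).
\]
This gives the claimed law. The degenerate case $Wy_i=0$ is consistent, since both sides are then a point mass at $m_i(W)=0$.

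The only delicate step is the independence claim for the residual. It needs to be argued jointly, meaning that $R$ is independent of the whole $\sigma$-field generated by $(Y,G)$ and not just of each $G_{a,:}$ separately. This follows because, conditional on $Y$, the pairs $(\Pi X_{a,:}^\top,R_a)$ are independent across $a$, and within each row the Gaussian projections onto orthogonal subspaces are independent. The assumption that every $\sigma_a$ is positive is what makes $\Pi X_{a,:}^\top$ a function of $G$.
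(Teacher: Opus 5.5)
Your proof is correct and is essentially the paper's argument: the paper also splits each row as $X_{a,:}^\top=\frac{1}{\sqrt d}U\xi_a+x_{a,\perp}$ with $U=K(K^\top K)^{-1/2}$, where the first piece is exactly your $\Pi X_{a,:}^\top$ (recovered from $G_{a,:}$ because $\sigma_a>0$) and the residual is independent with covariance $d^{-1}(I_N-UU^\top)$; it then substitutes into $\mathbf S_i(W)=X^\top\Sigma Wy_i$ to read off the mean and covariance, as you do. One wording fix: $R$ is not independent of $Y$ itself, since its range and covariance depend on $\Pi$. The last step should therefore say that, conditionally on $Y$, $R$ is independent of $G$ and keeps its conditional law $N(0,(I-\Pi)/d)$ per column, which is what your row-by-row argument actually establishes.
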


\begin{proof}
Conditional on $Y$, put $U=K(K^\top K)^{-1/2}$.
For each answer-coordinate row,
\[
X_{a,:}^\top=\frac1{\sqrt d}U\xi_a+x_{a,\perp},\qquad
x_{a,\perp}=(I_N-UU^\top)X_{a,:}^\top.
\]
These two Gaussian components are independent, and the residual rows
are mutually independent. Given $(Y,G)$, the first components are fixed
because
\[
\xi_a=\sqrt d\,\sigma_a^{-1}(K^\top K)^{-1/2}G_{a,:}^\top.
\]
The residual rows retain mean zero and covariance
$d^{-1}(I_N-UU^\top)$.
Substituting their decomposition into
$\mathbf S_i(W)=X^\top\Sigma Wy_i$ gives the stated covariance.
For the mean, the observed component satisfies
\[
\frac1{\sqrt d}\sum_{a=1}^d\sigma_a U\xi_a(Wy_i)_a
=K(K^\top K)^{-1}G^\top Wy_i,
\]
which is the claimed formula.
\end{proof}

\begin{appxlemma}[Wrong-score maximum]
\label{appx:lem:lam-wrong-max}
Suppose $n\to\infty$ and $N/n\to\infty$.
Let $P$ be any rank-$n$ orthogonal projection in $\mathbb R^N$,
let $a\in\mathbb R^N$ satisfy $\|a\|_2\le C\sqrt n$ for a fixed $C$,
and let $z\sim N(0,I_N-P)$.
Put $\tau_n=\frac18\sqrt{\log(N/n)}$.
There is a deterministic sequence $\varepsilon_n\to0$, depending only
on $n,N,C$, such that for every $P,a,i$,
\[
\mathbb P\!\left(a_i+z_i>\max_{j\ne i}(a_j+z_j)\right)
\le\mathbf1\{|a_i|>\tau_n\}+\varepsilon_n.
\]
\end{appxlemma}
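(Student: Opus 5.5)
The plan is to reduce the probability to a statement about the maximum of many nearly independent Gaussians, using only that the total energy in the mean vector $a$ is $O(n)$ and that $z$ is a standard Gaussian in the orthogonal complement of an $n$-dimensional subspace. On the event $|a_i|\le\tau_n$, the correct score is $a_i+z_i$ with $z_i\sim N(0,1-P_{ii})$, so $a_i+z_i\le \tau_n+\sqrt{2\log(1/\epsilon)}$ with probability at least $1-\epsilon$. It therefore suffices to show that $\max_{j\ne i}(a_j+z_j)\ge 2\tau_n+\omega_n$ with probability $1-o(1)$ uniformly in $P,a,i$, for some slowly growing $\omega_n$. The constant $1/8$ in $\tau_n$ gives plenty of room: a maximum over roughly $N/n$ nearly independent standard Gaussians is of order $\sqrt{2\log(N/n)}$, which is far above $2\tau_n=\tfrac14\sqrt{\log(N/n)}$.

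\textbf{Step 1: choose good coordinates.} Discard the coordinates on which the mean or the projection is large. Since $\|a\|_2^2\le C^2n$, at most $C^2n/\tau_n^2$ indices have $a_j<-\tau_n$. Since $\operatorname{tr}P=n$, at most $2n$ indices have $P_{jj}>1/2$. Because $N/n\to\infty$, at least $N/2$ indices $j\ne i$ remain with $a_j\ge-\tau_n$ and $\operatorname{Var}(z_j)=1-P_{jj}\ge1/2$. Restricting to these indices, it remains to show that $\max_j z_j\ge 3\tau_n+\omega_n$ with high probability.

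\textbf{Step 2: lower-bound the Gaussian maximum.} Write $z=(I-P)g$ with $g\sim N(0,I_N)$, so $z_j=g_j-(Pg)_j$. The vector $Pg$ has only $n$ degrees of freedom: $\|Pg\|_2^2\sim\chi^2_n\le 2n$ with high probability. Hence at most $2n/\tau_n^2=o(N)$ coordinates satisfy $|(Pg)_j|>\tau_n$. The raw coordinates $g_j$ are i.i.d., and for $M\ge N/4$ of them, $\mathbb P(\max g_j\le t)\le(1-\bar\Phi(t))^{M}$. Taking $t=5\tau_n$ gives $\bar\Phi(t)\gtrsim (N/n)^{-25/128}/\sqrt{\log}$. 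So among the good indices the number with $g_j>5\tau_n$ is of order $N(N/n)^{-25/128}$, with exponentially small failure probability by a Chernoff bound. This count is much larger than $2n/\tau_n^2$, since $(N/n)^{1-25/128}\gg 1/\log$. Some such $j$ therefore also has $|(Pg)_j|\le\tau_n$, giving $a_j+z_j\ge -\tau_n+5\tau_n-\tau_n=3\tau_n$. This beats $a_i+z_i\le\tau_n+\sqrt{2\log(1/\epsilon)}$ once $\tau_n\to\infty$, and we then let $\epsilon\to0$ slowly.

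\textbf{Main obstacle.} The delicate step is the counting comparison in Step 2, which has to work for an arbitrary, adversarial rank-$n$ projection. We must be careful that the coordinates where $g_j$ is large are not correlated with the coordinates where $Pg$ is large. The argument avoids needing independence because it is a pure pigeonhole argument: the set where $|(Pg)_j|>\tau_n$ has size at most $2n/\tau_n^2$ deterministically on the $\chi^2$ event, regardless of its location. The set of large $g_j$ among good indices (a deterministic index set) is larger than this by a factor $(N/n)^{c}$. All failure probabilities, namely the $\chi^2_n$ tail, the Chernoff bound, and the $\epsilon$ for $z_i$, depend only on $n$, $N$ and $C$, which yields the deterministic sequence $\varepsilon_n\to0$.
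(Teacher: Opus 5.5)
Your proposal is correct and takes essentially the same route as the paper. Both write $z=(I-P)g$, control $\|Pg\|_2^2$ by a $\chi^2_n$ tail, and use an energy/pigeonhole count: an $O(n)$-energy perturbation cannot cancel the many coordinates of $g$ above a threshold of order $\sqrt{\log(N/n)}$, while $a_i+z_i$ is bounded by a single Gaussian tail. The only cosmetic difference is that the paper merges the two perturbations into $w=a-Pg$ with $\|w\|_2^2\le C'n$ and argues by contradiction at threshold $r=\sqrt{\log(N/n)}$, whereas you discard the indices with $a_j<-\tau_n$ deterministically and pigeonhole only against $Pg$.
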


\begin{proof}
Write $z=(I_N-P)g$, where $g\sim N(0,I_N)$, and set $w=a-Pg$.
Then $a+z=g+w$ and, since $\|Pg\|^2\sim\chi_n^2$,
$\|w\|^2\le C'n$ except with probability $e^{-cn}$.
Let $r=\sqrt{\log(N/n)}$ and
$B_N=\#\{j:g_j>r\}$. For all sufficiently large $n$, $r\ge1$ and
\[
\mathbb E B_N=N\mathbb P(g_1>r)\ge c\frac{\sqrt{Nn}}r.
\]
A Chernoff bound gives $B_N\ge\mathbb EB_N/2$ except with probability
$\exp(-c\sqrt{Nn}/r)$. On this event, if every coordinate of $g+w$
were below $r/2$, then
\[
\|w\|^2\ge\frac{r^2}4B_N\ge cr\sqrt{Nn}>C'n
\]
for all sufficiently large $n$, a contradiction.
Hence $\max_j(a_j+z_j)\ge r/2$ except on the two events just bounded.
If $|a_i|\le r/8$, then $a_i+z_i\le r/4$ except with probability at
most $e^{-r^2/128}$, because $\operatorname{Var}(z_i)\le1$.
Thus coordinate $i$ cannot be the maximum on the intersection of these
events. One may take, for sufficiently large $n$,
\[
\varepsilon_n=e^{-cn}+\exp(-c\sqrt{Nn}/r)+e^{-r^2/128}\to0.
\]
The argument does not assume independence of the coordinates of $z$,
or independence between $g$ and $w$.
\end{proof}

\begin{appxtheorem}[\Cref{thm:lam-separation} of the paper]
\label{appx:thm:lam-separation}
Fix \(\alpha>1\), \(1/2<\beta<1\), and a fixed deterministic positive sequence satisfying
\[
c_\sigma a^{-\beta}\le \sigma_a\le C_\sigma a^{-\beta}\,(a\ge1)
\]
for constants \(0<c_\sigma\le C_\sigma<\infty\). Fix \(0<\epsilon<(\beta-1/2)/2\) and \(\Gamma>1/(1-\beta)\). Let \(n\to\infty\), and let \(N=N(n)\) and \(d=d(n)\) be integer sequences satisfying, eventually,
\[
n^{\,1+(1-\beta+\epsilon)/\alpha}
\le N\le
n^{\,1+(1/2-\epsilon)/\alpha},
\,
n^{\,1/(1-\beta)}\le d\le n^\Gamma.
\]
Then, on the same realized dataset,
\[
\bigl(\Acc(W_C),\Acc(W_M),\Acc(W_G)\bigr)
\xrightarrow{\mathbb P}(1,0,0).
\]
\end{appxtheorem}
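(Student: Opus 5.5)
The proof assembles the preceding lemmas through the three-level conditioning: on $Y$, then on $(Y,G)$, then on the residual score randomness. Throughout we work on $\mathcal E_Y$ (Lemma~\ref{appx:lem:lam-query-event}), which has probability tending to one. It suffices to control the expected fraction of correct retrievals among indices $i\ge bN$ for a fixed small $b$, since the first $bN$ indices contribute at most $b$ to the accuracy and $b$ is arbitrary.

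By Lemma~\ref{appx:lem:lam-score-law}, conditional on $(Y,G)$ the score vector $\mathbf S_i(W)$ is Gaussian with mean $m_i(W)$ and covariance $\frac{\|\Sigma Wy_i\|_2^2}{d}(I_N-P)$, where $P$ has rank $n$. Strict retrieval of $i$ is invariant under positive rescaling of the score vector, so after normalizing we compare $a^{(i)}(W)+z$ with $z\sim N(0,I_N-P)$.

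\textbf{GD and Muon.} Both updates have column space in $\col(G)$. Corollary~\ref{appx:cor:lam-mean-energy}, on the high-probability event of Lemma~\ref{appx:lem:lam-baseline-X}, then gives $\|a^{(i)}\|_2\le C\sqrt n$ for every $i$. Lemma~\ref{appx:lem:lam-wrong-max} therefore bounds the conditional probability of retrieving $i$ by $\mathbf 1\{|a^{(i)}_i|>\tau_n\}+\varepsilon_n$, with $\tau_n\to\infty$. For GD, Lemma~\ref{appx:lem:lam-gd-correct} shows $|a_i^{(i)}|>\tau_n$ only with vanishing probability, uniformly over $i\ge bN$. Averaging over $i$ and taking expectations gives $\mathbb E\,\Acc\to0$, hence convergence to $0$ in probability by Markov. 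For Muon, Lemma~\ref{appx:lem:lam-muon-correct} directly gives that the fraction of $i$ with $|a_i^{(i)}|>u$ is at most $\delta$ with high probability, for any fixed $u$. Since $\tau_n>u$ eventually, the conditional expected accuracy is at most $\delta+\varepsilon_n$ plus the probability of the exceptional events.

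\textbf{Chain.} Here the goal is a correct score that beats every wrong score deterministically relative to the noise. Combining Lemma~\ref{appx:lem:lam-chain-X} with Lemma~\ref{appx:lem:lam-population-margin}, we get $m_{ii}(W_C)\ge \frac{s_1}{d}(c_bh-o(h))$, while $\max_{j\ne i}m_{ji}(W_C)\le \frac{s_1}{d}\varepsilon h$ up to an error $C\frac{\sqrt{ns_2}}dL^{\alpha+2}$. The latter is $o(s_1h/d)$ by the listed consequence $\sqrt{ns_2}L^{\alpha+2}/(s_1h)\to0$. The mean gap is therefore of order $s_1h/d$. For the noise, $\|\Sigma W_Cy_i\|_2^2/d\le \|\Sigma\|^2\|W_C\|_{\op}^2\|y_i\|^2/d\le C/d$, so each residual coordinate has standard deviation $O(d^{-1/2})$. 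A Gaussian union bound over $N$ coordinates makes the maximum residual fluctuation $O(\sqrt{L/d})$.

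The crux is verifying that $s_1h/d\gg\sqrt{L/d}$, i.e.\ $s_1h\gg\sqrt{dL}$. This uses $s_1\asymp d^{1-\beta}$ and the scaling $h\ge n^{-1/2+\epsilon}$ together with $d\ge n^{1/(1-\beta)}$. A naive bound on the residual by $\sqrt{d}$ times a uniform noise level is probably too lossy, and this is where I expect the main obstacle. The fix I would try first is to use the sharper per-index norm: with $\|(W_C)_{a,:}\|\le CL^{\alpha+1}\sqrt{n/d}$, we have $\|\Sigma W_Cy_i\|_2^2\lesssim\sum_a\sigma_a^2\,\|(W_C)_{a,:}\|^2\cdot O(L/n)$, giving a noise standard deviation of order $\sqrt{s_2}\,\mathrm{polylog}/d$. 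This is the same $\sqrt{ns_2}/d$-type scale already beaten by the margin. Making this rigorous requires $y_i$ to be roughly independent of the rows of $W_C$, which I would handle through the leave-one-out event $\mathcal E_{Y,-i}$ or simply by a high-probability bound $\max_i\|\Sigma W_Cy_i\|\le \|\Sigma W_C\|_F\cdot C\sqrt{L/n}$ via Hanson–Wright in $y_i$ and a union over $i$.

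With the noise controlled, for each fixed $\varepsilon<c_b/2$ every $i\ge bN$ outside a vanishing-probability set is retrieved. The uniform-in-$i$ statements of the lemmas then give $\mathbb E[1-\Acc(W_C)]\le b+o(1)$, and letting $b\downarrow0$ completes the argument.
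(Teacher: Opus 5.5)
\textbf{Overall.} Your architecture is the paper's, and most steps match it essentially line for line. For GD and Muon you use Corollary~\ref{appx:cor:lam-mean-energy} with Lemma~\ref{appx:lem:lam-wrong-max}, then Lemma~\ref{appx:lem:lam-gd-correct} or Lemma~\ref{appx:lem:lam-muon-correct}, average over $i\ge bN$, and apply Markov. For the chain's mean gap you combine Lemma~\ref{appx:lem:lam-population-margin} with Lemma~\ref{appx:lem:lam-chain-X}.

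\textbf{The gap is the chain's residual-noise bound}, which you yourself flagged as the crux and left unresolved.

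First, the operator-norm bound is not ``probably'' too lossy; it definitely fails. You need $s_1h/\sqrt{dL}\to\infty$, but $s_1h/\sqrt{dL}\asymp d^{1/2-\beta}h/\sqrt L\to0$, because $\beta>1/2$ and $h\to0$.

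Second, your fix via Hanson--Wright in $y_i$ with a union over $i$ is not valid as stated. Hanson--Wright needs $W_C$ to be independent of $y_i$. But $W_C=\operatorname{pol}(\RN(\Sigma XK))$ depends on $y_i$ through $K$, and once you condition on $Y$ there is no randomness left in $y_i$ at all. Your leave-one-out alternative would need a deletion-stability estimate for the polar factor of $\RN(G)$. The event $\mathcal E_{Y,-i}$ is purely query-side and provides nothing of the kind.

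\textbf{No independence is needed.} On the row-norm event of Lemma~\ref{appx:lem:lam-chain-X}, Cauchy--Schwarz row by row gives, deterministically,
\[
\frac{\|\Sigma W_Cy_i\|_2}{\sqrt d}\le\frac{\|y_i\|_2}{\sqrt d}\Bigl(\sum_{a\le d}\sigma_a^2\|(W_C)_{a,:}\|_2^2\Bigr)^{1/2}\le C\frac{\sqrt{ns_2}}{d}L^{\alpha+1}.
\]
This bounds the standard deviation of every centered score in Lemma~\ref{appx:lem:lam-score-law}. A Gaussian union bound over the $N$ coordinates then gives a maximal fluctuation of $C\sqrt{ns_2}L^{\alpha+3/2}/d$. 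That is $o(s_1h/d)$ by the scaling consequence $\sqrt{ns_2}L^{\alpha+2}/(s_1h)\to0$, which holds because $s_1\ge cn$ and $h\ge n^{-1/2+\epsilon}$.

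This is exactly the paper's argument. It is also the $\sqrt{ns_2}/d$ scale you already observed the margin beats. The extra $L/n$ factor you were trying to extract via independence is not needed. With this one-line replacement, your proposal is complete.
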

\begin{proof}
Let $\mathcal E_X$ denote
$\{\|\mathcal H\mathcal J^{-1}\mathcal H\|_{\op}\le Cn\}$.
Lemma~\ref{appx:lem:lam-query-event} and
Lemma~\ref{appx:lem:lam-baseline-X} give
\[
\mathbb P(\mathcal E_Y^c)\to0,\qquad
\sup_{Y\in\mathcal E_Y}\mathbb P_\xi(\mathcal E_X^c\mid Y)\to0.
\]
Both $W_G$ and $W_M$ have their columns in $\col(G)$.
On $\mathcal E_Y\cap\mathcal E_X$, the exact score law and
Corollary~\ref{appx:cor:lam-mean-energy} show that, for a nonzero query,
the scores normalized by $\|\Sigma Wy_i\|/\sqrt d$ have the law
\[
a^{(i)}(W)+z,\qquad \|a^{(i)}(W)\|_2\le C\sqrt n,\qquad
z\sim N\!\left(0,I_N-K(K^\top K)^{-1}K^\top\right).
\]
The query is zero with probability zero; under the zero-denominator
convention, such a query is not recovered in any case.
The projection $K(K^\top K)^{-1}K^\top$ has rank $n$.

For GD, Lemma~\ref{appx:lem:lam-wrong-max} bounds conditional recovery
probability by
$\mathbf1\{|a_i^{(i)}(W_G)|>\tau_n\}+\varepsilon_n$ on the good events.
For any fixed $u>0$, eventually $\tau_n\ge u$.
Lemma~\ref{appx:lem:lam-gd-correct} therefore implies, for every fixed
$b\in(0,1)$,
\[
\sup_{i\ge bN}\mathbb P\{i\text{ is recovered by }W_G\}\to0.
\]
Consequently
\[
\mathbb E[\Acc(W_G)]
\le b+\frac1N+
\sup_{i\ge bN}\mathbb P\{i\text{ is recovered by }W_G\}.
\]
Send $n\to\infty$ and then $b\downarrow0$. Markov's inequality gives
$\Acc(W_G)\xrightarrow{\mathbb P}0$.

For Muon, for any fixed $u>0$ and eventually $\tau_n\ge u$,
\[
\frac1N\#\{i:|a_i^{(i)}(W_M)|>\tau_n\}
\le\frac1N\#\{i:|a_i^{(i)}(W_M)|>u\}
\xrightarrow{\mathbb P}0
\]
by Lemma~\ref{appx:lem:lam-muon-correct} and
$\mathbb P(\mathcal E_Y^c)\to0$.
This fraction lies in $[0,1]$, so its expectation also tends to zero.
Averaging Lemma~\ref{appx:lem:lam-wrong-max} over $i$ on
$\mathcal E_Y\cap\mathcal E_X$, and adding the probabilities of the
complementary events, yields
\[
\mathbb E[\Acc(W_M)]\to0,\qquad
\Acc(W_M)\xrightarrow{\mathbb P}0.
\]

For the chain, fix $b\in(0,1)$ and a deterministic $i\ge bN$.
Lemma~\ref{appx:lem:lam-population-margin} implies that, conditional on
$\mathcal E_Y$, with probability tending to one uniformly over such $i$,
\[
t_i^\top J_Yy_i\ge c_bh,\qquad
\max_{j\ne i}|t_j^\top J_Yy_i|\le\frac{c_b}4h.
\]
Fix a query realization satisfying this margin.
Lemma~\ref{appx:lem:lam-chain-X} then implies
\[
m_{ii}(W_C)-\max_{j\ne i}m_{ji}(W_C)
\ge\frac{c_b}2\frac{s_1}{d}h
\]
except on an observed-gradient event whose conditional probability
tends to zero uniformly in $Y$ and deterministic $i$.
Indeed, the ratio of its mean-approximation error to $(s_1/d)h$ is
\[
\frac{\sqrt{ns_2}\,L^{\alpha+2}}{s_1h}
\le Cn^{-\epsilon}L^{\alpha+2}\to0.
\]
Intersect this event with the row-norm event in the same lemma, whose
conditional failure probability also tends to zero uniformly in $Y$.
On that intersection,
\[
\frac{\|\Sigma W_Cy_i\|_2}{\sqrt d}
\le\frac{\|y_i\|_2}{\sqrt d}
 \left(\sum_{a\le d}\sigma_a^2\|(W_C)_{a,:}\|_2^2\right)^{1/2}
\le C\frac{\sqrt{ns_2}}dL^{\alpha+1}.
\]
Now condition on $(Y,G)$.
Lemma~\ref{appx:lem:lam-score-law} shows that every centered score is
Gaussian with variance bounded by the square of the last display.
A union bound over $j\le N$ gives
\[
\mathbb P\!\left(
\max_{j\le N}|S_{ji}(W_C)-m_{ji}(W_C)|
>C\frac{\sqrt{ns_2}}dL^{\alpha+3/2}
\ \middle|\ Y,G\right)\to0
\]
uniformly over the stated good events.
The fluctuation threshold is $o((s_1/d)h)$ by the same scaling
calculation. Therefore the correct score is strictly greater than all
wrong scores with probability tending to one.
Combining the query, observed-gradient, and residual-score bounds yields
\[
\sup_{i\ge bN}\mathbb P\{i\text{ is not recovered by }W_C\}\to0.
\]
This is a bound uniform over deterministic $i$, not a claim that a
single union bound recovers every query.
It suffices because
\[
\mathbb E[1-\Acc(W_C)]
\le b+\frac1N+
\sup_{i\ge bN}\mathbb P\{i\text{ is not recovered by }W_C\}.
\]
Send $n\to\infty$ and then $b\downarrow0$, and use Markov's inequality
to obtain $\Acc(W_C)\xrightarrow{\mathbb P}1$.

All three updates are defined on the same realized dataset.
A union bound over their three accuracy deviation events proves
\[
\bigl(\Acc(W_C),\Acc(W_M),\Acc(W_G)\bigr)
\xrightarrow{\mathbb P}(1,0,0).
\]
Equivalently, for every fixed $\delta\in(0,1)$,
\[
\mathbb P\!\left\{\Acc(W_C)\ge1-\delta,\quad
\max\{\Acc(W_M),\Acc(W_G)\}\le\delta\right\}\to1.
\]
\end{proof}

\newpage
\section{Appendix: Proofs in \cref{sec:nav}}
\label[appendix]{sec:tcproof}

\begin{proof}[Proof of \Cref{prop:tensorchain} of the paper]
Write
\[
\mathcal T
=
\mathcal A^{\circ k}
\circ \mathrm{NS}_{\mathrm{slice}}
\circ \mathcal A^{\circ k},
\qquad
\mathcal A
=
\mathrm{AN}_{\mathrm{layer}}
\circ \mathrm{AN}_{\mathrm{col}}
\circ \mathrm{AN}_{\mathrm{row}} .
\]

We first verify the chained-LMO representation.
For an axis \(a\in\{\mathrm{layer},\mathrm{row},\mathrm{col}\}\), let
\(\mathcal F_a\) denote the fibers along that axis and define
\[
\mathcal K_a
=
\left\{
Z:\|Z_f\|_2\le 1
\ \text{for every }f\in\mathcal F_a
\right\}.
\]
Since the Frobenius inner product decomposes over fibers,
\[
\max_{Z\in\mathcal K_a}\langle M,Z\rangle
=
\sum_{f\in\mathcal F_a}\|M_f\|_2,
\]
and a maximizer is obtained by normalizing every nonzero fiber independently.
Hence \(\mathrm{AN}_a\) is, up to the conventional sign, an exact LMO over
\(\mathcal K_a\).

Similarly, with
\[
\mathcal K_{\mathrm{slice}}
=
\left\{
Z:\|Z_{\ell,:,:}\|_{\mathrm{op}}\le 1
\ \text{for every slice }\ell
\right\},
\]
we have
\[
\max_{Z\in\mathcal K_{\mathrm{slice}}}\langle M,Z\rangle
=
\sum_\ell \|M_{\ell,:,:}\|_*,
\]
whose maximizer is obtained by applying the matrix sign independently to
each slice. Thus every constituent of \(\mathcal T\) is an exact LMO, and
\(\mathcal T\) is a chained LMO.

It remains to show that the composition is not itself an LMO. For
\(G\in\mathbb R^{m\times n}\), let
\[
\iota(G)_{1,:,:}=G,
\qquad
\iota(G)_{\ell,:,:}=0
\quad (\ell>1).
\]
Row- and column-axis normalization preserve the signs and zero pattern of
the nonzero slice. The subsequent layer-axis normalization acts on fibers
of the form \((c,0,\ldots,0)\), and therefore
\[
\mathcal A(\iota(G))
=
\iota(\mathrm{Sgn}(G)).
\]
Consequently, for every \(k\ge1\),
\[
\mathcal A^{\circ k}(\iota(G))
=
\iota(\mathrm{Sgn}(G)),
\]
and hence
\[
\mathcal T(\iota(G))
=
\iota\!\left(
\mathrm{Sgn}\circ\mathrm{msign}\circ\mathrm{Sgn}(G)
\right).
\]

By \Cref{appx:lem:mixed-sign}, for every \(m,n\ge5\) there exists
\(G\in\mathbb R^{m\times n}\) such that
\[
\left\langle
G,
\mathrm{Sgn}\circ\mathrm{msign}\circ\mathrm{Sgn}(G)
\right\rangle
<0.
\]
Hence
\[
\langle \iota(G),\mathcal T(\iota(G))\rangle<0.
\]
Thus TensorChain admits a negative-alignment input, and by \Cref{lem:alignment}, it is not a LMO of any fixed set. A smooth counterexample where TensorChain optimization diverges can be found by using \Cref{prop:divergence}.
\end{proof}

\newpage
\section{Appendix: Replacing NS with Cheaper Chains}
\label[appendix]{app:ns_replacement}

\begin{figure}[htbp!]
\centering
\begin{subfigure}[b]{0.47\linewidth}
    \centering
    \includegraphics[width=\linewidth]{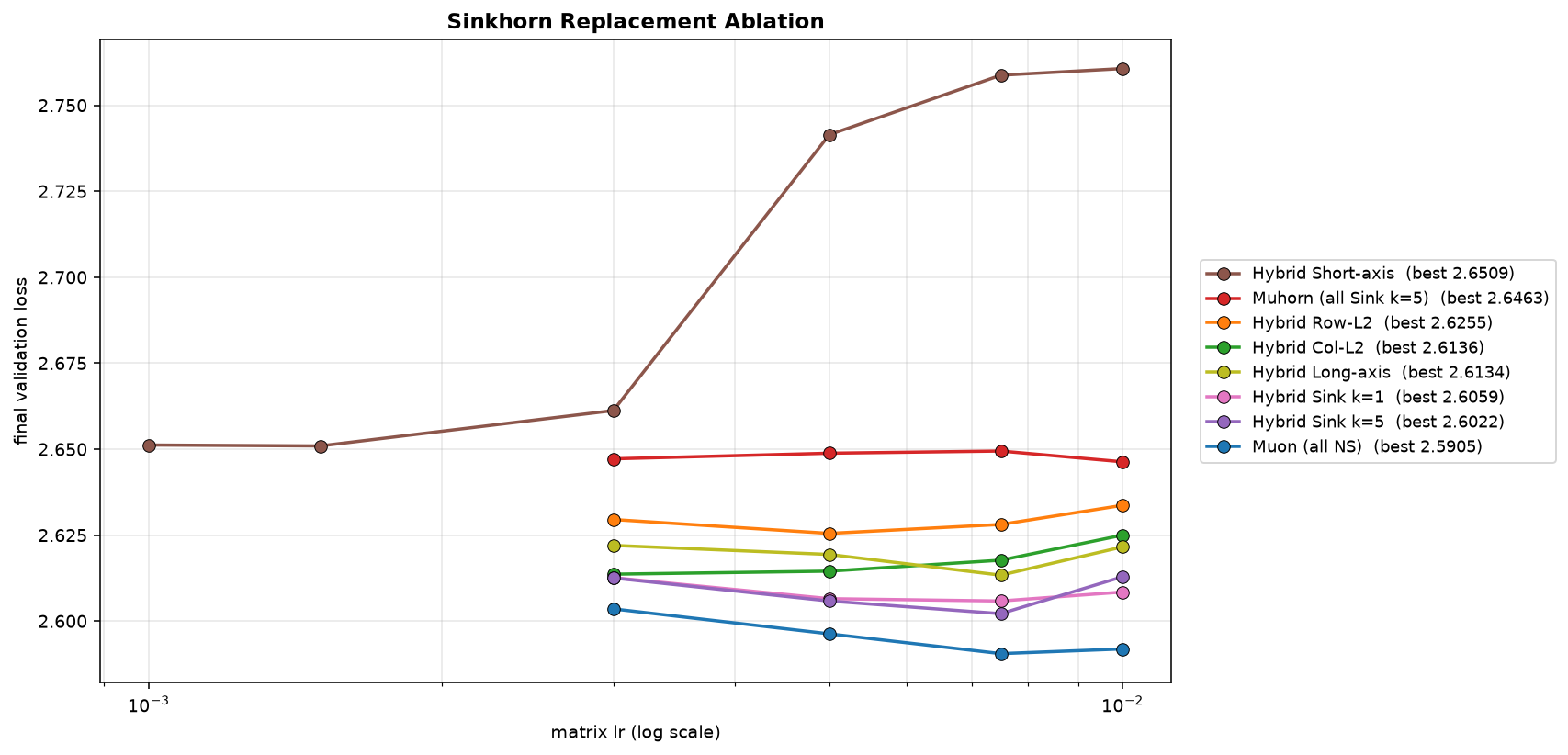}
    \caption{FFN replacement ablation.}
    \label{fig:ns_replace_ffn}
\end{subfigure}\hfill
\begin{subfigure}[b]{0.47\linewidth}
    \centering
    \includegraphics[width=\linewidth]{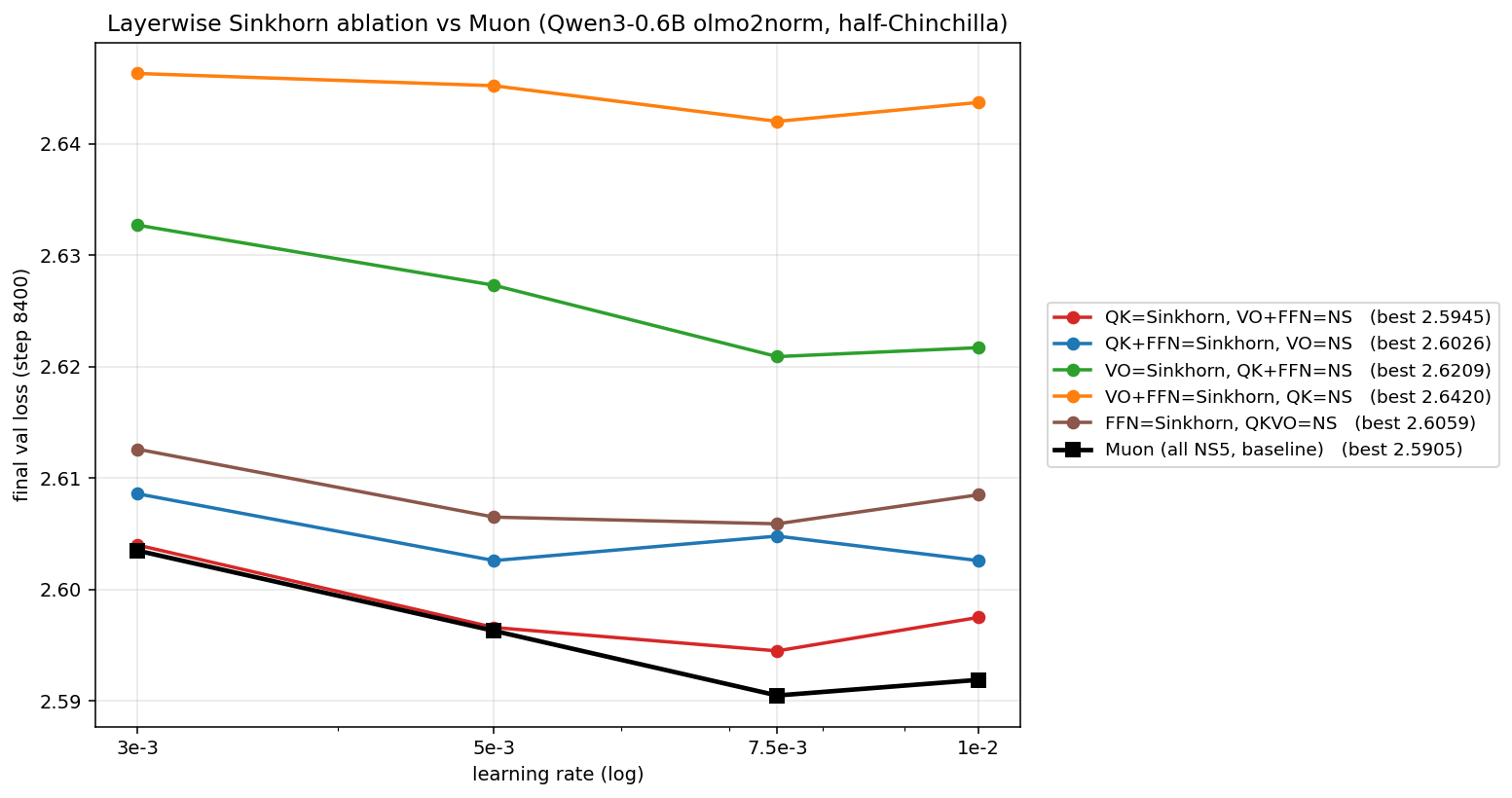}
    \caption{Layerwise ablation.}
    \label{fig:ns_replace_weights}
\end{subfigure}
\caption{
Replacing Newton--Schulz with cheaper axis-wise normalizations on Qwen3-0.6B after 8.8B training tokens.
(a) NS5 is retained on $Q,K,V,O$ while the FFN uses the indicated replacement.
(b) A single row--column equilibration sweep is applied to the indicated parameter classes, with NS5 retained elsewhere.
}
\label{fig:ns_replacement}
\end{figure}

\paragraph{Experimental setup.}
We study whether inexpensive axis-wise normalizations can replace the
Newton--Schulz (NS) step of Muon on the \texttt{olmo2norm} Qwen3-0.6B
architecture. We use QK normalization, unfused QKV projections, Kaiming
initialization, tied input/output embeddings, variable-length attention,
selective activation checkpointing, and compilation.
Training and validation use \texttt{fineweb\_edu\_350bt} and
\texttt{fineweb\_edu\_350bt\_eval}, respectively.
We train for 8.8B tokens ($1\times$ Chinchilla) with sequence length 4096,
global batch size 256 (1.05M tokens), and 8401 optimization steps.
All runs use one node with 8 GPUs under replicated data parallelism and
seed 42; validation is performed every 200 steps on one batch.
Checkpoints are retained and training metrics are logged to TensorBoard.

We use 10\% learning-rate warmup followed by linear decay to zero, matrix
weight decay $0.1$, momentum coefficient $\beta=0.95$, and AdamW learning
rate $8\times10^{-3}$ for non-matrix parameters.
All matrix updates use the same \texttt{s\_mu\_scale=max} scaling,
$0.2\sqrt{\max(\mathrm{fan\_in},\mathrm{fan\_out})}$.
Muon uses five Newton--Schulz iterations (NS5).
For each variant, we sweep the matrix learning rate over
$\{3,5,7.5,10\}\times10^{-3}$ and report the best validation loss.
For short-axis normalization, whose best point lies at the lower boundary of
the sweep, we additionally evaluate $1\times10^{-3}$ and
$1.5\times10^{-3}$.

\paragraph{Replacing NS on the FFN.}
Our primary motivation is computational: the large FFN matrices account for
the largest share of the NS computation, making them a natural target for a
cheaper replacement.
We therefore retain NS5 on the attention projections $Q,K,V,O$ and replace
NS on the FFN matrices $W_1,W_2,W_3$.

We compare row and column $\ell_2$ normalization, normalization along the
long or short matrix axis, and alternating row--column equilibration.
For the latter, we test both a single sweep ($k=1$) and five repeated sweeps
($k=5$).
We additionally evaluate row normalization and repeated row--column
equilibration as replacements for NS on all matrix parameters.
Among the FFN replacements, row--column equilibration performs best, while
increasing from one to five sweeps yields only a small improvement.
We therefore fix $k=1$ for the subsequent weight-wise ablation.

\paragraph{Which weights can use the cheaper replacement?}
Having identified a single row--column sweep as the strongest inexpensive
replacement, we next vary which parameter classes use it while retaining
NS5 on all remaining matrices.
We consider the FFN alone, $Q,K$ alone, $V,O$ alone, and combinations of
these groups.
The strongest configuration replaces NS only on $Q,K$, reaching validation
loss $2.5945$ compared with $2.5905$ for Muon.
Replacing additional parameter classes consistently degrades performance,
and no configuration outperforms Muon.

\begin{table}[t]
\centering
\small
\caption{
Selected Newton--Schulz replacement results on Qwen3-0.6B after
8.8B training tokens. Each entry reports the best result over the matrix
learning-rate sweep. $\Delta$ denotes the difference from Muon.
}
\label{tab:ns_replacement}
\begin{tabular}{lcc}
\toprule
Variant & Val.\ loss & $\Delta$ \\
\midrule
Muon (NS5 on all matrices) & 2.5905 & -- \\
\midrule
\multicolumn{3}{l}{\emph{NS5 on QKVO, replacement on FFN}}\\
Row--column, $k=5$ & 2.6022 & +0.012 \\
Row--column, $k=1$ & 2.6059 & +0.015 \\
Long-axis $\ell_2$ & 2.6134 & +0.023 \\
Column $\ell_2$ & 2.6136 & +0.023 \\
Row $\ell_2$ & 2.6255 & +0.035 \\
Short-axis $\ell_2$ & 2.6509 & +0.060 \\
\midrule
\multicolumn{3}{l}{\emph{Row--column ($k=1$) on listed weights, NS5 elsewhere}}\\
$Q,K$ & 2.5945 & +0.004 \\
$Q,K$, FFN & 2.6026 & +0.012 \\
FFN & 2.6059 & +0.015 \\
$V,O$ & 2.6209 & +0.030 \\
$V,O$, FFN & 2.6420 & +0.052 \\
All matrices (row--column, $k=5$) & 2.6463 & +0.056 \\
\bottomrule
\end{tabular}
\end{table}

Overall, cheap axis-wise normalization can approach NS when used selectively,
but none of the tested replacements matches Muon.
The small difference between one and five row--column sweeps also suggests
that simply iterating equilibration further does not close the gap to
spectral normalization.





\clearpage
\section{Appendix: Raw Experiment Results}
\label[appendix]{app:rawresult}

This section provides the complete experimental record underlying the comparisons in the main text. We report the full Qwen3-0.6B baseline sweeps under WSD and cosine schedules, the Muon learning-rate sweeps used for tuning, the full train/validation trajectories and endpoint losses, the TensorChain grouping ablation, and the cosine token-efficiency analysis. Unless otherwise stated, the experimental configuration follows \Cref{sec:setup}. We retain the cosine results in addition to the main WSD experiments to document schedule-dependent differences in optimizer ranking and learning-rate sensitivity.

\subsection{Full Qwen3-0.6B baseline comparison}

The two tables below report the complete common learning-rate grids for the Qwen3-0.6B, 1M-token-batch, $1\times$ Chinchilla baseline comparison. TensorChain grouping variants are reported separately in \Cref{app:grouping-results}. For AdaMuon, the learning rate grid was insufficient so we ran two additional experiments, $lr = 27e-3$ and $40e-3$ and could find that the optimal learning rate is $27e-3$ with val loss 2.8405, which is strictly worse than TensorChain's optimal val loss 2.8379.
\begin{table}[H]
\centering
\scriptsize
\caption{Full Qwen3-0.6B, 1M-token-batch, $1\times$ Chinchilla baseline sweep under WSD. Entries are final validation losses; methods are ordered by their best validation loss, and bold marks the best learning rate for each method.}
\label{tab:raw-full-wsd}
\setlength{\tabcolsep}{3.2pt}
\resizebox{\textwidth}{!}{%
\begin{tabular}{lccccc}
\toprule
Method & $1\times10^{-3}$ & $2\times10^{-3}$ & $3\times10^{-3}$ & $5\times10^{-3}$ & $8\times10^{-3}$ \\
\midrule
TensorChain     & 2.9089 & 2.9002 & \textbf{2.9000} & 2.9040 & 2.9131 \\
Muon$^2$        & 2.9171 & 2.9106 & \textbf{2.9104} & 2.9157 & 2.9260 \\
Aurora-LN       & 2.9204 & 2.9128 & \textbf{2.9116} & 2.9130 & 2.9225 \\
AdaMuon         & 2.9290 & 2.9156 & \textbf{2.9142} & 2.9188 & 2.9294 \\
MuonEq          & 2.9235 & 2.9163 & \textbf{2.9151} & 2.9186 & 2.9278 \\
MuonEq-diag     & 2.9258 & 2.9184 & \textbf{2.9167} & 2.9211 & 2.9305 \\
Muon+           & 2.9279 & 2.9203 & \textbf{2.9180} & 2.9234 & 2.9334 \\
Aurora          & 2.9280 & 2.9196 & \textbf{2.9184} & 2.9238 & 2.9335 \\
Muon            & 2.9299 & 2.9224 & \textbf{2.9201} & 2.9251 & 2.9354 \\
NorMuon         & 2.9295 & 2.9214 & \textbf{2.9201} & 2.9256 & 2.9357 \\
\bottomrule
\end{tabular}
}
\end{table}
\begin{table}[H]
\centering
\scriptsize
\caption{Full Qwen3-0.6B, 1M-token-batch, $1\times$ Chinchilla baseline sweep under cosine decay. Entries are final validation losses; methods are ordered by their best validation loss, and bold marks the best learning rate for each method.}
\label{tab:raw-full-cos}
\setlength{\tabcolsep}{3.2pt}
\resizebox{\textwidth}{!}{%
\begin{tabular}{lccccc}
\toprule
Method & $5\times10^{-3}$ & $8\times10^{-3}$ & $12\times10^{-3}$ & $16\times10^{-3}$ & $20\times10^{-3}$ \\
\midrule
TensorChain     & 2.8501 & 2.8409 & \textbf{2.8379} & 2.8383 & 2.8399 \\
AdaMuon         & 2.8674 & 2.8609 & 2.8568 & 2.8519 & \textbf{2.8458} \\
Aurora-LN       & 2.8614 & 2.8520 & \textbf{2.8474} & 2.8476 & 2.8482 \\
Muon$^2$        & 2.8626 & 2.8543 & \textbf{2.8497} & 2.8498 & 2.8515 \\
MuonEq          & 2.8675 & 2.8578 & \textbf{2.8540} & 2.8548 & 2.8563 \\
MuonEq-diag     & 2.8708 & 2.8624 & 2.8572 & \textbf{2.8562} & 2.8567 \\
Aurora          & 2.8739 & 2.8658 & \textbf{2.8624} & 2.8627 & 2.8646 \\
Muon+           & 2.8739 & 2.8663 & 2.8631 & \textbf{2.8630} & 2.8684 \\
Muon            & 2.8756 & 2.8680 & \textbf{2.8639} & 2.8645 & 2.8659 \\
NorMuon         & 2.8767 & 2.8685 & \textbf{2.8650} & 2.8652 & 2.8657 \\
\bottomrule
\end{tabular}
}
\end{table}

\begin{figure}[H]
\centering
\begin{subfigure}[b]{0.48\linewidth}
    \centering
    \includegraphics[width=\linewidth]{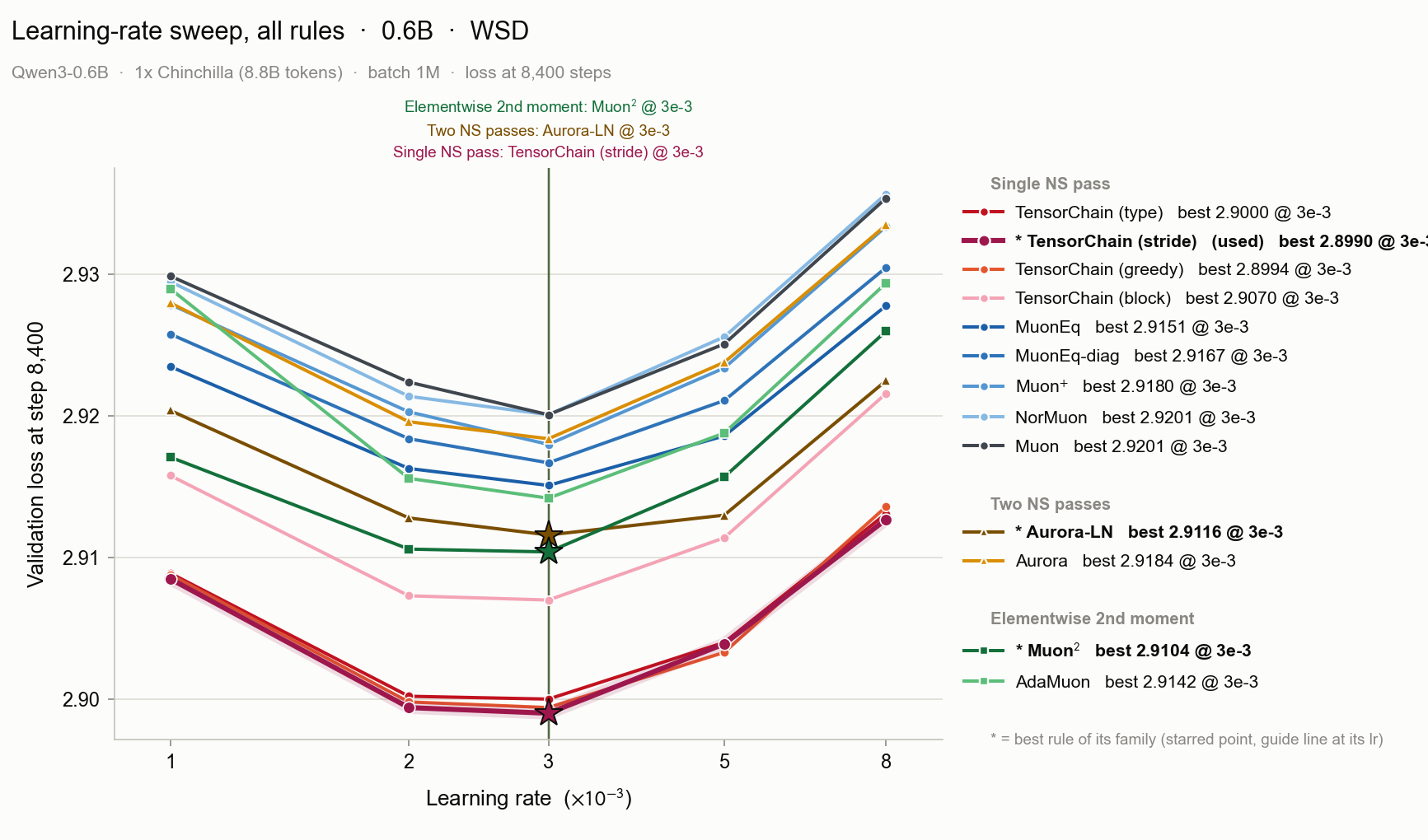}
    \caption{WSD learning rate tuning loss.}
\end{subfigure}\hfill
\begin{subfigure}[b]{0.48\linewidth}
    \centering
    \includegraphics[width=\linewidth]{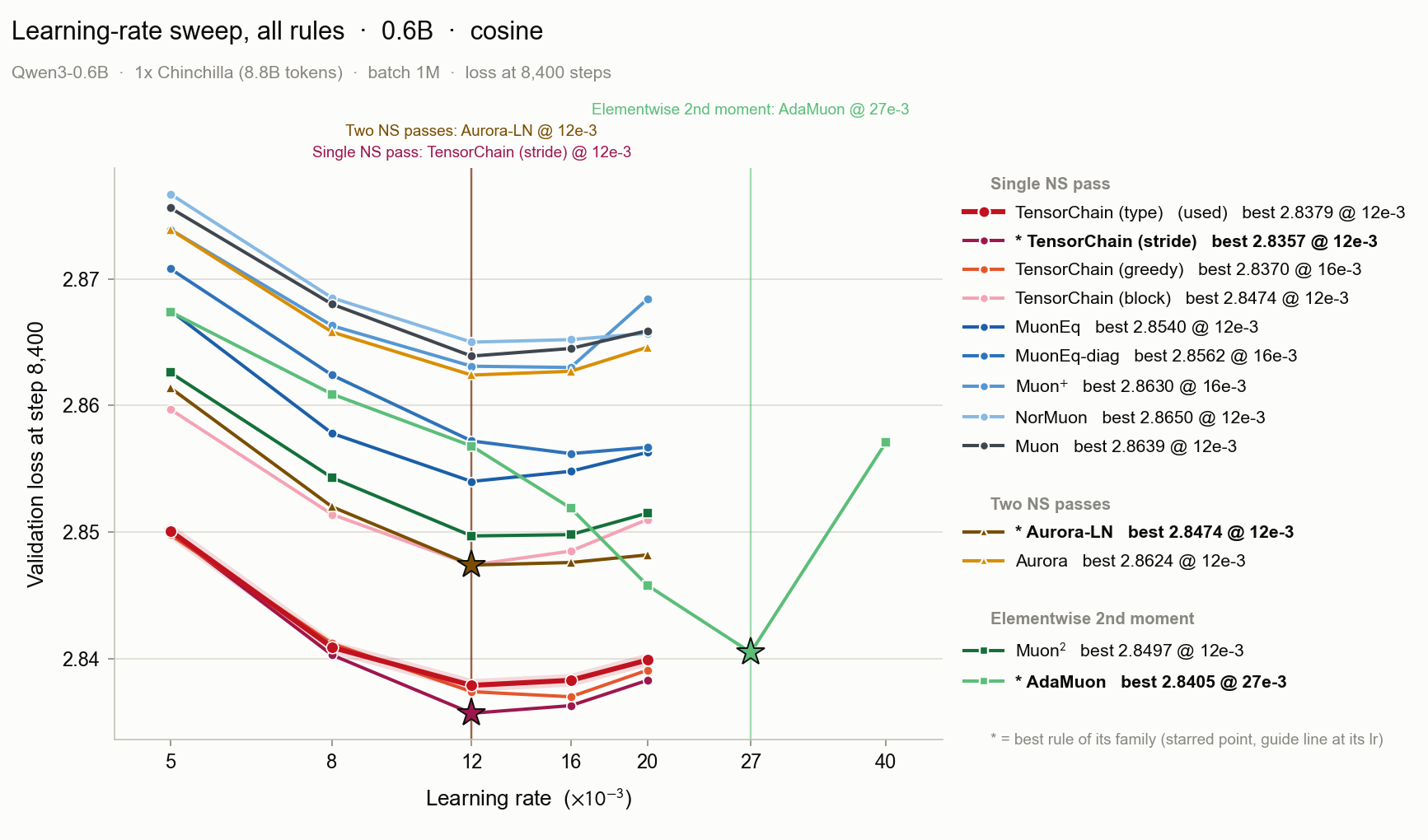}
    \caption{COS learning rate tuning}
\end{subfigure}
\caption{Full Qwen3-0.6B baseline comparison.}
\label{fig:raw-06b-full}
\end{figure}

\subsection{Muon learning-rate sweeps}

We report each tuning setup separately for Muon and use it for all baselines. The results are below. 
\begin{table}[H]
\centering
\small
\caption{Muon learning-rate sweep for Qwen3-0.6B, 1M-token batch, $1\times$ Chinchilla under WSD.}
\label{tab:lr-wsd-06b}
\setlength{\tabcolsep}{6pt}
\begin{tabular}{lccccc}
\toprule
Matrix LR
& $1\times10^{-3}$
& $2\times10^{-3}$
& $3\times10^{-3}$
& $5\times10^{-3}$
& $8\times10^{-3}$ \\
\midrule
Final val.\ loss
& 2.9299
& 2.9224
& \textbf{2.9201}
& 2.9251
& 2.9354 \\
\bottomrule
\end{tabular}
\end{table}

\begin{table}[H]
\centering
\small
\caption{Muon learning-rate sweep for Qwen3-1.7B, 4M-token batch, $1\times$ Chinchilla under WSD.}
\label{tab:lr-wsd-17b}
\setlength{\tabcolsep}{6pt}
\begin{tabular}{lccccc}
\toprule
Matrix LR
& $0.5\times10^{-3}$
& $1\times10^{-3}$
& $2\times10^{-3}$
& $4\times10^{-3}$
& $8\times10^{-3}$ \\
\midrule
Final val.\ loss
& 2.6878
& 2.6672
& 2.6523
& \textbf{2.6493}
& 2.6589 \\
\bottomrule
\end{tabular}
\end{table}

\begin{table}[H]
\centering
\small
\caption{Muon learning-rate sweep for Qwen3-0.6B, 1M-token batch, $1\times$ Chinchilla under cosine decay.}
\label{tab:lr-cos-06b-1m}
\setlength{\tabcolsep}{5pt}
\begin{tabular}{lcccccccc}
\toprule
Matrix LR
& $1\times10^{-3}$
& $2\times10^{-3}$
& $3\times10^{-3}$
& $5\times10^{-3}$
& $8\times10^{-3}$
& $12\times10^{-3}$
& $16\times10^{-3}$
& $20\times10^{-3}$ \\
\midrule
Final val.\ loss
& 2.9201
& 2.8993
& 2.8857
& 2.8756
& 2.8680
& \textbf{2.8639}
& 2.8645
& 2.8659 \\
\bottomrule
\end{tabular}
\end{table}

\begin{table}[H]
\centering
\small
\caption{Muon learning-rate sweep for Qwen3-0.6B, 4M-token batch, $1\times$ Chinchilla under cosine decay.}
\label{tab:lr-cos-06b-4m}
\setlength{\tabcolsep}{6pt}
\begin{tabular}{lcccccc}
\toprule
Matrix LR
& $8\times10^{-3}$
& $12\times10^{-3}$
& $18\times10^{-3}$
& $27\times10^{-3}$
& $40\times10^{-3}$
& $60\times10^{-3}$ \\
\midrule
Final val.\ loss
& 2.9650
& 2.9568
& 2.9516
& 2.9462
& \textbf{2.9459}
& 2.9468 \\
\bottomrule
\end{tabular}
\end{table}

\begin{table}[H]
\centering
\small
\caption{Muon learning-rate sweep for Qwen3-1.7B, 1M-token batch, $1\times$ Chinchilla under cosine decay.}
\label{tab:lr-cos-17b-1m}
\setlength{\tabcolsep}{6pt}
\begin{tabular}{lccccc}
\toprule
Matrix LR
& $4\times10^{-3}$
& $6\times10^{-3}$
& $9\times10^{-3}$
& $13.5\times10^{-3}$
& $20\times10^{-3}$ \\
\midrule
Final val.\ loss
& 2.6019
& \textbf{2.5974}
& 2.5987
& 2.6042
& 2.6139 \\
\bottomrule
\end{tabular}
\end{table}

\begin{table}[H]
\centering
\small
\caption{Muon learning-rate sweep for Qwen3-1.7B, 4M-token batch, $1\times$ Chinchilla under cosine decay.}
\label{tab:lr-cos-17b-4m}
\setlength{\tabcolsep}{6pt}
\begin{tabular}{lccccc}
\toprule
Matrix LR
& $5\times10^{-3}$
& $8\times10^{-3}$
& $10\times10^{-3}$
& $15\times10^{-3}$
& $20\times10^{-3}$ \\
\midrule
Final val.\ loss
& 2.6228
& 2.6120
& 2.6091
& \textbf{2.6047}
& 2.6049 \\
\bottomrule
\end{tabular}
\end{table}

\subsection{WSD training and validation trajectories}

The following two tables aggregate the final losses across WSD setups. Since all methods within a setup use the same selected learning rate, the learning rate is omitted here.

\begin{table}[H]
\centering
\small
\caption{Final training loss for the WSD trajectory experiments. Bold marks the best method in each row.}
\label{tab:wsd-train-endpoints}
\setlength{\tabcolsep}{4pt}
\resizebox{\textwidth}{!}{%
\begin{tabular}{lcccccc}
\toprule
Setup & Muon & Aurora & MuonEq & Muon$^2$ & Aurora-LN & TensorChain \\
\midrule
0.6B, 1M, $1\times$ & 2.9446 & 2.9437 & 2.9397 & 2.9350 & 2.9362 & \textbf{2.9249} \\
1.7B, 4M, $4\times$ & 2.5254 & 2.5244 & 2.5226 & 2.5208 & 2.5188 & \textbf{2.5181} \\
\bottomrule
\end{tabular}
}
\end{table}

\begin{table}[H]
\centering
\small
\caption{Final validation loss for the WSD trajectory experiments. Bold marks the best method in each row.}
\label{tab:wsd-val-endpoints}
\setlength{\tabcolsep}{4pt}
\resizebox{\textwidth}{!}{%
\begin{tabular}{lcccccc}
\toprule
Setup & Muon & Aurora & MuonEq & Muon$^2$ & Aurora-LN & TensorChain \\
\midrule
0.6B, 1M, $1\times$ & 2.9201 & 2.9184 & 2.9151 & 2.9104 & 2.9116 & \textbf{2.8990} \\
1.7B, 4M, $4\times$ & 2.5310 & 2.5301 & 2.5279 & 2.5264 & 2.5242 & \textbf{2.5236} \\
\bottomrule
\end{tabular}
}
\end{table}

\begin{figure}[H]
\centering
\begin{subfigure}[b]{0.48\linewidth}
    \centering
    \includegraphics[width=\linewidth]{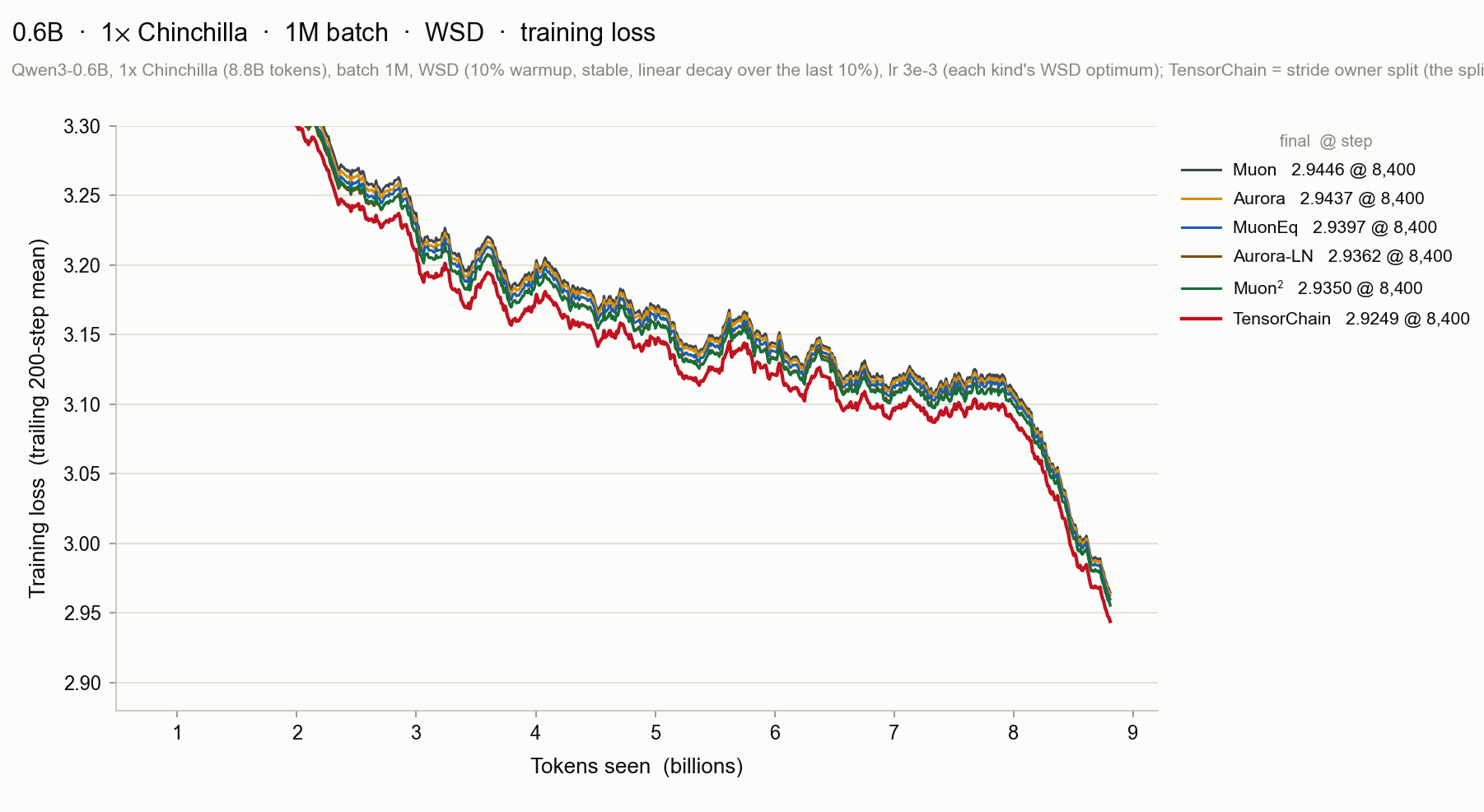}
    \caption{Training loss.}
\end{subfigure}\hfill
\begin{subfigure}[b]{0.48\linewidth}
    \centering
    \includegraphics[width=\linewidth]{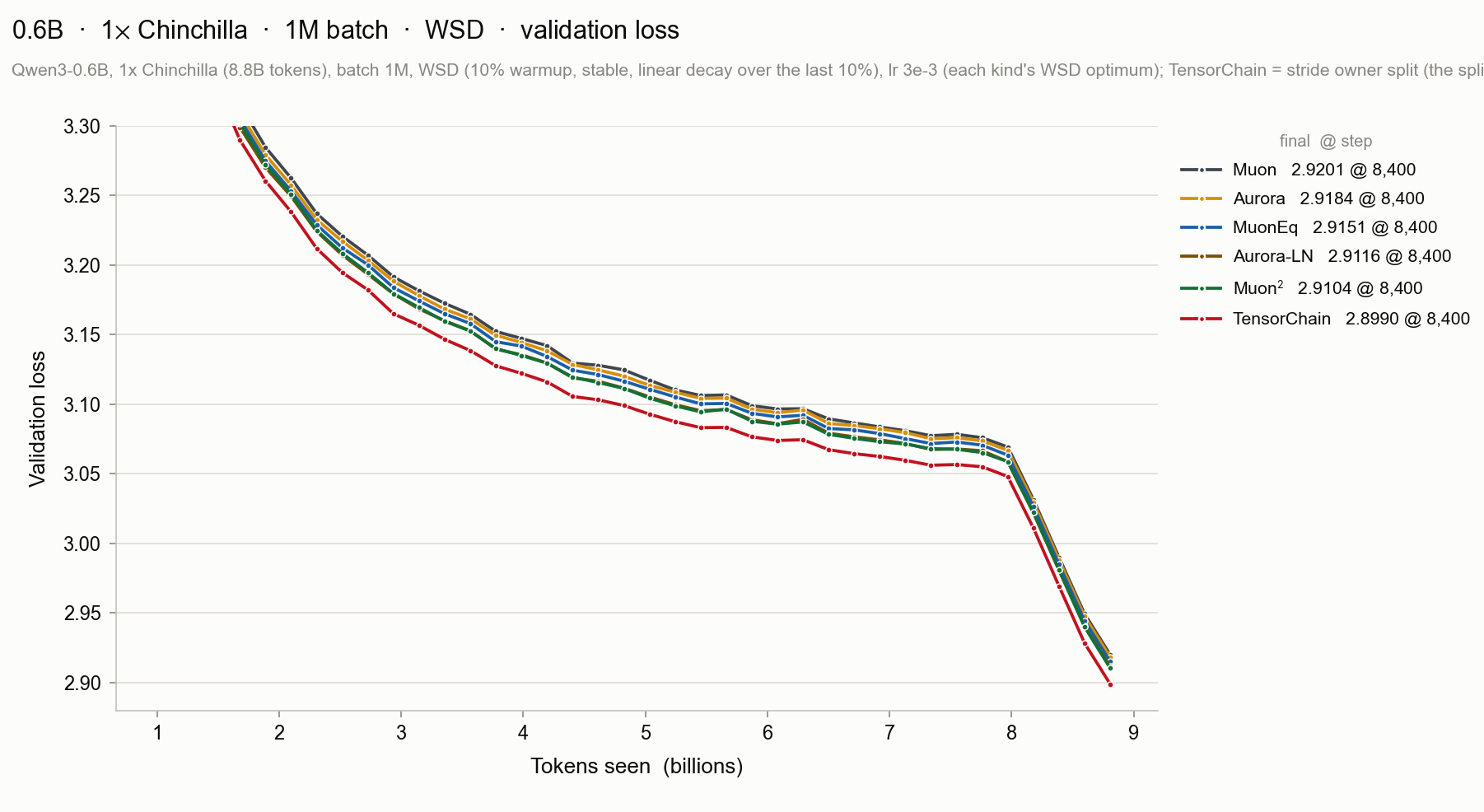}
    \caption{Validation loss.}
\end{subfigure}
\caption{WSD trajectories for Qwen3-0.6B, 1M-token batch, $1\times$ Chinchilla.}
\label{fig:raw-wsd-06b}
\end{figure}

\begin{figure}[H]
\centering
\begin{subfigure}[b]{0.48\linewidth}
    \centering
    \includegraphics[width=\linewidth]{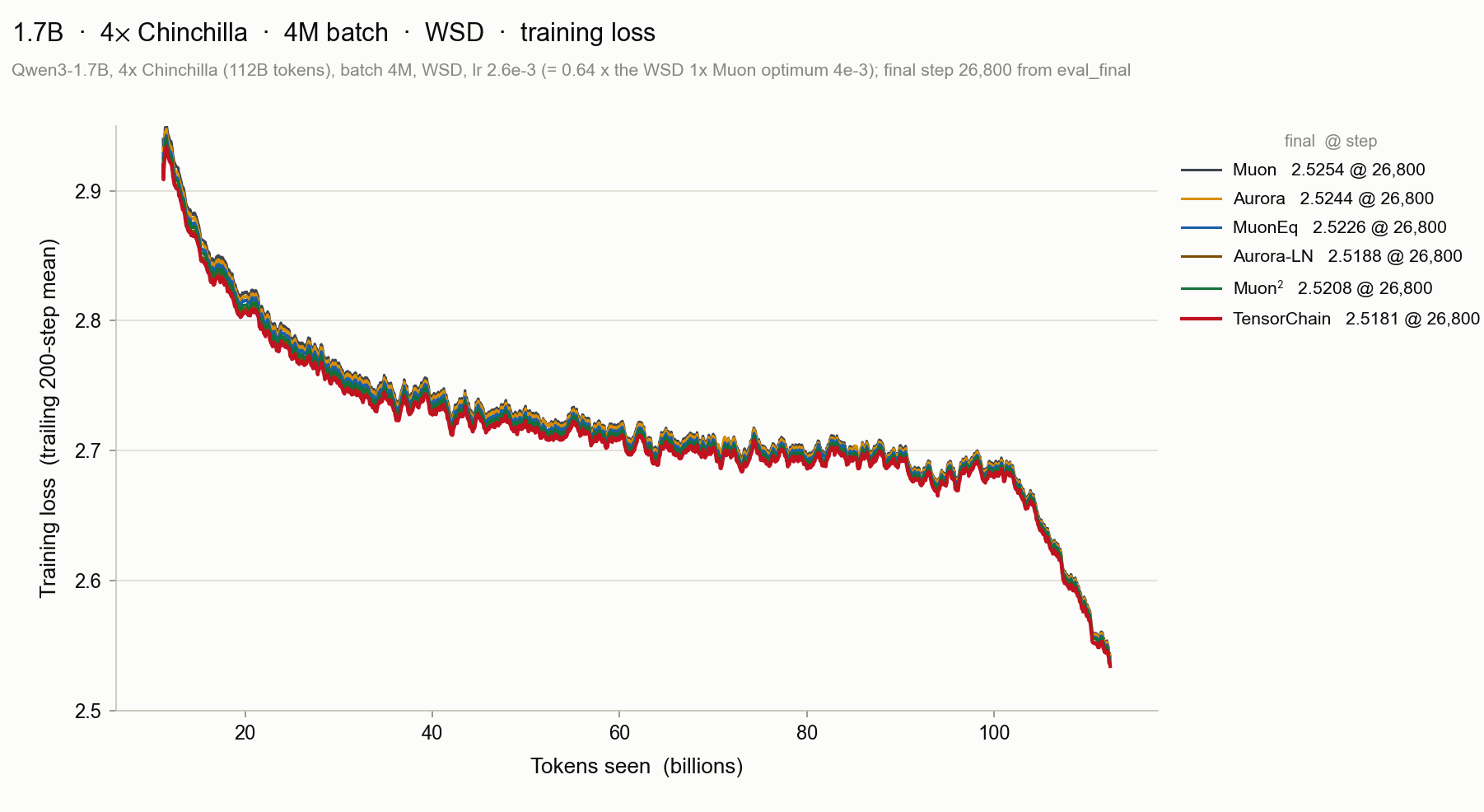}
    \caption{Training loss.}
\end{subfigure}\hfill
\begin{subfigure}[b]{0.48\linewidth}
    \centering
    \includegraphics[width=\linewidth]{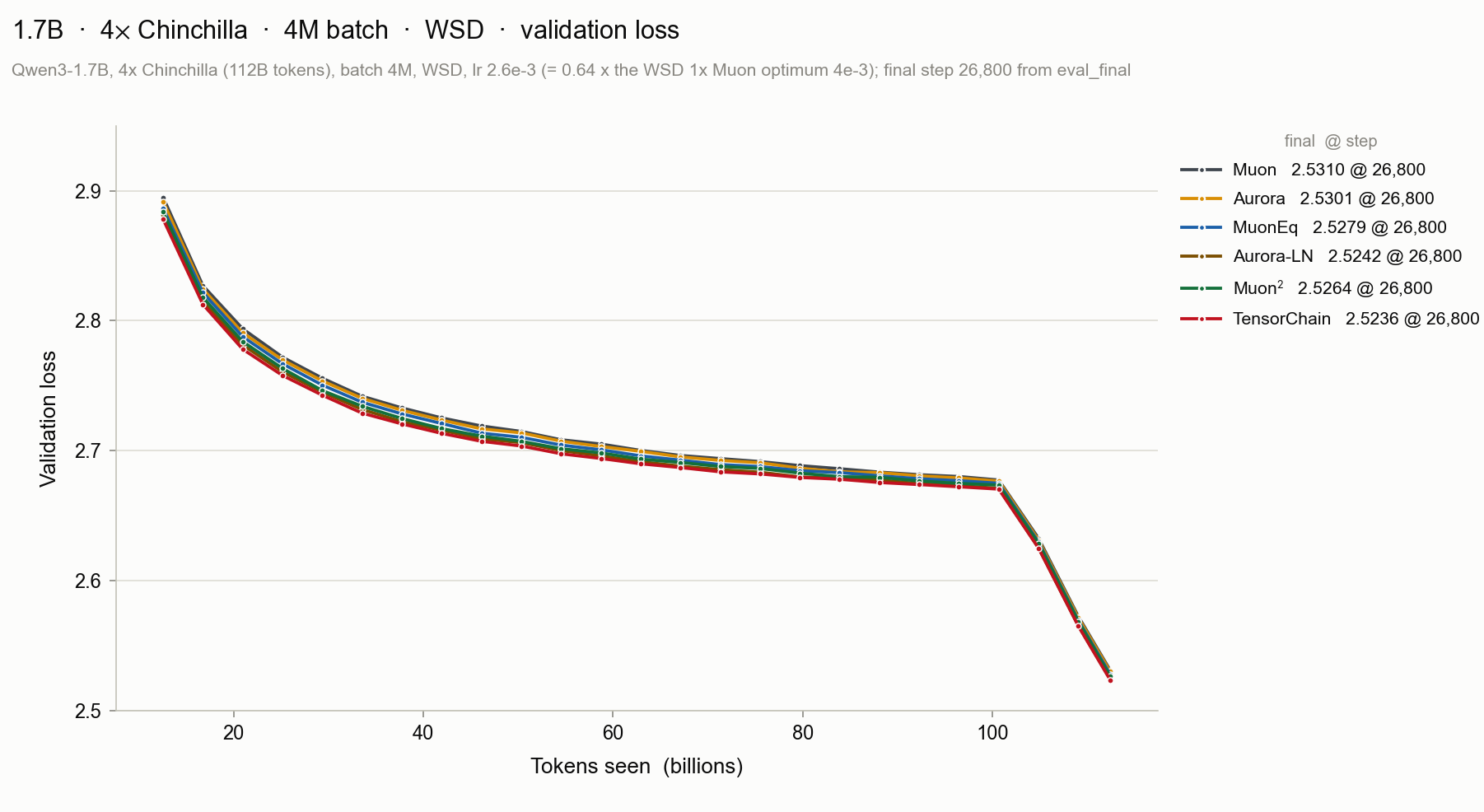}
    \caption{Validation loss.}
\end{subfigure}
\caption{WSD trajectories for Qwen3-1.7B, 4M-token batch, $4\times$ Chinchilla.}
\label{fig:raw-wsd-17b}
\end{figure}

\subsection{Cosine training and validation trajectories}

We analogously aggregate the four cosine settings. Again, learning rates are omitted because all methods within a given setup use the same selected learning rate.

\begin{table}[H]
\centering
\small
\caption{Final training loss for the cosine trajectory experiments. Bold marks the best method in each row.}
\label{tab:cos-train-endpoints}
\setlength{\tabcolsep}{4pt}
\resizebox{\textwidth}{!}{%
\begin{tabular}{lcccccc}
\toprule
Setup & Muon & Aurora & MuonEq & Muon$^2$ & Aurora-LN & TensorChain \\
\midrule
0.6B, 1M, $1\times$ & 2.8899 & 2.8889 & 2.8806 & 2.8771 & 2.8750 & \textbf{2.8636} \\
0.6B, 4M, $4\times$ & 2.7934 & 2.7897 & 2.7845 & 2.7851 & 2.7804 & \textbf{2.7748} \\
1.7B, 1M, $1\times$ & 2.6036 & 2.6017 & 2.5978 & 2.5941 & 2.5942 & \textbf{2.5922} \\
1.7B, 4M, $4\times$ & 2.3859 & 2.3850 & 2.3814 & 2.3780 & \textbf{2.3751} & 2.3778 \\
\bottomrule
\end{tabular}
}
\end{table}

\begin{table}[H]
\centering
\small
\caption{Final validation loss for the cosine trajectory experiments. Bold marks the best method in each row.}
\label{tab:cos-val-endpoints}
\setlength{\tabcolsep}{4pt}
\resizebox{\textwidth}{!}{%
\begin{tabular}{lcccccc}
\toprule
Setup & Muon & Aurora & MuonEq & Muon$^2$ & Aurora-LN & TensorChain \\
\midrule
0.6B, 1M, $1\times$ & 2.8639 & 2.8624 & 2.8540 & 2.8497 & 2.8474 & \textbf{2.8379} \\
0.6B, 4M, $4\times$ & 2.7527 & 2.7487 & 2.7434 & 2.7443 & 2.7391 & \textbf{2.7344} \\
1.7B, 1M, $1\times$ & 2.5974 & 2.5964 & 2.5924 & 2.5885 & 2.5882 & \textbf{2.5857} \\
1.7B, 4M, $4\times$ & 2.4823 & 2.4810 & 2.4784 & 2.4744 & \textbf{2.4734} & 2.4747 \\
\bottomrule
\end{tabular}
}
\end{table}

\begin{figure}[H]
\centering
\begin{subfigure}[b]{0.48\linewidth}
    \centering
    \includegraphics[width=\linewidth]{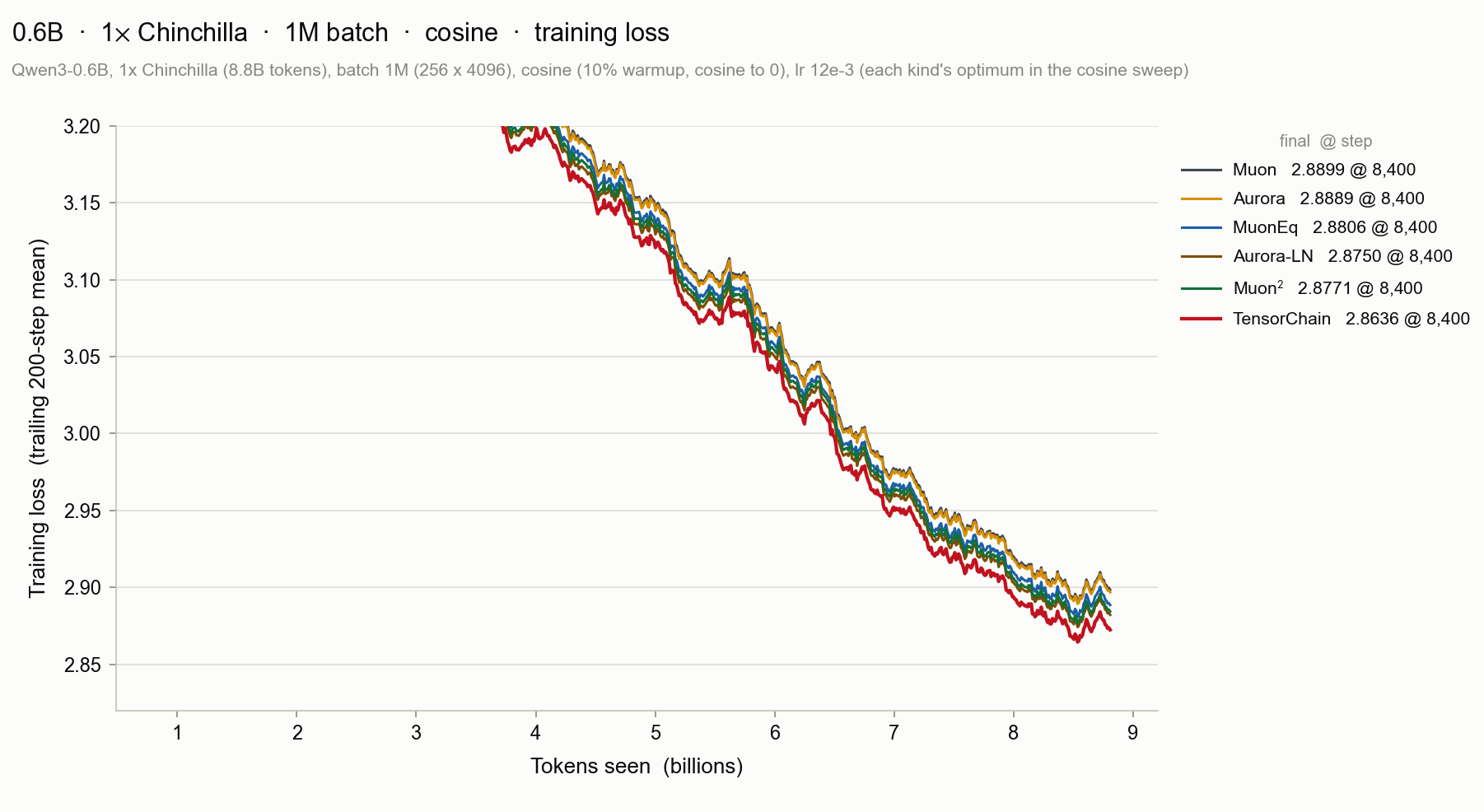}
    \caption{Training loss.}
\end{subfigure}\hfill
\begin{subfigure}[b]{0.48\linewidth}
    \centering
    \includegraphics[width=\linewidth]{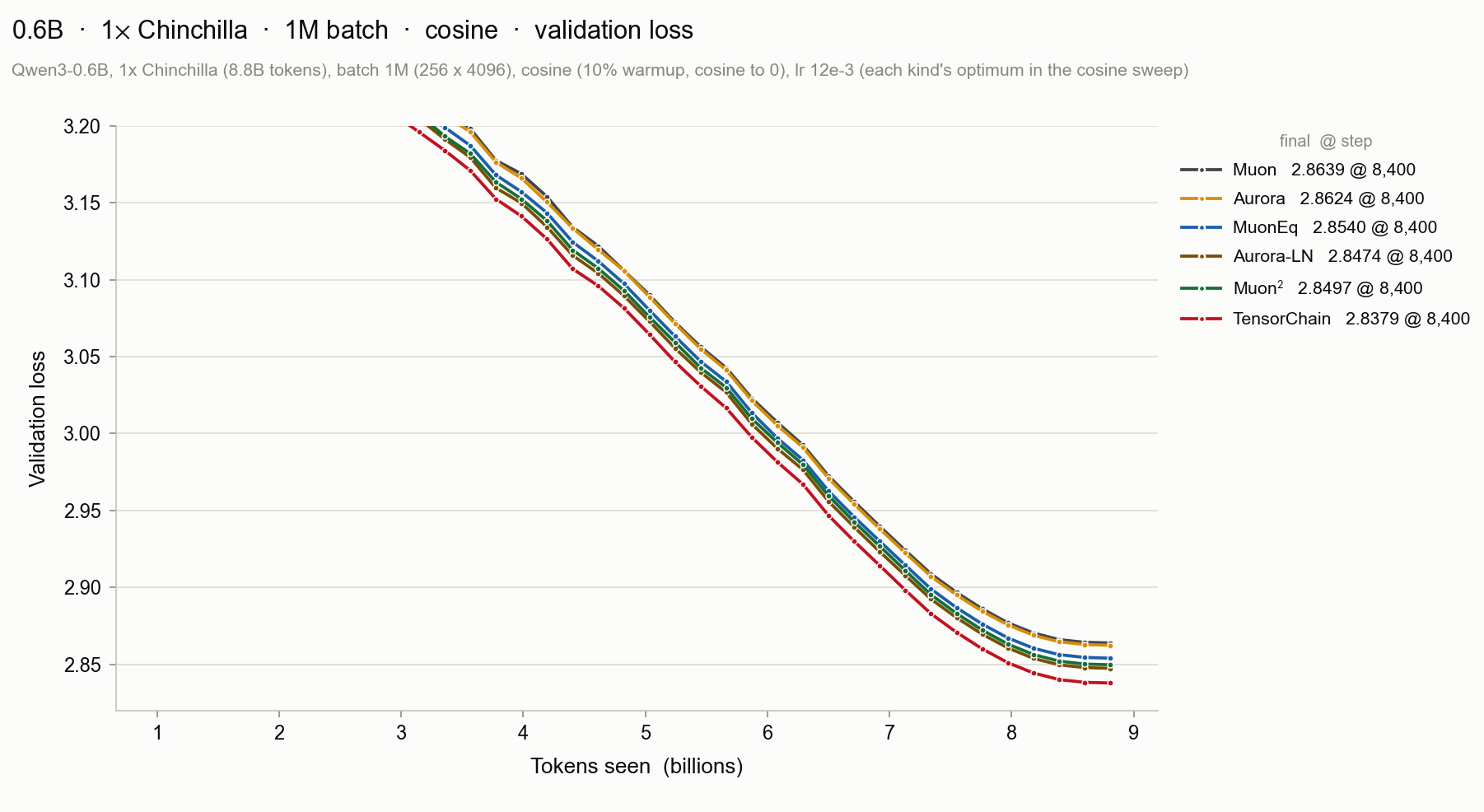}
    \caption{Validation loss.}
\end{subfigure}
\caption{Cosine trajectories for Qwen3-0.6B, 1M-token batch, $1\times$ Chinchilla.}
\label{fig:raw-cos-06b-1m}
\end{figure}

\begin{figure}[H]
\centering
\begin{subfigure}[b]{0.48\linewidth}
    \centering
    \includegraphics[width=\linewidth]{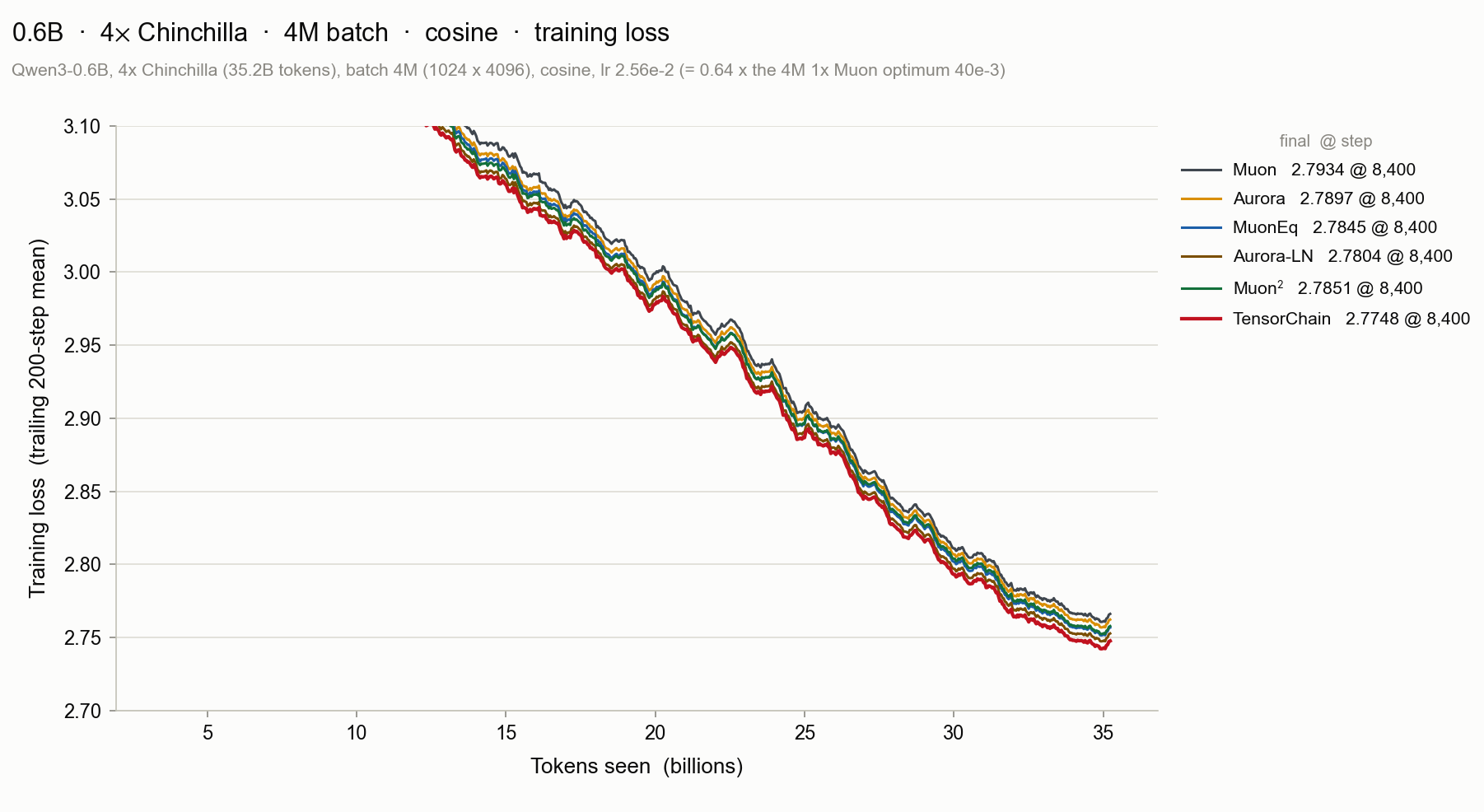}
    \caption{Training loss.}
\end{subfigure}\hfill
\begin{subfigure}[b]{0.48\linewidth}
    \centering
    \includegraphics[width=\linewidth]{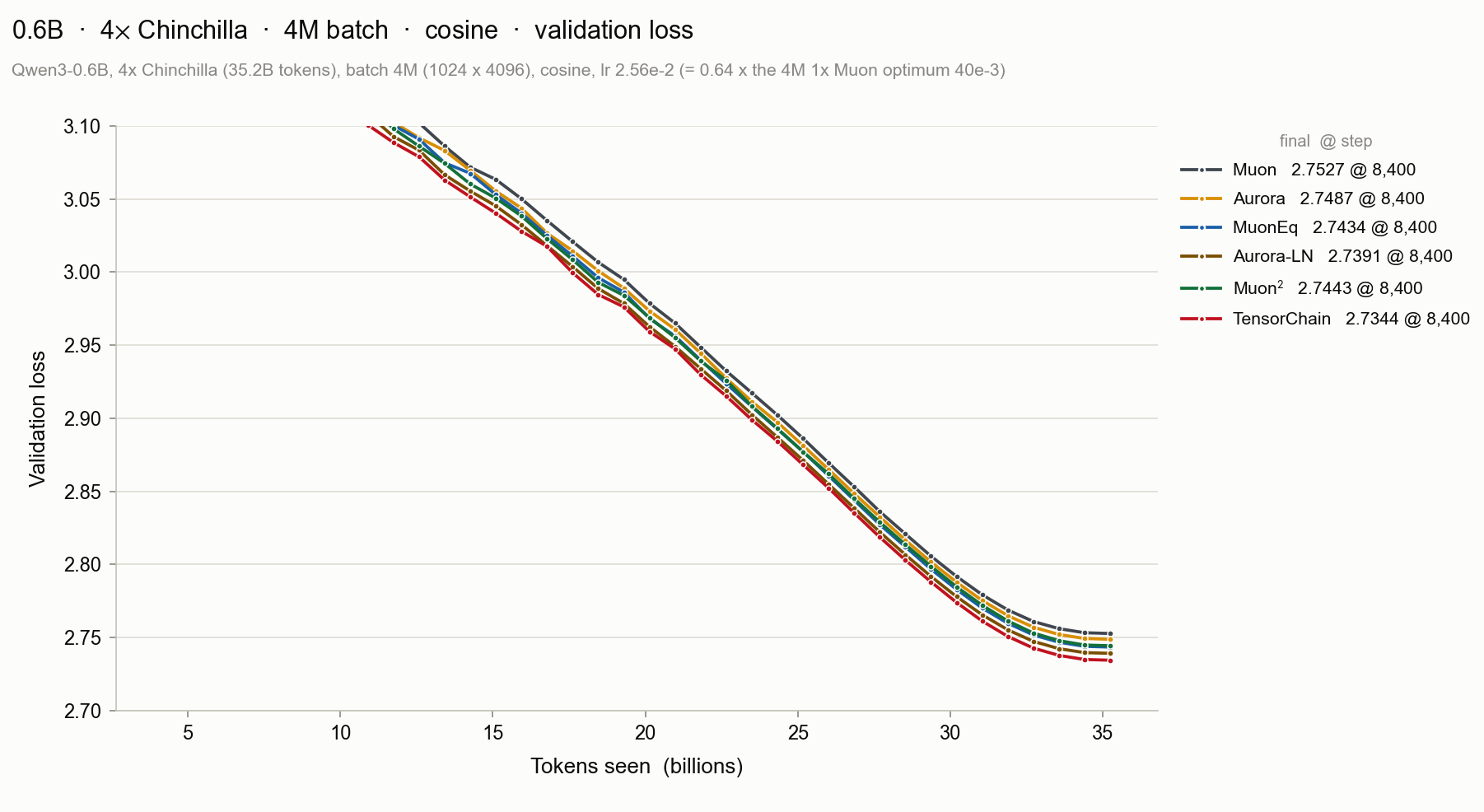}
    \caption{Validation loss.}
\end{subfigure}
\caption{Cosine trajectories for Qwen3-0.6B, 4M-token batch, $4\times$ Chinchilla.}
\label{fig:raw-cos-06b-4m}
\end{figure}

\begin{figure}[H]
\centering
\begin{subfigure}[b]{0.48\linewidth}
    \centering
    \includegraphics[width=\linewidth]{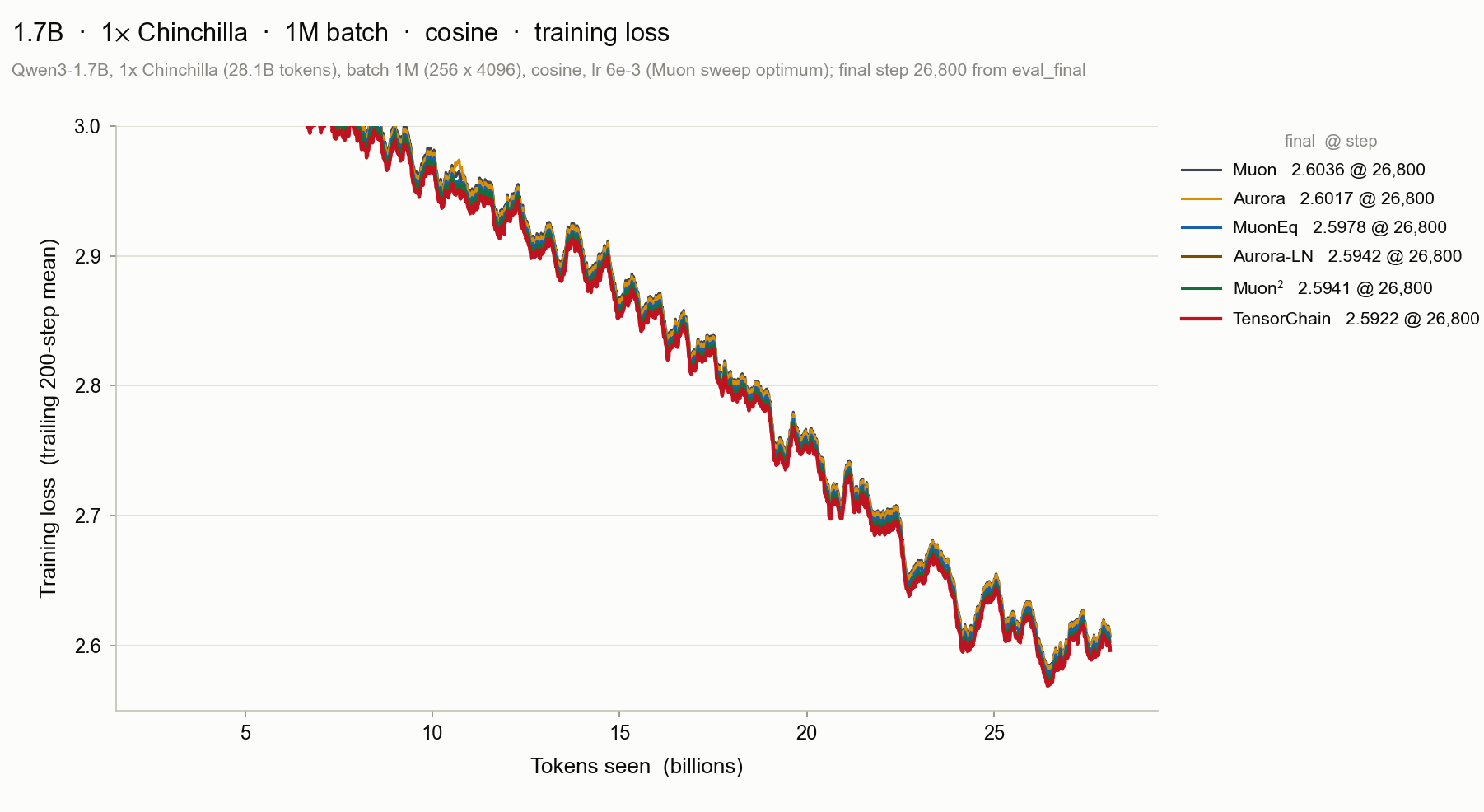}
    \caption{Training loss.}
\end{subfigure}\hfill
\begin{subfigure}[b]{0.48\linewidth}
    \centering
    \includegraphics[width=\linewidth]{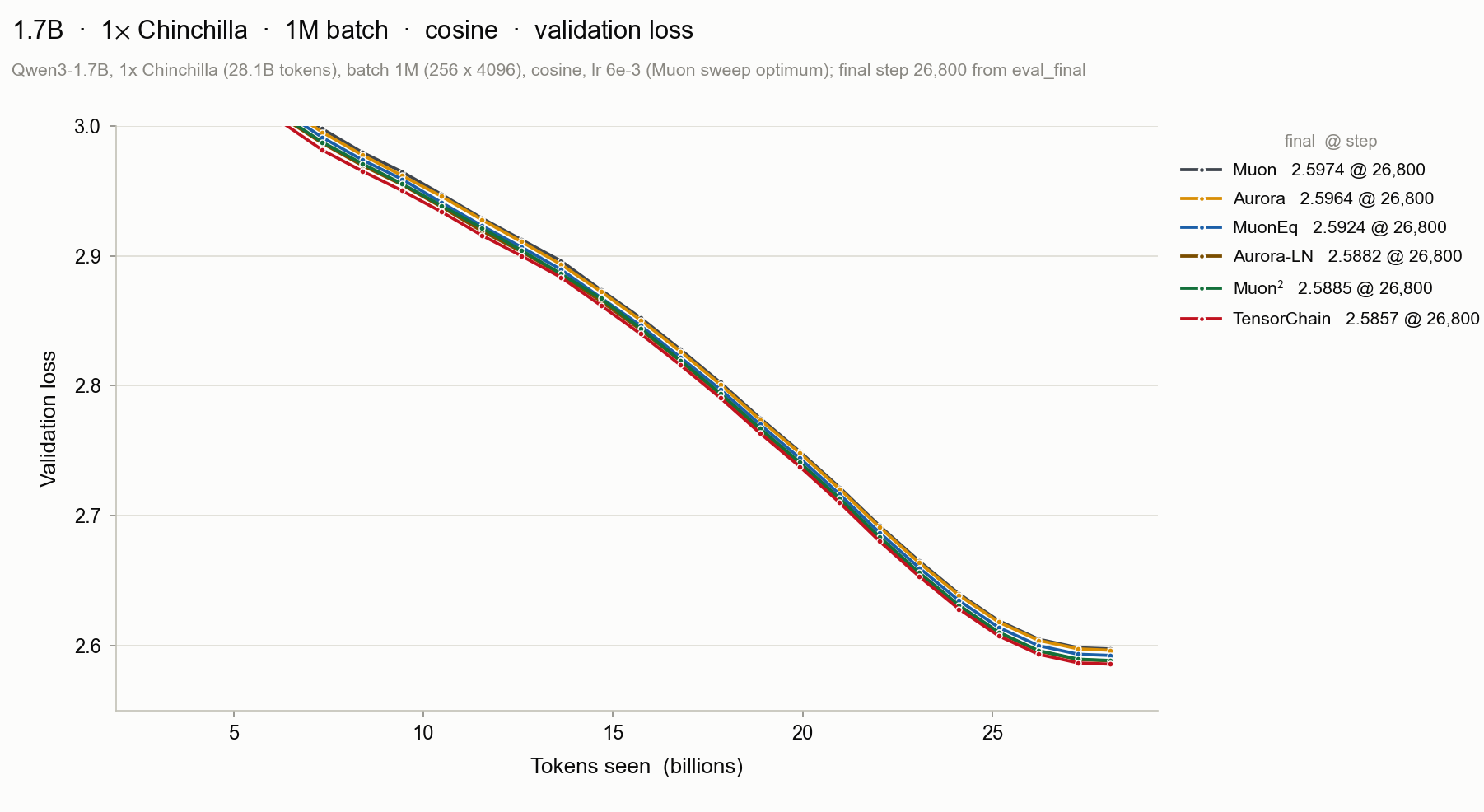}
    \caption{Validation loss.}
\end{subfigure}
\caption{Cosine trajectories for Qwen3-1.7B, 1M-token batch, $1\times$ Chinchilla.}
\label{fig:raw-cos-17b-1m}
\end{figure}

\begin{figure}[H]
\centering
\begin{subfigure}[b]{0.48\linewidth}
    \centering
    \includegraphics[width=\linewidth]{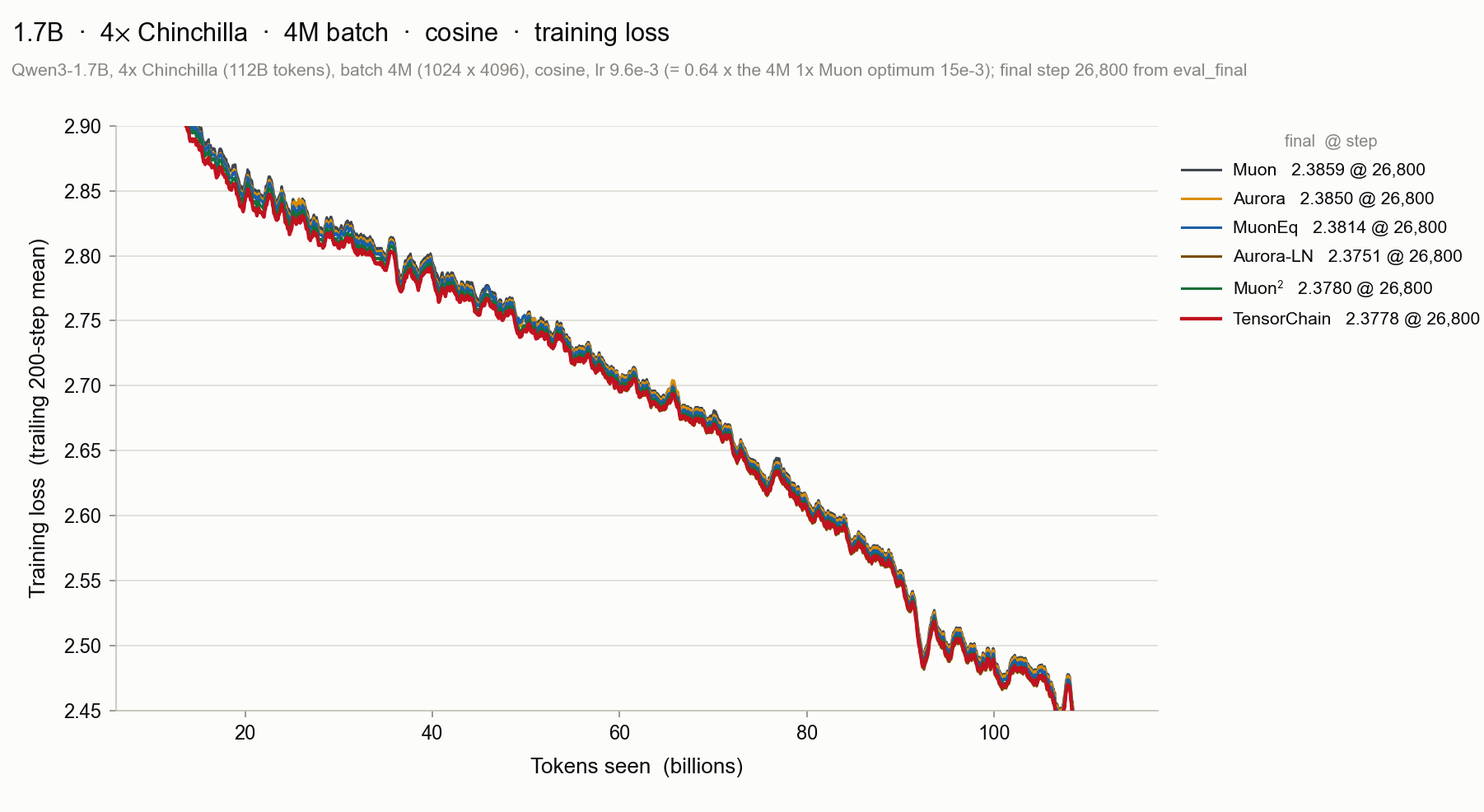}
    \caption{Training loss.}
\end{subfigure}\hfill
\begin{subfigure}[b]{0.48\linewidth}
    \centering
    \includegraphics[width=\linewidth]{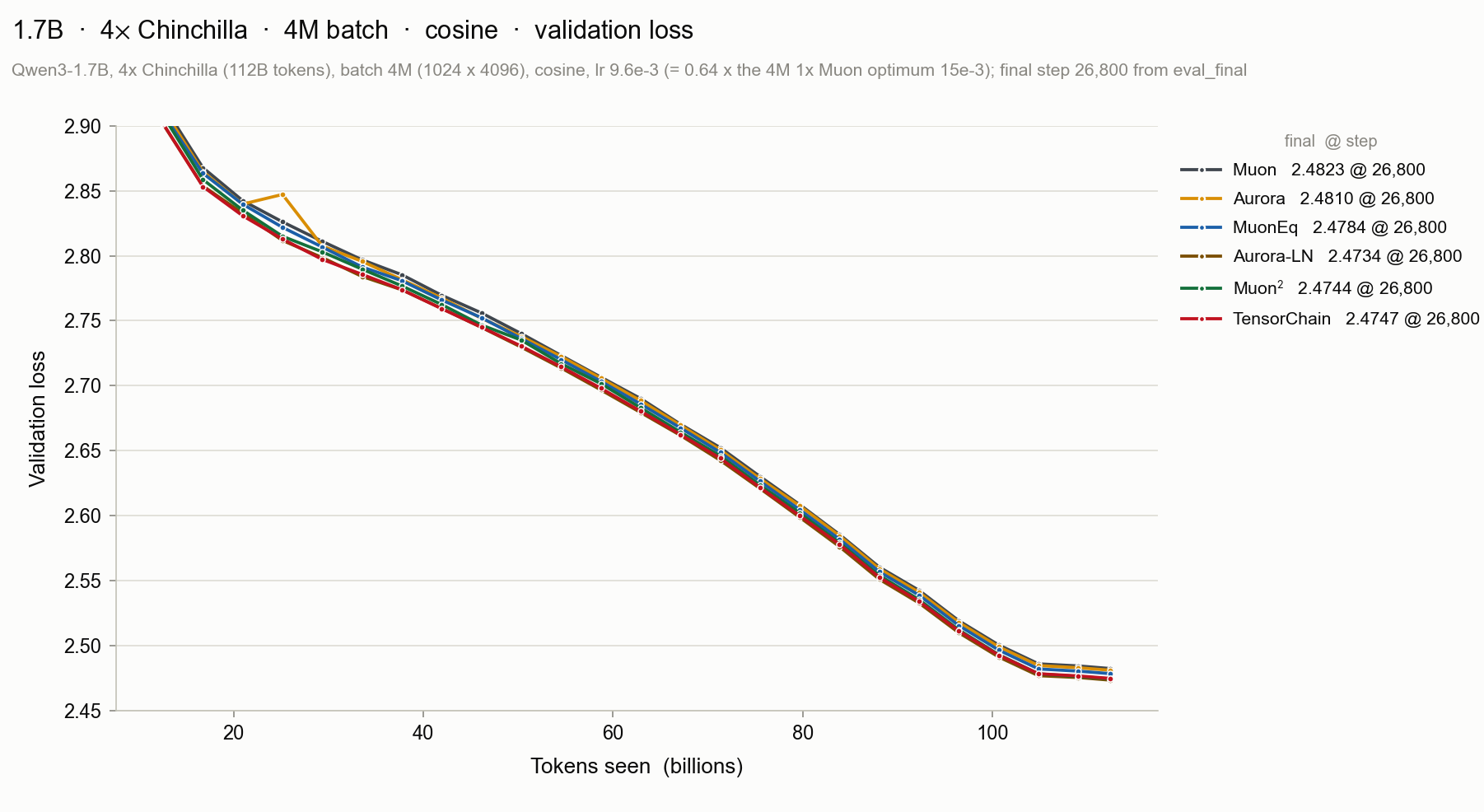}
    \caption{Validation loss.}
\end{subfigure}
\caption{Cosine trajectories for Qwen3-1.7B, 4M-token batch, $4\times$ Chinchilla.}
\label{fig:raw-cos-17b-4m}
\end{figure}

\begin{figure}[H]
\centering
\begin{subfigure}[b]{0.48\linewidth}
    \centering
    \includegraphics[width=\linewidth]{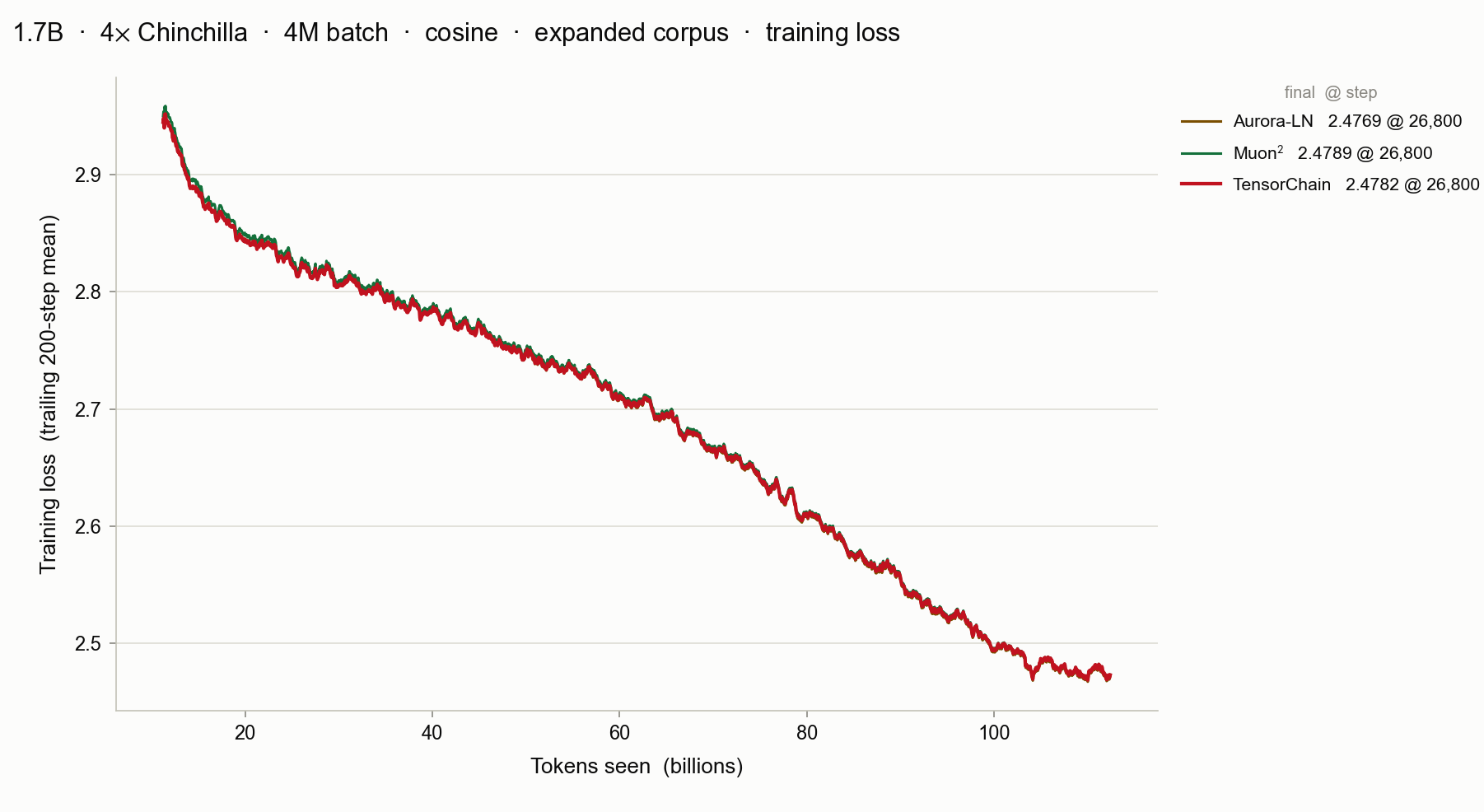}
    \caption{Training loss.}
\end{subfigure}\hfill
\begin{subfigure}[b]{0.48\linewidth}
    \centering
    \includegraphics[width=\linewidth]{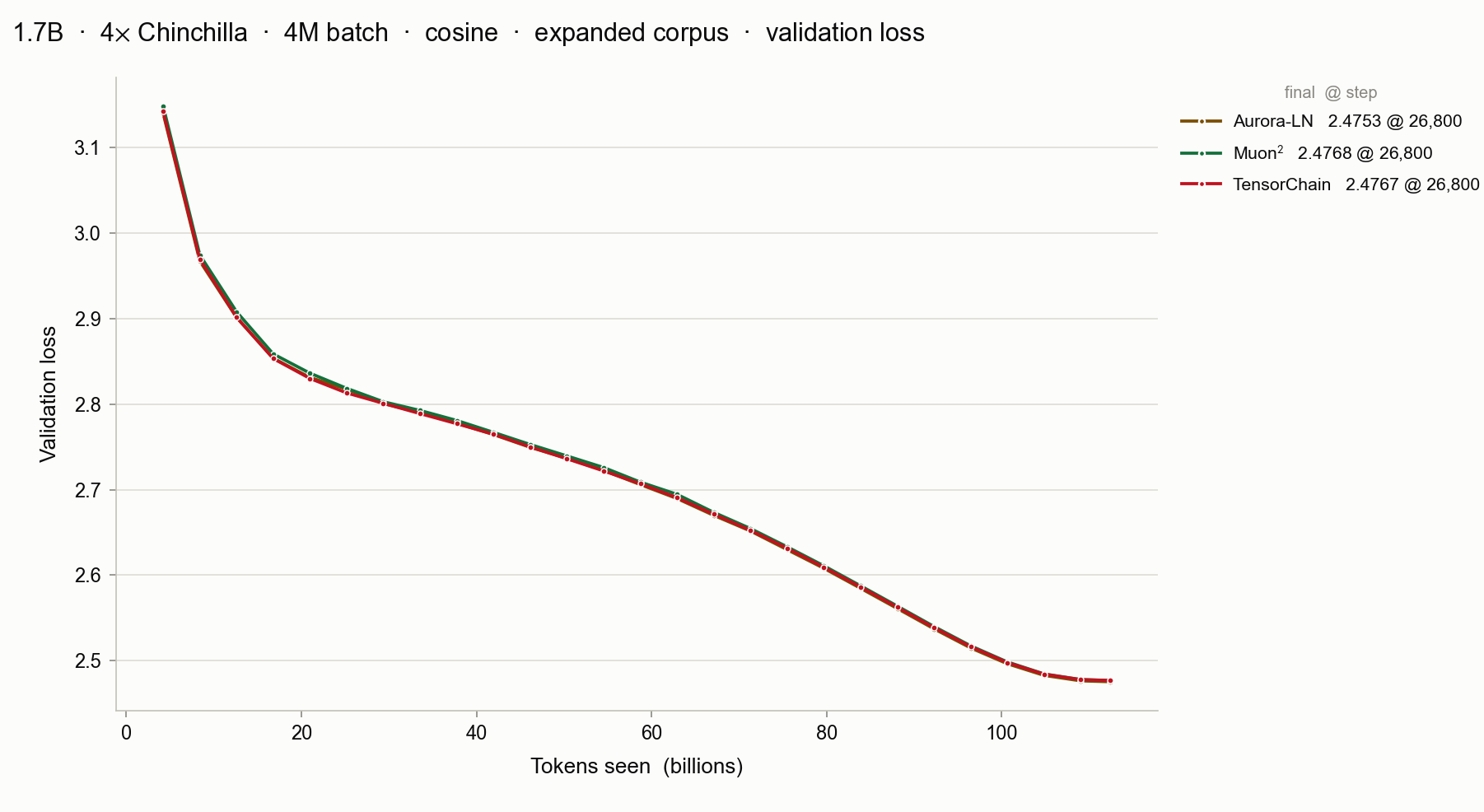}
    \caption{Validation loss.}
\end{subfigure}
\caption{Cosine trajectories for Qwen3-1.7B, 4M-token batch, $4\times$ Chinchilla, distinct tokens.}
\label{fig:raw-cos-17b-4m-fixed}
\end{figure}

\subsection{TensorChain grouping ablation}
\label{app:grouping-results}

TensorChain requires choosing which compatible matrices are grouped along the layer axis. For a 28-layer model, \emph{consecutive} forms four contiguous groups, e.g., $\{1{:}7\},\{8{:}14\},\{15{:}21\},\{22{:}28\}$, whereas
\emph{stride} interleaves depth, e.g.,
$\{1,5,9,\ldots,25\},\{2,6,10,\ldots,26\},\{3,7,11,\ldots,27\},
\{4,8,12,\ldots,28\}$. \emph{Greedy} assigns compatible matrices to
parameter-balanced groups, while \emph{all} jointly groups all compatible matrices $\{1,2,\cdots 28\}$. The results are extracted from the same Qwen3-0.6B full sweeps above.

\begin{table}[H]
\centering
\small
\caption{TensorChain grouping ablation on Qwen3-0.6B, 1M-token batch, $1\times$ Chinchilla. Each entry reports the best point from the corresponding full sweep.}
\label{tab:tc-grouping}
\begin{tabular}{lcccc}
\toprule
& \multicolumn{2}{c}{WSD} & \multicolumn{2}{c}{Cosine} \\
\cmidrule(lr){2-3}\cmidrule(lr){4-5}
Grouping & Best LR & Val.\ loss & Best LR & Val.\ loss \\
\midrule
Consecutive & $3\times10^{-3}$ & 2.9070 & $12\times10^{-3}$ & 2.8474 \\
Stride & $3\times10^{-3}$ & 2.8990 & $12\times10^{-3}$ & 2.8357 \\
Greedy & $3\times10^{-3}$ & 2.8994 & $16\times10^{-3}$ & 2.8370 \\
All & $3\times10^{-3}$ & 2.9000 & $12\times10^{-3}$ & 2.8379 \\
\bottomrule
\end{tabular}
\end{table}

Consecutive grouping performs worse near its optimum than stride, greedy, and all-layer grouping. Since the latter three mix matrices across more distant depths despite using different assignment rules, this suggests that cross-depth mixing matters more than the precise partition.

\subsection{Cosine token efficiency}
\label{app:token_efficiency}

The cosine schedule yields smooth validation trajectories throughout training, so we additionally compare methods by the number of training tokens required to reach the same validation loss. We do not use this metric for WSD because its terminal decay concentrates much of the loss reduction near the end of training, making within-run crossing times less informative.

For TensorChain and a baseline $B$, define the common target
\[
\ell^\star(\mathrm{TC},B)
=
\max\!\left\{
\ell_{\mathrm{TC}}(D_{\max}),
\ell_B(D_{\max})
\right\},
\]
the worse of the two final validation losses. Both methods therefore reach the target within the observed training horizon, so no extrapolation is required. For a method $M$, let
\[
D_M(\ell)
=
\inf\!\left\{
D:\widehat{\ell}_M(D)\le \ell
\right\},
\]
where $\widehat{\ell}_M(D)$ is obtained by linear interpolation between the two adjacent validation checkpoints bracketing the target. We define
\[
\mathrm{TE}(\mathrm{TC};B)
=
100\left(
1-\frac{D_{\mathrm{TC}}(\ell^\star)}{D_B(\ell^\star)}
\right)\%.
\]
Positive values indicate that TensorChain reaches the same validation loss using fewer training tokens, while negative values favor the baseline. Since each pairwise comparison fixes the batch size, the same percentage is equivalently the reduction in optimization steps.

\begin{table}[H]
\centering
\small
\caption{Cosine-schedule token-efficiency gain of TensorChain. Each entry is the percentage reduction in training tokens required to reach the same validation-loss target. Positive values favor TensorChain.}
\label{tab:cosine-token-efficiency}
\begin{tabular}{lccccc}
\toprule
Setting & Aurora-LN & Muon$^2$ & MuonEq & Aurora & Muon \\
\midrule
0.6B, 1M, $1\times$ & 8.2 & 9.0 & 10.3 & 12.4 & 12.8 \\
0.6B, 4M, $4\times$ & 5.4 & 7.6 & 7.4 & 8.9 & 10.0 \\
1.7B, 1M, $1\times$ & 3.9 & 4.0 & 6.2 & 7.5 & 7.8 \\
1.7B, 4M, $4\times$ & -2.0 & -0.5 & 6.7 & 7.4 & 7.8 \\
\bottomrule
\end{tabular}
\end{table}

\subsection{Comparison with the Teon optimizer}
\label[appendix]{app:teon}

Since we could not find an open-sourced implementation of Teon \citep{zhang2026teon}, we implemented our own version of Teon. We concatenated the two consecutive Q, K, V weights and applied Newton--Schulz, following the suggestion of \citep{zhang2026teon}. We tuned the learning rate of Teon in Qwen3 0.6B pretraining setup, with 1M batch size and 1x chinchilla tokens, with WSD schedule with 10$\%$ warmup and decay. To our surprise, the Teon optimizer performed worse than Muon for our setup; The best validation loss for Teon was 2.9222, whereas Muon was 2.9201. Hence we decided to not include Teon in our baselines.
\begin{figure}[t]
    \centering
    \includegraphics[width=\linewidth]{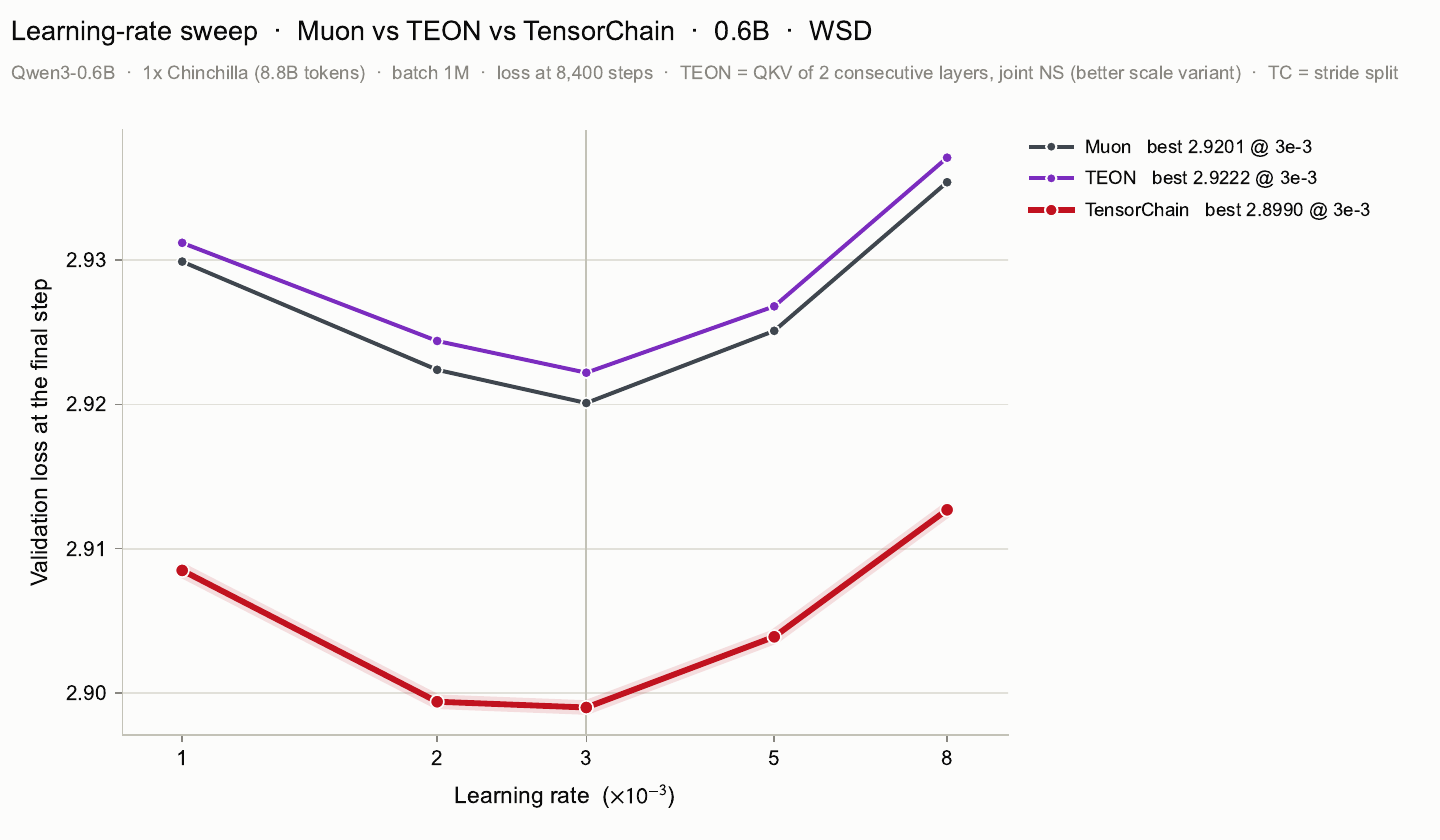}
\end{figure}

%% file: iclr2027_conference.bib
@article{chang2026muoneq,
  title={Muoneq: Balancing before orthogonalization with lightweight equilibration},
  author={Chang, Da and Shi, Qiankun and Zhang, Lvgang and Li, Yu and Zhang, Ruijie and Lu, Yao and Liu, Yongxiang and Yuan, Ganzhao},
  journal={arXiv preprint arXiv:2603.28254},
  year={2026}
}

@article{liu2026muon,
  title={Muon2: Boosting Muon via Adaptive Second-Moment Preconditioning},
  author={Liu, Ziyue and Zhang, Ruijie and Wang, Zhengyang and Zhao, Yequan and Su, Yupeng and Yang, Zi and Zhang, Zheng},
  journal={arXiv preprint arXiv:2604.09967},
  year={2026}
}

@article{zhang2026muon+,
  title={MUON+: Towards More Effective Muon via One Additional Normalization Step for LLM Pre-training},
  author={Zhang, Ruijie and Zhao, Yequan and Liu, Ziyue and Wang, Zhengyang and Su, Yupeng and Tan, Liyan and Zhang, Zheng},
  journal={arXiv preprint arXiv:2602.21545},
  year={2026}
}

@article{li2025normuon,
  title={Normuon: Making muon more efficient and scalable},
  author={Li, Zichong and Liu, Liming and Liang, Chen and Chen, Weizhu and Zhao, Tuo},
  journal={arXiv preprint arXiv:2510.05491},
  year={2025}
}

@article{si2025adamuon,
  title={Adamuon: Adaptive muon optimizer},
  author={Si, Chongjie and Zhang, Debing and Shen, Wei},
  journal={arXiv preprint arXiv:2507.11005},
  year={2025}
}

@article{dewulf2026aurora,
  title={Aurora: A Leverage-Aware Spectral Optimizer},
  author={Dewulf, Alec and Pai, Dhruv and Yang, Li and Zhang, Ashley and Keigwin, Ben},
  journal={arXiv preprint arXiv:2606.27715},
  year={2026}
}

@article{bernstein2024old,
  title={Old optimizer, new norm: An anthology},
  author={Bernstein, Jeremy and Newhouse, Laker},
  journal={arXiv preprint arXiv:2409.20325},
  year={2024}
}

@article{veprikov2025preconditioned,
  title={Preconditioned norms: A unified framework for steepest descent, quasi-newton and adaptive methods},
  author={Veprikov, Andrey and Bolatov, Arman and Bogdanov, Aleksandr and Horv{\'a}th, Samuel and Beznosikov, Aleksandr and Tak{\'a}{\v{c}}, Martin and Hanzely, Slavomir},
  journal={arXiv preprint arXiv:2510.10777},
  year={2025}
}

@article{ma2026schattor,
  title={Schattor: Schatten-family methods for deep learning optimization},
  author={Ma, Bohao and Zhang, Junyu and He, Chuan},
  journal={arXiv preprint arXiv:2606.15702},
  year={2026}
}

@article{pethick2025training,
  title={Training deep learning models with norm-constrained lmos},
  author={Pethick, Thomas and Xie, Wanyun and Antonakopoulos, Kimon and Zhu, Zhenyu and Silveti-Falls, Antonio and Cevher, Volkan},
  journal={arXiv preprint arXiv:2502.07529},
  year={2025}
}

@article{kovalev2025understanding,
  title={Understanding gradient orthogonalization for deep learning via non-euclidean trust-region optimization},
  author={Kovalev, Dmitry},
  journal={arXiv preprint arXiv:2503.12645},
  year={2025}
}

@article{crawshaw2025exploration,
  title={An exploration of non-euclidean gradient descent: Muon and its many variants},
  author={Crawshaw, Michael and Modi, Chirag and Liu, Mingrui and Gower, Robert M},
  journal={arXiv preprint arXiv:2510.09827},
  year={2025}
}

@inproceedings{chen2024lion,
  title={Lion secretly solves a constrained optimization: As lyapunov predicts},
  author={Chen, Lizhang and Liu, Bo and Liang, Kaizhao and Liu, Qiang},
  booktitle={International Conference on Learning Representations},
  volume={2024},
  pages={35404--35439},
  year={2024}
}

@article{chen2025muon,
  title={Muon optimizes under spectral norm constraints},
  author={Chen, Lizhang and Li, Jonathan and Liu, Qiang},
  journal={arXiv preprint arXiv:2506.15054},
  year={2025}
}

@article{lau2025polargrad,
  title={Polargrad: A class of matrix-gradient optimizers from a unifying preconditioning perspective},
  author={Lau, Tim Tsz-Kit and Long, Qi and Su, Weijie},
  journal={arXiv preprint arXiv:2505.21799},
  year={2025}
}

@article{kingma2014adam,
  title={Adam: A method for stochastic optimization},
  author={Kingma, Diederik P and Ba, Jimmy},
  journal={arXiv preprint arXiv:1412.6980},
  year={2014}
}

@inproceedings{gupta2018shampoo,
  title={Shampoo: Preconditioned stochastic tensor optimization},
  author={Gupta, Vineet and Koren, Tomer and Singer, Yoram},
  booktitle={International Conference on Machine Learning},
  pages={1842--1850},
  year={2018},
  organization={PMLR}
}

@article{mishchenko2023prodigy,
  title={Prodigy: An expeditiously adaptive parameter-free learner},
  author={Mishchenko, Konstantin and Defazio, Aaron},
  journal={arXiv preprint arXiv:2306.06101},
  year={2023}
}

@article{lau2026symmetry,
  title={Symmetry-Compatible Principle for Optimizer Design: Embeddings, LM Heads, SwiGLU MLPs, and MoE Routers},
  author={Lau, Tim Tsz-Kit and Su, Weijie},
  journal={arXiv preprint arXiv:2605.18106},
  year={2026}
}

@article{fletcher1970new,
  title={A new approach to variable metric algorithms},
  author={Fletcher, Roger},
  journal={The computer journal},
  volume={13},
  number={3},
  pages={317--322},
  year={1970},
  publisher={Oxford University Press}
}

@article{pethick2026use,
  title={When to use what Schatten-$ p $ norm in deep learning?},
  author={Pethick, Thomas},
  journal={arXiv preprint arXiv:2606.15268},
  year={2026}
}

@article{shumaylov2026muon,
  title={Muon is Not That Special: Random or Inverted Spectra Work Just as Well},
  author={Shumaylov, Zakhar and Da Costa, Natha{\"e}l and Zaika, Peter and Mucs{\'a}nyi, B{\'a}lint and Massucco, Alex and Gelberg, Yoav and Sch{\"o}nlieb, Carola-Bibiane and Gal, Yarin and Hennig, Philipp},
  journal={arXiv preprint arXiv:2605.11181},
  year={2026}
}

@article{duchi2011adaptive,
  title={Adaptive subgradient methods for online learning and stochastic optimization.},
  author={Duchi, John and Hazan, Elad and Singer, Yoram},
  journal={Journal of machine learning research},
  volume={12},
  number={7},
  year={2011}
}

@article{penedo2024fineweb,
  title={The fineweb datasets: Decanting the web for the finest text data at scale},
  author={Penedo, Guilherme and Kydl{\'\i}{\v{c}}ek, Hynek and Lozhkov, Anton and Mitchell, Margaret and Raffel, Colin and Von Werra, Leandro and Wolf, Thomas and others},
  journal={Advances in Neural Information Processing Systems},
  volume={37},
  pages={30811--30849},
  year={2024}
}

@inproceedings{liang2025torchtitan,
  title={Torchtitan: One-stop pytorch native solution for production ready LLM pretraining},
  author={Liang, Wanchao and Liu, Tianyu and Wright, Less and Constable, Will and Gu, Andrew and Huang, Chien-Chin and Zhang, Iris and Feng, Wei and Huang, Howard and Wang, Junjie and others},
  booktitle={International Conference on Learning Representations},
  volume={2025},
  pages={92161--92181},
  year={2025}
}

@article{yang2025qwen3,
  title={Qwen3 technical report},
  author={Yang, An and Li, Anfeng and Yang, Baosong and Zhang, Beichen and Hui, Binyuan and Zheng, Bo and Yu, Bowen and Gao, Chang and Huang, Chengen and Lv, Chenxu and others},
  journal={arXiv preprint arXiv:2505.09388},
  year={2025}
}

@article{zhang2019root,
  title={Root mean square layer normalization},
  author={Zhang, Biao and Sennrich, Rico},
  journal={Advances in neural information processing systems},
  volume={32},
  year={2019}
}

@article{liu2025muon,
  title={Muon is scalable for llm training},
  author={Liu, Jingyuan and Su, Jianlin and Yao, Xingcheng and Jiang, Zhejun and Lai, Guokun and Du, Yulun and Qin, Yidao and Xu, Weixin and Lu, Enzhe and Yan, Junjie and others},
  journal={arXiv preprint arXiv:2502.16982},
  year={2025}
}

@misc{modded_nanogpt_2024,
  author       = {Keller Jordan and Jeremy Bernstein and Brendan Rappazzo and @fernbear.bsky.social and Boza Vlado and You Jiacheng and Franz Cesista and Braden Koszarsky and @Grad62304977},
  title        = {modded-nanogpt: Speedrunning the NanoGPT baseline},
  year         = {2024},
  url          = {https://github.com/KellerJordan/modded-nanogpt}
}

@article{zhong2025understanding,
  title={Understanding transformer from the perspective of associative memory},
  author={Zhong, Shu and Xu, Mingyu and Ao, Tenglong and Shi, Guang},
  journal={arXiv preprint arXiv:2505.19488},
  year={2025}
}

@article{meng2022locating,
  title={Locating and editing factual associations in gpt},
  author={Meng, Kevin and Bau, David and Andonian, Alex and Belinkov, Yonatan},
  journal={Advances in neural information processing systems},
  volume={35},
  pages={17359--17372},
  year={2022}
}

@article{meng2022mass,
  title={Mass-editing memory in a transformer},
  author={Meng, Kevin and Sharma, Arnab Sen and Andonian, Alex and Belinkov, Yonatan and Bau, David},
  journal={arXiv preprint arXiv:2210.07229},
  year={2022}
}

@inproceedings{wang2026muon,
  title={Muon outperforms adam in tail-end associative memory learning},
  author={Wang, Shuche and Zhang, Fengzhuo and Li, Jiaxiang and Du, Cunxiao and Du, Chao and Pang, Tianyu and Yang, Zhuoran and Hong, Mingyi and Tan, Vincent},
  booktitle={International Conference on Learning Representations},
  volume={2026},
  pages={37382--37419},
  year={2026}
}

@article{kim2026sharp,
  title={Sharp capacity scaling of spectral optimizers in learning associative memory},
  author={Kim, Juno and Nichani, Eshaan and Wu, Denny and Bietti, Alberto and Lee, Jason D},
  journal={arXiv preprint arXiv:2603.26554},
  year={2026}
}

@article{li2026muon,
  title={Muon in associative memory learning: Training dynamics and scaling laws},
  author={Li, Binghui and Wang, Kaifei and Zhong, Han and Lu, Pinyan and Wang, Liwei},
  journal={arXiv preprint arXiv:2602.05725},
  year={2026}
}

@article{hoffmann2022training,
  title={Training compute-optimal large language models},
  author={Hoffmann, Jordan and Borgeaud, Sebastian and Mensch, Arthur and Buchatskaya, Elena and Cai, Trevor and Rutherford, Eliza and Casas, Diego de Las and Hendricks, Lisa Anne and Welbl, Johannes and Clark, Aidan and others},
  journal={arXiv preprint arXiv:2203.15556},
  year={2022}
}

@inproceedings{bjorck2025scaling,
  title={Scaling optimal lr across token horizons},
  author={Bjorck, Johan and Benhaim, Alon and Chaudhary, Vishrav and Wei, Furu and Song, Xia},
  booktitle={International Conference on Learning Representations},
  volume={2025},
  pages={83640--83657},
  year={2025}
}

@article{loshchilov2017decoupled,
  title={Decoupled weight decay regularization},
  author={Loshchilov, Ilya and Hutter, Frank},
  journal={arXiv preprint arXiv:1711.05101},
  year={2017}
}

@inproceedings{timkey2021all,
  title={All bark and no bite: Rogue dimensions in transformer language models obscure representational quality},
  author={Timkey, William and Van Schijndel, Marten},
  booktitle={Proceedings of the 2021 Conference on Empirical Methods in Natural Language Processing},
  pages={4527--4546},
  year={2021}
}

@article{dettmers2022gpt3,
  title={Gpt3. int8 (): 8-bit matrix multiplication for transformers at scale},
  author={Dettmers, Tim and Lewis, Mike and Belkada, Younes and Zettlemoyer, Luke},
  journal={Advances in neural information processing systems},
  volume={35},
  pages={30318--30332},
  year={2022}
}

@article{deng2026rmnp,
  title={Rmnp: Row-momentum normalized preconditioning for scalable matrix-based optimization},
  author={Deng, Shenyang and Ouyang, Zhuoli and Pang, Tianyu and Liu, Zihang and Jin, Ruochen and Yu, Shuhua and Yang, Yaoqing},
  journal={arXiv preprint arXiv:2603.20527},
  year={2026}
}

@inproceedings{glentis2025minimalist,
  title={A minimalist optimizer design for LLM pretraining},
  author={Glentis, Athanasios and Li, Jiaxiang and Han, Andi and Hong, Mingyi},
  booktitle={ES-FoMo III: 3rd Workshop on Efficient Systems for Foundation Models},
  year={2025}
}

@article{zhang2026teon,
  title={Teon: Tensorized orthonormalization beyond layer-wise muon for large language model pre-training},
  author={Zhang, Ruijie and Zhao, Yequan and Liu, Ziyue and Wang, Zhengyang and Li, Dongyang and Su, Yupeng and Liu, Sijia and Zhang, Zheng},
  journal={arXiv preprint arXiv:2601.23261},
  year={2026}
}

@article{tropp2012user,
  title={User-friendly tail bounds for sums of random matrices},
  author={Tropp, Joel A},
  journal={Foundations of computational mathematics},
  volume={12},
  number={4},
  pages={389--434},
  year={2012},
  publisher={Springer}
}
